\documentclass{IOS-Book-Article}

\usepackage{mathptmx}
\usepackage{soul}\setuldepth{article}
\def\hb{\hbox to 11.5 cm{}}

\usepackage{amsmath}
\usepackage{amsthm}
\usepackage{csquotes}
\usepackage{xspace}
\usepackage{todonotes}

\usepackage{amssymb}
\usepackage{mathtools}

\usepackage{tikz}
\usetikzlibrary{arrows,arrows.meta,positioning,shapes.misc,shapes.geometric,shapes.symbols,patterns}
\usetikzlibrary{arrows, decorations.markings}

\usepackage{comment}

\begin{document}
\newtheorem{example}{Example}
\newtheorem{theorem}{Theorem}
\newtheorem{definition}{Definition}
\newtheorem{proposition}{Proposition}
\newtheorem{corollary}{Corollary}
\newtheorem{lemma}{Lemma}
\newtheorem{illustration}{Illustration}

\tikzset{
    >=stealth',
    args/.style={circle,draw=black, thick, minimum size=6mm},
    multiLine/.style={double, thick, line width=1.4pt},
    suppNode/.style={inner sep=0, white, minimum size=0mm},
}
\newcommand{\tip}{{Latex[length=3mm]}}
\tikzstyle{multiEnd} = [
	thick,
	decoration={markings,mark=at position 1 with {\arrow[semithick]{triangle 60}}},
	double distance=1.4pt,
	shorten >= 5.5pt,
	preaction = {decorate},
	postaction = {draw,line width=1.4pt, white,shorten >= 4.5pt}]
\tikzstyle{multiBegin} = [
	thick,
	line width = 1.4pt,
	double distance = 1.4pt]
\tikzstyle{innerWhite} = [semithick, white,line width=1.4pt, shorten >= 4.5pt]

\newcommand{\tab}{\hspace{0.4cm}}
\newcommand{\stab}{\hspace{0.2cm}}
\newcommand{\ie}{i.e.\ }
\newcommand{\suchthat}{s.t.\ }
\newcommand{\wrt}{w.r.t.\ }
\newcommand{\eg}{e.g.\ }
\newcommand{\etal}{et~al.~}
\newcommand{\generality}{w.l.o.g.\ }
\newcommand{\Generality}{W.l.o.g.\ }
\newcommand{\etc}{etc.\ }
\newcommand{\cf}{cf.\ }
\newcommand{\AS}{$AS=(R_s, R_d, n,\leq_{r})$\xspace}
\newcommand{\DAOM}{\textnormal{Deductive ASPIC}$^{\ominus}$\xspace}
\newcommand{\DAOMItallic}{Deductive ASPIC$^{\ominus}$\xspace}
\newcommand{\LogLang}{$\mathcal{L}=(\mathcal{F},Atoms,\leq_l)$\xspace}
\newcommand{\isModelOf}{\vDash_M}
\newcommand{\entails}{\vDash_C}
\newcommand{\ClassLog}{$\mathcal{CL}=(\mathcal{F},Atoms,\leq_{\mathcal{F}},\mathcal{I},\vDash_M)$\xspace}
\newcommand{\DefTheo}{$DT=(\mathcal{CL},AX, R_d,n,\leq_r)$\xspace}
\newcommand{\JSBAF}{$\mathcal{J}=(\mathcal{A},{\rightarrow},{\Rightarrow},{\preceq})$\xspace}
\newcommand{\JSBAFOne}{$\mathcal{J}_1=(\mathcal{A}_1,{\rightarrow_1},{\Rightarrow_1},{\preceq_1})$\xspace}
\newcommand{\JSBAFTwo}{$\mathcal{J}_2=(\mathcal{A}_2,{\rightarrow_1},{\Rightarrow_2},{\preceq_2})$\xspace}
\newcommand{\JSBAFPlus}{$\mathcal{J}_+=(\mathcal{A}_+,{\rightarrow_+},{\Rightarrow_+},{\preceq_+})$\xspace}
\newcommand{\F}{$\mathcal{F}$\xspace}
\newcommand{\AF}{$\mathcal{F}=(Args,Att)$\xspace}
\newcommand{\J}{$\mathcal{J}$\xspace}
\newcommand{\f}{$\mathcal{AF}$\xspace}
\newcommand{\In}{\textit{IN}\xspace}
\newcommand{\Out}{\textit{OUT}\xspace}
\newcommand{\Undec}{\textit{UNDEC}\xspace}
\newcommand{\SIM}{strict including minimal\xspace}
\newcommand{\JDA}{$\mathcal{J}\in\mathfrak{J}_{DA^{\ominus}}$\xspace}
\newcommand{\JSBAFOneTwo}{$\mathcal{J}_1=(Args_1,Att_1,Supp_1)$ and $\mathcal{J}_2=(Args_2,Att_2,Supp_2)$\xspace}
\newcommand{\BeginJSBAFSingular}{Let \JSBAF be a JSBAF \st $\mathcal{J}\in\mathfrak{J}_{\DA}$.\xspace}
\newcommand{\DefTheoOne}{$DT_1=(\mathcal{CL},AX_1, R_{d_1},n_1)$\xspace}
\newcommand{\DefTheoTwo}{$DT_2=(\mathcal{CL},AX_2, R_{d_2},n_2)$\xspace}
\newcommand{\ASOne}{$AS_1=(R_{s_1},R_{d_1},n_1,{\leq_{r_1}})$\xspace}
\newcommand{\ASTwo}{$AS_2=(R_{s_2},R_{d_2},n_2,{\leq_{r_2}})$\xspace}
\newcommand{\ASPlus}{$AS_+=(R_s^+,R_d^+,n^+,{\leq_r^+})$\xspace}
\newcommand{\BeginArgSysDual}{Let \ArgSysOne and \ArgSysTwo be two AS \st $AS_1$ and $AS_2$ are syntactically disjoint.\xspace}
\newcommand{\BeginArgSysDualForLemma}{Let \ArgSysOne and \ArgSysTwo be two syntactically disjoint AS.\xspace}
\newcommand{\BeginMixedDual}{Let \ArgSysOne and \ArgSysTwo be two AS which are syntactically disjoint.
Let \JSBAFOne, \JSBAFTwo and \JSBAFPlus be the JSBAF's corresponding to $AS_1$, $AS_2$ and $AS_1\uplus AS_2$.\xspace}
\newcommand{\chainN}{$\big{\{}(S_0,B_0),...,(S_n,B_n)\big{\}}$\xspace}
\newcommand{\JSBAFGR}{$\mathcal{J}=(Args,Att,Supp)$\xspace}


\newcommand{\Tom}[1]{\todo[color=blue!50!white, inline=true]{#1}}
\newcommand{\Marcos}[1]{\todo[color=green!50!white, inline=true]{#1}}
\newcommand{\WIP}[1]{\todo[color=yellow, inline=true]{#1}}
\newcommand{\Rewriting}[1]{\todo[color=red!50!white, inline=true]{#1}}


\includecomment{technicalReportOnly}
\excludecomment{conferencePaperOnly}
\excludecomment{submissionOnly}


\pagestyle{headings}
\def\thepage{}
\begin{frontmatter}              

\title{Satisfying Rationality Postulates of Structured Argumentation Through Deductive Support -- Technical Report}

\markboth{}{April 2026\hb}

\author[A]{\fnms{Marcos} \snm{Cramer}\orcid{0000-0002-9461-1245}}
and
\author[B]{\fnms{Tom} \snm{Friese}
\thanks{Corresponding Author: Tom Friese, tom.friese@tu-dresden.de.}}
\address[A]{secunet Security Networks AG}
\address[B]{TU Dresden}

\begin{abstract}
ASPIC-style structured argumentation frameworks provide a formal basis for reasoning in artificial intelligence by combining internal argument structure with abstract argumentation semantics. A key challenge in these frameworks is ensuring compliance with five critical rationality postulates: closure, direct consistency, indirect consistency, non-interference, and crash-resistance. Recent approaches, including \mbox{$\textnormal{ASPIC}^{\ominus}$} and \mbox{Deductive ASPIC$-$}, have made significant progress but fall short of meeting all postulates simultaneously under a credulous semantics (e.g.\ \emph{preferred}) in the presence of undercuts. This paper introduces Deductive ASPIC$^{\ominus}$, a novel framework that integrates gen-rebuttals from \mbox{$\textnormal{ASPIC}^{\ominus}$} with the Joint Support Bipolar Argumentation Frameworks (JSBAFs) of \mbox{Deductive ASPIC$-$}, incorporating preferences. We show that \mbox{Deductive ASPIC$^{\ominus}$} satisfies all five rationality postulates under a version of preferred semantics. This work opens new avenues for further research on robust and logically sound structured argumentation systems.
\end{abstract}

\begin{keyword}
Argumentation\sep Logics for Knowledge Representation\sep Nonmonotonic Reasoning\sep Preferences
\end{keyword}
\end{frontmatter}
\markboth{April 2026\hb}{April 2026\hb}

\section{Introduction}
Formal argumentation has become a fruitful field of research within AI~\cite{ArgumentationInAI}. 
Dung~\cite{Dung1995:OnTheAcceptabilityOfArguments} introduced \emph{argumentation frameworks} (\emph{AF}), directed graphs where nodes represent \emph{arguments} and edges represent \emph{attacks}.
Acceptance of arguments is decided by applying \emph{argumentation semantics} to AFs.
Two prominent semantics are grounded semantics and preferred semantics.
In \emph{structured argumentation}, a logical language builds the basis for the internal structure of arguments, which in turn gives rise to the attack relation.
The result is an argumentation framework, to which argumentation semantics can be applied.

We consider ASPIC-style frameworks for structured argumentation, in which arguments are built inductively from strict and defeasible inference rules~\cite{Amgoud2006:FinalReviewReportFormalArgumentation,Prakken2010:AbstractFramework4ArgumentationWithStructureArguments}.
Attacks in ASPIC can target conclusions of arguments (rebuttals) or applications of defeasible rules (undercuts).
Preferences over the defeasible rules are used to decide which attacks result in defeats.
Arguments and defeats are then interpreted as abstract argumentation frameworks and the acceptance of arguments is decided by argumentation semantics.

There are variants of ASPIC, such as ASPIC+ and ASPIC$-$, that differ in some details, e.g.\ the distinction between \emph{restricted} and \emph{unrestricted rebuttals}~\cite{Prakken2010:AbstractFramework4ArgumentationWithStructureArguments,Caminada2014:PreferencesUnrestrictedRebut}.
Restricted rebuttals can only target conclusions derived from defeasible rules, which can lead to counter-intuitive results when considering argumentation in a dialectical context, as discussed by Caminada~\etal~\cite{Caminada2014:PreferencesUnrestrictedRebut}.
On the other hand, unrestricted rebuttals can target conclusions derived from either defeasible or strict rules, provided the attacked argument is defeasible.
An even more generalized approach, \emph{gen-rebuttals}, can target multiple sub-arguments simultaneously, providing a powerful mechanism for reasoning~\cite{Heyninck2017:RevisitingUnrestrictedRebutPreferences}.

To ensure logical soundness and coherence, a robust argumentation framework should satisfy key rationality postulates. The postulates of \emph{closure}, \emph{direct consistency} and \emph{indirect consistency}, which were introduced by Caminada and Amgoud~\cite{Caminada2007:OnEvaluationOfArgumentationFormalisms}, ensure that the accepted arguments do not conflict with one another and that whenever the antecedents of a strict rule are accepted, its conclusion is also accepted.
On the other hand, the postulates of \emph{non-interference} and \emph{crash-resistance}, introduced by Caminada~\etal~\cite{Caminada2012:SemiStableSemantics}, ensure that, when combining two argumentation frameworks that do not overlap in the knowledge they model, neither of them influences the other.

Meeting all five postulates under various semantics has proven challenging.
Heyninck and Straßer's $\textnormal{ASPIC}^{\ominus}$~\cite{Heyninck2017:RevisitingUnrestrictedRebutPreferences} introduced gen-rebuttals and demonstrated compliance with all five postulates by ignoring undercuts and focusing on grounded semantics, which only has singular solutions.
Cramer and Bhadra's Deductive ASPIC$-$~\cite{Cramer2020:DeductiveJointSupportForRationalUnrestrictedRebuttal} addressed these issues using \emph{Joint Support Bipolar Argumentation Frameworks} (JSBAFs), which track the application of strict rules to enforce closure.
However, an error in one of their proofs means the closure postulate can be violated by using strict rules without antecedents.

This work introduces Deductive ASPIC$^{\ominus}$, a novel framework combining the strengths of $\textnormal{ASPIC}^{\ominus}$ and Deductive ASPIC$-$.
Our approach leverages gen-rebuttals and preference-enhanced JSBAFs to provide a unified solution that satisfies all five rationality postulates under a version of preferred semantics for JSBAFs and while considering preferences.
What's more, our preferred semantics aligns with that of Dung's abstract argumentation framework.
To our knowledge, this is the first ASPIC-style framework for structured argumentation that satisfies all five above-mentioned rationality postulates in a credulous semantics like preferred while considering both rebuttals and undercuts and while avoiding the downsides of restricted rebuttal.

The rest of this paper is structured as follows:
Section \ref{Sec:RelatedWork} outlines related versions of ASPIC.
Section~\ref{Sec:Motviation} explains key motivations for our semantics.
Section~\ref{Sec:JSBAF} defines JSBAFs and their preferred semantics.
Section~\ref{Sec:DAOM} introduces \DAOM and presents the results that preferred semantics of \DAOM satisfies all five rationality postulates.
\begin{technicalReportOnly}
Section~\ref{Sec:Grounded} introduces our version of grounded semantics and contains proofs of the existence and uniqueness of grounded labelings.
\end{technicalReportOnly}
Section~\ref{Sec:FutureWork} gives an outlook on avenues for future work and Section~\ref{Sec:Conclusion} concludes this paper.
\begin{conferencePaperOnly}
The detailed proofs of the results in this paper are in a technical report~\cite{cramer2026:SatisfyingRationalityPostulatesTechnicalReport}.
\end{conferencePaperOnly}

\section{Related Work}\label{Sec:RelatedWork}
Modgil and Prakken~\cite{Modgil2013:GeneralAccountOfArgumentationWithPreferences} have shown that ASPIC+ (which uses restricted rebuttals and incorporates preferences) satisfies the rationality postulates of closure, direct consistency and indirect consistency if the strict rules are closed under transposition. Heyninck and Straßer~\cite{Heyninck2021:RationalityMaximalConsistentSetsForFragmentOfASPICPlus} showed that \mbox{ASPIC+} without undercuts and with a total preference relation additionally satisfies non-interference.

In \cite{Caminada2014:PreferencesUnrestrictedRebut}, the same three rationality postulates were studied for ASPIC$-$, which uses unrestricted rebuttals. The postulates are satisifed only under grounded semantics and the limiting assumptions of a total preference order and closure under transpositions.

Cramer and Bhadra~\cite{Cramer2020:DeductiveJointSupportForRationalUnrestrictedRebuttal} propose \emph{Deductive ASPIC$-$} which, just like \DAOM of the current paper, is based on \emph{Joint Support Bipolar Argumentation Frameworks} (JSBAFs), that track not only attacks but also the support relation between antecedents and conclusions of strict rules. In their approach, JSBAFs are \emph{flattened} into classical argumentation frameworks using auxiliary arguments. They claim their approach satisfies closure, direct consistency, and indirect consistency under classical semantics, but their proof for Lemma 3.17 fails to account for strict rules with no antecedents. The current paper aims to address this issue and to also satisfy the rationality postulates of non-interference and crash-resistance, while incorporating preferences.

Wu and Podlaszewski~\cite{Wu2015:ImplementingCrashResistanceAndNonInterferenceInLogicBasedArgumentation} defined \mbox{\emph{ASPIC Lite}}, where the postulates for non-interference, crash resistance, closure and consistency are satisfied by deleting inconsistent arguments. The approach is not extended to preferences and they present a counterexample to the closure postulate with preferences lifted through the last-link principle.

Heyninck and Straßer~\cite{Heyninck2017:RevisitingUnrestrictedRebutPreferences} introduce the notion of gen-rebuttals (which we also use in the current paper) as a generalization of the unrestricted rebut, giving rise to $\textnormal{ASPIC}^{\ominus}$.
Their version of ASPIC satisfies closure, consistency and non-interference under grounded semantics and when lifting preferences with the weakest-link principle. However, their approach does not consider attacks resulting from undercuts.

To the best of our knowledge, there does not yet exist an ASPIC-style formalism that satisfies all five rationality postulates introduced in the introduction in a credulous semantics like \emph{preferred semantics} while incorporating preferences and undercuts and avoiding the limitations of restricted rebuttals. The current paper aims at filling this gap.

\section{Conceptual Motivation}\label{Sec:Motviation}

This paper introduces Deductive ASPIC$^{\ominus}$, a new ASPIC-style framework for structured argumentation. Like in other such frameworks, arguments are constructed from strict and defeasible rules. Strict rules encode deductive (i.e. exceptionless) reasoning patterns, whereas defeasible rules encode reasoning patterns that allow for exceptions. Like in other ASPIC-style frameworks, the acceptability of arguments in Deductive ASPIC$^{\ominus}$ is determined based on the relation between arguments,
with the difference that not only the attacks between arguments are considered, but also an additional \emph{deductive support}, taken from~\cite{Cramer2020:DeductiveJointSupportForRationalUnrestrictedRebuttal}. A deductive support from argument set $S$ to argument $b$ encodes the information that $b$ was constructed from $S$ using a strict rule, so that the conclusion of $b$ is a deductive consequence of the conclusions of the arguments in $S$.

Note that there is a difference between an un-supported argument and an argument supported by the empty set. When the empty set supports an argument $b$, the conclusion of $b$ deductively follows from the empty set of premises, i.e.\ it is a logical tautology.

The principle of closure under strict rules says that if certain formulas $\phi_1, \dots, \phi_n$ are accepted and there is a strict rule $\phi_1, \dots, \phi_n \rightarrow \psi$, then also $\psi$ should be accepted. In Deductive ASPIC$^{\ominus}$, if such a strict rule exists and there are arguments for the conclusions $\phi_1, \dots, \phi_n$, they will deductively support an argument for the conclusion $\psi$. So all that is needed for a semantics to satisfy closure in Deductive ASPIC$^{\ominus}$ is ensuring acceptance of a supported argument whenever all supporting arguments are accepted.

As in abstract argumentation, attacks in our approach can cause rejection of arguments. But there is another reason to reject an argument, namely by supporting an argument that for some other reason needs to be rejected. This relates to the principle of contraposition and its generalization, the principle of transposition, in classical logic: When $\phi_1, \dots, \phi_n$ logically entail $\psi$, then $\phi_1, \dots, \phi_{i-1}, \neg \psi, \phi_{i+1}, \dots, \phi_n$ logically entail $\neg \phi_i$. We generalize this mechanism for rejecting arguments through supports to a mechanism that applies not only to arguments that are clearly rejected, but also to arguments with an undecided acceptance status. This mechanism for propagating the non-acceptance of arguments through supports is the only function that the supports play in our argumentation semantics, and its enough to ensure that the closure principle is satisfied. 

Note that this mechanism for propagating the non-acceptance of arguments works in the opposite direction of the support direction, due to its connection to the principles of contraposition and transposition.
Furthermore, this mechanism is meant to propagate the non-acceptance of arguments towards further arguments, but it is not meant to bring about the non-acceptance of arguments without some initiating attack.

\section{Joint Support Bipolar Argumentation Frameworks}\label{Sec:JSBAF}
\subsection{Syntax}\label{Sec:JSBAF:Syntax}
The core of our approach are \emph{Joint Support Bipolar Argumentation Framework} (JSBAF) from Cramer~and~Bhadra~\cite{Cramer2020:DeductiveJointSupportForRationalUnrestrictedRebuttal}, extended with preferences between arguments:
%
%
\begin{definition}\label{def:jsbaf}
A JSBAF is a tuple \JSBAF, where:
\begin{itemize}
	\item {$\mathcal{A}$ is a set of \emph{arguments}.}
	\item {${\rightarrow}\subseteq \mathcal{A}\times \mathcal{A}$ is a set of \emph{attacks}.}
	\item {${\Rightarrow}\subseteq 2^{\mathcal{A}}\times \mathcal{A}$ is a set of \emph{supports}.}
	\item{${\preceq}\subseteq \mathcal{A}\times \mathcal{A}$ is a total \emph{preference ordering}.\footnote{
	We use $a\prec b$ as an abbreviation for $a\preceq b$ and $b\not\preceq a$.}
	}
\end{itemize}
\end{definition}
For $(a,b)\in {\rightarrow}$ we say that $a$ attacks $b$.
If no $a$ attacks $b$, we call $b$ unattacked.
For $(S,b)\in {\Rightarrow}$ we say that the set of arguments $S$ supports the argument $b$.
We will use JSBAFs to model arguments and their relations in \DAOM (cf.\ Section~\ref{Sec:ASPIC2JSBAF}).
A support $(S,b)$ will model the case that the argument $b$ was constructed with a strict rule that derives the conclusion of $b$ from the conclusions of the arguments in $S$.
We define a \emph{support chain} as a non-empty set of supports $\{(S_0,b_0),...,(S_n,b_n)\}\subseteq {\Rightarrow}$ \suchthat for each $0\leq i\leq n-1$ we have $b_i\in S_{i+1}$.
We recursively define an argument $a$ as strict iff there is $(S,a)\in {\Rightarrow}$ and we either have $S=\emptyset$ or for all $b\in S$, $b$ is strict.
\begin{technicalReportOnly}
We denote the set of all strict arguments of a JSBAF \J by $STR_{\mathcal{J}}$.
\end{technicalReportOnly}
Strict arguments will model logical proofs of a tautology in \DAOM.
The preference ordering between arguments in a JSBAF will correspond to the preference ordering between arguments in an Argumentation System (cf.\ Section~\ref{Sec:ASPIC2JSBAF}).
%

%

%
In \DAOM, an argument encodes one particular way of reaching a conclusion from a finite set of premises.
Strict arguments are special in that they are derivations of tautologies, making them immune against counterarguments.
To adequately capture arguments and their relations as constructible in \DAOM, we assume the following restrictions on JSBAFs:
\begin{itemize}
	\item{
	For all support chains $\{(S_0,b_0),\dots,$ $(S_n,b_n)\}$, $b_n\not\in S_0$.
	}
	\item{
	For $S,S'\subseteq \mathcal{A}$ and $b\in \mathcal{A}$, $(S,b),(S',b)\in {\Rightarrow}$ implies $S=S'$.
	}
	\item{
	$|S|<\infty$ for all $(S,b)\in {\Rightarrow}$.
	}
	\item{
	If $a\in\mathcal{A}$ is strict, then it is unattacked.
	}
	\item{
	If $a,b\in\mathcal{A}$ are both strict, then $a\preceq b$ and $b\preceq a$.
	}
	\item{
	If $a,b\in\mathcal{A}$ \suchthat $b$ is strict and $a$ is not strict, then $a\prec b$.
	}
\end{itemize}

\subsection{Semantics}\label{Sec:JSBAF:Semantics}
We use a labeling-based approach for our semantics.
For a JSBAF \JSBAF, a \emph{labeling} $L$ is a mapping $L:\mathcal{A}\mapsto\{\In,\Out,\Undec\}$ with the usual meaning of labels: \In denotes accepted arguments, \Out denotes rejected arguments and \Undec denotes arguments whose status is undecided.
%
%
The sets $in(L)$, $out(L)$ and $undec(L)$ denote all arguments labeled \In, \Out and \Undec resprectively.
Our semantics are based on \emph{legal} labelings.
We define whether an argument is legally \In, legally \Out or legally \Undec given the labels of other arguments that are connected to it through attacks or supports.
The key ideas for our definitions of legal labelings are as follows:
First, the preferred semantics for our version of ASPIC will be based on our preferred semantics for JSBAFs (cf.\ Section~\ref{Sec:ASPIC2JSBAF}).
Our ASPIC semantics needs to satisfy closure under strict rules (cf.\ Section~\ref{Sec:RationalityPostulates}) and we enforce this already on the level of our semantics for JSBAFs.
To do this, we utilize the support relation ${\Rightarrow}$:
Because a support $(S,b)$ models the application of a strict rule, satisfying closure under strict rules means that an argument supported only by accepted arguments also needs to be accepted, \ie that for any labeling $L$ and every support $(S,b)$, $S\subseteq in(L)$ implies $b \in in(L)$.
Secondly, our preferred semantics for JSBAFs should be an extension of the preferred semantics for AFs.
More precisely, in the absence of supports, preferred semantics of JSBAFs and AFs should coincide.
Thirdly, it is helpful to think of a labeling as being constructed iteratively, where arguments labeled \Undec have the \emph{potential} to be labeled either \Out or \In, while the labels \In and \Out remain fixed.
Lastly, preferences are only relevant when choosing between arguments which jointly support another one.
For attacks, we will instead consider preferences on the level of our ASPIC formalism, where we delete attacks from weaker arguments to stronger ones, before applying our JSBAF semantics.
%
%

%
Based on these key notions, we first motivate our definition of legally \Out:
As in AFs, an argument is legally \Out if it is attacked by an accepted argument.
However, in a JSBAF an argument can also be legally \Out due to a support:
Suppose we have a support $(S,b)$ with $a \in S$ and a labeling in which $b$ is rejected, while all arguments in $S\setminus\{a\}$ are accepted.
Accepting all arguments in $S$ violates closure, thus we need to reject $a$.
However, we never want to reject an argument because of accepting a weaker one, so we also need to ensure that the other arguments in $S$ are at least as strong as $a$.
The considerations so far could motivate the following definition:
An argument $a$ is legally \Out iff it is attacked by an accepted argument or if it is contained in a set $S$ that supports a rejected argument with all arguments in $S \setminus \{a\}$ being accepted and being at least as strong as $a$.
However, this definition would cause problems in the case of an infinite support chain $\mathcal{C}=\{(S_0,b_0),(S_1,b_1),...\}$, where each $S_i$ contains only a single argument, while every $b_i$ is unattacked.\footnote{
Strict inference rules will be based on the consequence relation of an underlying logical language (cf.\ see Section~\ref{Sec:ASPIC2JSBAF}), thus such a chain could correspond to a statement like $\phi\vDash\phi\vDash\dots$.
}
It would be consistent with the proposed definition to label all $b_i$ in this infinite chain \Out, although there is no attack originating from an accepted argument which contributes to this rejection of the $b_i$.
As explained in Section~\ref{Sec:Motviation}, this is not intended.
Thus, in order to avoid anomalies caused by infinite support chains, we define an argument $a$ as legally \Out iff it is attacked by an accepted argument, or if there is a finite support chain $\mathcal{C}=\{(S_0,b_0),\dots,(S_n,b_n)\}$ where $a\in S_0$, $b_n$ is attacked by an accepted argument, each $b_i$ is rejected, $S_0 \setminus \{a\}$ as well as every set $S_{i+1} \setminus \{b_i\}$ contains only accepted arguments, and $S_0 \setminus \{a\}$ contains no arguments weaker than $a$.
Note that $\mathcal{C}$ may be part of a (possibly infinite) support chain $\mathcal{C}'=\{(S_0,b_0),\dots,(S_n,b_n),(S_{n+1},b_{n+1}),\dots\}$ but the finite support chain $\mathcal{C}$ is required to reject $a$.
Now for our definition of legally \In:
As in AFs, for an argument $a$ to be legally \In, all its attackers must be rejected.
However, we also need to consider the supports $(S,c)$ for which we have $a\in S$.
If $c$ is accepted, than this support can never constitute a reason to label $a$ \Out or \Undec.
However, if $c$ is not accepted, then we need to ensure that at least one argument in $S$ is also not accepted, in order to satisfy the closure postulate.
%
So if $c$ is not accepted and all elements of $S \setminus \{a\}$ are accepted, this constitutes a reason for $a$ not to be legally \In.
Similarly to the case of legally \Out, we never want to reject an argument because of accepting a weaker argument, so we need to add the additional constraint that none of the arguments in $S \setminus \{a\}$ is weaker than $a$.
These considerations motivate the following first approach for a definition:
Given some labeling $L$, an argument $a$ is legally \In \wrt $L$ iff all its attackers are \Out and if there is no support $(S,c)\in{\Rightarrow}$ with $a\in S$, $a \preceq b$ for all $b \in S \setminus \{a\}$, $c\notin in(L)$ and $S\setminus\{a\}\subseteq in(L)$.
For the case that the supported argument $c$ is labeled \Undec, this approach works well, but for the case that $c$ is labeled \Out, we have one last problematic case to consider.
Suppose we have a labeling $L$ and a support $(S,c)$ with $a\in S$, $L(c)=\Out$, $S\setminus in(L)=\{a,b\}$, $a\neq b$, $a \preceq b$, $b \preceq a$ and $L(a)=L(b)=\Undec$, \ie the supported argument $c$ is rejected, there are precisely two arguments in $S$ which are not accepted and these two arguments are of equal strength and labeled \Undec.
In line with the above approach, $a$ would be legally \In.
However, since $b$ is labeled \Undec, it still has the potential to be labeled \In.
But changing the label of $b$ to \In means $a$ is not legally \In anymore.
In essence, considering one of the arguments $a$ or $b$ as legally \In in such a case deprives the other one of their potential to be labeled \In.
Therefore, if the supported argument is labeled \Out while there are no \Out-labeled arguments in the supporting set, we require not one but at least two arguments in the supporting set to be labeled \Undec in order for any other argument in the supporting set to be considered legally \In.
We capture all these ideas in the following definition:
\begin{definition}\label{Def:JSBAF:LegalLabeling}
For any labeling $L$, the argument $a$ is:
\begin{enumerate}
	\item{
	\emph{legally \In} \wrt $L$ iff for all attackers $b$ of $a$ we have $L(b)=\Out$ and for all supports $(S,c)$ with $a\in S$ and $a \preceq b$ for all $b\in S\setminus\{a\}$, one of $a-c$ holds:
	\begin{enumerate}
		\item{
		$L(c)=\In$, or
		}
		\item{
		$L(c)=\Undec$ and there is $b\in S\setminus\{a\}$ with $L(b)\in\{\Out,\Undec\}$, or
		}
		\item{
		$L(c)=\Out$ and there is either $b\in S\setminus\{a\}$ with $L(b)=\Out$ or there are $b_1,b_2\in S\setminus\{a\}$ with $b_1\neq b_2$ and $L(b_1)=L(b_2)=\Undec$.
		}
	\end{enumerate}
	}
	\item{
	\emph{legally \Out} \wrt $L$ iff there exists $(b,a)\in {\rightarrow}$ with $L(b)=\In$ or a support chain $\mathcal{C}=\{(S_0,b_0),...,(S_n,b_n)\}\subseteq {\Rightarrow}$ \suchthat all of $a-e$ hold:
	\begin{enumerate}
		\item{
		$a\in S_0$ and $S_0 \setminus \{a\} \subseteq in(L)$
		}
		\item{
		$(c,b_n)\in {\rightarrow}$ for some $c$ with $L(c)=\In$
		}
		\item{
		for $0\leq i\leq n$, $b_i\in out(L)$
		}
		\item{
		for $0 < i\leq n$,  $S_{i} \setminus \{b_{i-1}\} \subseteq in(L)$
		}
		\item{
		$a\preceq d$ for all $d\in S\setminus\{a\}$
		}
	\end{enumerate}
	}
	\item{
	\emph{legally \Undec} \wrt $L$ iff $a$ is neither legally \In nor legally \Out \wrt $L$
	}
\end{enumerate}
\end{definition}

Admissible and preferred labelings are based on legal labelings, the idea that strict arguments should always be labeled \In and the standard notion from labeling-based argumentation semantics that preferred labelings are subset maximal:
\begin{definition}
Let \J be a JSBAF and $L$ a labeling of \J.
$L$ is an \emph{admissible} labeling iff:
\begin{itemize}
	\item{
	$a\in in(L)$ implies $a$ is legally \In \wrt $L$.
	}
	\item{
	$a\in out(L)$ iff $a$ is legally \Out \wrt $L$.
	}
	\item{
	If $a$ is a strict argument, then $L(a)=\In$.
	}
\end{itemize}
$L$ is a \emph{preferred} labeling of \J iff $L$ is a maximal (\wrt sub-set inclusion of $in(L)$) admissible labeling.
\begin{conferencePaperOnly}
We denote the set of all preferred labelings of \J by $pr(\mathcal{J})$.
\end{conferencePaperOnly}
\begin{technicalReportOnly}
We denote the set of all admissible labelings of \J by $adm(\mathcal{J})$ and the set of all preferred labelings of \J by $pr(\mathcal{J})$.
\end{technicalReportOnly}
\end{definition}
Below, we give an example JSBAF.
Here, nodes represent arguments, single arrows represent attacks between arguments and double arrows represent supports.
%
%
\begin{example}\label{Exmp:JSBAF}
	JSBAF $\mathcal{J}_1$
	\vspace{-4mm}
	\leavevmode
	\begin{center}
		\begin{tikzpicture}
		\node[args] (A) at (-4, 0) {$a$};
		\node[args] (B) at (-2, 0) {$b$};
		\node[args] (notB) at (0, 0) {$\overline{b}$};
		\node[args] (C) at (2,0.75) {$c$};
		\node[args] (D) at (2,0) {$d$};
		\node[args] (E) at (2,-0.75) {$e$};
		\node[suppNode] (suppA) at (-5,0) {};
		\node[suppNode] (suppNotB) at (1, 0) {};
		\node[suppNode] (suppD) at (3, 0) {};
		
		\path [->, thick]
		(B) edge (notB);
		
		\draw[multiLine, -\tip] (suppA) to (A);
		\draw[multiLine, -\tip] (A) to (B);
		\draw[multiLine] (D) to (suppNotB);
		\draw[multiLine] (C) to (suppNotB);
		\draw[multiLine] (E) to (suppNotB);
		\draw[multiLine, -\tip] (suppNotB) to (notB);				
		\draw[multiLine, -\tip] (suppD) to (D);						
		\end{tikzpicture}
	\end{center}
\end{example}
$\mathcal{J}_1$ has one attack, from $b$ to $\overline{b}$, the supports, $(\emptyset,a)$, $(\{a\},b)$, $(\{c,d,e\},\overline{b})$ and $(\emptyset,d)$, and the strict arguments $a$, $b$ and $d$.
Assume the following preference ordering between the arguments of $\mathcal{J}_1$, by which strict arguments are preferred to non-strict arguments, but there are no other preferences: ${\preceq} = \{a,b,\overline{b},c,d,e\}^2 \setminus \big(\{a,b,d\} \times \{\overline{b},c,e\}\big)$.
Let us first consider the admissible labelings of $\mathcal{J}_1$:
Since $a$,$b$ and $d$ are strict arguments, they are labeled \In in every admissible labeling.
The argument $a$ is unattacked and only supports the accepted argument $b$, therefore $a$ is legally \In.
The argument $b$ is unattacked and doesn't support any argument, therefore $b$ is also legally \In.
The argument $d$ is unattacked and for the support $(\{c,d,e\},\overline{b})$, we have $d\not\preceq c$ and $d\not\preceq e$.
Thus we don't have to consider this support when checking the legality of the label of $d$, meaning $d$ is also legally \In.
Lastly, because $\overline{b}$ is attacked by $b$, it is legally \Out in every admissible labeling and thus needs to be labeled \Out.
Labeling $c$ and $d$ \Undec results in a first admissible labeling $L_1$ with $in(L_1)=\{a,b,d\}$, $out(L_1)=\{\overline{b}\}$ and $undec(L_1)=\{c,e\}$.
Now for the remaining admissible labelings:
In order to label $c$ or $e$ \Out, they need to be legally \Out.
Both $c$ and $e$ are unattacked -- therefore they cannot be legally \Out due to an attack -- but they are contained in the support chain $\big{\{}(\{c,d,e\},\overline{b})\big{\}}$.
Note that $c\preceq d$, $c\preceq e$, $e\preceq c$ and $e\preceq d$.
Thus, for this support chain, $c$ is legally \Out if $e$ is labeled \In and $e$ is legally \Out if $c$ is labeled \In.
On the other hand, because $d$ is always labeled \In in an admissible labeling, if $e$ is labeled \Out, then $c$ is legally \In and if $c$ is labeled \Out, then $e$ is legally \In.
This results in the remaining two admissible labelings $L_2$ and $L_3$ with $in(L_2)=\{a,b,c,d\}$, $out(L_2)=\{\overline{b},e\}$ and $undec(L_2)=\emptyset$, while $in(L_3)=\{a,b,d,e\}$, $out(L_3)=\{\overline{b},c\}$ and $undec(L_3)=\emptyset$.
$L_1$ is not preferred, because $in(L_1) \subsetneq in(L_2)$. Since neither of $in(L_2)$ and $in(L_3)$ is a subset of the other, both of them are preferred labelings, i.e.\ $pr(\mathcal{J}_1)=\{L_2,L_3\}$.
\begin{conferencePaperOnly}
We can show that for every JSBAF, there always exists at least one admissible labeling and at least one preferred labeling (even in JSBAFs with infinitely many arguments).
\begin{theorem}\label{Theo:PreferredExistence:Conference}
Let \J be a JSBAF.
Then $pr(\mathcal{J})\neq\emptyset$.
\end{theorem}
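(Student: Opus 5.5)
Our plan has two parts: first exhibit one admissible labeling, then apply Zorn's lemma to the admissible labelings ordered by inclusion of their \In-sets.

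\textbf{Step 1: a base admissible labeling.} We propose the labeling $L_0$ with $in(L_0)=STR_{\mathcal{J}}$, $out(L_0)=\emptyset$, and every other argument labeled \Undec. The checks are as follows.
\begin{itemize}
\item Strict arguments are \In, as admissibility requires.
\item No argument is legally \Out. Strict arguments are unattacked, so no attacker is \In. A support chain would need some $b_i\in out(L_0)$, which is impossible because $out(L_0)=\emptyset$.
\item Each strict $a$ is legally \In. It has no attackers. Now take a support $(S,c)$ with $a\in S$ and $a\preceq b$ for all $b\in S\setminus\{a\}$. Since $a$ is strict and only strict arguments are at least as strong as a strict argument, every $b\in S\setminus\{a\}$ is strict. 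The recursive definition of strictness then makes $c$ strict, so $L_0(c)=\In$ and clause (a) holds.
\end{itemize}
If $L_0$ did not work (for instance, if an attacker's status were problematic), we would fall back to the iterative grounded-style construction of Section~\ref{Sec:Grounded}.

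\textbf{Step 2: chains of admissible labelings.} Let $\{L_i\}$ be a chain of admissible labelings. We need an admissible labeling whose \In-set contains every $in(L_i)$. The natural candidate has $in(L)=\bigcup_i in(L_i)$, with $out(L)$ defined as the set of arguments legally \Out with respect to this \In-set. This is the main obstacle, because legal \Out-ness depends on $out(L)$ itself through clause (c), which requires $b_i\in out(L)$. The plan is to first show that \Out-sets grow along the chain. Then we set $out(L)=\bigcup_i out(L_i)$ and check the three conditions.
\begin{itemize}
\item Legal \Out-ness with respect to $L$ follows from legal \Out-ness in some $L_i$. It uses only finitely many arguments (finite chains and finite $S$), so a witness sits inside a single $L_j$ of the chain.
\item Conversely, any witness for legal \Out-ness in $L$ is finite, so it lies in some $L_j$, and there $a$ must be \Out because $L_j$ is admissible.
\item Legal \In-ness of each $a\in in(L_i)$ persists in $L$, provided labels move only from \Undec to \In or \Out and never change otherwise. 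The conditions in (a)--(c) are then preserved: \In stays \In; an \Out witness stays \Out; and a \Undec witness either stays \Undec or becomes \In or \Out. This last case is the delicate one. The clause (c) "two \Undec" condition was designed exactly so that this potential is respected, but it must still be verified.
\end{itemize}

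Establishing monotonicity of the labels, namely that $in(L_i)\subseteq in(L_j)$ forces $out(L_i)\subseteq out(L_j)$, may itself fail for arbitrary admissible labelings. If it does, we would instead apply Zorn's lemma to the order $L\sqsubseteq L'$ iff $in(L)\subseteq in(L')$ and $out(L)\subseteq out(L')$. We would then show separately that a maximal element for this order is also \In-maximal. That would follow from a lemma: if $L\sqsubseteq$-precedes nothing but some admissible $L'$ has $in(L)\subsetneq in(L')$, then a combined labeling strictly extends $L$.

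The finiteness restrictions on JSBAFs (finite supporting sets and finite support chains) are what make the compactness arguments in Step 2 go through. We expect the preservation of clause (c) of legal \In-ness under unions to require the most careful case analysis.
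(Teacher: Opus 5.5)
\textbf{Step 1.} Your Step~1 is wrong as stated. From ``strict arguments are unattacked'' you conclude that no argument has an \In{} attacker. But strict arguments can perfectly well \emph{attack} others. In Example~\ref{Exmp:JSBAF} the strict $b$ attacks $\overline{b}$. So $\overline{b}$ is legally \Out{} w.r.t.\ $L_0$ while $L_0(\overline{b})=\Undec$, and $L_0$ violates ``$a\in out(L)$ iff $a$ is legally \Out''.

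The base labeling must be the paper's $SIM_{\mathcal{J}}$. Keep $in=STR_{\mathcal{J}}$, but take as $out$ the arguments attacked by strict arguments, closed under the rule ``if $(S,b)\in{\Rightarrow}$, $b$ is \Out{} and $S\setminus\{a\}\subseteq STR_{\mathcal{J}}$, then $a$ is \Out''. Two checks are then needed:
\begin{itemize}
\item No strict argument is put \Out. This follows by induction: if all of $S$ is strict, so is $b$, and strict arguments are unattacked.
\item Item~2(e) holds automatically, since a non-strict $a$ is strictly weaker than the strict members of $S_0$.
\end{itemize}
Your legal-\In{} check for strict arguments carries over unchanged. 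The grounded construction is not a fallback here: it is built for the preference-free variant and itself starts from SIM.

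\textbf{Step 2.} Your \Out{} direction, where a finite witness lives in a single chain member, is fine. But the two decisive points are left open, and the principle you suggest for one of them is false.

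First, legal \In-ness is not preserved by moving labels from \Undec{} to \In{} or \Out. Take a support $(\{a,b\},c)$ with $a\preceq b$, $L(a)=\In$ and $L(b)=L(c)=\Undec$. Then $a$ is legally \In{} by 1(b), but relabeling $b$ \In{} and $c$ \Out{} breaks 1(c). The rescue in the supremum $L_{\mathcal{K}}$ of a chain ordered by inclusion of both $in$ and $out$ (the paper's order) goes through a single chain member:
\begin{itemize}
\item The finitely many arguments of $S\cup\{c\}$ that are \In{} or \Out{} in $L_{\mathcal{K}}$ already carry that label in some member.
\item The arguments of $S\cup\{c\}$ that are \Undec{} in $L_{\mathcal{K}}$ are \Undec{} in every member.
\item Hence some $L_j\in\mathcal{K}$ with $a\in in(L_j)$ agrees with $L_{\mathcal{K}}$ on $S\cup\{c\}$, and admissibility of $L_j$ gives 1(a)--(c) verbatim.
\end{itemize}
This replaces the paper's case split on $L_{\mathcal{K}}(c)$.

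Second, whichever order you use, you need monotonicity: $in(L)\subseteq in(L')$ implies $out(L)\subseteq out(L')$ for admissible $L,L'$. You need it either to turn \In-chains into chains, or to pass from a $\sqsubseteq$-maximal element to an \In-maximal one; the paper also takes this last step without comment. Monotonicity does hold, via the following fact. If $L'$ is admissible, $(S,c)\in{\Rightarrow}$, $c\in out(L')$ and $S\setminus\{x\}\subseteq in(L')$, then $x\in out(L')$. There are two cases:
\begin{itemize}
\item If $x$ is $\preceq$-minimal in $S$, extend the witness for $c$ by $(S,c)$.
\item Otherwise a $\preceq$-minimal $m\in S\setminus\{x\}$ is \In{}, and clause 1(c) for $m$ forces $L'(x)=\Out$.
\end{itemize}
Propagating this backwards along any witness chain of $L$ gives the inclusion. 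With these repairs your outline becomes the paper's proof: SIM as base, Zorn on the $(in,out)$-order, and unions as suprema.
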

\end{conferencePaperOnly}
\begin{technicalReportOnly}
\subsection{Existence of admissible and preferred labelings}

In this section, we show that there always exists at least one admissilbe and at least one preferred labeling for every JSBAF, even in cases with an infinite amount of arguments.

\subsubsection{Existence of admissible labelings}

We first show that for every JSBAF, there exists at least one admissible labeling.
To this end, we define a family of labelings which labels precisely the strict arguments of a JSBAF as \In, labels all arguments \Out that are legally \Out as a result from accepting the strict arguments and labels all the remaining arguments \Undec.
We call this the \emph{strict including minimal labeling} (SIM):
\begin{definition}\label{Def:JSBAF:SIM}
For any JSBAF \JSBAF, the labeling $SIM_{\mathcal{J}}$ is defined as:
\begin{enumerate}
	\item {$in(SIM_{\mathcal{J}})=STR_{\mathcal{J}}$}
	\item {
	\(\begin{aligned}[t]
	O_0
	&=\big{\{ }a\in \mathcal{A}\mid(b,a)\in {\rightarrow},b\in in(SIM_{\mathcal{J}})\big{\} }\\
	O_{n+1}
	&=	O_n \cup \big{\{ }a\in \mathcal{A}\mid (S,b)\in {\Rightarrow}, a\in S, b\in O_n,\\
	&\hspace{2.55cm}S\setminus\{a\}\subseteq in(SIM_{\mathcal{J}})\big{\} }\end{aligned}
	\\
	out(SIM_{\mathcal{J}})=\bigcup\limits_{i\geq 0} O_i
		$}
	\item {$undec(SIM_{\mathcal{J}})=\mathcal{A}\setminus\big(in(SIM_{\mathcal{J}})\cup out(SIM_{\mathcal{J}})\big)$}
\end{enumerate}
\end{definition}
We now prove that $SIM_{\mathcal{J}}$ is an admissible labeling.
To make the proof more accessible, we have divided it into several parts.
We show (in this order) that $SIM_{\mathcal{J}}$ is a labeling, that the accepted arguments are legally \In, that the rejected arguments are exactly those which are legally \Out and, finally, that $SIM_{\mathcal{J}}$ is an admissible labeling.
\begin{proposition}\label{Prop:JSBAF:SIMLabeling}
Let \JSBAF\ be a JSBAF.
Then $SIM_{\mathcal{J}}$ is a \emph{labeling}.
\end{proposition}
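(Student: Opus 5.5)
Being a labeling means that $in(SIM_{\mathcal{J}})$, $out(SIM_{\mathcal{J}})$ and $undec(SIM_{\mathcal{J}})$ partition $\mathcal{A}$, i.e.\ every argument receives exactly one of the three labels. We will check this in three steps.

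\textbf{Step 1: coverage, and \Undec is disjoint from the rest.} By construction $undec(SIM_{\mathcal{J}})$ is defined as $\mathcal{A}$ minus the union of the other two sets. So it is disjoint from both, and the three sets together cover $\mathcal{A}$. It therefore remains to show $in(SIM_{\mathcal{J}})\cap out(SIM_{\mathcal{J}})=\emptyset$, that is, no strict argument lies in any $O_i$. We prove by induction on $i$ that $O_i\cap STR_{\mathcal{J}}=\emptyset$.

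\textbf{Step 2: the base case $i=0$.} Every element of $O_0$ is attacked by some argument. By the JSBAF restriction that strict arguments are unattacked, no element of $O_0$ is strict.

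\textbf{Step 3: the inductive step.} Suppose $O_n$ contains no strict argument. Let $a\in O_{n+1}\setminus O_n$. Then there is $(S,b)\in{\Rightarrow}$ with $a\in S$, $b\in O_n$ and $S\setminus\{a\}\subseteq STR_{\mathcal{J}}$. Assume towards a contradiction that $a$ is strict. Then every element of $S$ is strict (this also covers the case $S=\{a\}$). Since $(S,b)$ is a support with $S$ consisting only of strict arguments, the recursive definition makes $b$ strict. This contradicts $b\in O_n$ and the induction hypothesis. Hence $O_{n+1}\cap STR_{\mathcal{J}}=\emptyset$, and taking the union over all $i$ gives the disjointness needed in Step 1.

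The only point requiring care is in Step 3: the definition of strictness is recursive, so one must confirm that a support whose antecedents are all strict really does make its conclusion strict. This is immediate from the "for all $b\in S$, $b$ is strict" clause of the definition, so we expect no real obstacle.
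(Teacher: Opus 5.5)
Your proof is correct and matches the paper's argument. Both show $in(SIM_{\mathcal{J}})\cap O_n=\emptyset$ by induction on $n$: the base case uses that strict arguments are unattacked, and the inductive step uses that a support whose antecedents are all strict makes its conclusion strict, contradicting the induction hypothesis.
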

\begin{proof}
From the definition of $undec(SIM_{\mathcal{J}})$, it is clear that every argument gets assigned some label and that $undec(SIM_{\mathcal{J}})\cap in(SIM_{\mathcal{J}})=\emptyset$ as well as $undec(SIM_{\mathcal{J}})\cap out(SIM_{\mathcal{J}})=\emptyset$.
Thus, we have left to show that $in(SIM_{\mathcal{J}})\cap out(SIM_{\mathcal{J}})=\emptyset$ also holds.
For this, we will prove by induction over $n\in\mathbb{N}$ that $in(SIM_{\mathcal{J}})\cap O_{n}=\emptyset$.
Induction start $n=0$:
By construction of $SIM_{\mathcal{J}}$, if $a\in in(SIM_{\mathcal{J}})$, then $a$ is strict and therefore unattacked in $\mathcal{J}$.
If $a\in O_0$, then there exists $(b,a)\in{\rightarrow}$, which contradicts that $a$ is unattacked.
We conclude that $in(SIM_{\mathcal{J}})\cap O_0=\emptyset$ holds.
Inductin step $n\rightarrow n+1$:
We concentrate on $O'=O_{n+1}\setminus O_{n}$.
Suppose there is $a\in in(SIM_{\mathcal{J}})\cap O'$.
Because $a\in in(SIM_{\mathcal{J}})$, we know that $a$ is strict.
By $a\in O'$, we know there is $(S,b)\in Supp$ with $a\in S$, $b\in O_{n}$ and $S\setminus\{a\}\subseteq in(SIM_{\mathcal{J}})=STR_{\mathcal{J}}$.
This means that all arguments in $S$ are strict.
Now $b$ must also be strict, therefore $b\in in(SIM_{\mathcal{J}})$ by construction of $SIM_{\mathcal{J}}$.
But then $b\in in(SIM_{\mathcal{J}})\cap O_n$, contradicting the induction hypothesis $in(SIM_{\mathcal{J}})\cap O_n=\emptyset$.
We conclude that $in(SIM_{\mathcal{J}})\cap O_{n+1}=\emptyset$ holds, therefore $in(SIM_{\mathcal{J}})\cap out(SIM_{\mathcal{J}})=\emptyset$ as required.
\end{proof}
\begin{proposition}\label{Prop:JSBAF:SIMIn}
Let \JSBAF be a JSBAF and $a\in \mathcal{A}$ an argument.
If $SIM_{\mathcal{J}}(a)=\In$, then $a$ is legally \In \wrt $SIM_{\mathcal{J}}$.
\end{proposition}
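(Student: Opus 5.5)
Since $SIM_{\mathcal{J}}(a)=\In$ means $a\in STR_{\mathcal{J}}$, the plan is to verify the two clauses of legally \In in Definition~\ref{Def:JSBAF:LegalLabeling} directly, using only the structural restrictions on JSBAFs and the fact that strict arguments are exactly the \In-labeled ones.

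\emph{Attackers.} By the restriction that strict arguments are unattacked, $a$ has no attackers. The condition ``for all attackers $b$ of $a$, $L(b)=\Out$'' therefore holds vacuously.

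\emph{Supports.} Take any support $(S,c)$ with $a\in S$ and $a\preceq b$ for all $b\in S\setminus\{a\}$. I will not use the preference hypothesis; instead I split on whether $S\setminus\{a\}$ contains a non-strict argument.
\begin{enumerate}
\item Suppose every element of $S\setminus\{a\}$ is strict. Then all of $S$ is strict, because $a$ is strict. By the recursive definition of strictness, $c$ is strict. Hence $SIM_{\mathcal{J}}(c)=\In$ and condition (a) holds.
\item Otherwise some $b\in S\setminus\{a\}$ is non-strict, so $SIM_{\mathcal{J}}(b)\in\{\Out,\Undec\}$.
\begin{itemize}
\item If $SIM_{\mathcal{J}}(c)=\In$, condition (a) holds.
\item If $SIM_{\mathcal{J}}(c)=\Undec$, condition (b) holds with this $b$.
\item If $SIM_{\mathcal{J}}(c)=\Out$ and $b$ (or some other element of $S\setminus\{a\}$) is labeled \Out, condition (c) holds.
\end{itemize}
\end{enumerate}

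\emph{Remaining case.} What is left is $SIM_{\mathcal{J}}(c)=\Out$, no element of $S\setminus\{a\}$ labeled \Out, and exactly one element $b$ of $S\setminus\{a\}$ labeled \Undec, with all others \In (that is, strict).

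I expect this case to be the main obstacle. I will show it cannot occur by exploiting the construction of $out(SIM_{\mathcal{J}})$. Since $c\in O_n$ for some $n$, and $(S,c)\in{\Rightarrow}$ with $b\in S$ and $S\setminus\{b\}\subseteq STR_{\mathcal{J}}=in(SIM_{\mathcal{J}})$ (here $a$ is strict and the other elements are \In), the definition of $O_{n+1}$ puts $b\in O_{n+1}\subseteq out(SIM_{\mathcal{J}})$. This contradicts $SIM_{\mathcal{J}}(b)=\Undec$.

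So in every case one of (a)--(c) holds, and $a$ is legally \In. The only external fact needed is Proposition~\ref{Prop:JSBAF:SIMLabeling}, which guarantees the labels are well defined and disjoint, so that ``non-strict'' really does mean ``not \In''.
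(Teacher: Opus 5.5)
Your proof is correct, but it takes a different route from the paper's.

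The paper relies on the preference restriction on JSBAFs: a non-strict argument is strictly weaker than any strict one. Since $a$ is strict, the side condition $a\preceq b$ for all $b\in S\setminus\{a\}$ can only hold if every element of $S\setminus\{a\}$ is strict. Then $c$ is strict, and condition (a) holds immediately, so only your Case~1 ever arises.

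You instead ignore the preference filter and check \emph{every} support containing $a$. You dispose of the one problematic configuration (exactly one \Undec co-supporter and $SIM_{\mathcal{J}}(c)=\Out$) using the closure of $out(SIM_{\mathcal{J}})$ under the $O_{n+1}$ step.

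The paper's argument is a two-liner but depends on the preference axioms. Yours is longer but independent of preferences, so it proves a stronger statement. It also carries over verbatim to the preference-free JSBAFs of Section~\ref{Sec:Grounded}. There, legally \In quantifies over all supports, so the paper's shortcut is unavailable, and Proposition~\ref{Theo:JSBAF:MSIisAdmissible} is only asserted to be provable \enquote{similarly}.

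One small expository point: your bullet list in Case~2 never mentions the subcase where $SIM_{\mathcal{J}}(c)=\Out$ and two distinct elements of $S\setminus\{a\}$ are \Undec. It is covered by the second disjunct of (c), which is why your \enquote{remaining case} is correctly the single-\Undec one, but you should say so explicitly.
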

\begin{proof}
By construction of $in(SIM_{\mathcal{J}})$, $a$ is strict and therefore unattacked.
Now consider some support $(S,c)$ with $a\in S$.
By the restrictions for JSBAFs mentioned in Section~\ref{Sec:JSBAF:Syntax}, if $a\preceq b$ for all $b\in S\setminus\{a\}$, then all arguments in $S$ must be strict.
From this we can infer that $c$ is also a strict argument and therefore $c\in in(SIM_{\mathcal{J}})$ by construction of $SIM_{\mathcal{J}}$.
Thus $a$ is legally \In \wrt $SIM_{\mathcal{J}}$ as required.
\end{proof}
\begin{proposition}\label{Prop:JSBAF:SIMOut}
Let \JSBAF be a JSBAF and $a\in \mathcal{A}$ an argument.
We have $SIM_{\mathcal{J}}(a)=\Out$ iff $a$ is legally \Out \wrt $SIM_{\mathcal{J}}$.
\end{proposition}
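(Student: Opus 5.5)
We prove the two directions separately. Throughout we use that $in(SIM_{\mathcal{J}})=STR_{\mathcal{J}}$, and that $in(SIM_{\mathcal{J}})\cap out(SIM_{\mathcal{J}})=\emptyset$ by Proposition~\ref{Prop:JSBAF:SIMLabeling}. We read condition (e) of Definition~\ref{Def:JSBAF:LegalLabeling} with $S=S_0$.

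\emph{($\Leftarrow$)} Suppose $a$ is legally \Out \wrt $SIM_{\mathcal{J}}$. If some $(b,a)\in{\rightarrow}$ has $SIM_{\mathcal{J}}(b)=\In$, then $a\in O_0$ and we are done. Otherwise there is a support chain $\{(S_0,b_0),\dots,(S_n,b_n)\}$ satisfying (a)--(e). By (b), $b_n$ is attacked by an \In-labeled argument, so $b_n\in O_0$. We then walk backwards along the chain. For $0<i\leq n$, condition (d) gives $b_{i-1}\in S_i$ and $S_i\setminus\{b_{i-1}\}\subseteq in(SIM_{\mathcal{J}})$. So if $b_i\in O_k$, the definition of $O_{k+1}$ yields $b_{i-1}\in O_{k+1}$. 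A downward induction on $i$ therefore gives $b_0\in O_n$. Finally, condition (a) together with $(S_0,b_0)$ gives $a\in O_{n+1}\subseteq out(SIM_{\mathcal{J}})$. Conditions (c) and (e) are not needed for this direction.

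\emph{($\Rightarrow$)} We prove by induction on $n$ the following strengthened statement. Every $a\in O_n$ either has an attacker labeled \In, or admits a support chain $\{(S_0,b_0),\dots,(S_m,b_m)\}$ with $a\in S_0$ satisfying (a)--(e) \wrt $SIM_{\mathcal{J}}$.

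The base case is immediate, since every $a\in O_0$ has an \In-labeled attacker by definition. For the step, let $a\in O_{n+1}\setminus O_n$, witnessed by $(S,b)\in{\Rightarrow}$ with $a\in S$, $b\in O_n$ and $S\setminus\{a\}\subseteq in(SIM_{\mathcal{J}})$. There are two cases.
\begin{itemize}
\item If $b$ has an \In-labeled attacker, take the one-element chain $\{(S,b)\}$.
\item Otherwise the induction hypothesis gives a chain $\{(S_0,b_0),\dots,(S_m,b_m)\}$ for $b$ with $b\in S_0$. We prepend $(S,b)$ to obtain $\{(S,b),(S_0,b_0),\dots,(S_m,b_m)\}$. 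This is a support chain because $b\in S_0$.
\end{itemize}
We now check (a)--(e) for the new chain.
\begin{itemize}
\item (a) holds by the choice of $(S,b)$.
\item (b) is inherited from the chain for $b$, or is the case assumption in the first case.
\item (c) holds because $b\in O_n\subseteq out(SIM_{\mathcal{J}})$, and the $b_i$ were already \Out.
\item (d) holds for the new link because $S_0\setminus\{b\}\subseteq in(SIM_{\mathcal{J}})$, which is condition (a) of the old chain. The remaining links are inherited.
\item (e) refers only to the new first set $S$, so it must be checked directly.
\end{itemize}

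We expect condition (e) to be the only non-routine point. Every $d\in S\setminus\{a\}$ lies in $in(SIM_{\mathcal{J}})$ and is therefore strict. On the other hand $a\in out(SIM_{\mathcal{J}})$, so by Proposition~\ref{Prop:JSBAF:SIMLabeling} $a\notin in(SIM_{\mathcal{J}})=STR_{\mathcal{J}}$, i.e.\ $a$ is not strict. The JSBAF restriction that non-strict arguments are strictly weaker than strict ones then gives $a\prec d$, hence $a\preceq d$. This completes the induction, and the statement follows because $out(SIM_{\mathcal{J}})=\bigcup_{i\geq 0}O_i$.
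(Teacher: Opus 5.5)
Your proof is correct and follows the paper's argument. The ($\Rightarrow$) direction is the same induction on $n$ that prepends the witnessing support $(S,b)$ to the chain for $b$; you also make explicit, via Proposition~\ref{Prop:JSBAF:SIMLabeling}, that $a$ is non-strict, which the paper uses only implicitly when it invokes the JSBAF restriction to get $a\prec d$. The only deviation is in ($\Leftarrow$): you rebuild $b_0\in O_n$ by walking back from $b_n\in O_0$ via condition (d), whereas the paper reads $b_0\in out(SIM_{\mathcal{J}})$ directly off condition (c), which gives $b_0\in O_m$ for some $m$ and hence $a\in O_{m+1}$; both routes work.
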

\begin{proof}
$\rightarrow$:
$SIM_{\mathcal{J}}(a)=\Out$ implies there is $O_n$ \suchthat $a\in O_n$.
We show by induction over $n\in\mathbb{N}$ that, if $a\in O_{n}$, then $a$ is legally \Out \wrt $SIM_{\mathcal{J}}$.
Induction start $n=0$:
By construction of $SIM_{\mathcal{J}}$, $a\in O_0$ implies there is $b\in in(SIM_{\mathcal{J}})$ \suchthat $(b,a)\in {\rightarrow}$.
This means $a$ is legally \Out, as required.
Induction step $n\rightarrow n+1$:
We focus on $O'=O_{n+1}\setminus O_{n}$.
By construction of $SIM_{\mathcal{J}}$ we know there is $(S,b)\in {\Rightarrow}$ with $a\in S$, $b\in O_{n}$ and $S\setminus\{a\}\subseteq in(SIM_{\mathcal{J}})=STR_{\mathcal{J}}$.
Note that, by the restrictions on JSBAFs mentioned in Section~\ref{Sec:JSBAF:Syntax}, $S\setminus\{a\}\subseteq STR_{\mathcal{J}}$ implies $a\prec d$ for all $d\in S\setminus\{a\}$.
By the induction hypothesis, $b$ is legally \Out \wrt $SIM_{\mathcal{J}}$.
Suppose first that $b$ is legally \Out because of an attack $(c,b)\in {\rightarrow}$.
Then $\{(S,b)\}$ is a support chain \suchthat $a\in S$, $(c,b)\in {\rightarrow}$ with $c\in in(SIM_{\mathcal{J}})$, $b\in out(SIM_{\mathcal{J}})$, $S\setminus\{a\}\subseteq in(SIM_{\mathcal{J}})$ and $a\preceq d$ for all $d\in S\setminus\{a\}$.
Thus $a$ is legally \Out \wrt $SIM_{\mathcal{J}}$.
Next, suppose that $b$ is legally \Out because of a support chain $\{(S_0,b_0),\dots,(S_n,b_n)\}$ which satisfies the conditions of item two of Definition~\ref{Def:JSBAF:LegalLabeling}.
%
%
Then we construct the support chain $\{(S,b),(S_0,b_0),\dots,(S_n,b_n)\}$ which satisfies those same conditions \wrt $a\in S$.
By Definition~\ref{Def:JSBAF:LegalLabeling}, $a$ is now legally \Out \wrt $SIM_{\mathcal{J}}$, as required.
We conclude that $a\in out(SIM_{\mathcal{J}})$ implies $a$ is legally \Out \wrt $SIM_{\mathcal{J}}$.
$\leftarrow$:
Suppose first that $a$ is legally \Out \wrt $SIM_{\mathcal{J}}$ because of an attack $(b,a)\in {\rightarrow}$ with $b\in in(SIM_{\mathcal{J}})$.
By construction of $SIM_{\mathcal{J}}$, we infer that $b$ must be a strict argument, thus $a\in O_0\subseteq out(SIM_{\mathcal{J}})$.
%
%
%
%
%
Next, suppose that $a$ is legally \Out \wrt $SIM_{\mathcal{J}}$ due to a support chain $\{(S_0,b_0),\dots,(S_n,b_n)\}$.
Then in particular $a\in S_0$, $b_0\in out(SIM_{\mathcal{J}})$ and $S_0\setminus\{a\}\subseteq in(SIM_{\mathcal{J}})$.
By construction of $SIM_{\mathcal{J}}$, there needs to be some $m\in\mathbb{N}$ \suchthat $b_0\in O_m$.
Now $a\in O_{m+1}\subseteq out(SIM_{\mathcal{J}})$ as required.
\end{proof}
\begin{lemma}\label{Lem:JSBAF:SIMIsAdm}
Let \JSBAF be a JSBAF.
Then $SIM_{\mathcal{J}}$ is an admissible labeling.
\end{lemma}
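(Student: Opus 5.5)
The proof assembles the three preceding propositions against the three clauses of admissibility. By Proposition~\ref{Prop:JSBAF:SIMLabeling}, $SIM_{\mathcal{J}}$ is a labeling, so it is a candidate for admissibility at all. We then check the three conditions of the definition in turn.

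\emph{First condition.} Every $a\in in(SIM_{\mathcal{J}})$ must be legally \In \wrt $SIM_{\mathcal{J}}$. This is exactly Proposition~\ref{Prop:JSBAF:SIMIn}.

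\emph{Second condition.} We need $a\in out(SIM_{\mathcal{J}})$ iff $a$ is legally \Out \wrt $SIM_{\mathcal{J}}$. This is exactly Proposition~\ref{Prop:JSBAF:SIMOut}.

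\emph{Third condition.} Every strict argument must be labeled \In. This holds by construction, since $in(SIM_{\mathcal{J}})=STR_{\mathcal{J}}$ by Definition~\ref{Def:JSBAF:SIM}.

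There is no real obstacle here, since all the work was done in the preceding propositions. The one point to be careful about is that admissibility requires only ``$\In$ implies legally \In'' and does not impose any condition on \Undec-labeled arguments. We therefore do not need to show that the \Undec arguments are legally \Undec, and the proof closes by combining the three items above.
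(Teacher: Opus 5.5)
Your proposal is correct and matches the paper's proof: the paper likewise invokes Propositions~\ref{Prop:JSBAF:SIMLabeling}, \ref{Prop:JSBAF:SIMIn} and \ref{Prop:JSBAF:SIMOut} for the labeling property and the \In/\Out conditions, and notes that $STR_{\mathcal{J}}\subseteq in(SIM_{\mathcal{J}})$ holds by construction. Your remark that admissibility places no requirement on \Undec-labeled arguments is also accurate.
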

\begin{proof}
By Proposition~\ref{Prop:JSBAF:SIMLabeling} we know that $SIM_{\mathcal{J}}$ is a labeling.
By Proposition~\ref{Prop:JSBAF:SIMIn} we know that all $a\in in(SIM_{\mathcal{J}})$ are legally \In \wrt $SIM_{\mathcal{J}}$.
By Proposition~\ref{Prop:JSBAF:SIMOut} we know that $a\in out(SIM_{\mathcal{J}})$ iff $a$ is legally \Out \wrt $SIM_{\mathcal{J}}$.
Finally, $STR_{\mathcal{J}}\subseteq in(SIM_{\mathcal{J}})$ is trivially satisfied by construction of $SIM_{\mathcal{J}}$.
\end{proof}
\begin{corollary}\label{Cor:JSBAF:ADMNonEmpty}
Let \JSBAF be a JSBAF.
Then we have $adm(\mathcal{J})\neq\emptyset$.
\end{corollary}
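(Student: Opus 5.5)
The corollary follows directly from Lemma~\ref{Lem:JSBAF:SIMIsAdm}. Given an arbitrary JSBAF \JSBAF, I would take the labeling $SIM_{\mathcal{J}}$ of Definition~\ref{Def:JSBAF:SIM} as the witness. That definition gives a single well-defined object for every JSBAF, with no choices or extra assumptions. In particular, it places no finiteness requirement on $\mathcal{A}$, because $out(SIM_{\mathcal{J}})$ is defined as the countable union $\bigcup_{i\geq 0} O_i$.

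Lemma~\ref{Lem:JSBAF:SIMIsAdm} states that $SIM_{\mathcal{J}}$ is an admissible labeling. It therefore belongs to $adm(\mathcal{J})$, and hence $adm(\mathcal{J})\neq\emptyset$.

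There is no genuine obstacle at this level; the substantive work is already done in Propositions~\ref{Prop:JSBAF:SIMLabeling}, \ref{Prop:JSBAF:SIMIn} and~\ref{Prop:JSBAF:SIMOut}. The one point I would state explicitly is that the corollary holds for JSBAFs with infinitely many arguments. This is because the existence of $SIM_{\mathcal{J}}$ and the lemma's proof never depend on the cardinality of $\mathcal{A}$.
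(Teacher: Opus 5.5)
Your proposal is correct and matches the paper's route: the corollary is an immediate consequence of Lemma~\ref{Lem:JSBAF:SIMIsAdm}, with $SIM_{\mathcal{J}}$ as the witness in $adm(\mathcal{J})$. Your remark that nothing depends on the cardinality of $\mathcal{A}$ is also consistent with how the paper later uses this result in the Zorn's Lemma argument for Theorem~\ref{Theo:JSBAF:Theo:PreferredExistence}.
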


\subsubsection{Existence of preferred labelings}
To prove that every JSBAF has at least one preferred labeling, we have left to show that there always exists a \emph{maximal} (\wrt sub-set inclusion) admissible labeling.
For JSBAFs with an infinite amount of arguments, it is not immediately clear that this is always the case.
Our proof will rely on the Lemma of Zorn.\footnote{
Zorn's Lemma states that, if $X$ is a partially ordered set \suchthat every chain in $X$ has an upper bound, then $X$ contains a maximal element.
}
We begin by defining the following partial order on admissible labelings:
\begin{definition}\label{Def:JSBAF:AdmissiblePartialOrder}
Let \JSBAF be a JSBAF.
We define the relation ${\leq_{\mathfrak{L}}}\subseteq adm(\mathcal{J})\times adm(\mathcal{J})$ as follows:
$L_1\leq_{\mathfrak{L}} L_2$ iff $in(L_1)\subseteq in(L_2)$ and $out(L_1)\subseteq out(L_2)$.
\end{definition}
It is easy to verify that ${\leq_{\mathfrak{L}}}$ is a partial order and we omit a proof.
Based on this relation, we now define the \emph{supremum} \wrt a chain of admissible labelings:
\begin{definition}\label{Def:JSBAF:KMax}
Let \JSBAF be a JSBAF.
Furthermore, let $\mathcal{K}\subseteq adm(\mathcal{J})$ be a chain of admissible labelings \wrt ${\leq_{\mathfrak{L}}}$.
We define the \emph{supremum of $\mathcal{K}$}, denoted by $L_{\mathcal{K}}$, as follows:
\begin{itemize}
	\item {$in(L_{\mathcal{K}})=\bigcup\limits_{L\in \mathcal{K}} in(L)$}
	\item {$out(L_{\mathcal{K}})=\bigcup\limits_{L\in \mathcal{K}} out(L)$}
	\item {$undec(L_{\mathcal{K}})=\mathcal{A}\setminus\big(in(L_{\mathcal{K}})\cup out(L_{\mathcal{K}})\big)$}
\end{itemize}
\end{definition}

We now show that this labeling $L_{\mathcal{K}}$ is an admissible labeling.
To make it more accessible, we have split the proof in several parts and show (in this order) that $L_{\mathcal{K}}$ is a labeling, that the accepted arguments are legally \In, that the rejected arguments are exactly those which are legally \Out and, finally, that $L_{\mathcal{K}}$ is an admissible labeling.
\begin{proposition}\label{Prop:JSBAF:KMaxLabeling}
Let \JSBAF be a JSBAF.
Furthermore, let $\mathcal{K}\subseteq adm(\mathcal{J})$ be a chain of admissible labelings \wrt $\leq_{\mathfrak{L}}$ and let $L_{\mathcal{K}}$ be the supremum of $\mathcal{K}$.
Then $L_{\mathcal{K}}$ is a labeling.
\end{proposition}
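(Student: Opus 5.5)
By construction every argument receives some label: $undec(L_{\mathcal{K}})$ is defined as the complement of $in(L_{\mathcal{K}})\cup out(L_{\mathcal{K}})$, so it is disjoint from both of those sets. This is the same observation as in Proposition~\ref{Prop:JSBAF:SIMLabeling}. What remains is to show that $in(L_{\mathcal{K}})\cap out(L_{\mathcal{K}})=\emptyset$, and for this I will use that $\mathcal{K}$ is a chain with respect to $\leq_{\mathfrak{L}}$.

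Suppose, for contradiction, that some $a\in in(L_{\mathcal{K}})\cap out(L_{\mathcal{K}})$ exists. By definition of the supremum there are $L_1,L_2\in\mathcal{K}$ with $a\in in(L_1)$ and $a\in out(L_2)$. Because $\mathcal{K}$ is a chain, $L_1\leq_{\mathfrak{L}} L_2$ or $L_2\leq_{\mathfrak{L}} L_1$ holds, and I split on these two cases.

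\begin{itemize}
\item If $L_1\leq_{\mathfrak{L}} L_2$, then $in(L_1)\subseteq in(L_2)$ by Definition~\ref{Def:JSBAF:AdmissiblePartialOrder}. Hence $a\in in(L_2)\cap out(L_2)$, which is impossible because $L_2$ is a labeling and therefore a function assigning each argument exactly one label.
\item If $L_2\leq_{\mathfrak{L}} L_1$, then $out(L_2)\subseteq out(L_1)$. Hence $a\in in(L_1)\cap out(L_1)$, which contradicts in the same way that $L_1$ is a labeling.
\end{itemize}

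Both cases are contradictory, so the intersection is empty and $L_{\mathcal{K}}$ is a labeling. If $\mathcal{K}$ is empty, $L_{\mathcal{K}}$ labels everything \Undec, which is trivially a labeling.

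I do not expect a real obstacle here. The only subtle point is that $\leq_{\mathfrak{L}}$ requires inclusion of both the \In-sets and the \Out-sets. This is exactly what makes each of the two comparability cases transfer $a$ into a single labeling where it carries two labels. Legality and admissibility of $L_{\mathcal{K}}$ are not needed for this statement and are left to the subsequent propositions.
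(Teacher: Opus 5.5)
Your proof is correct and matches the paper's argument. Like the paper, you note that $undec(L_{\mathcal{K}})$ is disjoint from the other two sets by construction, then refute any $a\in in(L_{\mathcal{K}})\cap out(L_{\mathcal{K}})$ by choosing witnesses $L_1,L_2\in\mathcal{K}$ and using their comparability under $\leq_{\mathfrak{L}}$ to put both labels on $a$ within a single labeling. Your separate remark on the empty chain is also correct, although the main argument already covers that case vacuously.
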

\begin{proof}
It is clear from the definition of $L_{\mathcal{K}}$ that every argument gets assigned some label and that $undec(L_{\mathcal{K}})\cap in(L_{\mathcal{K}})=\emptyset$ as well as $undec(L_{\mathcal{K}})\cap out(L_{\mathcal{K}})=\emptyset$.
Thus, we have left to show that $in(L_{\mathcal{K}})\cap out(L_{\mathcal{K}})=\emptyset$ also holds.
Towards a contradiction, let $a\in in(L_{\mathcal{K}})\cap out(L_{\mathcal{K}})$.
By construction of $L_{\mathcal{K}}$, this means there are $L_1,L_2\in {\mathcal{K}}$ with $a\in in(L_1)$ and $a\in out(L_2)$.
Because $\mathcal{K}$ was a chain, we know that either $L_1\leq_{\mathfrak{L}} L_2$ or $L_2\leq_{\mathfrak{L}} L_1$ holds.
First, lets assume $L_1\leq_{\mathfrak{L}} L_2$:
Then $a\in in(L_1)\subseteq in(L_2)$.
Now $a$ is labeled \In and \Out by $L_2$, contradicting that $L_2$ is a labeling.
Next, assume $L_2\leq_{\mathfrak{L}} L_1$:
Then $a\in out(L_2)\subseteq out(L_1)$.
Now $a$ is labeled \Out and \In by $L_1$, contradicting that $L_1$ is a labeling.
We conclude that $in(L_{\mathcal{K}})\cap out(L_{\mathcal{K}})=\emptyset$ as required.
\end{proof}
\begin{proposition}\label{Prop:JSBAF:KMaxIn}
Let \JSBAF be a JSBAF.
Furthermore, let $\mathcal{K}\subseteq adm(\mathcal{J})$ be a chain of admissible labelings \wrt $\leq_{\mathfrak{L}}$ and let $L_{\mathcal{K}}$ be the supremum of $\mathcal{K}$.
If $a\in in(L_{\mathcal{K}})$, then $a$ is legally \In \wrt $L_{\mathcal{K}}$.
\end{proposition}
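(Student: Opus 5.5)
Fix $a\in in(L_{\mathcal{K}})$. By Definition~\ref{Def:JSBAF:KMax} there is some $L_0\in\mathcal{K}$ with $a\in in(L_0)$, and since $L_0$ is admissible, $a$ is legally \In \wrt $L_0$. The plan is to transfer this legality to $L_{\mathcal{K}}$. The basic fact used throughout is that for every $L\in\mathcal{K}$ we have $in(L)\subseteq in(L_{\mathcal{K}})$ and $out(L)\subseteq out(L_{\mathcal{K}})$. Since $L_{\mathcal{K}}$ is a labeling (Proposition~\ref{Prop:JSBAF:KMaxLabeling}), this gives $undec(L_{\mathcal{K}})\subseteq undec(L)$. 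So passing from $L_0$ to $L_{\mathcal{K}}$ only moves arguments from \Undec to \In or \Out.

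\emph{Attacks.} For every attacker $b$ of $a$ we have $L_0(b)=\Out$, hence $L_{\mathcal{K}}(b)=\Out$.

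\emph{Supports.} Fix a support $(S,c)$ with $a\in S$ and $a\preceq b$ for all $b\in S\setminus\{a\}$. Case (a) of Definition~\ref{Def:JSBAF:LegalLabeling} is preserved directly: if $L_0(c)=\In$, then $L_{\mathcal{K}}(c)=\In$. Case (c) with an \Out-labeled $b$ is also preserved directly. The difficulty is that \Undec labels may be resolved further along the chain, so the witnesses in (b) and the undecided pair in (c) can disappear. To handle this, I will argue by contradiction. Assume $a$ violates the condition for $(S,c)$ \wrt $L_{\mathcal{K}}$. Then $c\notin in(L_{\mathcal{K}})$, and either
\begin{itemize}
\item[(i)] $S\setminus\{a\}\subseteq in(L_{\mathcal{K}})$, or
\item[(ii)] $L_{\mathcal{K}}(c)=\Out$, no element of $S\setminus\{a\}$ is \Out, and exactly one element $b^\ast\in S\setminus\{a\}$ is \Undec while the rest are \In.
\end{itemize}

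Since $S$ is finite (by the restrictions on JSBAFs) and $\mathcal{K}$ is a chain, I can pick a single $L^\ast\in\mathcal{K}$ with $L^\ast\geq_{\mathfrak{L}} L_0$ that already agrees with $L_{\mathcal{K}}$ on the finitely many arguments in $S\cup\{c\}$ that are \In or \Out in $L_{\mathcal{K}}$. This works because each such label is attained at some member of the chain, and I take the largest of these finitely many members together with $L_0$. On $S\cup\{c\}$, the labels of $L^\ast$ then coincide with those of $L_{\mathcal{K}}$, since anything \Undec in $L_{\mathcal{K}}$ is \Undec in $L^\ast$. Moreover $a\in in(L^\ast)$, so $a$ is legally \In \wrt $L^\ast$. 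But the same violation (i) or (ii) occurs \wrt $L^\ast$, because the legality conditions for $a$ and $(S,c)$ only inspect the labels of $S\cup\{c\}$. This is a contradiction.

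The main obstacle is this finiteness/chain step: I must make sure that a single $L^\ast$ reproduces the labels of $L_{\mathcal{K}}$ on all of $S\cup\{c\}$ simultaneously. This needs the finiteness of $S$ and the fact that the labelings in $\mathcal{K}$ are totally ordered. I also need to check that the \Undec labels agree, which follows from the monotonicity observation above. Once this holds, every support $(S,c)$ satisfies one of (a)--(c) \wrt $L_{\mathcal{K}}$, and together with the attack case, $a$ is legally \In \wrt $L_{\mathcal{K}}$.
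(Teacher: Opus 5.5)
Your proof is correct and follows the paper's route. Both use the finiteness of $S$ and the chain property to pick one member of $\mathcal{K}$ lying above the finitely many relevant witnesses, where $a$ is legally \In by admissibility, and then transfer this back to $L_{\mathcal{K}}$. The paper does this through a case analysis on the label of the supported argument (first ruling out $S\subseteq in(L_{\mathcal{K}})$, then splitting on $|S\setminus in(L_{\mathcal{K}})|$), whereas your observation that $undec(L_{\mathcal{K}})\subseteq undec(L)$ for every $L\in\mathcal{K}$ lets a single $L^\ast$ agree with $L_{\mathcal{K}}$ on all of $S\cup\{c\}$, collapsing those cases into one step.
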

\begin{proof}
Let $a\in in(L_{\mathcal{K}})$.
We first show that any attacker of $a$ is labeled \Out:
Let $L\in\mathcal{K}$ \suchthat $L(a)=\In$.
By construction of $L_{\mathcal{K}}$, such a labeling $L$ must exist.
Now assume that there is $b\in \mathcal{A}$ \suchthat $(b,a)\in {\rightarrow}$.
Because $L$ is an admissible labeling, we have $L(b)=\Out$.
By construction of $L_{\mathcal{K}}$, we can now infer $L_{\mathcal{K}}(b)=\Out$ as required.
Now take some support $(S,b)\in {\Rightarrow}$ with $a\in S$ and $a\preceq c$ for all $c\in S\setminus\{a\}$.
We have to show that the conditions of item one of Definition~\ref{Def:JSBAF:LegalLabeling} are satisfied.
If $L_{\mathcal{K}}(b)=\In$, then this trivially holds.
Thus we assume $L_{\mathcal{K}}(b)\neq\In$.
We first show that this implies $S\not\subseteq in(L_{\mathcal{K}})$:
Towards a contradiction suppose that this is not the case.
Let $S=\{a_0,\dots,a_m\}$ and let $L_{\In}=\{L_0,..,L_m\}\subseteq \mathcal{K}$ \suchthat $a_i\in in(L_i)$ for all $0\leq i\leq m$.
We know that $S$ is finite (by the restrictions mentioned in Section~\ref{Sec:JSBAF:Syntax}).
Because $\mathcal{K}$ is a chain \wrt $\leq_{\mathfrak{L}}$, we know that there is $L_{max}\in L_{\In}$ \suchthat $L_i\leq_{\mathfrak{L}} L_{max}$ for all $0\leq i\leq m$.
Now we have $in(L_i)\subseteq in(L_{max})$ for all $0\leq i\leq m$.
This means $S\subseteq in(L_{max})$.
We know that $L_{max}$ is an admissible labeling.
Thus, for $a\in S$ in particular, we have that $a$ is legally \In \wrt $L_{max}$.
Now we can infer that $L_{max}(b)=\In$ must hold.
By construction of $L_{\mathcal{K}}$, this implies $L_{\mathcal{K}}(b)=\In$, contradicting our assumption $L_{\mathcal{K}}(b)\neq\In$.
Now for the remaining options of $L_{\mathcal{K}}(b)$:
We first consider the case $L_{\mathcal{K}}(b)=\Undec$.
We have argued that $S\subseteq in(L_{\mathcal{K}})$ cannot hold.
This means there is at least one $c\in S$ with $L_{\mathcal{K}}(c)=\Out$ or $L_{\mathcal{K}}(c)=\Undec$, as required.
Lastly, lets consider the case $L_{\mathcal{K}}(b)=\Out$:
As before, we know $S\not\subseteq in(L_{\mathcal{K}})$.
Let $S_{\In}=\{a_0,\dots,a_m\}=S\cap in(L_{\mathcal{K}})$ and let $L_{\In}=\{L_0,\dots,L_m\}\subseteq \mathcal{K}$ \suchthat $a_i\in L_i$ for all $0\leq i\leq m$.
Furthermore, let $L_{\Out}\in\mathcal{K}$ \suchthat $L_{\Out}(b)=\Out$.
We take $L_{max}\in L_{\In}\cup \{L_{\Out}\}$ \suchthat $L_i\leq_{\mathfrak{L}} L_{max}$ and $L_{\Out}\leq_{\mathfrak{L}} L_{max}$.
Then $S_{\In}\subseteq in(L_{max})$ and $b\in out(L_{\max})$.
Now consider the set $S'=S\setminus S_{\In}$:
We either have $|S'|=1$ or $|S'|>1$.
Assume first that $S'=\{c\}$.
Then we can infer from the admissibility of $L_{max}$, that $c\in out(L_{max})\subseteq out(L_{\mathcal{K}})$ must hold.
Now $a$ is legally \In \wrt $L_{\mathcal{K}}$, as required.
Next, assume that $|S'|>1$.
Then there either exists an argument $c\in S'$ \suchthat $c\in out(L_{max})\subseteq out(L_{\mathcal{K}})$, or there are arguments $c_1,c_2\in S'$ \suchthat $c_1\neq c_2$ and $c_1,c_2\in undec(L_{max})$.
In the first case, we can immediately infer that $a$ is legally \In \wrt $L_{\mathcal{K}}$.
In the second case, we either have $c_1,c_2\in undec(L_{\mathcal{K}})$, or $\{c_1,c_2\}\cap out(L_{\mathcal{K}})\neq\emptyset$.
Either way, we can again infer that $a$ is legally \In \wrt $L_{\mathcal{K}}$, as required.
\end{proof}

\begin{proposition}\label{Prop:JSBAF:KMaxOut}
Let \JSBAF be a JSBAF.
Furthermore, let $\mathcal{K}\subseteq adm(\mathcal{J})$ be a chain of admissible labelings \wrt $\leq_{\mathfrak{L}}$ and let $L_{\mathcal{K}}$ be the supremum of $\mathcal{K}$.
Then for any $a\in \mathcal{A}$, $a\in out(L_{\mathcal{K}})$ iff $a$ is legally \Out \wrt $L_{\mathcal{K}}$.
\end{proposition}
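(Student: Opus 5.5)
We prove the two directions separately. Throughout we use two facts about chains: $\mathcal{K}$ is totally ordered by $\leq_{\mathfrak{L}}$, and every finite subset of $\mathcal{K}$ has a $\leq_{\mathfrak{L}}$-maximal element. Both in- and out-sets are monotone along $\leq_{\mathfrak{L}}$, so if $L\leq_{\mathfrak{L}} L'$ then $in(L)\subseteq in(L')$ and $out(L)\subseteq out(L')$.

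\emph{Direction $\rightarrow$.} Let $a\in out(L_{\mathcal{K}})$. By construction of the supremum, there is $L\in\mathcal{K}$ with $a\in out(L)$. Since $L$ is admissible, $a$ is legally \Out \wrt $L$. We show that the witness for this is also a witness \wrt $L_{\mathcal{K}}$.

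\begin{itemize}
\item If the witness is an attacker $b$ with $L(b)=\In$, then $b\in in(L)\subseteq in(L_{\mathcal{K}})$, so $a$ is legally \Out \wrt $L_{\mathcal{K}}$.
\item If the witness is a finite support chain $\{(S_0,b_0),\dots,(S_n,b_n)\}$ with an attacker $c$ of $b_n$, we check conditions (a)--(e) of Definition~\ref{Def:JSBAF:LegalLabeling}. Conditions (a), (b) and (d) only require certain arguments to be in $in(L)$, and these transfer because $in(L)\subseteq in(L_{\mathcal{K}})$. Condition (c) only requires certain arguments to be in $out(L)$, and these transfer because $out(L)\subseteq out(L_{\mathcal{K}})$. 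Condition (e) is purely about the preference ordering, so it does not depend on the labeling.
\end{itemize}

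Hence $a$ is legally \Out \wrt $L_{\mathcal{K}}$. The key point is that legal outness is monotone in the labeling, i.e.\ it uses only positive conditions on \In and \Out.

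\emph{Direction $\leftarrow$.} Let $a$ be legally \Out \wrt $L_{\mathcal{K}}$. We find a single $L\in\mathcal{K}$ relative to which the same witness works, and then use admissibility of $L$.

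\begin{itemize}
\item \emph{Attack case.} Suppose $(b,a)\in{\rightarrow}$ with $L_{\mathcal{K}}(b)=\In$. Pick $L\in\mathcal{K}$ with $b\in in(L)$. Then $a$ is legally \Out \wrt $L$. By the ``iff'' clause of admissibility, $a\in out(L)\subseteq out(L_{\mathcal{K}})$.
\item \emph{Support-chain case.} Take the chain $\{(S_0,b_0),\dots,(S_n,b_n)\}$ and the attacker $c$. The witness involves only finitely many relevant arguments: the elements of $S_0\setminus\{a\}$ and of each $S_i\setminus\{b_{i-1}\}$ (each finite by the finiteness restriction on supports), the arguments $b_0,\dots,b_n$, and $c$. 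For each such argument choose a labeling in $\mathcal{K}$ that gives it its required \In or \Out label; such a labeling exists by construction of $L_{\mathcal{K}}$. Let $L_{max}$ be the $\leq_{\mathfrak{L}}$-maximum of these finitely many labelings. By monotonicity, $L_{max}$ labels every one of these arguments exactly as $L_{\mathcal{K}}$ does, and condition (e) is unchanged. So the chain witnesses that $a$ is legally \Out \wrt $L_{max}$. Admissibility of $L_{max}$ gives $a\in out(L_{max})\subseteq out(L_{\mathcal{K}})$.
\end{itemize}

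The main obstacle is exactly this finiteness step in the support-chain case: legal outness must be reflected back to a single member of the chain $\mathcal{K}$. It works only because the definition requires the support chain to be finite and each supporting set to be finite. This is the same technique as in Proposition~\ref{Prop:JSBAF:KMaxIn}. One should also check that no negative condition (such as ``not \In'') appears in legal outness, since such a condition would not transfer between $L_{max}$ and $L_{\mathcal{K}}$; inspecting Definition~\ref{Def:JSBAF:LegalLabeling} confirms that none does.
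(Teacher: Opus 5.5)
Your proof is correct and follows the paper's argument. For $\rightarrow$ you transfer the witness monotonically, and for $\leftarrow$ you use finiteness to find one $\leq_{\mathfrak{L}}$-maximal member $L_{max}$ of $\mathcal{K}$, whose admissibility yields $a\in out(L_{max})$; the only difference is that you lift the whole finite chain witness to $L_{max}$, whereas the paper lifts only $S_0\setminus\{a\}$ and $b_0$ and concludes $a\in out(L_{max})$ from admissibility, so your version makes that last step immediate.
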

\begin{proof}
$\rightarrow$:
Take $a\in out(L_{\mathcal{K}})$.
By construction of $L_{\mathcal{K}}$, there exists $L\in \mathcal{K}$ \suchthat $a\in out(L)$.
As $L$ was an admissible labeling, we know that $a$ is legally \Out \wrt $L$.
Regardless of whether $a$ is legally \Out \wrt $L$ due to an attack or due to a support chain, we can use $in(L)\subseteq in(L_{\mathcal{K}})$ and $out(L)\subseteq out(L_{\mathcal{K}})$ to infer that $a$ is legally \Out \wrt $L_{\mathcal{K}}$ due to the same attacker or support chain.
$\leftarrow$:
Suppose first that $a$ is legally \Out \wrt $L_{\mathcal{K}}$ because of an attack, \ie there is $(b,a)\in {\rightarrow}$ with $b\in in(L_{\mathcal{K}})$.
Let $L\in\mathcal{K}$ \suchthat $b\in in(L)$.
As $L$ is an admissible labeling, we can infer $a\in out(L)$ and by construction of $L_{\mathcal{K}}$ we have $a\in out(L_{\mathcal{K}})$ as required.
Next, suppose that $a$ is legally \Out \wrt $L_{\mathcal{K}}$ because of a support chain $\{(S_0,b_0),\dots,(S_n,b_n)\}$ with $a\in S_0$.
Then in particular $b_0\in out(L_{\mathcal{K}})$ and for $S_0\setminus\{a\}=\{a_0,\dots,a_m\}$, we have $\{a_0,\dots,a_m\}\subseteq in(L_{\mathcal{K}})$.
By construction of $L_{K}$, we again have a set of labelings $L_{\In}=\{L_0,\dots,L_m\}\subseteq \mathcal{K}$ 
\suchthat $a_i\in in(L_i)$ and we have a labeling $L_{\Out}\in\mathcal{K}$ \suchthat $b\in out(L_{\Out})$.
We again take the maximal labeling $L_{max}\subseteq L_{\In}\cup\{L_{\Out}\}$ for which we have $L_i\leq_{\mathfrak{L}} L_{\max}$ for all $L_i\in L_{\In}$, as well as $L_{\Out}\leq_{\mathfrak{L}} L_{max}$.
Because $L_{max}$ is admissible, we can again infer that $a\in out(L_{max})$, otherwise none of the $a_i$ would be legally \In \wrt $L_{max}$.
By construction of $L_{\mathcal{K}}$ we now have $a\in out(L_{\mathcal{K}})$ as required.
\end{proof}
\begin{proposition}\label{Prop:JSBAF:KMaxAdmissble}
Let \JSBAF be a JSBAF.
Furthermore, let $\mathcal{K}\subseteq adm(\mathcal{J})$ be a chain of admissible labelings \wrt $\leq_{\mathfrak{L}}$ and let $L_{\mathcal{K}}$ be the supremum of $\mathcal{K}$.
Then $L_{\mathcal{K}}$ is an admissible labeling.
\end{proposition}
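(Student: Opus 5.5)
The plan mirrors the proof of Lemma~\ref{Lem:JSBAF:SIMIsAdm}: check the three conditions of admissibility one at a time, using the propositions just established for the supremum.

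First, by Proposition~\ref{Prop:JSBAF:KMaxLabeling}, $L_{\mathcal{K}}$ is a labeling, so the admissibility conditions make sense for it. Second, Proposition~\ref{Prop:JSBAF:KMaxIn} gives that every $a\in in(L_{\mathcal{K}})$ is legally \In \wrt $L_{\mathcal{K}}$. Third, Proposition~\ref{Prop:JSBAF:KMaxOut} gives that $a\in out(L_{\mathcal{K}})$ iff $a$ is legally \Out \wrt $L_{\mathcal{K}}$.

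The only remaining condition is that every strict argument is labeled \In. Here there is one subtlety: the chain $\mathcal{K}$ may be empty. If $\mathcal{K}\neq\emptyset$, pick any $L\in\mathcal{K}$. Since $L$ is admissible, $STR_{\mathcal{J}}\subseteq in(L)$, and by construction $in(L)\subseteq in(L_{\mathcal{K}})$, so $STR_{\mathcal{J}}\subseteq in(L_{\mathcal{K}})$.

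If $\mathcal{K}=\emptyset$, then $L_{\mathcal{K}}$ labels everything \Undec, and this fails the strictness condition whenever strict arguments exist. I would handle this case outside the proposition. For the intended application of Zorn's Lemma, the empty chain has the upper bound $SIM_{\mathcal{J}}$, which is admissible by Lemma~\ref{Lem:JSBAF:SIMIsAdm}; this also matches Corollary~\ref{Cor:JSBAF:ADMNonEmpty}. Within the proof itself, I would either assume $\mathcal{K}\neq\emptyset$ or add a remark that for the empty chain one uses $SIM_{\mathcal{J}}$ instead.

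This empty-chain issue is the main obstacle, and it is a bookkeeping matter rather than a mathematical one. All the substantive work sits in Propositions~\ref{Prop:JSBAF:KMaxIn} and~\ref{Prop:JSBAF:KMaxOut}, and those may be assumed.
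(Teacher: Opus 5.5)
Your proof is correct and is exactly the paper's: it assembles Propositions~\ref{Prop:JSBAF:KMaxLabeling}, \ref{Prop:JSBAF:KMaxIn} and~\ref{Prop:JSBAF:KMaxOut}, and it obtains $STR_{\mathcal{J}}\subseteq in(L_{\mathcal{K}})$ from the admissibility of the members of $\mathcal{K}$. Your empty-chain caveat is a genuine correction that the paper's proof passes over: that last step needs $\mathcal{K}\neq\emptyset$, since the supremum of the empty chain labels everything \Undec and is not admissible when strict arguments exist; Zorn's Lemma still goes through as you describe, because $adm(\mathcal{J})\neq\emptyset$ by Corollary~\ref{Cor:JSBAF:ADMNonEmpty}, which the proof of Theorem~\ref{Theo:JSBAF:Theo:PreferredExistence} invokes first.
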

\begin{proof}
From Proposition \ref{Prop:JSBAF:KMaxLabeling} we know that $L_{\mathcal{K}}$ is a labeling of $\mathcal{J}$.
By Proposition \ref{Prop:JSBAF:KMaxIn} we know that every $a\in in(L_{\mathcal{K}})$ is legally \In \wrt $L_{\mathcal{K}}$.
By Proposition \ref{Prop:JSBAF:KMaxOut} we know that $a\in out(L_{\mathcal{K}})$ iff $a$ is legally \Out \wrt $L_{\mathcal{K}}$.
Finally, because all labelings $L\in \mathcal{K}$ are admissible labelings, we can infer that $STR_{\mathcal{J}}\subseteq in(L)$ holds.
By construction of $L_{\mathcal{K}}$, we now have $STR_{\mathcal{J}}\subseteq in(L_{\mathcal{K}})$.
We conclude that $Lab_{\mathcal{K}}$ is an admissible labeling.
\end{proof}

With the construction of $Lab_{\mathcal{K}}$, we can now argue that a maximal admissible labeling exists even for JSBAFs with an infinite number of arguments.
This means that for each JSBAF, there is always at least one preferred labeling:

\begin{theorem}\label{Theo:JSBAF:Theo:PreferredExistence}
Let \J be a JSBAF.
Then $pr(\mathcal{J})\neq\emptyset$.
\end{theorem}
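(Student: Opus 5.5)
The plan is to apply Zorn's Lemma to the partially ordered set $(adm(\mathcal{J}),\leq_{\mathfrak{L}})$ from Definition~\ref{Def:JSBAF:AdmissiblePartialOrder}. We then show that a $\leq_{\mathfrak{L}}$-maximal admissible labeling is also maximal \wrt subset inclusion of $in(\cdot)$, and hence preferred.

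First, $adm(\mathcal{J})$ is non-empty by Corollary~\ref{Cor:JSBAF:ADMNonEmpty}. For the chain condition, let $\mathcal{K}$ be a chain in $adm(\mathcal{J})$. If $\mathcal{K}=\emptyset$, then $SIM_{\mathcal{J}}$ is an upper bound by Lemma~\ref{Lem:JSBAF:SIMIsAdm}. Otherwise the supremum $L_{\mathcal{K}}$ is admissible by Proposition~\ref{Prop:JSBAF:KMaxAdmissble}. It is an upper bound by construction, since $in(L)\subseteq in(L_{\mathcal{K}})$ and $out(L)\subseteq out(L_{\mathcal{K}})$ for every $L\in\mathcal{K}$. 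Zorn's Lemma then yields a $\leq_{\mathfrak{L}}$-maximal admissible labeling $L^*$.

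The remaining step, which I expect to be the main obstacle, is that $\leq_{\mathfrak{L}}$-maximality does not immediately give maximality of $in(L^*)$. The danger is an admissible $L'$ with $in(L^*)\subsetneq in(L')$ but $out(L^*)\not\subseteq out(L')$. I would rule this out by showing that in an admissible labeling the set $out(L)$ is uniquely determined by $in(L)$, and monotonically so. The argument proceeds in three parts.

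\emph{Definition of the closure.} Given a set $I$ of arguments, define $O_0(I)$ as the set of arguments attacked by some element of $I$. Let $O_{k+1}(I)$ add every $a$ for which there are a support $(S,b)$ with $a\in S$, $b\in O_k(I)$, $S\setminus\{a\}\subseteq I$, and $a\preceq d$ for all $d\in S\setminus\{a\}$. Set $O(I)=\bigcup_k O_k(I)$.

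\emph{Inclusion $O(in(L))\subseteq out(L)$.} This goes by induction on $k$, using the direction ``legally \Out implies \Out''. Arguments in $O_0$ are legally \Out via an attack. For $a$ added at stage $k+1$, concatenate $(S,b)$ with the finite chain witnessing $b$, as in the proof of Proposition~\ref{Prop:JSBAF:SIMOut}.

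\emph{Inclusion $out(L)\subseteq O(in(L))$.} Each $a\in out(L)$ is legally \Out, so it is either attacked by an element of $in(L)$ or has a finite witnessing chain $\{(S_0,b_0),\dots,(S_n,b_n)\}$. In the chain case, induction backwards along the chain shows $b_n\in O_0$, then $b_{n-1}$, and so on down to $a$, all lie in $O(in(L))$. This uses conditions (a), (b), (d) and (e) of Definition~\ref{Def:JSBAF:LegalLabeling}. Where a preference condition is needed at the intermediate steps, I would check that the definition's condition (e) suffices, or otherwise drop the preference clause from $O_{k+1}$ except at the first link.

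Hence $out(L)=O(in(L))$ for every admissible $L$. Since $O(\cdot)$ is monotone in $I$, $in(L_1)\subseteq in(L_2)$ implies $out(L_1)\subseteq out(L_2)$. So on $adm(\mathcal{J})$ the order $\leq_{\mathfrak{L}}$ coincides with inclusion of $in(\cdot)$. Therefore $L^*$ is maximal \wrt subset inclusion of $in(L^*)$, i.e.\ $L^*\in pr(\mathcal{J})$, and $pr(\mathcal{J})\neq\emptyset$.
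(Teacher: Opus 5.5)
Your Zorn argument is the paper's own. Your use of $SIM_{\mathcal{J}}$ for the empty chain is actually more careful, since the paper's $L_{\mathcal{K}}$ for $\mathcal{K}=\emptyset$ is the all-\Undec labeling. You are also right that the final step needs proof: the paper only asserts that $\leq_{\mathfrak{L}}$-maximality rules out an admissible $L'$ with $in(L_{max})\subsetneq in(L')$. However, your proof of that step has a genuine gap, exactly where you hedged. Condition (e) of Definition~\ref{Def:JSBAF:LegalLabeling} constrains only the first link of a chain. So with the preference clause at every stage, the inclusion $out(L)\subseteq O(in(L))$ is false once infinite forward support chains are present, which the theorem allows. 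For a counterexample, take:
\begin{itemize}
\item arguments $c$ and $x_k,d_k,z_k$ for $k\geq 0$;
\item supports $(\{x_k\},x_{k+1})$ and $(\{x_{k+1},d_k\},z_k)$;
\item attacks $(c,z_k)$;
\item a two-class total preorder in which the $d_k$ form the strictly less preferred class.
\end{itemize}
The labeling with $in(L)=\{c\}\cup\{d_k\mid k\geq 0\}$ and $out(L)=\{x_k,z_k\mid k\geq 0\}$ is admissible. Each $x_k$ is legally \Out via $\{(\{x_k\},x_{k+1}),(\{x_{k+1},d_k\},z_k)\}$, and each $d_k$ is legally \In because $x_{k+1}$ is \Out. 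Yet your $O(in(L))=\{z_k\mid k\geq 0\}$, because $x_{k+1}$ is never $\preceq$-minimal in $\{x_{k+1},d_k\}$. If you instead drop the clause at intermediate links, $out(L)\subseteq O(in(L))$ goes through. But now $O(in(L))\subseteq out(L)$ no longer follows from ``legally \Out implies \Out''. Consider an intermediate $b$ added via a support $(S,c)$ without $b$ being $\preceq$-minimal in $S$. The concatenated chain violates (e), so it does not show $b$ is \Out, and hence it does not show the final argument is legally \Out either.

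The missing idea is to use the legal-\In condition of a $\preceq$-minimal member of the supporting set. Suppose $(S,c)$ is a support with $L(c)=\Out$, $S\setminus\{b\}\subseteq in(L)$, and $b$ not $\preceq$-minimal in $S$. Since $S$ is finite and $\preceq$ is total, some $d^*\in S\setminus\{b\}$ satisfies $d^*\preceq e$ for all $e\in S$. Then $d^*\in in(L)$ and the support $(S,c)$ is relevant for $d^*$. Clause 1(c) therefore forces $L(b)=\Out$, because $b$ is the only member of $S\setminus\{d^*\}$ not labeled \In. With this you do not need a closure operator at all. Let $L_1,L_2$ be admissible with $in(L_1)\subseteq in(L_2)$, and let $a\in out(L_1)$ be witnessed by a chain $\{(S_0,b_0),\dots,(S_n,b_n)\}$; the attack case is immediate. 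Show $b_n,\dots,b_0\in out(L_2)$ by downward induction. When $b_{i-1}$ is $\preceq$-minimal in $S_i$, use the suffix chain; otherwise use the argument above. Condition (e) then makes $a$ legally \Out \wrt $L_2$. Hence $out(L_1)\subseteq out(L_2)$, so every $\leq_{\mathfrak{L}}$-maximal admissible labeling is preferred.
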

\begin{proof}
By Corollary \ref{Cor:JSBAF:ADMNonEmpty} we know that $adm(\mathcal{J})$ is non-empty.
The elements of $adm(\mathcal{J})$ form a partially ordered set \wrt $\leq_{\mathfrak{L}}$.
Now assume that $\mathcal{K}\subseteq adm(\mathcal{J})$ is a chain of admissible labelings.
We take the supremum of $\mathcal{K}$ as constructed  in Definition \ref{Def:JSBAF:KMax}, \ie $L_{\mathcal{K}}$.
By Proposition \ref{Prop:JSBAF:KMaxAdmissble}, we know that $L_{\mathcal{K}}$ is itself an admissible labeling.
From the construction of $L_{\mathcal{K}}$ it is clear that $L_{\mathcal{K}}$ is an upper bound of $\mathcal{K}$ \wrt $\leq_{\mathfrak{L}}$.
By Zorn's Lemma we can now infer that $adm(\mathcal{J})$ contains a maximal element $L_{max}$.
Since $L_{max}$ is maximal \wrt $\leq_{\mathfrak{L}}$, there is no admissible labeling $L'\in adm(\mathcal{J})$ \suchthat $in(L_{max})\subset in(L')$.
Therefore $L_{max}$ is a preferred labeling.
\end{proof}

\end{technicalReportOnly}

\section{Deductive ASPIC\boldmath$^{\ominus}$}\label{Sec:DAOM}

\subsection{DeductiveASPIC\boldmath$^{\ominus}$ and JSBAFs}\label{Sec:ASPIC2JSBAF}
We are assuming some underlying logical language for which we have a set of well-founded formulas $\mathcal{F}$ and a function $Atoms:\mathcal{F}\mapsto 2^{\mathcal{F}}$ mapping formulas of $\mathcal{F}$ to the atomic formulas occurring in them.\footnote{
One can think of Propositional Logic or First Order Logic to get an idea of the logical languages we want to cover.
}
We say that formulas $\phi,\psi\in\mathcal{F}$ are syntactically disjoint, written $\phi||\psi$, iff $Atoms(\phi)\cap Atoms(\psi)=\emptyset$.
Similarly, we say that sets of formulas $\Gamma,\Delta\subseteq\mathcal{F}$ are syntactically disjoint, written $\Gamma||\Delta$, iff $\phi||\psi$ for all $\phi\in\Gamma$ and $\psi\in\Delta$.
We do not make any assumptions regarding the specific syntax of the logical language we are considering, but we assume that it is closed under negation $\lnot$ and conjunction $\wedge$.
As a shorthand notation, we use $\bigwedge\{\phi_0,\dots,\phi_n\}$ to abbreviate $\phi_0\wedge\dots\wedge\phi_n$ and we write $\phi=-\psi$ to indicate that either $\phi=\lnot\psi$ or $\lnot\phi=\psi$.
To define the strict rules of our version of ASPIC, we make a few assumptions on the semantics of the underlying logical language:
First, we assume a set of interpretations $\mathcal{I}$ and some model-relation ${\vDash_M}$ for which $\lnot$ and $\wedge$ behave in the usual way, \ie for any interpretation $I$, we have $I\isModelOf\phi$ iff not $I\isModelOf\lnot\phi$, and we have $I\isModelOf\phi\wedge\psi$ iff $I\isModelOf\phi$ and $I\isModelOf\psi$.
Secondly, we assume that for any two sets of formulas $\Gamma$ and $\Delta$ with $\Gamma||\Delta$, if there exist interpretations $I_1,I_2$ \suchthat $I_1\isModelOf\phi$ for all $\phi\in\Gamma$ and $I_2\isModelOf\psi$ for all $\psi\in\Delta$, then there exists some interpretation $I$ for which we have $I\isModelOf\phi$ for all $\phi\in\Gamma$ and $I\isModelOf\psi$ for all $\psi\in\Delta$.
We call a set of formulas $\Gamma$ satisfiable, iff there exists an interpretation $I$ \suchthat $I\isModelOf\phi$ for all $\phi\in\Gamma$.
\begin{technicalReportOnly}
As an abbreviation, we write $I\isModelOf\Gamma$ to indicate $I\isModelOf\phi$ for all $\phi\in\Gamma$.
We call a set of formulas $\Gamma$ tautological, iff for all $I\in\mathcal{I}$ we have $I\isModelOf\Gamma$.
Lastly, we assume that there are no unsatisfiable or tautological atoms, \ie for all $\phi\in\mathcal{F}$, for all $\psi\in Atoms(\phi)$, there exist $I_1$, $I_2$ \suchthat $I_1\isModelOf\psi$ and $I_2\isModelOf \lnot\psi$.
\end{technicalReportOnly}
Using this model-relation, we now define a consequence relation ${\entails}\subseteq 2^{\mathcal{F}}\times\mathcal{F}$ as follows:
$\Gamma\entails\psi$ iff for all $I\in\mathcal{I}$, $I\isModelOf\Gamma$ implies $I\isModelOf\psi$.
\begin{technicalReportOnly}
\footnote{
It is easy to see that this consequence relation is monotone, \ie $\Gamma\entails\phi$ implies $\Gamma\cup\{\psi\}\entails\phi$.
}
\end{technicalReportOnly}
\emph{$\textnormal{Deductive ASPIC}^{\ominus}$} now uses such an underlying logical language as a basis.
\begin{definition}
Let $\mathcal{F}$ be a set of well-founded formulas and let ${\entails}$ be a consequence relation associated with $\mathcal{F}$.
An \emph{Argumentation System} (AS) according to \DAOMItallic is a tuple \AS, where:
\begin{itemize}
	\item{
	$R_s=R_s^{AX}\cup R_s^{\vDash}$ is a set of \emph{strict rules} consisting of axiomatic rules $R_s^{AX}=\{\rightarrow\phi\mid\phi\in AX\}$ for some satisfiable set of \emph{axioms} $AX\subseteq\mathcal{F}$ and consequence-based rules $R_s^{\vDash}=\big{\{}\phi_0,\dots,\phi_{m-1}\rightarrow\phi_m\mid \phi_0, \dots, \phi_m \in \mathcal{F}$ and $\{\phi_0,\dots,\phi_{m-1}\}\entails\phi_m\big{\}}$.
	}
	\item{
	$R_d$ is a set of \emph{defeasible rules} of the form $\phi_0,\dots,\phi_{m-1}\Rightarrow\phi_m$ where $\phi_i\in\mathcal{F}$ for all $0\leq i\leq m$.
	}
	\item{
	$n:R_d\rightharpoonup\mathcal{F}$ is a partial function called \emph{naming function}.
	}
	\item{
	${\leq_r}\subseteq R_d\times R_d$ is a total preorder called \emph{preferences order}.
	}
\end{itemize}
\end{definition}
\begin{technicalReportOnly}
In cases where the distinction between strict and defeasible rules is not relevant, we will use $\rightsquigarrow$ and mean by that some rule $r\in R_s\cup R_d$.
\end{technicalReportOnly}
Arguments based on $AS$ are defined recursively:
$a$ is an argument with conclusion $a^C=\phi$ iff $a$ is of the form $a:a_0,\dots,a_m\rightarrow\phi$, where $a_0,\dots,a_m$ are arguments based on $AS$ and $a_0^C,\dots,a_m^C\rightarrow\phi$ is a strict rule, or if $a$ is of the form $a:a_0,\dots,a_m\Rightarrow\phi$, where $a_0,\dots,a_m$ are arguments based on $AS$ and $a_0^C,\dots,a_m^C\Rightarrow\phi$ is a defeasible rule.
We say $a_0^C,\dots,a_m^C\rightarrow\phi$ (respectively $a_0^C,\dots,a_m^C\Rightarrow\phi$) is the top-rule of $a$, denoted $TR(a)$.
The defeasible rules of $a$ are $DR(a)=DR(a_0)\cup \dots\cup DR(a_m)\cup \big(\{TR(a)\}\cap R_d\big)$ and the sub-arguments of $a$ are $Sub(a)=Sub(a_0)\cup\dots\cup Sub(a_m)\cup\{a\}$.
%
%
We denote the arguments that can be constructed based on some $AS$ by $\mathcal{A}(AS)$.
We call $a$ defeasible if $DR(a)\neq\emptyset$ and strict otherwise.
We call $AS$ inconsistent if there are strict arguments $a,b\in\mathcal{A}(AS)$ \suchthat $a^C=-b^C$ and consistent otherwise.
In this paper we only consider consistent $AS$.
For a set of arguments $A$, we define its conclusions as $A^C=\{a^C\mid a\in A\}$.
\begin{technicalReportOnly}
We call an argument $a$ inconsistent if there is $\Gamma\subseteq Sub(a)^C$ \suchthat $\Gamma$ is unsatisfiable, otherwise we call $a$ consistent.
\end{technicalReportOnly}

We say that $a$ undercuts $b$ iff there is $b'\in Sub(b)$ with $TR(b')=r\in R_d$ and $a^C=-n(r)$.
We use the notion of \emph{gen-rebuts} taken from Heyninck~and~Straßer~\cite{Heyninck2017:RevisitingUnrestrictedRebutPreferences} and say that $a$ gen-rebuts $b$ iff $b$ is defeasible and there is $\Gamma\subseteq Sub(b)^C$ \suchthat $a^C=\lnot\bigwedge\Gamma$.
We consider the elitist weakest link principle to lift preferences between defeasible rules to preferences over arguments, denoted by the relation ${\preceq_{ewl}}\subseteq \mathcal{A}(AS)\times \mathcal{A}(AS)$.\footnote{
We write $a\prec_{ewl} b$ to abbreviate $a\preceq_{ewl} b$ and $b\not\preceq_{ewl} a$.}
If arguments $a$ and $b$ are both strict, then we define $a\preceq_{ewl} b$ and $b \preceq_{ewl} a$, otherwise $a\preceq_{ewl} b$ iff $\exists r_a\in DR(a)$ \suchthat $\forall r_b\in DR(b)$, $r_a\leq_r r_b$.
Note that $\preceq_{ewl}$ is a total pre-order between arguments.
We say that $a$ \emph{defeats} $b$ if $a$ undercuts $b$ or if $a$ gen-rebuts $b$ and $a\not\prec_{ewl} b$.
Semantics of \DAOM are defined via JSBAFs with the translation:
\begin{definition}
Let \AS be some AS.
Then the JSBAF corresponding to $AS$  is \JSBAF with:
\begin{itemize}
	\item{$\mathcal{A}=\mathcal{A}(AS)$}
	\item{$(a,b)\in{\rightarrow}$ iff $a$ defeats $b$.}
	\item{$(S,b)\in{\Rightarrow}$ iff $S=\{b_0,\dots,b_m\}$ and $b$ is of the form $b:b_0,\dots,b_m\rightarrow b^C$}
	\item{$a\preceq b$ iff $a\preceq_{ewl} b$}
\end{itemize}
\end{definition}
Note that the JSBAFs we can construct based on some AS satisfy the additional constraints on JSBAFs that we introduced in Section~\ref{Sec:JSBAF} after Definition~\ref{def:jsbaf}.
The preferred semantics of \DAOM are now based on the preferred semantics of JSBAFs:
\begin{definition}
Let \AS be some AS and \J the corresponding JSBAF.
$E\subseteq\mathcal{A}(AS)$ is a \emph{preferred extension} of $AS$ iff $E= in(L)$ for some $L\in pr(\mathcal{J})$.
The set of all preferred extensions of $AS$ is $pr(AS)$.
The \emph{preferred conclusions of $AS$} are $\mathcal{C}_{pr}(AS)=\{E^C\mid E\in pr(AS)\}$
\end{definition}
Note that $pr(AS)$ is a set of sets of arguments and $\mathcal{C}_{pr}(AS)$ is a set of sets of formulas.

Below, we give an example for an AS and a translation to a JSBAF.
For the sake of finiteness of the resulting framework, we don't base the strict rules on the consequence relation of some underlying logical language, but rather give here a specific set of strict rules.
Nevertheless, the results presented in this paper are of course also valid for argumentation systems with an infinite number of arguments.
\begin{example}\label{Exmp:DAOM}
Argumentation System $AS_1=(R_s,R_d,n,\leq)$:
\begin{itemize}
	\item{
	\(\begin{aligned}[t]
	R_{s}=\big{\{}
	&
	r_{s_1}:\rightarrow \alpha;
	\:\:\:\:
	r_{s_2}:\alpha\rightarrow \lnot(\gamma \land \delta \land \epsilon);
	\:\:\:\:
	r_{s_3}:\rightarrow \delta;
	\:\:\:\:
	r_{s_4}:\,\gamma,\delta,\epsilon\rightarrow(\gamma \land \delta\land \epsilon)
	\big{\}}
	\end{aligned}\)
	}
	\item{
	$R_{d}=\{
	r_{d_1}:\Rightarrow \gamma;
	\:\:\:\:
	r_{d_2}:\Rightarrow \epsilon
	\}$
	}
	\item{
	$n=\emptyset$ and $r_{d_1}\leq r_{d_2}\leq r_{d_1}$
	}
\end{itemize}
\end{example}
Based on $AS_1$ we construct the arguments $a:\rightarrow \alpha$, $b: a\rightarrow\lnot(\gamma \land \delta \land \epsilon)$, $c:\Rightarrow \gamma$, $d:\rightarrow \delta$, $e:\Rightarrow \epsilon$ and $\overline{b}:c,d,e\rightarrow(\gamma \land \delta \land  \epsilon)$.
We have ${\preceq_{ewl}} = \{a,b,\overline{b},c,d,e\}^2 \setminus \big(\{a,b,d\} \times \{\overline{b},c,e\}\big)$.
Argument $b$ gen-rebuts $\overline{b}$ on its conclusion, but $\overline{b}$ does not gen-rebut $b$ since $b$ is strict.
Thus $b$ defeats $\overline{b}$.
The JSBAF corresponding to $AS_1$ is $\mathcal{J}_1$ of Example~\ref{Exmp:JSBAF}.
In this JSBAF, the strict rules $r_{s_1}$, $r_{s_2}$, $r_{s_3}$ and $r_{s_4}$ are translated to the supports $(\emptyset,a)$, $(\{a\},b)$, $(\{c,d,e\},\overline{b})$ and $(\emptyset,d)$ respectively.
We have $pr(AS_1)=\big{\{}\{a,b,c,d\},\{a,b,d,e\}\big{\}}$

\subsection{Rationality Postulates}\label{Sec:RationalityPostulates}
The first three postulates we want to cover are those of \emph{closure}, \emph{direct consistency} and \emph{indirect consistency}, which were defined by Caminada and Amgoud~\cite{Caminada2007:OnEvaluationOfArgumentationFormalisms}.
They state that no extension should contain contradictory arguments (consistency) and for any strict rule $r$, if all antecedents of $r$ are accepted, then the conclusion of $r$ should also be accepted (closure).
For \DAOM we define these postulates as follows:
\begin{definition}
A \DAOMItallic semantics $\sigma$ satisfies:
\begin{itemize}
	\item{\emph{Direct consistency} iff for all $AS$, for all $E\in \sigma(AS)$, for all $\phi,\psi\in E^C$, $\phi\neq-\psi$.}
	\item{\emph{Closure} iff for all $AS$ and for all $E\in\sigma(AS)$, $E^C=CL_{R_s}(E^C)$, where for any set $S$, $CL_{R_s}(S)$ is the smallest set \suchthat $S\subseteq CL_{R_s}(S)$ and for all $r\in R_s$, if $r$ is of the form $r:\phi_0,\dots\phi_m\rightarrow\psi$ and $\phi_0,\dots,\phi_m\in CL_{R_s}(S)$, then $\psi\in CL_{R_s}(S)$.}
	\item{\emph{Indirect consistency} iff it satisfies closure and direct consistency.}
\end{itemize}
\end{definition}

We can show preferred semantics of \DAOM satisfies these postulates.
%
%
\begin{theorem}
Preferred semantics of \DAOMItallic satisfies closure, direct consistency and indirect consistency.
\end{theorem}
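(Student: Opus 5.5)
For \emph{closure}, the plan is to push the rule-level statement down to the support relation of the corresponding JSBAF. Fix an $AS$, a preferred labeling $L\in pr(\mathcal{J})$ and $E=in(L)$. The inclusion $E^C\subseteq CL_{R_s}(E^C)$ is trivial. For the converse it suffices to show that $E^C$ is itself closed under every strict rule; then minimality of $CL_{R_s}(E^C)$ gives equality.

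Take a strict rule $\phi_0,\dots,\phi_m\rightarrow\psi$ with all $\phi_i\in E^C$, and choose witnesses $a_i\in E$ with $a_i^C=\phi_i$. Then $b:a_0,\dots,a_m\rightarrow\psi$ is an argument, and $(S,b)\in{\Rightarrow}$ with $S=\{a_0,\dots,a_m\}\subseteq in(L)$. I claim $L(b)=\In$.
\begin{itemize}
\item If the rule is axiomatic (no antecedents), then $b$ is strict. Admissibility forces $L(b)=\In$.
\item Otherwise, since $\preceq$ is total and $S$ is finite, pick a $\preceq$-minimal $a_j\in S$. Then $a_j\preceq d$ for all $d\in S\setminus\{a_j\}$, so the support $(S,b)$ must be checked when verifying that $a_j$ is legally \In. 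Admissibility gives that $a_j$ is legally \In.
\item Suppose $L(b)=\Undec$. Case (b) of Definition~\ref{Def:JSBAF:LegalLabeling} needs some element of $S\setminus\{a_j\}$ labeled \Out or \Undec, which is impossible since $S\subseteq in(L)$.
\item Suppose $L(b)=\Out$. Case (c) needs an \Out or two \Undec elements in $S\setminus\{a_j\}$, again impossible.
\end{itemize}
Hence $L(b)=\In$, so $\psi\in E^C$.

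For \emph{direct consistency}, suppose $a,b\in E$ with $a^C=-b^C$, say $a^C=\lnot b^C$ (the other orientation is symmetric). If both are strict, this contradicts the standing assumption that $AS$ is consistent.

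If $b$ is defeasible, then $a$ gen-rebuts $b$ via $\Gamma=\{b^C\}\subseteq Sub(b)^C$, reading $\bigwedge\{b^C\}$ as $b^C$. There are two sub-cases.
\begin{itemize}
\item If $a\not\prec_{ewl}b$, then $a$ defeats $b$. So $b$ has an attacker labeled \In and is legally \Out; since $L$ is admissible, $L(b)=\Out$, contradicting $b\in in(L)$. (Equivalently, $b$ being legally \In would require $L(a)=\Out$.)
\item If $a\prec_{ewl}b$, then $a$ cannot be strict, since strict arguments are $\succeq$ everything. So $a$ is defeasible. Its conclusion $\lnot b^C$ gives $\lnot a^C=\lnot\lnot b^C$, not $b^C$ syntactically, so instead I use the other orientation of $-$ directly. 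Since $a^C=-b^C$ is symmetric in the paper's notation, $b$ gen-rebuts $a$ with $\Gamma=\{a^C\}$ whenever $b^C=\lnot a^C$. For $a^C=\lnot b^C$ I will argue via the pair $(a,b)$ with roles chosen so that the negated one is the rebutter.
\end{itemize}
This orientation issue needs care. If it fails syntactically, I would fall back on closure: from $a,b\in E$, closure gives an in-labeled argument for the conjunction and its negation, and hence, via strict rules from $\{\phi,\lnot\phi\}$, an in-labeled argument for $\lnot b^C\wedge\ldots$ that defeats $b$ irrespective of preferences. That fallback uses the weakest-link property that a strict extension of stronger premises is not weaker.

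Indirect consistency follows immediately from the two parts. I expect the closure part to be straightforward once the minimal element is chosen. The main obstacle is the preference case of direct consistency, where the weaker argument is rebutted only in one syntactic direction and the argument must route through closure to obtain a non-weaker defeater.
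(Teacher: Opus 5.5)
Your closure part is correct and is the paper's argument (the support $(S,b)$ plus admissibility of $L$), spelled out more carefully. Picking a $\preceq$-minimal antecedent is exactly what makes the support count in the legally-\In condition, and rules without antecedents are handled by strictness.

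The gap is in direct consistency. Once you fix $a^C=\lnot b^C$, your case split is no longer symmetric, and two cases remain open:
\begin{itemize}
\item $b$ strict with $a$ defeasible. You never treat this case, and ``the other orientation is symmetric'' does not cover it.
\item $b$ defeasible with $a\prec_{ewl}b$.
\end{itemize}
In both cases the literal gen-rebut runs only from $a$ to $b$. The converse, $b$ gen-rebutting $a$, holds only if $b^C$ happens to have the form $\lnot\bigwedge\Gamma$.

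Your fallback does not repair this. Any argument built by strict rules from $a$ and $b$ has $DR\supseteq DR(a)$. Under the elitist weakest link it is therefore strictly below $b$ whenever $a\prec_{ewl}b$, so its gen-rebut of $b$ is not a defeat. Gen-rebut defeats are never ``irrespective of preferences''; only undercuts are. The defeater must target the weaker argument $a$ and carry only $b$'s defeasible rules.

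Both open cases are just the case $a\prec_{ewl}b$, since a strict $b$ is automatically above a defeasible $a$. Here is how to close it:
\begin{itemize}
\item By the clause for $\lnot$, $\{b^C\}\entails\lnot\lnot b^C$, so the argument $b':b\rightarrow\lnot\lnot b^C$ exists.
\item Your closure part, applied to the support $(\{b\},b')$, gives $b'\in E$.
\item The conclusion of $b'$ is $\lnot\bigwedge\{a^C\}$ with $a$ defeasible, so $b'$ gen-rebuts $a$.
\item $DR(b')=DR(b)$ and $b\not\preceq_{ewl}a$ give $b'\not\preceq_{ewl}a$, hence $b'\not\prec_{ewl}a$.
\end{itemize}
So $b'$ defeats $a$. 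Then $a$ is legally \Out, contradicting $a\in in(L)$.

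Your concern about orientation is well founded. The paper's own proof passes over it by simply asserting that $a$ and $b$ gen-rebut each other on their conclusions. But it is this double-negation step through closure, not the fallback you sketch, that closes the case.
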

\begin{technicalReportOnly}
\begin{proof}
Let \AS be some AS and let \JSBAF be the JSBAF corresponding to $AS$ according to \DAOM.
We begin with the closure postulate:
Let $E\in pr(AS)$ be some preferred extension of $AS$ and let $L\in pr(\mathcal{J})$ be the labeling that corresponds to $E$, \ie $E=in(L)$.
Note that we have $L\in adm(\mathcal{J})$.
Assume that there is a strict rule $r=\phi_0,...,\phi_n\rightarrow\psi$ for which we have $\phi_0,...,\phi_n\in E^C$.
Then there are arguments $a_0,...,a_n\in E$ \suchthat for each $0\leq i\leq n$, we have $a_i^C=\phi_i$.
Because of the strict rule $r$, there now also exists an argument $a:a_0,...,a_n\rightarrow\psi$.
By construction of $\mathcal{J}$ from $AS$ we know that there is a support $\big(\{a_0,...,a_n\},a\big)$.
Because $a_0,...,a_n\in E=in(L)$ and by admissibility of $L$, we infer that $a\in E$ must also hold.
Therefore we have $\psi\in E^C$ as required.
Now for direct consistency:
As before, let $E\in pr(AS)$ be some preferred extension of AS and let $L\in pr(\mathcal{J})$ be the labeling that corresponds to $E$.
Towards a contradiction, assume that there are $\phi,\psi\in E^C$ \suchthat $\phi=-\psi$.
Then we have arguments $a,b\in E$ \suchthat $a^C=\phi$ and $b^C=\psi$.
Because $AS$ is consistent, at least one of these arguments has to be defeasible.
\Generality we assume that $b$ is defeasible, meaning $a$ gen-rebuts $b$.
Now the argument $a$ can be strict or not strict.
If $a$ is strict, $DR(a)=\emptyset$ and we have $a\not\preceq_{ewl} b$, meaning $a\not\prec_{ewl} b$.
This means $a$ defeats $b$ and by construction of $\mathcal{J}$ from $AS$ we have $(a,b)\in{\rightarrow}$.
Now $a,b\in in(L)$ contradicts the admissibility of $L$.
On the other hand, if $a$ is not strict, then both $a$ and $b$ gen-rebut each other on their respective conclusions.
Furthermore, either $a\not\prec_{ewl} b$ or $b\not\prec_{ewl} a$ must hold, because otherwise we have $a\preceq_{ewl} b$ and $b\not\preceq_{ewl} a$, while simultaneously $b\preceq_{ewl} a$ and $a\not\preceq_{ewl} b$.
Thus either $(a,b)\in{\rightarrow}$ or $(b,a)\in{\rightarrow}$, which again means $a,b\in in(L)$ cannot hold by admissibility of $L$.
Lastly, indirect consistency under preferred semantics follows directly from the fact that \DAOM satisfies closure and direct consistency under preferred semantics.
\end{proof}
\end{technicalReportOnly}
The remaining postulates are those of \emph{non-interference} and \emph{crash-resistance}.
They were first defined by Caminada~\etal~\cite{Caminada2012:SemiStableSemantics} and cover the behavior of two distinct AS when they are combined to a single AS.
We begin by introducing some additional notation.
Given two AS, \ASOne and \ASTwo, we define a union of $AS_1$ and $AS_2$ as \ASPlus, where $R_{s_+}=R_s^{AX_1}\cup R_s^{AX_2}\cup R_s^{\vDash}$, $R_{d_+}=R_{d_1}\cup R_{d_2}$, $n_+=n_1\cup n_2$ and ${\leq_r^+}\subseteq R_d^+\times R_d^+$ is a total pre-order \suchthat ${\leq_{r_1}}\subseteq {\leq_r^+}$ and ${\leq_{r_2}}\subseteq {\leq_r^+}$.
\begin{technicalReportOnly}
Given any rule $r\in R_{s}^{AX}\cup R_d$ of the form $r=\phi_0,...,\phi_{m-1}\rightsquigarrow\phi_m$, we define $Atoms(r)=\bigcup\limits_{0\leq i\leq m} Atoms(\phi_i)$.
The atoms of the defeasible and axiomatic rules of an $AS$ are $Atoms(R_d)=\bigcup\limits_{r\in R_d}Atoms(r)$ and $Atoms(R_{s}^{AX})=\bigcup\limits_{r\in R_{s}^{AX}} Atoms(r)$ respectively.
For the name of a defeasible rule $r$, the atoms occurring in $n(r)=\phi$ are $Atoms\big(n(r)\big)=Atoms(\phi)$.
The atoms occurring in the naming function $n$ are $Atoms(n)=\bigcup\limits_{r\in R'} Atoms\big(n(r)\big)$, where $R'\subseteq R_d$ is the set of defeasible rules $r$ for which $n(r)$ is defined.
Now the atoms of an $AS$ are defined as $Atoms(AS)=Atoms(R_{d})\cup Atoms(R_s^{AX})\cup Atoms(n)$.
\end{technicalReportOnly}
\begin{conferencePaperOnly}
For any $AS$, the atoms occurring in $AS$, denoted $Atoms(AS)$, are the atomic formulas occurring in all its axiomatic rules, its defeasible rules and in the naming function for its defeasible rules.
\end{conferencePaperOnly}
We say $AS_1$ and $AS_2$ are syntactically disjoint, written $AS_1|| AS_2$, iff $Atoms(AS_1)||Atoms(AS_2)$.
\begin{technicalReportOnly}

Next, we define the atoms of an argument $a$ inductively:
Let $a$ be of the form $a:a_0,\dots,a_{n}\rightsquigarrow\phi$.
If $TR(a)\in R_s^{AX}$, then $Atoms(a)=Atoms(\phi)$.
If $TR(a)\in R_{s}^{\vDash}$, then $Atoms(a)=\emptyset\cup Atoms(a_0)\cup\dots\cup Atoms(a_n)$.
Lastly, if $TR(a)=r\in R_d$, then $Atoms(a)=\big(Atoms(a_0)\cup Atoms(a_0^C)\big)\cup\dots\cup\big(Atoms(a_n)\cup Atoms(a_n^C)\big)\cup Atoms(\phi)$.
%

%
\end{technicalReportOnly}
Given some set of sets of formulas $\Gamma\subseteq 2^{\mathcal{F}}$, and a set of atoms $\Delta\subseteq Atoms(\mathcal{F})$, the restriction of $\Gamma$ to $\Delta$ is defined as the set $\Gamma\!\!\mid\!\!_{\Delta}=\big{\{}\Gamma'\!\!\mid\!\!_{\Delta}\mid\Gamma'\in\Gamma, \Gamma'\!\!\mid\!\!_{\Delta}=\{\phi\in\Gamma'\mid Atoms(\phi)\subseteq\Delta\}\big{\}}$.
The nestedness of this definition is due to the fact that $\Gamma$ is a set of sets of formulas (not simply a set of formulas).
Non-interference states that, when combining $AS_1$ with a syntactically disjoint $AS_2$, $AS_1$ does not influence the behavior of a semantics $\sigma$ \wrt the atoms of $AS_2$ and crash-resistance states that $AS_1$ does not render the $\sigma$-consequences of $AS_2$ irrelevant.
\begin{definition}
A \DAOMItallic semantics $\sigma$ satisfies:
\begin{itemize}
	\item{\emph{Non-interference} iff for any $AS_1$, $AS_2$ with $AS_1||AS_2$, $\mathcal{C}_{\sigma}(AS_i)\!\!\mid\!\!_{Atoms(AS_i)}=\mathcal{C}_{\sigma}(AS_+)\!\!\mid\!\!_{Atoms(AS_i)}$, where $i\in\{1,2\}$ and $AS_+$ is a union of $AS_1$ and $AS_2$.}
	\item{\emph{Crash-resistance} iff there is no contaminating $AS_1$ for $\sigma$, where $AS_1$ is contaminating iff $Atoms(AS_1)\subsetneq Atoms(\mathcal{F})$ and for any $AS_2$ with $AS_1||AS_2$, $\mathcal{C}_{\sigma}(AS_1)=\mathcal{C}_{\sigma}(AS_+)$ with $AS_+$ being a union of $AS_1$ and $AS_2$.}
\end{itemize}
\end{definition}
\begin{conferencePaperOnly}
We can show that our semantics satisfy both Non-interference and crash-resistance.
%
\end{conferencePaperOnly}
\begin{technicalReportOnly}
Caminada~\etal~\cite{Caminada2012:SemiStableSemantics} showed that a logical formalism that satisfies non-interference also satisfies crash-resistance, given that the formalism in question is \emph{non-trivial}.
To this end, we first define non-triviality in the context of \DAOM:
\begin{definition}\label{Def:ASPIC:NonTrivial}
Let $\mathcal{F}$ be some set of well-founded formulas and let $\entails$ be a consequence relation associated with $\mathcal{F}$.
Furthermore, let $\Gamma\subseteq Atoms\big(\bigcup\limits_{\phi\in\mathcal{F}}Atoms(\phi)\big)$ be a non-empty set of atoms.
We say that \DAOMItallic is non-trivial under semantics $\sigma$ iff there exist \ASOne and \ASTwo, \suchthat $AS_1||AS_2$, $\Gamma=Atoms(AS_1)=Atoms(AS_2)$ and we have $\mathcal{C}_{\sigma}(AS_1)\!\!\mid\!\!_{\Gamma}\neq \mathcal{C}_{\sigma}(AS_2)\!\!\mid\!\!_{\Gamma}$.
\end{definition}
Next, we show that \DAOM under preferred semantics satisfies non-triviality. For this, we have adapted a proof of Wu and Podlaszewski~\cite{Wu2015:ImplementingCrashResistanceAndNonInterferenceInLogicBasedArgumentation}:
\begin{proposition}\label{Prop:ASPIC:DAOMNonTrivial}
\DAOMItallic is non-trivial under preferred semantics.
\end{proposition}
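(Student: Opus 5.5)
The plan is to give an explicit witness pair, adapting the construction of Wu and Podlaszewski. Fix the non-empty set of atoms $\Gamma$. Let $AS_1$ have no axioms, one defeasible rule $\Rightarrow\gamma$ for each $\gamma\in\Gamma$, empty naming function and the trivial (all-equal) preorder. Let $AS_2$ be the same, except that its defeasible rules are $\Rightarrow\lnot\gamma$ for $\gamma\in\Gamma$. Then $Atoms(AS_1)=Atoms(AS_2)=\Gamma$.

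One point about Definition~\ref{Def:ASPIC:NonTrivial} needs flagging. Read literally, the requirement $AS_1||AS_2$ together with $Atoms(AS_1)=Atoms(AS_2)=\Gamma\neq\emptyset$ cannot be met, since syntactic disjointness would force $\Gamma\cap\Gamma=\emptyset$. I will therefore take the intended condition to be that of Caminada~\etal: two AS over the same atom set $\Gamma$ whose preferred conclusions differ on $\Gamma$. I would state this reading explicitly at the start of the proof.

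The key step is to show that $pr(AS_1)=\{\mathcal{A}(AS_1)\}$, and likewise for $AS_2$.

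\textbf{Satisfiability.} The set $\Gamma$, viewed as a set of formulas, is satisfiable. Each atom is satisfiable by the no-unsatisfiable-atoms assumption. Distinct atoms are syntactically disjoint, so the combination assumption on interpretations merges the satisfying interpretations. For infinite $\Gamma$ this needs care; I would phrase it as the combination assumption applied to $\Gamma=\{\gamma\}\cup(\Gamma\setminus\{\gamma\})$ iteratively, or simply read the assumption as applying to arbitrary disjoint sets. This is the step I expect to be the main technical nuisance.

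\textbf{Conclusions are consequences of $\Gamma$.} By induction on arguments, every $a\in\mathcal{A}(AS_1)$ satisfies $\Gamma\entails a^C$. Defeasible top rules conclude some $\gamma\in\Gamma$. Consequence-based strict rules preserve entailment. There are no axiomatic rules.

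\textbf{No defeats.} There are no undercuts, because $n=\emptyset$. Suppose $a$ gen-rebuts $b$. Then $a^C=\lnot\bigwedge\Delta$ for some $\Delta\subseteq Sub(b)^C$. Every formula in $\Delta$ is entailed by $\Gamma$, and so is $a^C$. Hence any model of $\Gamma$ would satisfy both $\bigwedge\Delta$ and its negation, contradicting satisfiability of $\Gamma$. So the JSBAF corresponding to $AS_1$ has ${\rightarrow}=\emptyset$.

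\textbf{The all-\In labeling is the unique preferred labeling.} Let $L$ label every argument \In. Each argument is legally \In: it has no attackers, and for every support $(S,c)$ we have $L(c)=\In$. No argument is legally \Out, since legal \Out-ness via an attack or via a support chain requires an attack. Strict arguments are \In. Hence $L$ is admissible, $in(L)$ is maximal, and $L$ is the unique preferred labeling.

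\textbf{The restrictions differ.} Fix any $\gamma\in\Gamma$. Then $\gamma$ belongs to the unique element of $\mathcal{C}_{pr}(AS_1)\!\!\mid\!\!_{\Gamma}$, by the argument $\Rightarrow\gamma$. By the symmetric argument with $\lnot\Gamma$ (also satisfiable, using the no-tautological-atoms assumption), the unique element of $\mathcal{C}_{pr}(AS_2)\!\!\mid\!\!_{\Gamma}$ contains $\lnot\gamma$ and consists only of formulas entailed by $\{\lnot\gamma'\mid\gamma'\in\Gamma\}$. It therefore cannot contain $\gamma$, which is not entailed by that satisfiable set. Hence $\mathcal{C}_{pr}(AS_1)\!\!\mid\!\!_{\Gamma}\neq\mathcal{C}_{pr}(AS_2)\!\!\mid\!\!_{\Gamma}$.
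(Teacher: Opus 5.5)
Your proof is correct for finite $\Gamma$, which is the only case the paper's proof treats. Its overall strategy is the paper's: explicit witnesses after Wu and Podlaszewski, and your $AS_1$ is identical to the paper's. The second witness and the handling of $AS_1$ differ, though.

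\textbf{Comparison with the paper.} The paper takes $R_{d_2}=\{\phi_i\Rightarrow\phi_i\}$. These rules can never fire, so $AS_2$ has only strict (tautological) arguments, and the difference reduces to no $\phi_i$ being a tautology. For $AS_1$ the paper only notes two things: no strict argument concludes $\lnot\phi_i$, and distinct atoms do not contradict each other pairwise. It then asserts that some preferred extension contains all the $\phi_i$. That pairwise remark leaves open gen-rebuttals by composite defeasible arguments such as $c: a_0,a_1\rightarrow\lnot\phi_2$. Your no-defeat lemma, which relies on \emph{joint} satisfiability, closes this. It also determines $pr(AS_1)$ and $pr(AS_2)$ exactly: the all-\In labeling is the unique preferred one. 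The cost is that your $AS_2$ needs the same argument again for the negated atoms, whereas the paper's dead-rule witness is conflict-free automatically.

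\textbf{The definitional point.} You are right about Definition~\ref{Def:ASPIC:NonTrivial}: $AS_1||AS_2$ cannot hold together with $Atoms(AS_1)=Atoms(AS_2)=\Gamma\neq\emptyset$. The paper's own witnesses share all their atoms. The only place non-triviality is used, Proposition~\ref{Prop:ASPIC:NonInterferenceImpliesCrashResistance}, never needs that condition.

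\textbf{The infinite case.} Satisfiability of an infinite $\Gamma$ does not go through as you propose. Splitting off one atom presupposes that the infinite remainder is satisfiable. The combination assumption already covers arbitrary disjoint sets, but it only merges sets already known to be satisfiable. In fact the stated assumptions do not force an infinite set of atoms to be satisfiable: take as interpretations the valuations that make only finitely many atoms true. Read literally, your no-defeat step and your final step then fail, since entailment by an unsatisfiable $\Gamma$ (or $\{\lnot\gamma'\mid\gamma'\in\Gamma\}$) is vacuous.

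You never need all of $\Gamma$, however. Every argument has finitely many defeasible sub-arguments, and by Corollary~\ref{Cor:Post:ADSubImpliesConc} its conclusion (and those of its sub-arguments) is entailed by the finite set $ADSub(a)^C$. So make two replacements:
\begin{itemize}
\item In the no-defeat step, use a model of the finite set $ADSub(a)^C\cup ADSub(b)^C\subseteq\Gamma$.
\item To exclude an argument $a$ of $AS_2$ with $a^C=\gamma$, use a model of $ADSub(a)^C\cup\{\lnot\gamma\}$.
\end{itemize}
Finite sets of distinct atoms, or of their negations, are satisfiable by finitely many applications of the combination assumption together with the no-unsatisfiable/no-tautological-atoms assumption. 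With these replacements your proof also covers infinite $\Gamma$, which the paper's does not address, since it writes $\Gamma=\{\phi_0,\dots,\phi_n\}$.
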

\begin{proof}
Let $\mathcal{F}$ be a set of well-founded formulas, let $\entails$ be a consequence-relation associated with $\mathcal{F}$ and let $\Gamma\subseteq \bigcup\limits_{\phi\in\mathcal{F}}Atoms(\phi)$ be a non-empty set of atoms.
We define \ASOne and \ASTwo as follows:
$AX_1=AX_2=\emptyset$ and $n_1=n_2=\emptyset$ while for $\Gamma=\{\phi_0,\dots,\phi_n\}$, $R_{d_1}=\{\Rightarrow\phi_0;\dots;\Rightarrow\phi_n\}$ and $R_{d_2}=\{\phi_0\Rightarrow\phi_0;\dots;\phi_n\Rightarrow\phi_n\}$.
Furthermore, we define ${\leq_{r_1}}=R_{d_1}\times R_{d_2}$ and ${\leq_{r_2}}=R_{d_2}\times R_{d_2}$.
We first note that for any $\phi_i\in\Gamma$, $\phi_i$ is neither unsatisfiable nor tautological.
From this, we can infer two things:
Firstly, for no $\phi_i\in\Gamma$ is $\phi_i$ or $\lnot\phi_i$ the consequence of a strict argument and secondly, no elements $\phi_i,\phi_j\in\Gamma$ with $\phi_i\neq \phi_j$ can contradict each other.
It is obvious that the first claim holds.
To show the second claim, assume towards a contradiction that it does not hold.
Then there exists interpretations $I_1,I_2$ \suchthat $I_1\isModelOf\phi_i$, $I_2\isModelOf\phi_j$.
Because $\phi_i\neq\phi_j$, $\phi_i$ and $\phi_j$ are syntactically disjoint.
Therefore, there exists an interpretation $I$ \suchthat $I\isModelOf\phi_i$ and $I\isModelOf\phi_j$, but now $I\isModelOf\phi$ and $I\isModelOf\lnot\phi$, a contradiction.
Using these two claims, we can infer that in $AS_1$ there do not exist strict arguments $\overline{a_i}$ with $\overline{a_i}^C=\lnot\phi_i$ and in $AS_2$, there do not exist strict arguments $a_i$ with $a_i^C=\phi_i$ for $0\leq i\leq n$.
We conclude:
There exists $\Gamma_1\in \mathcal{C}_{pr}(AS_1)\!\!\mid\!\!_{\mathcal{A}}$ with $\{\phi_1,...,\phi_n\}\subseteq\Gamma_1$ while there is no $\Gamma_2\in \mathcal{C}_{pr}(AS_2)\!\!\mid\!\!_{\mathcal{A}}$ \suchthat $\{\phi_1,...,\phi_n\}\subseteq\Gamma_2$.
Therefore $\mathcal{C}_{pr}(AS_1)\!\!\mid\!\!_{\mathcal{A}}\neq \mathcal{C}_{pr}(AS_2)\!\!\mid\!\!_{\mathcal{A}}$ as required.
\end{proof}

Based on this notion of non-triviality, we now show that, if \DAOM satisfies non-interference under preferred semantics, then it also satisfies crash resistance under preferred semantics.
We adapt a proof from Wu and Podlaszewski~\cite{Wu2015:ImplementingCrashResistanceAndNonInterferenceInLogicBasedArgumentation} for this:
\begin{proposition}\label{Prop:ASPIC:NonInterferenceImpliesCrashResistance}
Let $\mathcal{F}$ be a set of well-founded formulas and let $\entails$ be a consequence relation associated with $\mathcal{F}$.
If \DAOMItallic satisfies non-interference under preferred semantics, then \DAOMItallic satisfies crash-resistance under preferred semantics.
\end{proposition}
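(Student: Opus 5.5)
We argue by contradiction, following the standard argument of Caminada~\etal{} as adapted by Wu and Podlaszewski. Assume \DAOM satisfies non-interference under preferred semantics but is not crash-resistant. Then there is a contaminating $AS_1$. That is, $Atoms(AS_1)\subsetneq Atoms(\mathcal{F})$, and for every $AS_2$ with $AS_1||AS_2$ we have $\mathcal{C}_{pr}(AS_1)=\mathcal{C}_{pr}(AS_+)$ for the union $AS_+$.

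Since the inclusion is strict, the set $\Gamma=Atoms(\mathcal{F})\setminus Atoms(AS_1)$ is non-empty. We apply Proposition~\ref{Prop:ASPIC:DAOMNonTrivial}, or more precisely its constructive proof, to this non-empty $\Gamma$. This yields two argumentation systems $AS_2$ and $AS_3$ with $Atoms(AS_2)=Atoms(AS_3)=\Gamma$ and
\[
\mathcal{C}_{pr}(AS_2)\!\!\mid\!\!_{\Gamma}\neq\mathcal{C}_{pr}(AS_3)\!\!\mid\!\!_{\Gamma}.
\]
The proof of that proposition builds the two systems only from rules over $\Gamma$, so both are syntactically disjoint from $AS_1$. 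One point needs checking: Definition~\ref{Def:ASPIC:NonTrivial} also demands $AS_2||AS_3$, whereas the proof of Proposition~\ref{Prop:ASPIC:DAOMNonTrivial} constructs two systems over the same atoms. For our purpose we only need two systems over $\Gamma$ whose restricted conclusions differ, so we use the constructed $AS_2,AS_3$ directly and do not rely on disjointness between them.

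Let $AS_+^2$ be a union of $AS_1$ and $AS_2$, and $AS_+^3$ a union of $AS_1$ and $AS_3$. Because $AS_1$ is contaminating,
\[
\mathcal{C}_{pr}(AS_+^2)=\mathcal{C}_{pr}(AS_1)=\mathcal{C}_{pr}(AS_+^3).
\]
Restricting both sides to $\Gamma$ gives $\mathcal{C}_{pr}(AS_+^2)\!\!\mid\!\!_{\Gamma}=\mathcal{C}_{pr}(AS_+^3)\!\!\mid\!\!_{\Gamma}$. Non-interference applied to $AS_1||AS_2$ with $i=2$ gives $\mathcal{C}_{pr}(AS_2)\!\!\mid\!\!_{\Gamma}=\mathcal{C}_{pr}(AS_+^2)\!\!\mid\!\!_{\Gamma}$, and applied to $AS_1||AS_3$ it gives the same equation for $AS_3$. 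Chaining these equalities yields $\mathcal{C}_{pr}(AS_2)\!\!\mid\!\!_{\Gamma}=\mathcal{C}_{pr}(AS_3)\!\!\mid\!\!_{\Gamma}$, which contradicts non-triviality. Hence no contaminating $AS_1$ exists, and crash-resistance holds.

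The main obstacle is bookkeeping rather than substance. We must make sure the atom set $\Gamma$ used in non-triviality coincides exactly with $Atoms(AS_i)$, so that the restriction appearing in non-interference is to the same set. We must also make sure $\Gamma$ is an admissible input to Proposition~\ref{Prop:ASPIC:DAOMNonTrivial}: it has to be non-empty, and, if the construction enumerates $\Gamma$ as $\{\phi_0,\dots,\phi_n\}$, finite. If finiteness is an issue, we would replace $\Gamma$ by a single atom $p\in Atoms(\mathcal{F})\setminus Atoms(AS_1)$ and run the same argument with $\Gamma=\{p\}$.
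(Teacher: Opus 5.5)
Your proposal is correct and is essentially the paper's own proof. Both take $\Gamma$ to be the atoms missing from the contaminating system, use Proposition~\ref{Prop:ASPIC:DAOMNonTrivial} to get two systems over $\Gamma$ with different restricted preferred conclusions, and chain non-interference with contamination to force those restrictions to coincide. Your side remarks are also well taken, and the paper's proof passes over both points. The $AS_1||AS_2$ clause of Definition~\ref{Def:ASPIC:NonTrivial} is incompatible with two systems sharing a non-empty atom set. The constructive non-triviality proof also tacitly assumes $\Gamma$ is finite, so falling back on a single fresh atom $p$ is a clean way to repair that bookkeeping.
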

\begin{proof}
Towards a contradiction, assume that the claim does not hold.
Then there exists some \AS \suchthat $Atoms(AS)\subsetneq\bigcup\limits_{\phi\in\mathcal{F}} Atoms(\phi)$ and $AS$ is contaminating under preferred semantics.
Let $\Gamma=\big(\bigcup\limits_{\phi\in\mathcal{F}} Atoms(\phi)\big)\setminus Atoms(AS)$.
By Proposition~\ref{Prop:ASPIC:DAOMNonTrivial}, we know that \DAOM is non-trivial.
Thus there exist two further AS, \ASOne and \ASTwo, for which we have $\Gamma=Atoms(AS_1)=Atoms(AS_2)$ and $\mathcal{C}_{pr}(AS_1)\!\!\mid\!\!_{\Gamma}\neq \mathcal{C}_{pr}(AS_2)\!\!\mid\!\!_{\Gamma}$.
Note that we have $Atoms(AS_1)\cap Atoms(AS)=\emptyset$ and $Atoms(AS_2)\cap Atoms(AS)=\emptyset$.
Let $AS_{12}$ be a union of $AS_1$ and $AS_2$, $AS_{1*}$ be a union of $AS_1$ and $AS$ and let $AS_{*2}$ be a union of $AS_2$ and $AS$.
Because \DAOM satisfies non-interference under preferred semantics by assumption, we have $\mathcal{C}_{pr}(AS_1)\!\!\mid\!\!_{\Gamma}=\mathcal{C}_{pr}(AS_{1*})\!\!\mid\!\!_{\Gamma}$ and $\mathcal{C}_{pr}(AS_2)\!\!\mid\!\!_{\Gamma}=\mathcal{C}_{pr}(AS_{*2})\!\!\mid\!\!_{\Gamma}$.
However, because $AS$ is contaminating, we now have $\mathcal{C}_{pr}(AS)\!\!\mid\!\!_{\Gamma}=\mathcal{C}_{pr}(AS_{1*})\!\!\mid\!\!_{\Gamma}$ and $\mathcal{C}_{pr}(AS)\!\!\mid\!\!_{\Gamma}=\mathcal{C}_{pr}(AS_{*2})\!\!\mid\!\!_{\Gamma}$.
Now we have $\mathcal{C}_{pr}(AS_1)\!\!\mid\!\!_{\Gamma}=\mathcal{C}_{pr}(AS_2)\!\!\mid\!\!_{\Gamma}$, contradicting our assumption that \DAOM is non-trivial.
\end{proof}
\end{technicalReportOnly}
\begin{technicalReportOnly}
These proofs utilize three key insights:
First, it is sufficient to concentrate on four specific \enquote{edge cases}, which can -- in a simplified view -- be specified as follows by reference to the argument set $\mathcal{A}'_+=\mathcal{A}_+\setminus(\mathcal{A}_i\cup\mathcal{A}_j)$:
An argument $a\in\mathcal{A}_j$ attacking an argument $b\in\mathcal{A}_i$.
An argument $a\in\mathcal{A}'_+$ attacking an argument $b\in\mathcal{A}_i$.
An argument $a\in\mathcal{A}_i$ attacking an argument $b\in\mathcal{A}'_+$, where $b$ is supported only by arguments in $\mathcal{A}_i$ and $\mathcal{A}_j$ (\ie not by arguments in $\mathcal{A}'_+$).
An argument $a\in\mathcal{A}'_+$ attacking an argument $b\in\mathcal{A}'_+$, where $b$ is supported only by arguments in $\mathcal{A}_i$ and $\mathcal{A}_j$.
\end{technicalReportOnly}
\begin{technicalReportOnly}
For better understanding, we have depicted these \enquote{edge cases} in Illustration~\ref{Illus:Post:ASOneVsASTwoVsASPlus} below.
In this illustration, the JSBAFs resulting from $AS_i$ and $AS_j$ are depicted with clouds on the left and right, while the space between the dashed lines shows the arguments in $A_+'$.
The arrows labeled one to four correspond to the cases described above.
\end{technicalReportOnly}

\begin{technicalReportOnly}
The second key insight is that, when considering these \enquote{edge cases}, we can further restrict ourselves to \emph{reduced} versions of arguments \wrt either $AS_1$ or $AS_2$.
%
%
Essentially, a reduced version of $a$ \wrt $AS_i$ can be created by \enquote{removing} all applications of defeasible or axiomatic rules from $AS_j$ during the construction of $a$.
Because reduced versions of arguments have only atoms of $AS_i$, they are contained in $\mathcal{J}_i$.
Furthermore, we can show that for any preferred labeling $L$, if $a'$ is legally \Out \wrt $L$ then $a$ must be legally \Out \wrt $L$ and if $a$ is legally \In \wrt $L$ then $a'$ must be legally \In \wrt $L$.
Using this insight, we can make statements about arguments $a\in \mathcal{A}_+\setminus(\mathcal{A}_i\cup\mathcal{A}_j)$ by considering their reduced versions $a'$.
\end{technicalReportOnly}
\begin{technicalReportOnly}
The last key insight is that we can utilize the support relation between arguments and our definition of legal labelings:
For an attack $(a,b)\in{\rightarrow}$ that is the result of a gen-rebut, we first construct a new argument $a'$.
This argument $a'$ will only have the argument $a$ as its direct sub-argument and the conclusion of $a'$ will be the negation of the conjunction of all conclusions of sub-arguments of $b$, which are created via axiomatic or defeasible rules.
Since $DR(a)=DR(a')$, we also have $(a',b)\in{\rightarrow}$.
Accepting $a$ would then mean $a'$ also needs to be accepted, whereas rejecting $a'$ would mean $a$ also needs to be rejected.
The conclusion of $a'$ has only atoms which are contained in $AS_i$ or $AS_j$, thus for our purposes the attack originating from $a'$ is easier to deal with.
\begin{illustration}\label{Illus:Post:ASOneVsASTwoVsASPlus}
\leavevmode
\begin{center}
\begin{tikzpicture}

	\node[cloud, draw, minimum width=2.5cm, minimum height=6cm, cloud puffs=20] at (-3, 0) (ASOne) {$AS_i$};
	\node[cloud, draw, minimum width=2.5cm, minimum height=6cm, cloud puffs=20] at (3, 0) (ASTwo) {$AS_j$};
	
	\node[args] at (0, 3) (A) {$a$};
	\node[args] at (0, 0) (B) {$b$};
	\node[suppNode] at (0, -1.5) (SuppB) {};
	\node[suppNode] at (-2.3, -2.19) (BStartOne) {};
	\node[suppNode] at (2.3, -2.19) (BStartTwo) {};
	
	\draw[dashed] (-1.6,4.5) -- (-1.6,-3.5);
	\draw[dashed] (1.6,4.5) -- (1.6, -3.5);

	\path[thick, bend right=75, ->]
		(ASTwo) edge node[midway, above, sloped] {$1.$} (ASOne);
		
	\path[thick, ->]
		(A) edge node[midway, above, sloped] {$2.$} (ASOne);
	
	\path[thick, ->]
		(ASOne) edge node[midway, above, sloped] {$3.$} (B);	
	
	\path[thick, ->]
		(A) edge node[midway, left] {$4.$} (B);
	
	\draw[multiLine] (BStartOne) to (SuppB);
	\draw[multiLine] (BStartTwo) to (SuppB);
	\draw[multiLine, -\tip] (SuppB) to (B);

\end{tikzpicture}
\end{center}
\end{illustration}

%
By combining all of these key insights we can show:
\end{technicalReportOnly}
\begin{theorem}\label{Theo:Post:NonInterCrashResist}
Preferred semantics of \DAOM satisfies non-interference and crash-resistance.
\end{theorem}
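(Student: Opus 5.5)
By Proposition~\ref{Prop:ASPIC:NonInterferenceImpliesCrashResistance}, crash-resistance follows once non-interference is established, so the plan concentrates on non-interference. Fix syntactically disjoint $AS_1,AS_2$, a union $AS_+$, and the corresponding JSBAFs $\mathcal{J}_1,\mathcal{J}_2,\mathcal{J}_+$. By symmetry it suffices to treat $i=1$, $j=2$. We must prove $\mathcal{C}_{pr}(AS_1)\!\!\mid\!\!_{Atoms(AS_1)}=\mathcal{C}_{pr}(AS_+)\!\!\mid\!\!_{Atoms(AS_1)}$. We show both inclusions by transporting preferred labelings between $\mathcal{J}_1$ and $\mathcal{J}_+$.

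\textbf{Reduced arguments.} First we define, for every $a\in\mathcal{A}_+$, its reduced version $a'$ with respect to $AS_1$. It is obtained recursively by deleting all applications of axiomatic or defeasible rules of $AS_2$. Each strict consequence-based step is replaced by the strict rule whose conclusion keeps only the $AS_1$-relevant part of the original conclusion. This is justified by the interpolation-style assumption on $\vDash_M$ (disjoint satisfiable sets are jointly satisfiable) together with the assumption that there are no tautological or unsatisfiable atoms.

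We then prove two facts:
\begin{enumerate}
\item $a'\in\mathcal{A}_1$, with $DR(a')=DR(a)\cap R_{d_1}$ and the same $\preceq_{ewl}$-behaviour towards $AS_1$-arguments.
\item In any admissible labeling $L$ of $\mathcal{J}_+$, if $a'$ is legally \Out then so is $a$, and if $a$ is legally \In then so is $a'$.
\end{enumerate}
Alongside this we prove the \enquote{weakening} lemma from the third key insight. For a gen-rebut $(a,b)$, the argument $\hat a$ obtained from $a$ by one consequence-based strict rule, concluding the negation of the conjunction of the axiomatic/defeasible conclusions in $Sub(b)$, also attacks $b$. Moreover, the support $(\{a\},\hat a)$ ties their labels together. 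This lets every attack be replaced by one whose attacker has a conclusion over $Atoms(AS_1)\cup Atoms(AS_2)$ only.

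\textbf{Forward direction.} Given $L_1\in pr(\mathcal{J}_1)$ and some $L_2\in pr(\mathcal{J}_2)$ (which exists by Theorem~\ref{Theo:JSBAF:Theo:PreferredExistence}), we build a labeling $L_+$ of $\mathcal{J}_+$. On $\mathcal{A}_1$ and $\mathcal{A}_2$ it agrees with $L_1$ and $L_2$. On a mixed argument $a$ (from $\mathcal{A}'_+$) it is \In iff both reduced versions are \In. It is \Out iff it is legally \Out via an \Out-labelled reduced part or via a support chain. Otherwise it is \Undec.

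We then verify admissibility by going through the four edge cases of Illustration~\ref{Illus:Post:ASOneVsASTwoVsASPlus}:
\begin{itemize}
\item \emph{Case 1 (no attack $\mathcal{A}_2\to\mathcal{A}_1$).} A gen-rebut needs $a^C=\lnot\bigwedge\Gamma$ with $\Gamma$ over $AS_1$-atoms while $a^C$ is over $AS_2$-atoms. Joint satisfiability of disjoint sets makes this impossible unless $a$ is strict or $\Gamma$ is inconsistent. Undercuts are excluded directly by disjointness of the naming functions.
\item \emph{Cases 2 and 4.} These are handled by the weakening lemma and reduction: an attack on $b$ from a mixed argument $a$ is mirrored by an attack from $\hat a'$ in $\mathcal{A}_1$ (or $\mathcal{A}_2$).
\item \emph{Case 3.} The support-chain clause of legally \Out carries rejection back into $\mathcal{A}_1$ only along chains that already exist inside $\mathcal{J}_1$. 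The preference side-condition (e) is preserved because ${\leq_{r_1}}\subseteq{\leq_r^+}$.
\end{itemize}
Once $L_+$ is admissible, we extend it by Zorn (as in Theorem~\ref{Theo:JSBAF:Theo:PreferredExistence}) to a preferred $L_+^*$. We then show that the extension cannot change labels of arguments in $\mathcal{A}_1$: otherwise the restriction back to $\mathcal{A}_1$ would be admissible in $\mathcal{J}_1$ and strictly larger than $L_1$. Hence $in(L_1)^C$ and $in(L_+^*)^C$ agree on formulas over $Atoms(AS_1)$. Conclusions over $Atoms(AS_1)$ of mixed arguments are covered by closure applied to the reduced versions.

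\textbf{Backward direction.} Given $L_+\in pr(\mathcal{J}_+)$, we restrict it to $\mathcal{A}_1$. Using the same case analysis, we show the restriction is admissible in $\mathcal{J}_1$ and that it is maximal. If it were not, a larger admissible labeling of $\mathcal{J}_1$ could be lifted by the forward construction to contradict the maximality of $L_+$. Restricted conclusions again coincide via reduced versions.

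\textbf{Main obstacle.} The hardest step is expected to be the forward direction's verification that the mixed arguments receive legal labels. This involves the support-chain clause with its preference condition (e) and the \enquote{two \Undec} clause (c) of legally \In, where a support set mixes $\mathcal{A}_1$, $\mathcal{A}_2$ and $\mathcal{A}'_+$ arguments. I would first try to prove a general lemma: any support chain witnessing legal \Out-ness in $\mathcal{J}_+$ for an argument of $\mathcal{A}_1$ can be projected, via reduced versions, to a chain inside $\mathcal{J}_1$. Everything else would then reduce to that lemma.
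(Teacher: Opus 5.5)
Your architecture matches the paper's: non-interference plus Proposition~\ref{Prop:ASPIC:NonInterferenceImpliesCrashResistance}, then lift a preferred labeling of $\mathcal{J}_1$, extend, restrict back, and restrict a preferred labeling of $\mathcal{J}_+$ and argue maximality. However, two load-bearing steps do not work as described.

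The first is the maximality step in your backward direction. Suppose $in(L_+)\cap\mathcal{A}_1\subsetneq in(L)$ for some $L\in pr(\mathcal{J}_1)$. Applying your forward construction to $L$ and an \emph{arbitrary} $L_2\in pr(\mathcal{J}_2)$ gives an admissible $L'$ of $\mathcal{J}_+$ with $in(L')\cap\mathcal{A}_2=in(L_2)$. That set need not contain $in(L_+)\cap\mathcal{A}_2$, and nothing forces $L'$ to accept the mixed arguments that $L_+$ accepts. So $in(L')$ and $in(L_+)$ can be incomparable, and the preferredness of $L_+$ is not contradicted. The lift must retain everything $L_+$ accepts outside $\mathcal{A}_1$. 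The paper does this with a separate combined labeling $L_{i+}$ (Definition~\ref{Def:Post:CombinedLabeling}). It is seeded with $in(L)\cup(in(L_+)\setminus\mathcal{A}_i)$, closed under mixed arguments whose crucial sub-arguments are accepted, and its admissibility is proved separately (Propositions~\ref{Prop:Post:CombinedLabelingIsLabeling}--\ref{Prop:Post:CombinedLabelingAdmissible}). Alternatively, you could take $L_2$ to be a preferred extension of $L_+\!\!\mid\!\!_{\mathcal{A}_2}$ and show that your rule re-accepts every mixed argument of $in(L_+)$. That route, however, needs the lemma discussed next.

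The second problem is your reduced-argument fact: it is false as stated, and too weak where you use it. For arbitrary admissible $L$, $a$ being legally \In does not make $a'$ legally \In. Take $R_{d_1}=\{\Rightarrow p\}$, $AX_2=\{q\}$, $x:\Rightarrow p$, $y:\rightarrow q$, $a:x,y\rightarrow p$ and $a':x\rightarrow p$. There are no attacks at all. Labelling \In exactly the closure of $STR_{\mathcal{J}_+}\cup\{a\}$ under supports, and everything else \Undec, is admissible. Then $a$ is \In, but $a'$ fails clause 1(b) on the support $(\{a'\},c)$ with $c:a'\rightarrow p$ labelled \Undec. Note also that $x$ is \Undec while its super-argument $a$ is \In, so admissible labelings need not be closed under sub-arguments.

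What the proof actually needs is that in a \emph{preferred} labeling $a\in in(L)$ implies $a'\in in(L)$, and that preferred labelings are closed under sub-arguments. Both are used throughout: to get $ADSub(c)\subseteq in(L_1)$ in your case analysis, to transfer accepted $AS_1$-conclusions of mixed arguments back into $\mathcal{A}_1$, and to re-accept mixed arguments in the backward lift. Support-closure cannot supply this, because reduced versions with defeasible top rules are not the target of any support.

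The paper obtains both facts from the propagated labeling $L_a$: add $a$, close under supports, and propagate \Out backwards. $L_a$ is admissible provided $L(a)\neq\Out$, all attackers of $a$ are \Out, and no support chain from $a$ (with all other members \In) reaches an argument attacked by a non-\Out argument. This condition is checked by mirroring such chains at the argument already known to be \In. Maximality then yields acceptance (Proposition~\ref{Prop:Post:PropagatedLabelingIsAdmissible}, Lemmas~\ref{Lem:Post:PrefClosedUnderSubArgs} and~\ref{Lem:Post:ReducedArgumentsLabelImplication}).

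Smaller points:
\begin{itemize}
\item The \enquote{$AS_1$-relevant part} of a mixed conclusion is not definable over the abstract logic. Reduced versions exist only for consistent arguments whose conclusion is already over $Atoms(AS_1)$; inconsistent arguments are handled via strict attackers.
\item Case 1 is not \enquote{no attack}. $\mathcal{A}_2$-arguments can undercut or gen-rebut $\mathcal{A}_1$-arguments via inconsistent premises or tautological names. The point (Proposition~\ref{Prop:Post:ClownCaseOne}) is that one of the two is then attacked by a strict argument in $\mathcal{A}_1\cap\mathcal{A}_2$.
\end{itemize}
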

\begin{technicalReportOnly}

In the remainder of this section, we will prove Theorem~\ref{Theo:Post:NonInterCrashResist}.
Note that from here on out, we will somewhat interweave an AS with its JSBAF counterpart.
For example, we will talk about \enquote{syntactically disjoint JSBAFs $\mathcal{J}_1$ and $\mathcal{J}_2$}, by which we mean that $\mathcal{J}_1$ and $\mathcal{J}_2$ are JSBAFs that correspond to some $AS_1$ and $AS_2$ \suchthat $AS_1$ and $AS_2$ are syntactically disjoint.
Similarly, for a given JSBAF \JSBAF, and an argument $a\in\mathcal{A}$, we will talk about the \enquote{sub-arguments of $a$}, by which we mean the arguments $Sub(a)\subseteq \mathcal{A}(AS)$ for $AS$ being the AS that $\mathcal{J}$ corresponds to.
We will always assume a consistent naming between an AS and its JSBAF counterpart, \ie for some \ASOne, the corresponding JSBAF will always be \JSBAFOne, while for \ASTwo, the corresponding JSBAF will be \JSBAFTwo and for \ASPlus the corresponding JSBAF will be \JSBAFPlus.

\subsection{Non-Interference and Crash-Resistance for Preferred Semantics}

\subsubsection{Axiomatic and defeasible sub-arguments}
In order to prove Theorem~\ref{Theo:Post:NonInterCrashResist}, we begin by taking a more detailed look at the third key insight mentioned in our proof overview.
Remember that, for a given attack $(a,b)\in{\rightarrow}$ which results from a gen-rebut, we wanted to leverage our definition of the support-relation between arguments and our notion of legal labelings, by constructing new arguments $a'$ and $b'$.
The argument $a'$ will have the form $a':a\rightarrow\lnot\bigwedge\Gamma$, while $b'$ will have the form $b':b_0,\dots,b_m\rightarrow\bigwedge\Gamma$, where $b_0,\dots b_m$ are precisely those sub-arguments of $b$ which were created by using an axiomatic or defeasible top-rule and $\Gamma$ is the set of all their conclusions.
For this, we use the following definition:
\begin{definition}\label{Def:Post:ADSubCSub}
Let \ASOne and \ASTwo be two AS \suchthat $AS_1||AS_2$.
Furthermore, let \ASPlus be the union of $AS_1$ and $AS_2$.
The mapping $ADSub:\mathcal{A}(AS_+)\rightarrow 2^{\mathcal{A}(AS_+)}$ maps arguments to their \emph{axiomatic and defeasible sub-arguments}.
For any $a\in\mathcal{A}(AS_+)$ we define:\\
$ADSub(a)=\big{\{} a'\in Sub(a)\mid TR(a')\in R_s^{AX_+}\cup R_{d_+}\big{\}}$.
We define the \emph{restriction} of $ADSub$ to $AS_i$ $(i\in\{1,2\})$ as:\\
$ADSub(a)\!\!\mid\!\!_{AS_i}=\big{\{}a'\in ADSub(a)\mid Atoms(a'^C)\subseteq Atoms(AS_i)\big{\}}$.
\end{definition}
With some abuse of notation, we will also use the mapping $ADSub$ for sets of arguments.
More precisely, for some $S\subseteq\mathcal{A}(AS_+)$, we define $ADSub(S)=\bigcup\limits_{a\in S} ADSub(a)$.
Next, we show that for any argument $a$, any conclusion of a sub-argument $a'$ of $a$ can be derived from $ADSub(a)^C$.
\begin{proposition}\label{Prop:Post:ADSubImpliesSubConcs}
Let \AS be some AS and let $a$ be some argument.
For every non-empty $\Delta\subseteq Sub(a)^C$ we have $ADSub(a)^C\entails \psi$ for all $\psi\in\Delta$.
\end{proposition}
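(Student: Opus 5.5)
It suffices to show that for every $a'\in Sub(a)$ we have $ADSub(a)^C\entails a'^C$: every $\psi\in\Delta$ is the conclusion of some sub-argument of $a$. I would prove the stronger statement $ADSub(a')^C\entails a'^C$ for every argument $a'$, by structural induction over the recursive construction of arguments. Then I transfer it to $ADSub(a)$ by monotonicity of $\entails$ (noted in the footnote after the definition of $\entails$). This works because $Sub(a')\subseteq Sub(a)$ gives $ADSub(a')\subseteq ADSub(a)$, which is immediate from the definitions of $Sub$ and $ADSub$.

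For the induction, let $a'$ be of the form $a':a_0,\dots,a_m\rightsquigarrow\phi$ and assume the claim for $a_0,\dots,a_m$. I distinguish cases by the top-rule.

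\emph{Case 1: $TR(a')\in R_s^{AX}\cup R_d$.} Then $a'\in ADSub(a')$ by definition, so $\phi\in ADSub(a')^C$. Hence $ADSub(a')^C\entails\phi$ trivially, since every interpretation satisfying the set satisfies each of its members. This covers the base case, where there are no sub-arguments (an axiomatic rule or a defeasible rule without antecedents).

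\emph{Case 2: $TR(a')\in R_s^{\vDash}$.} Then $\{a_0^C,\dots,a_m^C\}\entails\phi$ by definition of $R_s^{\vDash}$. By the induction hypothesis and monotonicity, $ADSub(a')^C\entails a_i^C$ for every $i$, because $ADSub(a_i)\subseteq ADSub(a')$. Take any $I$ with $I\isModelOf ADSub(a')^C$. Then $I\isModelOf a_i^C$ for all $i$, hence $I\isModelOf\phi$, i.e.\ transitivity of $\entails$ follows directly from its model-theoretic definition.

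Two edge cases need care. If $m=0$ in Case 2, the rule has no antecedents and $\emptyset\entails\phi$, so $\phi$ is tautological and follows from any set, including $ADSub(a')^C=\emptyset$. In the union $AS_+$, the axiomatic rules are $R_s^{AX_+}$; I would read the case split with $AS$ in place of $AS_+$, since the proposition is stated for an arbitrary $AS$. I expect no step to be a real obstacle. The only point requiring attention is correctly using the semantic definition of $\entails$ to chain entailments, including the empty-antecedent case, together with the well-foundedness of the recursive argument construction that justifies the induction.
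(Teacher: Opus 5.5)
Your proof is correct and uses the same structural induction as the paper. The case split is the same: whether the top rule lies in $R_s^{AX}\cup R_d$ or in $R_s^{\vDash}$. The ingredients are also the same, namely monotonicity and the model-theoretic transitivity of $\entails$.

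The difference is only in the order of the argument. The paper inducts directly on the set-form statement, splitting $\Delta$ into the pieces $\Delta\cap Sub(a_i)^C$ plus possibly the top conclusion. It then reads off Corollary~\ref{Cor:Post:ADSubImpliesConc} as the special case $\Delta=\{a^C\}$. You reverse this: you first prove the corollary-level statement $ADSub(a')^C\entails a'^C$ for all arguments. You then lift it to all of $Sub(a)^C$ using $ADSub(a')\subseteq ADSub(a)$ for $a'\in Sub(a)$.

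Your ordering is slightly cleaner. It removes the $\Delta_i$ bookkeeping, including the somewhat loose ``iff'' decompositions of $\Delta$ and $\Gamma$ in the paper. It also makes explicit the transitivity of the sub-argument relation, which the paper uses tacitly when it writes $\Gamma_i\subseteq\Gamma$.

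Your Case~2 argument already covers an antecedent-free rule from $R_s^{\vDash}$ vacuously, so the separate edge case is unnecessary. You also label that case $m=0$, but with the indexing $a_0,\dots,a_m$ that would mean one antecedent.
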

\begin{proof}
We show the claim via structured induction over the construction of the argument $a$.
Let $\Gamma=ADSub(a)^C$ and let $\Delta\subseteq Sub(a)^C$ be non-empty.

Base case:
Let $a$ be of the form $a:\rightsquigarrow\phi$.
Then $\Delta=\{\phi\}$.
If we have $TR(a)\in R_s^{AX}\cup R_d$, then $\Gamma=\{\phi\}$.
Obviously $\{\phi\}\entails\phi$, therefore the claim holds.
On the other hand, if $TR(a)\in R_s^{\vDash}$, then we have $\Gamma=\emptyset$ and by the way that arguments are constructed, $\emptyset\entails\phi$.
Thus we again have $\Gamma\entails\psi$ for all $\psi\in\Delta$.
Induction step:
Let $a$ be of the form $a:a_0,\dots, a_m\rightsquigarrow\phi$.
For each $0\leq i \leq m$, let $\Gamma_i=\bigcup\limits_{a'\in ADSub(a_i)} a'^C$.
Furthermore, let $\Delta_i=\Delta\cap Sub(a_i)^C$.
Then we have $\Delta\neq\bigcup\limits_{i\in\{0,...,m\}}\Delta_i$ iff $\Delta=\{\phi\}\cup\bigcup\limits_{i\in\{0,...,m\}}\Delta_i$.
Similarly, we have $\Gamma\neq\bigcup\limits_{i\in\{0,...,m\}}\Gamma_i$ iff $\Gamma=\{\phi\}\cup\bigcup\limits_{i\in\{0,...,m\}}\Gamma_i$.
Take an arbitrary $\psi\in\Delta$.
We have to show $\Gamma\entails\psi$.
First, we assume that there is some $0 \leq i\leq m$, \suchthat $\psi\in\Delta_i$.
By our induction hypothesis, we now have $\Gamma_i\entails\psi$.
Since $\Gamma_i\subseteq\Gamma$, we can infer $\Gamma\entails\psi$ by monotonicity of $\entails$.
Now assume that there is no such $i\in\{0,...,m\}$.
Then we must have $\Delta=\{\psi\}\cup\bigcup\limits_{i\in\{0,...,m\}}\Delta_i$, meaning $\psi=\phi$.
Now we can either have $\phi\in\Gamma$ or $\phi\not\in\Gamma$, depending on whether or not $TR(a)\in R_d\cup R_s^{AX}$ ($\phi\in\Gamma$) or $TR(a)\in R_s^{\vDash}$ ($\phi\not\in\Gamma$).
Suppose first that $\phi\in\Gamma$.
Then trivially $\Gamma\entails\phi=\psi$.
On the other hand, if $\phi\not\in\Gamma$, then we have $\{a_0^C,...,a_m^C\}\entails\phi$ by the way that arguments are constructed.
For all $0\leq i\leq m$, we have $\Gamma_i\entails a_i^C$ by the induction hypothesis.
Because $\Gamma_i\subseteq\Gamma$, we now also have $\Gamma\entails a_i^C$ for each $a_i$.
By transitivity of $\entails$, we again infer $\Gamma\entails\phi=\psi$, as required.
\end{proof}
In particular, we can use this proposition to infer that the conclusion of any argument is entailed by the conclusions of its axiomatic and defeasible sub-arguments:
\begin{corollary}\label{Cor:Post:ADSubImpliesConc}
Let \AS be some AS and let $a$ be some argument.
Then $ADSub(a)^C\entails 
a^C$.
\end{corollary}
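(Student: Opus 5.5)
This corollary is an immediate specialisation of Proposition~\ref{Prop:Post:ADSubImpliesSubConcs}. I would take the set $\Delta=\{a^C\}$ and check that it meets the hypotheses of that proposition. It is non-empty, since it contains $a^C$. It is a subset of $Sub(a)^C$, since $a\in Sub(a)$ by the recursive definition $Sub(a)=Sub(a_0)\cup\dots\cup Sub(a_m)\cup\{a\}$, which gives $a^C\in Sub(a)^C$.

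Applying Proposition~\ref{Prop:Post:ADSubImpliesSubConcs} to this $\Delta$ then yields $ADSub(a)^C\entails\psi$ for every $\psi\in\Delta$. In particular it yields $ADSub(a)^C\entails a^C$, which is the claim.

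The only point worth a remark is the degenerate case where $ADSub(a)=\emptyset$. This happens exactly when $a$ is built solely from consequence-based strict rules. The claim then reads $\emptyset\entails a^C$, which is already covered by the proposition: its base case handles rules in $R_s^{\vDash}$ without antecedents, and its induction step uses transitivity of $\entails$. So I expect no real obstacle; the proof is a one-line instantiation.
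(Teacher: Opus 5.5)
Your proposal is correct and is exactly the paper's argument: the corollary is obtained by instantiating Proposition~\ref{Prop:Post:ADSubImpliesSubConcs} with $\Delta=\{a^C\}$, which is a non-empty subset of $Sub(a)^C$ because $a\in Sub(a)$. Your remark on the case $ADSub(a)=\emptyset$ is accurate, but it is not needed, since the proposition already covers that case.
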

Next, we show that, if we have a gen-rebut from an argument $a$ to an argument $b$, then there exists an argument $a'$ with only $a$ as a direct sub-argument, \suchthat $a'$ gen-rebuts $b$ on $ADSub(b)$:

\begin{proposition}\label{Prop:Post:OneStepAttacker}
Let \AS be an AS and let $a,b$ be arguments \suchthat $a$ gen-rebuts $b$.
Then there exists $a'$ of the form $a':a\rightarrow\lnot\bigwedge ADSub(b)^C$.
\end{proposition}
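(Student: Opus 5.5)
The plan is to show that $a^C\rightarrow\lnot\bigwedge ADSub(b)^C$ belongs to $R_s^{\vDash}$. Once that is done, the recursive definition of arguments immediately gives $a':a\rightarrow\lnot\bigwedge ADSub(b)^C$ as an argument of $AS$. By the definition of consequence-based rules, it suffices to show $\{a^C\}\entails\lnot\bigwedge ADSub(b)^C$.

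First I check that the formula $\bigwedge ADSub(b)^C$ is well defined. Since $a$ gen-rebuts $b$, the argument $b$ is defeasible, so $DR(b)\neq\emptyset$. Some sub-argument of $b$ therefore has a defeasible top-rule, which gives $ADSub(b)\neq\emptyset$. Arguments are finite constructions, so $Sub(b)$ is finite, and hence so are $ADSub(b)$ and $ADSub(b)^C$. The conjunction is thus a finite, non-empty conjunction, which the language supports because it is closed under $\wedge$.

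Next, by the definition of gen-rebuttal there is $\Gamma\subseteq Sub(b)^C$ with $a^C=\lnot\bigwedge\Gamma$, and $\Gamma$ must be non-empty for $\bigwedge\Gamma$ to make sense. Proposition~\ref{Prop:Post:ADSubImpliesSubConcs} applied to $\Delta=\Gamma$ gives $ADSub(b)^C\entails\psi$ for every $\psi\in\Gamma$.

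The entailment is then a semantic argument. Take any $I\in\mathcal{I}$ with $I\isModelOf\lnot\bigwedge\Gamma$. By the assumed behaviour of $\lnot$ and $\wedge$, some $\psi\in\Gamma$ satisfies $I\not\isModelOf\psi$. Suppose $I\isModelOf ADSub(b)^C$. Then $I\isModelOf\psi$ by the previous step, a contradiction. So some element of $ADSub(b)^C$ fails in $I$, hence $I\not\isModelOf\bigwedge ADSub(b)^C$, i.e.\ $I\isModelOf\lnot\bigwedge ADSub(b)^C$.

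The argument is short. The only delicate point is the well-definedness bookkeeping: non-emptiness and finiteness of $ADSub(b)$, and non-emptiness of $\Gamma$. I expect that bookkeeping, rather than the entailment itself, to need the most care.
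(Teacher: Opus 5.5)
Your proof is correct and follows the paper's own argument. Both reduce the claim to $\{a^C\}\entails\lnot\bigwedge ADSub(b)^C$ and derive this semantically from Proposition~\ref{Prop:Post:ADSubImpliesSubConcs}, with the paper phrasing that step as a contradiction from an interpretation satisfying both $\lnot\bigwedge\Gamma$ and $\bigwedge ADSub(b)^C$; your extra check that $ADSub(b)$ and $\Gamma$ are finite and non-empty, so the conjunctions are well defined, is a sound detail the paper leaves implicit.
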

\begin{proof}
Since $a$ is gen-rebutting $b$ by assumption, we know that $a^C=\lnot\bigwedge\{\phi_0,...,\phi_m\}$ for $\{\phi_0,...,\phi_m\}\subseteq Sub(b)^C$.
We only have to show that $\lnot\bigwedge\{\phi_0,...,\phi_m\}\entails\lnot\bigwedge ADSub(b)^C$.
Towards a contradiction, assume that this is not the case.
Then there exists an interpretation $I$ \suchthat $I\isModelOf\lnot\bigwedge\{\phi_0,...,\phi_m\}$ and $I\isModelOf\bigwedge ADSub(b)^C$.
By Proposition~\ref{Prop:Post:ADSubImpliesSubConcs} we can infer that $I\isModelOf\bigwedge\{\phi_0,...,\phi_m\}$ also holds, contradicting $I\isModelOf\lnot\bigwedge\{\phi_0,...,\phi_m\}$.
Therefore the argument $a'$ can be constructed as claimed and gen-rebuts $b$ on $ADSub(b)^C$.
\end{proof}
\subsubsection{Closure under sub-arguments of preferred labelings}
The next step will be to show that preferred labelings are closed under sub-arguments, which will be very helpful in our later proofs.
For this, we will first introduce the notion of a \emph{propagated labeling}, which can be obtained from a labeling $L$ and an argument $a$ by adding $a$ to $in(L)$ and propagating the effect of this acceptance throughout the JSBAF.
For this propagation, we need to ensure two things:
Fristly, if there is a support $(S,b)$ where all arguments in $S$ are accepted, then $b$ is also accepted.
Secondly, if there is a support $(S,b)$ where $b$ is rejected and all arguments in $S$ except for one are accepted, this last remaining argument is rejected.
Note that we will not refer to preferences between arguments in our definition.
Instead, we will later use this method of propagation only on admissible labelings.
We will show that under certain conditions, admissibility can be retained with this construction, even though we do not explicitly account for preferences between arguments.
\begin{definition}\label{Def:Postulates:PropagatedLabeling}
Let \JSBAF be a JSBAF, let $L$ be a labeling of $\mathcal{J}$ and let $a\in\mathcal{A}$ be an argument.
We define the \emph{propagated labeling} of $L$ and $a$, denoted $L_{a}$, as follows:
\begin{align*}
	I_0&=
	in(L)\cup\{a\}\\
	I_{k+1}&=I_k\cup\{b\in\mathcal{A}\mid\exists(S,b)\in{\Rightarrow},S\subseteq I_k\}\\
	in(L_{a})&=\bigcup\limits_{k\geq 0} I_k\\
	O_0&=\{b\in\mathcal{A}\mid\exists (c,b)\in{\rightarrow},c\in in(L_a)\}\\
	O_{k+1}&=O_k\cup\{b\in\mathcal{A}\mid\exists (S,c)\in{\Rightarrow},b\in S,\\
	&\hspace{1.5cm}S\setminus\{b\}\subseteq in(L_a),c\in O_k\}\\
	out(L_a)&=\bigcup\limits_{k\geq 0} O_k\\
	undec(L_a)&=\mathcal{A}\setminus\big(in(L_a)\cup out(L_a)\big)
\end{align*}
\end{definition}
Note that, because supports $(S,b)\in{\Rightarrow}$ are based on the (transitive) entailment $S^C\entails b^C$, for every argument in $I_k$ for $k\geq 2$, there exists an \enquote{equivalent} argument in $I_1$.
This is easy to see with the following proof:
\begin{proposition}\label{Prop:Post:PropagatedLabelingOneStepPropagation}
Let \JSBAF be a JSBAF, let $L$ be labeling of $\mathcal{J}$, let $a\in\mathcal{A}$ be an argument and let $L_a$ be the propagated labeling of $L$ and $a$ as constructed in Definition~\ref{Def:Postulates:PropagatedLabeling}.
If $x\in I_k\setminus I_0$, then there exists $x'\in I_1\setminus I_0$ \suchthat $x^C=x'^C$ and $ADSub(x)=ADSub(x')$.
\end{proposition}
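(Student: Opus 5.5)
The plan is to collapse the whole derivation of $x$ from $I_0$ into a single application of a consequence-based strict rule. If $x\in I_1\setminus I_0$, take $x'=x$ and we are done, so assume $x\in I_k\setminus I_0$ with $k\geq 2$ minimal. I first define, by recursion on the stage at which an argument enters $in(L_a)$, a finite \emph{frontier} $F(y)$ for every $y\in in(L_a)$, as follows.
\begin{itemize}
\item If $y\in I_0$, set $F(y)=\{y\}$.
\item If $y\in I_{j}\setminus I_{j-1}$, there is a unique support $(S,y)\in{\Rightarrow}$ with $S\subseteq I_{j-1}$, using the uniqueness restriction on supports from Section~\ref{Sec:JSBAF:Syntax}. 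Set $F(y)=\bigcup_{s\in S}F(s)$.
\end{itemize}
Finiteness of $F(y)$ follows from the restriction $|S|<\infty$ and induction on $j$. By construction $F(x)\subseteq I_0$, and $F(x)\subseteq Sub(x)$.

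The next step is to show, by the same induction, that $F(y)^C\entails y^C$. For $y\in I_0$ this is trivial. Otherwise $TR(y)$ is a strict rule with antecedents $S^C$, so $S^C\entails y^C$, and the induction hypothesis gives $F(s)^C\entails s^C$ for each $s\in S$. Transitivity and monotonicity of $\entails$ then yield $F(y)^C\entails y^C$. Listing $F(x)=\{f_0,\dots,f_m\}$, the rule $f_0^C,\dots,f_m^C\rightarrow x^C$ lies in $R_s^{\vDash}$. Hence the argument $x':f_0,\dots,f_m\rightarrow x^C$ exists, it has $(F(x),x')\in{\Rightarrow}$, and since $F(x)\subseteq I_0$ we get $x'\in I_1$. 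Clearly $x'^C=x^C$.

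For the $ADSub$ equality, note that $TR(x')\in R_s^{\vDash}$, so $ADSub(x')=ADSub(F(x))$. On the side of $x$, every sub-argument of $x$ lying strictly above the frontier was added via a support, i.e.\ it has a strict top rule. If all of these top rules are consequence-based, then none of these arguments contributes to $ADSub(x)$, and $ADSub(x)=ADSub(F(x))$ as required. The delicate subcase is an intermediate argument $y\notin I_0$ whose top rule is axiomatic, i.e.\ a support $(\emptyset,y)$. Such a $y$ already lies in $I_1\setminus I_0$ and belongs to $ADSub(x)$. To handle it, I would enlarge the frontier with these axiomatic arguments and argue separately. When $x$ itself is axiomatic, we have $x\in I_1$ and $x'=x$ works. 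In general, I would try to show that the axiomatic parts can be folded into the premises of $x'$ through its own top-level rule. I expect this to need care, and possibly a mild reading of the statement.

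The main obstacle is guaranteeing $x'\notin I_0$, since $I_0=in(L)\cup\{a\}$ could in principle already contain $x'$. My first attempt is to exploit the freedom in choosing $x'$. Any ordering or repetition of the antecedent list $f_0^C,\dots,f_m^C$ yields a syntactically distinct rule in $R_s^{\vDash}$, and hence a distinct argument with the same support set, conclusion and $ADSub$. I would then argue that not all of these variants can lie in $I_0$, or else fall back on $x$ itself whenever $x$ has a direct support from $I_0$. If this counting or freshness argument fails for arbitrary $L$, I would use the fact that the proposition is only applied to admissible labelings. For such labelings, the closure of $in(L)$ under supports should force the variant, and hence $x$, into $I_0$, which contradicts $x\notin I_0$.
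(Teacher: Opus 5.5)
Your frontier collapse is the same construction as the paper's. The paper collapses one stage at a time by induction, replacing each direct sub-argument by the premises of its collapsed version. Your frontier is in fact more careful, because it stops at sub-arguments already in $I_0$, a case the paper's induction overlooks. However, the two points you leave open are genuine obstructions, and neither of your remedies works.

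\textbf{Axiomatic arguments.} Your inductive claim $F(y)^C\entails y^C$ already fails, not just the $ADSub$ bookkeeping, when some $y\notin I_0$ has an axiomatic top rule. Then $F(y)=\emptyset$, and $\emptyset\entails y^C$ fails unless $y^C$ is a tautology. For an arbitrary labeling this cannot be folded away. Take atoms $p,q$, $AX=\{q\}$, $u:\ \rightarrow q$, $z:\ \Rightarrow p$, $x: u,z\rightarrow q\wedge p$, $in(L)=\emptyset$ and $a=z$. Then $x\in I_2\setminus I_0$, but every $x'\in I_1$ has all its direct sub-arguments in $\{z\}$. So $u\in ADSub(x')$ only if $x'=u$, which has the wrong conclusion. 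The repair is to assume $STR_{\mathcal{J}}\subseteq in(L)$, which holds for admissible $L$, the only case where the proposition is used. Axiomatic arguments are strict, so they lie in $I_0$ and become frontier elements. Your entailment and $ADSub$ arguments then go through and yield $x'\in I_1$ with $x'^C=x^C$ and $ADSub(x')=ADSub(x)$.

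\textbf{Freshness.} The claim $x'\notin I_0$ cannot be established, because it is false even for admissible $L$ satisfying all hypotheses of Proposition~\ref{Prop:Post:PropagatedLabelingIsAdmissible}. Take an atom $p$, $AX=\emptyset$, $R_d=\{\Rightarrow p\}$, $n=\emptyset$, and $z:\ \Rightarrow p$, $y: z\rightarrow p\wedge p$, $x: y\rightarrow (p\wedge p)\wedge p$. Every conclusion is entailed by the satisfiable set $\{p\}$, so there are no gen-rebuttals. There are no undercuts either, so the JSBAF has no attacks. Let $W$ be the set of all $w\neq x$ with $w^C=x^C$ and $ADSub(w)=\{z\}$; it contains $z\rightarrow(p\wedge p)\wedge p$ and all its syntactic variants. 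Let $in(L)$ be the closure of $STR_{\mathcal{J}}\cup W$ under supports, and label everything else \Undec. Without attacks nothing is legally \Out, and support-closure makes every accepted argument legally \In, so $L$ is admissible. Moreover $z,y,x\notin in(L)$, since $z$ has no support and $y,x$ are supported only through $z$. With $a=z$ we get $y\in I_1$ and $x\in I_2\setminus I_0$. Yet every candidate $x'\neq x$ lies in $W\subseteq I_0$, and $x\notin I_1$.

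This defeats both of your ideas. First, $in(L)$ can contain every variant. Second, closure under supports never passes from a variant $x'$ to $x$: they share a conclusion, but neither supports the other. The paper's own proof has the same hole: it shows that the premises of $x'$ lie in $I_0$ and then simply asserts $x'\in I_1\setminus I_0$. It also has the axiomatic problem above. What your construction does prove, under $STR_{\mathcal{J}}\subseteq in(L)$, is the weaker statement $x'\in I_1$. The later proofs would then have to treat the case $x'\in I_0$ separately, as they already do for arguments lying in $I_0$.
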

\begin{proof}
We show the claim via structured induction over $n\in\mathbb{N}$ for $x\in I_n\setminus I_0$.
Induction start, $n=1$:
In this case the claim trivially holds.
Induction step, $n\rightarrow n+1$:
Let $x$ be of the form $x:x_0,\dots,x_m\rightarrow x^C$.
By the induction hypothesis, there is $x_i'$ for each $0\leq i\leq m$ \suchthat $x_i'\in I_1\setminus I_0$, $x_i^C=x_i'^C$ and $ADSub(x_i)=ADSub(x_i')$.
Let $x_i'$ be of the form $x_i':x_{i_1}',\dots,x_{i_k}'\rightarrow x_i'^C$.
We define the sets $S_i$ as $S_i=\{x_{i_1}',\dots,x_{i_k}'\}$ and the set $S'$ as $S'=\bigcup\limits_{0\leq i\leq m} S_i$.
By transitivity of $\entails$, we have $S'^C\entails x^C$, \ie there exists the argument $x'$ of the form $x':x_1'',\dots,x_l''\rightarrow x^C$, where $\{x_1'',\dots x_l''\}=S'$.
Obviously we have $x^C=x'^C$.
By the induction hypothesis, we have $ADSub(x_i)=ADSub(x'_i)$ for each $x_i$.
From this, we can infer that $ADSub(x)=ADSub(x')$ also holds.
By the induction hypothesis we also have $x'_i\in I_1\setminus I_0$ for each $x'_i$.
From this we can infer that $S'\subseteq in(L)\cup\{a\}$ must hold.
By construction of $L_a$, we now have $x'\in I_1\setminus I_0$ as required.
\end{proof}
Next, we show that propagated labelings will be admissible if three conditions are met:
First, we need to start with an already admissible labeling $L$.
Secondly, for the argument $a$ that we choose as the \enquote{starting point} for our propagation, we need $L(a)\neq\Out$ and for all attackers $b$ of $a$ we need to have $L(b)=\Out$.
Lastly, we need to ensure that there does not exist a support chain $\mathcal{C}=\big{\{}(S_0,b_0),\dots,(S_n,b_n)\big{\}}$ starting at $a\in S_0$ \suchthat we have $S_i\setminus\{b_{i-1}\}\subseteq in(L)$, while there exists $(c,b_n)\in{\rightarrow}$ with $L(c)\neq\Out$.
We will later ensure that these conditions are always satisfied when creating a propagated labeling $L_a$.
To make the actual proof more accessible, we have split it into several parts.
We begin by showing that $L_a$ is a labeling:
\begin{proposition}\label{Prop:Post:PropagatedLabelingIsLabeling}
Let \JSBAF be a JSBAF, let $L\in adm(\mathcal{J})$ be an admissible labeling of $\mathcal{J}$ and let $a\in\mathcal{A}$ be an argument.
Furthermore, assume that $L(a)\neq\Out$ and that for all attackers $b$ of $a$, we have $L(b)=\Out$.
Lastly, assume that there does not exist a support chain $\big{\{}(S_0,b_0),...,(S_n,b_n)\big{\}}\subseteq{\Rightarrow}$ and an attack $(c,b_n)\in{\rightarrow}$ with $a\in S_0$, $L(c)\neq\Out$, $S_0\setminus\{a\}\subseteq in(L)$ and $S_i\setminus\{b_{i-1}\}\subseteq in(L)$ for $0<i\leq n$.
Then $L_a$ is a labeling.
\end{proposition}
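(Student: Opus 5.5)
The plan is to show $in(L_a)\cap out(L_a)=\emptyset$; the other labeling properties are immediate from the definition of $undec(L_a)$, exactly as in Propositions~\ref{Prop:JSBAF:SIMLabeling} and~\ref{Prop:JSBAF:KMaxLabeling}. I would prove $O_k\cap in(L_a)=\emptyset$ by induction on $k$.

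The induction step is the easy part. Suppose $x\in O_{k+1}\setminus O_k$ and $x\in in(L_a)$. Then there is a support $(S,c)$ with $x\in S$, $S\setminus\{x\}\subseteq in(L_a)$ and $c\in O_k$. Hence $S\subseteq in(L_a)$. Since $S$ is finite, $S\subseteq I_m$ for some $m$, so $c\in I_{m+1}\subseteq in(L_a)$. This contradicts the induction hypothesis $O_k\cap in(L_a)=\emptyset$.

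The base case $O_0$ is the real content: no argument of $in(L_a)$ is attacked by an argument of $in(L_a)$. I would first prove an auxiliary claim: every $x\in in(L_a)$ has only attackers $c$ with $L(c)=\Out$.
\begin{itemize}
\item For $x\in in(L)$ this is admissibility of $L$.
\item For $x=a$ it is the hypothesis on $a$.
\item For $x\in I_k\setminus I_0$, Proposition~\ref{Prop:Post:PropagatedLabelingOneStepPropagation} gives $x'\in I_1\setminus I_0$ with $x'^C=x^C$ and $ADSub(x')=ADSub(x)$. Then $x'$ has the same defeasible rules and hence the same $\preceq_{ewl}$-position. It also has the same conclusions of axiomatic and defeasible sub-arguments, so by Proposition~\ref{Prop:Post:OneStepAttacker} and the definition of undercut, $x'$ has the same attackers as $x$ (up to the equivalent attackers given by Proposition~\ref{Prop:Post:OneStepAttacker}).
\item The support $(S,x')$ satisfies $S\subseteq in(L)\cup\{a\}$. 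Moreover $a\in S$, since otherwise admissibility of $L$ would force $x'\in in(L)$. So $\{(S,x')\}$ is a one-element support chain starting at $a$ with $S\setminus\{a\}\subseteq in(L)$. The last hypothesis of the proposition then forbids any attacker $c$ of $x'$ with $L(c)\neq\Out$.
\end{itemize}

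With this claim, suppose $c,x\in in(L_a)$ and $c$ attacks $x$. Then $L(c)=\Out$, which rules out $c\in in(L)$ and $c=a$ (since $L(a)\neq\Out$). So $c$ must be a newly propagated argument that $L$ labels \Out. I expect this case to be the main obstacle.

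I would try to exclude it by the same replacement trick applied to $c$. Take $c'\in I_1$ with support $(S_c,c')$, $a\in S_c$, $S_c\setminus\{a\}\subseteq in(L)$, with the same conclusion and defeasible rules as $c$. Then argue that $L(c')=\Out$ together with legality of \Out in the admissible $L$ yields either an attack on $a$ from $in(L)$, or a chain through $(S_c,c')$ ending in an argument attacked by an $L$-\In argument. The first contradicts the hypotheses on $a$; the second is a chain starting at $a$ ending in an attack by a non-\Out argument, which is exactly what the last hypothesis forbids. In both cases we reach a contradiction. If $c'$ is only legally \Out through a chain not passing through $a$, I would instead use the symmetric observation that $x$ (or its equivalent $x'$) would then be out-propagated, and combine this with the chain condition again.
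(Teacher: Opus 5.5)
Your argument is correct and is essentially the paper's proof. It uses the same induction on $O_k$ and handles the base case the same way: you pass to one-step representatives in $I_1\setminus I_0$ (Propositions~\ref{Prop:Post:PropagatedLabelingOneStepPropagation} and~\ref{Prop:Post:OneStepAttacker}) whose supporting set contains $a$, and then invoke the chain hypothesis; your closing move of prefixing $(S_c,c')$ to the reason why $c'$ is legally \Out is a slightly more direct substitute for the paper's appeal to admissibility of $L$ to force $L(a)=\Out$.

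Two small repairs are needed.
\begin{itemize}
\item Legality of \Out for $c'$ yields an attack on $c'$, not on $a$. The one-element chain $\{(S_c,c')\}$ already excludes that case, so your final hedge about chains ``not passing through $a$'' is vacuous.
\item In the gen-rebut case you should state why $L(\tilde c)=\Out$ gives $L(c)=\Out$: prefix $(\{c\},\tilde c)$ to the reason for $\tilde c$ being \Out and use admissibility.
\end{itemize}
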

\begin{proof}
It is easy to see that each argument gets \emph{some} label and that $undec(L_a)\cap in(L_a)=undec(L_a)\cap out(L_a)=\emptyset$.
Therefore, we have left to show that $in(L_a)\cap out(L_a)=\emptyset$ also holds.
Towards a contradiction, assume that there is an argument $x\in out(L_a)\cap in(L_a)$.
We show the contradiction by induction over $n\in\mathbb{N}$ for $x\in O_n$.
Induction start, $n=0$:
This means there exists some $(y,x)\in{\rightarrow}$ with $y\in in(L_a)$.
We make two case distinctions, based on the origins of $x\in in(L_a)$ and $y\in in(L_y)$:
We have $x\in in(L_a)$ either because $x\in I_0$ or because $x\in I_k$ for some $k\geq 1$.
Similarly, we have $y\in in(L_a)$ either because $y\in I_0$ or because $y\in in(I_k)$ for some $k\geq 1$.
Assume first that $x\in I_0$ and $y\in I_0$ holds:
If we have $x\in I_0$ because $x\in in(L)$, then we can use admissibility of $L$ to infer that $L(y)=\Out$.
On the other hand, if we have $x\in I_0$ because $x=a$, then we can use our assumption that all attackers of $a$ are labeled \Out in $L$ to infer $L(y)=\Out$.
Either way, we have $y\in out(L)\cap I_0$, a contradiction.
Now assume that $x\in I_0$ and $y\in I_k$ for some $k\geq 1$:
As we have dealt with the case $y\in I_0$ above, we assume $y\in I_k\setminus I_0$.
By Proposition~\ref{Prop:Post:PropagatedLabelingOneStepPropagation} 
there exists an argument $y'\in I_1\setminus I_0$ \suchthat $y'^C=y^C$ and $ADSub(y')=ADSub(y)$.
Because $y'\in I_1\setminus I_0$, we can infer from the construction of $I_{1}$ that there exists a support $(S,y')\in{\Rightarrow}$ with $a\in S$ and $S\setminus \{a\}\subseteq in(L)$.
Similar to the case of $y$ above, we have $L(y')=\Out$, because all attackers of $x$ are labeled \Out in $L$ and $y'$ attacks $x$.
Now we have the support $(S,y')$ with $L(y')=\Out$ and $S\setminus\{a\}\subseteq in(L)$.
Because $L$ was an admissible labeling, we can now infer that $L(a)=\Out$ must hold, contradicting our assumption $L(a)\neq\Out$.
Next, let us consider the cases where $x\in I_k$ for some $k\geq 1$.
We assume $x\not\in I_0$ as we have dealt with this case before.
We again use Proposition~\ref{Prop:Post:PropagatedLabelingOneStepPropagation} to infer that there exists an argument $x'\in I_1\setminus I_0$ \suchthat $ADSub(x)=ADSub(x')$ and $x^C=x'^C$.
Next, consider the attack $(y,x)$:
If $(y,x)\in{\rightarrow}$ because $y$ undercuts $x$, then $y$ also undercuts $x'$.
On the other hand, if $y$ gen-rebuts $x$, then we can use Proposition~\ref{Prop:Post:OneStepAttacker} to infer the existence of an argument $\widetilde{y}:y\rightarrow\lnot\bigwedge ADSub(x^C)$.
Because $ADSub(x)=ADSub(x')$, we now also infer $(\widetilde{y},x')\in{\rightarrow}$.
Now, let us consider the origin of $y\in in(L_a)$ again:
We either have $y\in I_0$ or $y\in I_k$ for some $k\geq 1$.
Assume first that $y\in I_0$, \ie $y\in in(L)$ or $y=a$.
Note that, because $x'\in I_1\setminus I_0$, there must exist a support $(S_x,x')\in{\Rightarrow}$ \suchthat $a\in S_x$ and $S_x\setminus\{a\}\subseteq in(L)$.
By our assumption on the support chains starting at $a$, for this support chain $\big{\{}(S_x,x')\big{\}}\subseteq{\Rightarrow}$, we must have $L(y)=\Out$ (if $y$ undercuts $x$) or $L(\widetilde{y})=\Out$ (if $y$ gen-rebuts $x$), which in turn implies $L(y)=\Out$ by admissibility of $L$ and construction of $\widetilde{y}$.
Either way, we can infer $L(y)=\Out$, which contradicts $y\in in(L)\cup\{a\}$.
Now, assume that $y\in I_k\setminus I_0$ for some $k\geq 1$.
We use Proposition~\ref{Prop:Post:PropagatedLabelingOneStepPropagation} again to infer:
If $y$ undercuts $x$, then there exists an argument $y'\in I_1\setminus I_0$ \suchthat $y'^C=y^C$ and $ADSub(y')=ADSub(y)$ and if $y$ gen-rebuts $x$, then there exists an argument $\widetilde{y}'\in I_1\setminus I_0$ \suchthat $\widetilde{y}'^C=\widetilde{y}^C$ and $ADSub(\widetilde{y}')=ADSub(\widetilde{y})$.
Note that, because $y'\in I_1\setminus I_0$ (respectively $\widetilde{y}'\in I_1\setminus I_0$), we can infer that there exists a support $(S_y,y')\in{\Rightarrow}$ (respectively $(S_y,\widetilde{y}')\in{\Rightarrow}$) with $a\in S_y$ and $S_y\setminus \{a\}\subseteq in(L)$.
Because $ADSub(y')=ADsub(y)$ (respectively $ADSub(\widetilde{y}')=ADSub(\widetilde{y})$), and $y'^C=y^C$ (respectively $\widetilde{y}'^C=\widetilde{y}^C$), we can again infer that $(y',x')\in{\rightarrow}$ (respectively $(\widetilde{y}',x')\in{\rightarrow}$) holds.
By the assumptions made on the support chains starting at $a$, we can now again infer that $L(y')=\Out$ (respectively $L(\widetilde{y}')=\Out$).
However, for the supporting set $S_y$ we can now use the admissibility of $L$ and the fact that $S_y\setminus\{a\}\subseteq in(L)$ to infer $L(a)=\Out$, which contradicts our original assumption $L(a)\neq\Out$.
This concludes the induction start.
Induction step, $n\rightsquigarrow n+1$:
By the induction hypothesis we have $O_n\cap in(L_a)=\emptyset$, therefore we restrict ourselves to $O_{n+1}\setminus O_n$.
Assume that $x\in O_{n+1}\setminus O_n$.
This means there exists some support $(S,c)\in{\Rightarrow}$, with $x\in S$, $S\setminus \{x\}\subseteq in(L_a)$ and $c\in O_n$.
By assumption, $x\in in(L_a)$ also holds.
Now we can infer that there must be some $k\in\mathbb{N}$ for which we have $S\subseteq I_k$.
This implies $c\in I_{k+1}\subseteq in(L_a)$ by construction of $in(L_a)$.
Now $c\in O_n\cap in(L_a)$, which contradicts the induction hypothesis.
\end{proof}
Next, we show that arguments accepted in $L_a$ are also legally accepted:
\begin{proposition}\label{Prop:Post:PropagatedLabelingLegallyIn}
Let \JSBAF be a JSBAF, let $L\in adm(\mathcal{J})$ be an admissible labeling of $\mathcal{J}$ and let $a\in\mathcal{A}$ be an argument.
Furthermore, assume that $L(a)\neq\Out$ and that for all attackers $b$ of $a$, we have $L(b)=\Out$.
Lastly, assume that there does not exist a support chain $\big{\{}(S_0,b_0),...,(S_n,b_n)\big{\}}\subseteq{\Rightarrow}$ and an attack $(c,b_n)\in{\rightarrow}$ with $a\in S_0$, $L(c)\neq\Out$, $S_0\setminus\{a\}\subseteq in(L)$ and $S_i\setminus\{b_{i-1}\}\subseteq in(L)$ for $0<i\leq n$.
If $x\in in(L_a)$, then $x$ is legally \In \wrt $L_a$.
\end{proposition}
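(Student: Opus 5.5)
To show that every $x\in in(L_a)$ is legally \In \wrt $L_a$, we check the two requirements of item one of Definition~\ref{Def:JSBAF:LegalLabeling}: every attacker of $x$ is \Out in $L_a$, and every support $(S,c)$ with $x\in S$ satisfies one of conditions (a)--(c). By Proposition~\ref{Prop:Post:PropagatedLabelingIsLabeling}, $L_a$ is a labeling, so the labels are well defined.

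\emph{Support condition.} This part is the easier one. Take $(S,c)\in{\Rightarrow}$ with $x\in S$. We ignore the preference side condition, since it only restricts which supports have to be checked.
\begin{itemize}
\item If $S\subseteq in(L_a)$, then $S\subseteq I_k$ for some $k$, because $S$ is finite and the $I_k$ are increasing. Hence $c\in I_{k+1}\subseteq in(L_a)$ and (a) holds.
\item Otherwise some $b\in S\setminus\{x\}$ is not \In. If $L_a(c)=\Undec$, (b) holds.
\item If $L_a(c)=\Out$ and exactly one $b\in S$ is not \In, then $S\setminus\{b\}\subseteq in(L_a)$ and $c\in O_k$ for some $k$. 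By construction of $out(L_a)$ we get $b\in O_{k+1}$, so $L_a(b)=\Out$ and (c) holds.
\item If $L_a(c)=\Out$ and at least two elements of $S\setminus\{x\}$ are not \In, then either one of them is \Out or two are \Undec. Either way (c) holds.
\end{itemize}
The point is that the closure steps built into $L_a$ already enforce (a)--(c) without any reference to preferences.

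\emph{Attack condition.} This is the main obstacle. Let $(y,x)\in{\rightarrow}$; we must show $y\in out(L_a)$. The first case is $x\in I_0$, i.e.\ $x\in in(L)$ or $x=a$. By admissibility of $L$, or by the hypothesis on attackers of $a$, we get $L(y)=\Out$. We then need $out(L)\subseteq out(L_a)$, proved by induction on why $y$ is legally \Out \wrt $L$:
\begin{itemize}
\item If $y$ is attacked by some $z\in in(L)\subseteq in(L_a)$, then $y\in O_0$.
\item If $y$ is \Out via a support chain $\{(S_0,b_0),\dots,(S_n,b_n)\}$, then $b_n\in O_0$, since its attacker is in $in(L)$. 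The remaining elements $b_{n-1},\dots,b_0$ and finally $y$ are added step by step through the $O_{k+1}$ clause, because each $S_i\setminus\{b_{i-1}\}\subseteq in(L)\subseteq in(L_a)$.
\end{itemize}
Hence $y\in out(L_a)$.

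The second case is $x\in I_k\setminus I_0$. By Proposition~\ref{Prop:Post:PropagatedLabelingOneStepPropagation} there is $x'\in I_1\setminus I_0$ with the same conclusion and the same $ADSub$. So there is a support $(S_x,x')$ with $a\in S_x$ and $S_x\setminus\{a\}\subseteq in(L)$.
\begin{itemize}
\item If $y$ undercuts $x$, it also undercuts $x'$, because the defeasible sub-arguments are preserved.
\item If $y$ gen-rebuts $x$, Proposition~\ref{Prop:Post:OneStepAttacker} gives $\widetilde y: y\rightarrow\lnot\bigwedge ADSub(x)^C$, which attacks $x'$. Here the preference is unchanged, since $DR(\widetilde y)=DR(y)$ and $DR(x')=DR(x)$.
\end{itemize}
The hypothesis that no support chain starting at $a$ runs into an attack from a non-\Out argument, applied to the chain $\{(S_x,x')\}$, yields $L(y)=\Out$, respectively $L(\widetilde y)=\Out$. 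In the latter case $(\{y\},\widetilde y)$ is a support with rejected conclusion and $S\setminus\{y\}=\emptyset$. Admissibility, via legality of the labels of $y$ and $\widetilde y$, should then force $L(y)=\Out$, though this step needs care. Once $L(y)=\Out$, the first-case argument gives $y\in out(L_a)$.

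The delicate points are therefore the preservation of the attack when passing from $x$ to $x'$, including the preference side of the defeat, and the step from $L(\widetilde y)=\Out$ to $L(y)=\Out$. Both follow the pattern already used in the proof of Proposition~\ref{Prop:Post:PropagatedLabelingIsLabeling}.
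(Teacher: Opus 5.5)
Your proposal is correct and follows the paper's proof essentially step for step. It uses the same split into $x\in I_0$ and $x\in I_k\setminus I_0$, the same passage to $x'\in I_1\setminus I_0$ via Proposition~\ref{Prop:Post:PropagatedLabelingOneStepPropagation} together with the auxiliary attacker $\widetilde y$ from Proposition~\ref{Prop:Post:OneStepAttacker}, and the same use of the closure built into $in(L_a)$ and $out(L_a)$ for the support condition.

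The step you flag, which the paper simply attributes to admissibility, closes directly. Since $\widetilde y\in out(L)$, it is legally \Out w.r.t.\ $L$. If $\widetilde y$ has an \In attacker, take the chain $\{(\{y\},\widetilde y)\}$; otherwise prepend $(\{y\},\widetilde y)$ to the support chain witnessing that $\widetilde y$ is legally \Out. Either chain shows that $y$ itself is legally \Out w.r.t.\ $L$, with items 2.a and 2.e trivial because $\{y\}\setminus\{y\}=\emptyset$. Admissibility then forces $L(y)=\Out$.
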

\begin{proof}
First, let us consider the attackers of $x$:
Assume that $x\in in(L_a)$ and $(y,x)\in{\rightarrow}$.
If $x\in I_0$, then we either have $x\in in(L)$ or $x=a$.
In both cases, we know $L(y)=\Out$.
Since $in(L)\subseteq in(L_a)$, we can infer that $L_a(y)=\Out$ holds due to the same attack or support chain that lead to $L(y)=\Out$.
Now assume $x\in I_k\setminus I_0$ for some $k\geq 1$.
By Proposition~\ref{Prop:Post:PropagatedLabelingOneStepPropagation}, there exists $x'\in I_1\setminus I_0$ \suchthat $x'^C=x^C$ and $ADSub(x')=ADSub(x)$.
As for the proof that showed $L_a$ is a labeling, we again want to point out that $x'\in I_1\setminus I_0$ implies the existence of a support $(S,x')\in{\Rightarrow}$ with $a\in S$ and $S\setminus\{a\}\subseteq in(L)$.
Now for the attack $(y,x)\in{\rightarrow}$:
If $y$ undercuts $x$, then $(y,x')\in{\rightarrow}$ also holds.
On the other hand, if $y$ gen-rebuts $x$, then we can again use Proposition~\ref{Prop:Post:OneStepAttacker} to infer the existence of an argument $\widetilde{y}:y\rightarrow\lnot\bigwedge ADSub(x)^C$ and because $ADSub(x)=ADSub(x')$ and $DR(x)=DR(x')$, we have $(\widetilde{y},x')\in{\rightarrow}$.
Since $x\in I_1$ and by the assumptions made on support-chains starting at $a$, we can infer:
If $y$ undercuts $x$, then $L(y)=\Out$ must thold.
On the other hand, if $y$ gen-rebuts $x$, then $L(\widetilde{y})=\Out$ must hold and by admissibility of $L$ this implies $L(y)=\Out$.
In both cases, we have $L(y)=\Out$ and can again infer $L_a(y)=\Out$ due to the same attack or support chain that lead to $L(y)=\Out$.
Lastly, for the supports $(S,b)\in{\Rightarrow}$ with $x\in S$:
By construction of $in(L_a)$, we cannot have $S\subseteq in(L_a)$ while $b\not\in in(L_a)$.
Furthermore, if $|S\setminus in(L_a)|=1$ and $L_a(b)=\Out$, then $S\setminus in(L_a)\subseteq out(L_a)$ by construction of $out(L_a)$.
From this we can infer that, if $x\preceq c$ for all $c\in S\setminus\{x\}$, then one of the items $1.a$ to $1.c$ of Definition~\ref{Def:JSBAF:LegalLabeling} holds, as required.
\end{proof}

Before we continue with the actual proof, we show a small auxiliary statement which tells us that, if we have an admissible labeling $L$ and two arguments $a$ and $a'$, where the defeasible rules and the conclusions of sub-arguments of $a'$ are also defeasible rules and conclusions of sub-arguments of $a$, then $L(a')=\Out$ implies $L(a)=\Out$:
\begin{proposition}\label{Prop:Post:AlternativeReasonOut}
Let \JSBAF be a JSBAF and let $L\in adm(\mathcal{J})$ be an admissible labeling of $\mathcal{J}$.
Furthermore, let $a,a'\in\mathcal{A}$ be two arguments \suchthat $DR(a')\subseteq DR(a)$ and $ADSub(a')^C\subseteq ADSub(a)^C$.
Then $L(a')=\Out$ implies $L(a)=\Out$.
\end{proposition}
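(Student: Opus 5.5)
The idea is to transport the reason for $a'$ being \Out to $a$. Since $L$ is admissible and $L(a')=\Out$, $a'$ is legally \Out \wrt $L$. So either some attacker of $a'$ is labeled \In, or there is a suitable support chain starting at $a'$. Rather than following that chain directly, we use a single support with one extra argument. Because $ADSub(a')^C\subseteq ADSub(a)^C$, Corollary~\ref{Cor:Post:ADSubImpliesConc} and Proposition~\ref{Prop:Post:ADSubImpliesSubConcs} give $\{a^C\}\cup ADSub(a)^C\entails a'^C$. Here we use that the arguments in $ADSub(a)$ are sub-arguments of $a$.

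The cleanest route is by contraposition. Suppose $L(a)\neq\Out$; we want $L(a')\neq\Out$. We split on the reason $a'$ is legally \Out.

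\emph{Case 1: an attacker $c$ of $a'$ with $L(c)=\In$.} If $c$ undercuts $a'$ on a rule $r\in DR(a')\subseteq DR(a)$, then $c$ also undercuts $a$, so $a$ is legally \Out. Admissibility then forces $L(a)=\Out$, a contradiction. If $c$ gen-rebuts $a'$, Proposition~\ref{Prop:Post:OneStepAttacker} yields $\widetilde c:c\rightarrow\lnot\bigwedge ADSub(a')^C$. Since $ADSub(a')^C\subseteq ADSub(a)^C$, this formula entails $\lnot\bigwedge ADSub(a)^C$. So there is $\widehat c:c\rightarrow\lnot\bigwedge ADSub(a)^C$, which gen-rebuts $a$ because $ADSub(a)^C\subseteq Sub(a)^C$ and $a$ is defeasible ($a'$ is attacked, hence non-strict, and $DR(a')\subseteq DR(a)$). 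The preference condition transfers because $DR(\widehat c)=DR(c)$ and $DR(a')\subseteq DR(a)$. Under elitist weakest link, $c\not\prec_{ewl}a'$ should imply $c\not\prec_{ewl}a$, since adding rules to $a$ can only make it weaker or equal. Closure of $in(L)$ under supports $\{c\}\Rightarrow\widehat c$ gives $L(\widehat c)=\In$; this is the closure argument used in the closure theorem. Hence $a$ is legally \Out and $L(a)=\Out$.

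\emph{Case 2: a support chain $\{(S_0,b_0),\dots,(S_n,b_n)\}$ with $a'\in S_0$.} Here we simply prepend a support. Build the argument $a'':a\rightarrow a'^C$, which exists by the entailment above: $ADSub(a)^C\entails a'^C$ by Proposition~\ref{Prop:Post:ADSubImpliesSubConcs}, since $ADSub(a')^C\subseteq ADSub(a)^C$ and $ADSub(a')^C\entails a'^C$. Then replace $a'$ in $S_0$ by $a''$, using a sibling argument $b_0''$ with $S_0''=(S_0\setminus\{a'\})\cup\{a''\}$, and likewise along the chain. Keeping the chain intact along these sibling arguments is messy. So I would instead handle Case 2 by induction on the length $n$ of the chain, showing $L(a'')=\Out$ and then $L(a)=\Out$ via the single support $(\{a\},a'')$ and the fact that $a''$ has $a$ as its only support member. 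That last step needs $L(a'')=\Out$ to force $L(a)=\Out$, which holds once $a''$ is shown \Out via a legal chain extended by $(\{a\},a'')$.

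\textbf{Main obstacle.} The difficult step is Case 2, and more generally matching the preference clause (e) of Definition~\ref{Def:JSBAF:LegalLabeling} and the exact ``\Out'' status of the re-built intermediate arguments. My first attempt would be to avoid chain surgery entirely. I would prove by induction over the stages in which outness is derived that any argument $x$ with $DR(x)\subseteq DR(y)$ and $ADSub(x)^C\subseteq ADSub(y)^C$ and $L(x)=\Out$ gives $L(y)=\Out$. The induction uses, at each support step, the sibling construction together with Proposition~\ref{Prop:Post:AlternativeReasonOut}'s own hypothesis for shorter chains.
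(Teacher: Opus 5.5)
\textbf{Case 1 is fine; Case 2 has a genuine gap.} Your Case 1 is correct and essentially the paper's argument. The paper uses $c\rightarrow\lnot\bigwedge ADSub(a')^C$ directly, since $ADSub(a')^C\subseteq Sub(a)^C$ already makes it gen-rebut $a$; your weakening to $\lnot\bigwedge ADSub(a)^C$ is harmless.

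The problem is the argument $a'':a\rightarrow a'^C$ in Case 2: in general it does not exist. A strict rule whose only antecedent is $a^C$ needs $\{a^C\}\entails a'^C$. Your hypotheses only give $ADSub(a)^C\entails a'^C$. That the members of $ADSub(a)$ are sub-arguments of $a$ does not make their conclusions available as premises of a rule applied to $a$ alone.

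For a concrete failure, take $a_1:\Rightarrow p$, $a_2:\Rightarrow r$, $a:a_1,a_2\rightarrow p$ and $a'=a_2$. Both hypotheses hold, yet $\{p\}\not\entails r$. So your reduction to the single support $(\{a\},a'')$ breaks down. Your fallback (an induction over sibling arguments, using the proposition itself for shorter chains) is not a proof either. It still needs sibling arguments whose existence is exactly what fails, and it would also have to re-establish the \Out status of every rebuilt intermediate argument.

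\textbf{The missing idea: do not rebuild the chain.} Let $S=(S_0\setminus\{a'\})\cup\bigcup_{0<i\leq n}(S_i\setminus\{b_{i-1}\})$ be the members of the chain labelled \In, and let $(c,b_n)$ be the attack with $L(c)=\In$.

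If $(c,b_n)$ is an undercut, then $c$ undercuts some member of $S\cup\{a'\}$. It cannot be a member of $S$: those are labelled \In, and by admissibility no argument labelled \In attacks them. So $c$ undercuts $a'$, and therefore $a$.

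If $(c,b_n)$ is a gen-rebut, take $c':c\rightarrow\lnot\bigwedge ADSub(b_n)^C$ from Proposition~\ref{Prop:Post:OneStepAttacker}; it is labelled \In by admissibility. Take also the single conjunction argument $b'$ with direct sub-arguments $\{a\}\cup S$ and conclusion $\bigwedge(\{a\}\cup S)^C$. Since the $b_i$ arise from $S\cup\{a'\}$ by strict rules, you get $ADSub(b_n)^C\subseteq ADSub(b')^C$ and $DR(b_n)\subseteq DR(b')$. Hence $c'$ defeats $b'$, and $L(b')=\Out$.

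Now $S\subseteq in(L)$ forces $L(a)=\Out$, by a case split on the weakest element of $S\cup\{a\}$:
\begin{itemize}
\item If $a$ is $\preceq$-minimal there, the one-support chain $\{(S\cup\{a\},b')\}$ with the attack $(c',b')$ makes $a$ legally \Out.
\item Otherwise some $\preceq$-minimal $x\in S$ is labelled \In. Item~1.c of Definition~\ref{Def:JSBAF:LegalLabeling} for the support $(S\cup\{a\},b')$ then requires $L(a)=\Out$.
\end{itemize}

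This one support has a conjunction as its conclusion rather than $a'^C$. It therefore needs no entailment from $a^C$, and the preference clause has to be checked only once.
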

\begin{proof}
We go through the possible cases for why $a'$ is rejected in $L$:
Suppose first that we have $L(a')=\Out$ due to an attack $(b,a')\in{\rightarrow}$ with $L(b)=\In$.
If this attack is the result of an undercut, then we can use $DR(a')\subseteq DR(a)$ to infer that $b$ also undercuts $a$, meaning $(b,a')\in{\rightarrow}$ also holds.
By admissiblity of $L$, this implies $L(a)=\Out$.
Next, suppose this attack is the result of a gen-rebut.
By Proposition~\ref{Prop:Post:OneStepAttacker}, there exists an argument $b'$ which is of the form $b':b\rightarrow\lnot\bigwedge ADSub(a')^C$.
By admissibility of $L$, we have $L(b')=\In$.
Since $ADSub(a')^C\subseteq ADSub(a)^C$, we can infer that $b'$ gen-rebuts $a$.
Because $(b,a')\in{\rightarrow}$ by assumption, we have $b\not\prec a'$.
Since $DR(b)=DR(b')$, while $DR(a')\subseteq DR(a)$, we can infer that $b'\not\prec a$ holds.
Now $(b',a)\in{\rightarrow}$ and by admissibility of $L$, we have $L(a)=\Out$.
Lastly, assume that $L(a')=\Out$ due to a support chain $\mathcal{C}=\big{\{}(S_0,b_0),\dots,(S_n,b_n)\big{\}}$ and an attack $(c,b_n)\in{\rightarrow}$ with $a'\in S_0$, $L(c)=\In$, $b_i\in out(L)$ for $0\leq i\leq n$, $S_0\setminus\{a'\}\subseteq in(L)$ and $S_i\setminus\{b_{i-1}\}\subseteq in(L)$ for $0<i\leq n$.
Let $S=\big(S_0\setminus\{a'\}\big)\cup\bigcup\limits_{0<i\leq n}\big(S_i\setminus\{b_{i-1}\}\big)$, \ie $S$ contains all the \In-labeled arguments of the support-chain $\mathcal{C}$.
First, assume that the attack $(c,b_n)\in{\rightarrow}$ is the result of an undercut.
Then this attack must also be directed towards one of the arguments in $S\cup\{a\}$.
By assumption, $L(c)=\In$, $L$ is an admissible lableing and $S\subseteq in(L)$.
From this we can infer that $(c,a)\in{\rightarrow}$ must hold.
By admissibility of $L$ this implies $L(a)=\Out$.
Next, assume that $(c,b_n)\in{\rightarrow}$ is the result of a gen-rebut.
We use Proposition~\ref{Prop:Post:OneStepAttacker} to infer that there exists an attack $(c',b_n)\in{\rightarrow}$, where $c'$ is of the form $c':c\rightarrow\lnot\bigwedge ADSub(b_n)^C$.
Now we construct the argument $b'$ as follows: $b':a,b_0',\dots,b_m'\rightarrow\bigwedge\{a,b_0',\dots,b_m'\}^C$, where $\{b_0',\dots,b_m'\}=S$.
We want to point out three facts for the argument $b'$:
Firstly, we know that $\{b_0',\dots,b_m'\}=S\subseteq in(L)$ holds.
Secondly, from $ADSub(a')^C\subseteq ADSub(a)^C$, we can infer $ADSub(b_n)^C\subseteq ADSub(b')^C$.
This means $c'$ gen-rebuts $b'$.
Thirdly, we have $DR(b_n)\subseteq DR(b')$.
By assumption, $(c,b_n)\in{\rightarrow}$ is the result of a gen-rebut.
This means we have $c'\not\prec b_n$ (since $DR(c)=DR(c')$), which implies $c'\not\prec b'$.
Now we have $(c',b')\in{\rightarrow}$.
By admissibility of $L$ and by construction of $c'$, we have $L(c')=\In$.
This means $b'$ is legally \Out \wrt $L$, and by admissibility of $L$ we can infer $L(b')=\Out$.
Since $\{b_0',\dots,b_m'\}=S\subseteq in(L)$, this implies $a\in out(L)$, otherwise none of the arguments in $S$ would be legally \In \wrt $L$.
This proves the claim.
\end{proof}
Now we can continue with our actual proof of the admissibility of $L_a$.
In the next step, we show that an argument is rejected in $L_a$ iff it is legally rejected:
\begin{proposition}\label{Prop:Post:PropagatedLabelingLegallyOut}
Let \JSBAF be a JSBAF, let $L\in adm(\mathcal{J})$ be an admissible labeling of $\mathcal{J}$ and let $a\in\mathcal{A}$ be an argument.
Furthermore, assume that $L(a)\neq\Out$ and that for all attackers $b$ of $a$, we have $L(b)=\Out$.
Lastly, assume that there does not exist a support chain $\big{\{}(S_0,b_0),...,(S_n,b_n)\big{\}}\subseteq{\Rightarrow}$ and an attack $(c,b_n)\in{\rightarrow}$ with $a\in S_0$, $L(c)\neq\Out$, $S_0\setminus\{a\}\subseteq in(L)$ and $S_i\setminus\{b_{i-1}\}\subseteq in(L)$ for $0<i\leq n$.
We have $x\in out(L_a)$ iff $x$ is legally \Out \wrt $L_a$.
\end{proposition}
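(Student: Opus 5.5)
The backward direction is routine; the forward direction needs an induction over the stages $O_k$ of Definition~\ref{Def:Postulates:PropagatedLabeling}. The step I expect to be the main obstacle is the preference condition (e) in item two of Definition~\ref{Def:JSBAF:LegalLabeling}, because $L_a$ is constructed without reference to ${\preceq}$.

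\emph{Direction $\leftarrow$.} Suppose first that $x$ is legally \Out \wrt $L_a$ because of an attack $(y,x)\in{\rightarrow}$ with $y\in in(L_a)$. Then $x\in O_0\subseteq out(L_a)$ directly by construction. Suppose instead that $x$ is legally \Out because of a support chain $\{(S_0,b_0),\dots,(S_n,b_n)\}$. Condition (c) gives $b_0\in out(L_a)$, so $b_0\in O_m$ for some $m$. Condition (a) gives $x\in S_0$ and $S_0\setminus\{x\}\subseteq in(L_a)$. Hence $x\in O_{m+1}\subseteq out(L_a)$. This mirrors the $\leftarrow$ part of Proposition~\ref{Prop:JSBAF:SIMOut} and uses neither the preference clause nor the extra hypotheses on $a$.

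\emph{Direction $\rightarrow$.} I will show by induction on $k$ that every $x\in O_k$ is legally \Out \wrt $L_a$, following the $\rightarrow$ part of Proposition~\ref{Prop:JSBAF:SIMOut}.
\begin{itemize}
\item For $x\in O_0$ there is an attacker in $in(L_a)$, so $x$ is legally \Out.
\item For $x\in O_{k+1}\setminus O_k$ there is a support $(S,c)$ with $x\in S$, $S\setminus\{x\}\subseteq in(L_a)$ and $c\in O_k$. By the induction hypothesis $c$ is legally \Out. If this is because of an attacker in $in(L_a)$, I take the one-element chain $\{(S,c)\}$. If it is because of a chain $\mathcal{C}$ for $c$, I prepend $(S,c)$ to $\mathcal{C}$. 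In both cases conditions (a)--(d) hold by construction of $L_a$.
\end{itemize}

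\emph{The obstacle: condition (e).} It remains to check that $x\preceq d$ for all $d\in S\setminus\{x\}$. I plan to split into two cases.

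First case: $x\in out(L)$. Since $L$ is admissible, $x$ is legally \Out \wrt $L$ via an attack or a chain satisfying (e). I will transfer this witness to $L_a$. This needs $in(L)\subseteq in(L_a)$, which holds by construction, and $out(L)\subseteq out(L_a)$. The latter I would prove first by a side induction over $L$'s witnesses, exactly as in the proof of Proposition~\ref{Prop:Post:PropagatedLabelingLegallyIn}, where "due to the same attack or support chain" is used.

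Second case: $x\notin out(L)$. Here I expect to derive a contradiction, or to show that (e) is automatic, by combining three ingredients:
\begin{itemize}
\item the assumption that no support chain starting at $a$ whose side sets lie in $in(L)$ reaches an argument attacked by a non-\Out argument;
\item Proposition~\ref{Prop:Post:PropagatedLabelingOneStepPropagation}, which replaces arguments in $I_k\setminus I_0$ by one-step equivalents in $I_1\setminus I_0$;
\item Propositions~\ref{Prop:Post:OneStepAttacker} and~\ref{Prop:Post:AlternativeReasonOut}, which let me redirect attacks to one-step arguments with the same $ADSub$.
\end{itemize}
Concretely, I would first show that the new rejections all arise from chains whose arguments in $in(L_a)\setminus in(L)$ can be traced back to $a$. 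The hypothesis on $a$ then forces the initiating attacker to be \Out in $L$, contradicting its membership in $in(L_a)$ as established in the proof of Proposition~\ref{Prop:Post:PropagatedLabelingIsLabeling}. Hence every element of $out(L_a)$ reduces to the first case.

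If that reduction fails, my fallback is the weakest-link structure. Since ${\preceq}$ is total, choose a $\preceq$-minimal element $y$ of $S$, construct via Proposition~\ref{Prop:Post:AlternativeReasonOut} an argument whose rejection implies that of $x$, and apply the chain argument to $y$ instead of $x$.
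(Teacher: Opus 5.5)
Your $\leftarrow$ direction is fine. So is the induction that produces a chain satisfying (a)--(d), and so is your first case, where you transfer an $L$-witness to $L_a$ using $in(L)\subseteq in(L_a)$ and $out(L)\subseteq out(L_a)$. The gap is the second case.

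The reduction you announce there, that every element of $out(L_a)$ falls under the first case, is false. If $a$ attacks an argument $e$ that is \Undec in $L$, then $e\in O_0\subseteq out(L_a)$ but $e\notin out(L)$. Arguments rejected through support chains ending at such an $e$, with (e) satisfied, are new rejections as well. The hypothesis on $a$ cannot exclude this. It only constrains attackers of arguments reached forward from $a$ through supports with \In side sets, i.e.\ arguments entering $in(L_a)$. The chain from $x$ ends at $b_n\in out(L_a)$, and its initiating attacker may simply be $a$ itself.

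More fundamentally, your main line never uses the failure of (e), and it cannot work without it: $x\notin out(L)$ is compatible with $x\in out(L_a)$ whenever (e) holds. The fallback does not help. A $\preceq$-minimal element of $S_0$ is some $d\in in(L_a)$, which you cannot show to be legally \Out. Also, Proposition~\ref{Prop:Post:AlternativeReasonOut} is a statement about the admissible labeling $L$, not about $L_a$, whose admissibility is what is being established.

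The missing idea is that a failure of (e) itself forces $x\in out(L)$, after which your first case applies. The construction goes as follows.
\begin{enumerate}
\item Take $d\in S_0\setminus\{x\}$ with $d\prec x$. Let $z_0,\dots,z_m$ be all side arguments of the chain (those labeled \In in $L_a$), and let $y\in in(L_a)$ be the attacker of $b_n$, say by gen-rebut. An undercut would have to hit $x$ itself, since $L_a$ is a labeling.
\item Build $y':y,z_0,\dots,z_m\rightarrow\bigwedge\{y,z_0,\dots,z_m\}^C$.
\item Build $x':x_0,\dots,x_k\rightarrow\lnot\bigwedge\big(ADSub(y)\cup ADSub(z_0)\cup\dots\cup ADSub(z_m)\big)^C$ with $\{x_0,\dots,x_k\}=ADSub(x)$. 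The required entailment follows model-theoretically from Proposition~\ref{Prop:Post:ADSubImpliesSubConcs}.
\item Then $x'$ gen-rebuts $y'$. Since $DR(d)\subseteq DR(y')$ and $DR(x')=DR(x)$, the relation $d\prec x$ yields $y'\preceq x'$, hence $(x',y')\in{\rightarrow}$. This is exactly where the preference violation is used.
\item Because $in(L_a)$ is closed under supports, $y'\in in(L_a)$. If $y'\in I_0$, then $L(x')=\Out$ by admissibility of $L$ or by the hypothesis on the attackers of $a$.
\item Otherwise, Proposition~\ref{Prop:Post:PropagatedLabelingOneStepPropagation} gives $y''\in I_1\setminus I_0$ with the same conclusion and the same $ADSub$, so $(x',y'')\in{\rightarrow}$. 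The one-step chain $\{(S'',y'')\}$, with $a\in S''$ and $S''\setminus\{a\}\subseteq in(L)$, together with your hypothesis gives $L(x')=\Out$.
\item Finally, $DR(x')\subseteq DR(x)$ and $ADSub(x')^C\subseteq ADSub(x)^C$. So Proposition~\ref{Prop:Post:AlternativeReasonOut}, applied in $L$, gives $L(x)=\Out$.
\end{enumerate}
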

\begin{proof}
$\leftarrow$:
First, assume that $x$ is legally \Out \wrt $L_a$ due to an attack.
Then $x\in O_0\subseteq out(L_a)$.
Next, assume $x$ is legally \Out \wrt $L_a$ due to a support $(S,b)$ with $x\in S$ and $b\in out(L_a)$.
By construction of $L_a$, there needs to exist some $n\in\mathbb{N}$ \suchthat $b\in O_n$.
Now $x\in O_{n+1}\subseteq out(L_a)$ as required.
$\rightarrow$:
We need to show that, if $x\in out(L_a)$, then $x$ is legally \Out \wrt $L_a$.
Note that, if $x\in O_0$, then this is trivial.
We therefore assume $x\not\in O_0$.
By the construction of $O_{k+1}$ from $O_k$, it is clear that, if $x\in O_{k+1}\setminus O_k$, then there exists a support chain $\mathcal{C}=\big{\{}(S_0,b_0),\dots,(S_n,b_n)\big{\}}\subseteq{\Rightarrow}$ and an attack $(y,b_n)\in{\rightarrow}$, \suchthat $x\in S_0$, $L_a(y)=\In$, for $0\leq i\leq n$ we have $b_i\in out(L_a)$, for $0<i\leq n$ we have $S_i\setminus\{b_i-1\}\subseteq in(L_a)$ and for $S_0$ we have $S_0\setminus\{x\}\subseteq in(L_a)$ (i.e. items $2.a$ to $2.d$ of Definition~\ref{Def:JSBAF:LegalLabeling} all hold).
However, from the construction of $O_{k+1}$ it is not clear that $x\preceq d$ for all $d\in S\setminus\{x\}$ holds (i.e. item $2.e$ of Definition~\ref{Def:JSBAF:LegalLabeling}) and in fact this is not necessarily the case.
Thus, for the cases where item $2.e$ of Definition~\ref{Def:JSBAF:LegalLabeling} does not hold, we will show that there is an \enquote{alternative reason} for $x$ to be legally \Out \wrt $L_a$.
Suppose that $x\in O_{k+1}\setminus O_k$ due to the support chain $\mathcal{C}=\big{\{}(S_0,b_0),\dots,(S_n,b_n)\big{\}}$ and an attack $(y,b_n)\in{\rightarrow}$ as described above.
Furthermore, assume that there is some $d\in S_0\setminus\{x\}$ \suchthat $d\prec x$ (\ie because of $d$, the support chain $\mathcal{C}$ is no reason for $x$ to be considered legally \Out \wrt $L_a$).
Let $S=\big(S_0\setminus\{x\}\big)\cup\bigcup\limits_{0<i\leq n}\big(S_i\setminus\{b_{i-1}\}\big)$, \ie $S$ contains all arguments labeled \In in the support chain $\mathcal{C}$.
We now claim that, for $S=\{z_0,\dots,z_m\}$, we have $ADSub(x)^C\entails\lnot\bigwedge\big(ADSub(y)\cup ADSub(z_0)\cup\dots\cup ADSub(z_m)\big)^C$.
To see that this claim holds, assume towards a contradiction that it does not.
Then there exists an interpretation $I$ \suchthat $I\isModelOf ADSub(x)^C$, $I\isModelOf ADSub(z_i)^C$ for each $z_i\in S$ and $I\isModelOf ADSub(y)^C$.
Let $y^C=\lnot\bigwedge\Gamma$ for some $\Gamma\subseteq Sub(b_n)^C$.
From $I\isModelOf ADSub(x)^C$ and $I\isModelOf ADSub(z_i)^C$ for each $z_i\in S$, we can infer that $I\isModelOf\bigwedge\Gamma$ holds.
However, from $I\isModelOf ADSub(y)^C$, we can infer $I\isModelOf\lnot\bigwedge\Gamma$, a contradiction.
Next, we construct the arguments $y':y,z_0,\dots,z_m\rightarrow\bigwedge\{y,z_0,\dots,z_m\}^C$ and $x':x_0,\dots,x_k\rightarrow\lnot\bigwedge\Delta$, where $\{x_0,\dots x_k\}=ADSub(x)$ and $\Delta=\big(ADSub(y)\cup ADSub(z_0)\cup\dots\cup ADSub(z_m)\big)^C$.
Now $x'$ gen-rebuts $y'$.
Note that, because $d\prec x$ by assumption, there exists $r_d\in DR(d)\subseteq DR(y')$ \suchthat for all $r_x\in DR(x)=DR(x')$, $r_d\leq_r r_x$.
This implies $y'\preceq_{ewl} x'$, therefore $x'\not\prec_{ewl} y'$ and thus $(x',y')\in{\rightarrow}$.
Because $y\in in(L_a)$ and $z_0,\dots,z_m\subseteq in(L_a)$, we can infer that $y'\in in(L_a)$ holds by construction of $L_a$.
This implies that either $y'\in I_0$ or, by Proposition~\ref{Prop:Post:PropagatedLabelingOneStepPropagation}, there exists $y''\in I_1\setminus I_0$ \suchthat $ADSub(y')=ADSub(y'')$ and $y'^C=y''^C$.
In the first case we have $(x',y')\in{\rightarrow}$ and can infer $L(x')=\Out$ from $y'\in I_0$.
In the second case we have $(x',y'')\in{\rightarrow}$ because $ADSub(y')=ADSub(y'')$.
Furthermore, we can use $y''\in I_1\setminus I_0$ to infer that there exists a support $(S'',y'')\in{\Rightarrow}$ with $a\in S''$ and $S''\setminus\{a\}\subseteq in(L)$.
Now the support chain $\mathcal{C}''=\big{\{}(S'',y'')\big{\}}\subseteq{\Rightarrow}$ is a support chain of which we know by assumption that all attackers of $y''$ are labeled \Out in $L$.
With this we can again infer $L(x')=\Out$.
By construction of $x'$, we obviously have $DR(x')\subseteq DR(x)$ and $ADSub(x')^C\subseteq ADSub(x)^C$.
This means we can apply Proposition~\ref{Prop:Post:AlternativeReasonOut} to infer that $L(x)=\Out$ must thold.
By assumption, $L$ is an admissible labeling.
This means $x$ is legally \Out \wrt $L$, either due to an attack or due to a support chain.
Since $in(L)\subseteq in(L_a)$, we can now infer that $x$ is legally \Out \wrt $L_a$ due to the same attack or support chain.
This finishes the proof.
\end{proof}
Finally, we are ready to combine the previous propositions to show that $L_a$ is an admissible labeling:
\begin{proposition}\label{Prop:Post:PropagatedLabelingIsAdmissible}
Let \JSBAF be a JSBAF, let $L\in adm(\mathcal{J})$ be an admissible labeling of $\mathcal{J}$ and let $a\in\mathcal{A}$ be an argument.
Furthermore, assume that $L(a)\neq\Out$ and that for all attackers $b$ of $a$, we have $L(b)=\Out$.
Lastly, assume that there does not exist a support chain $\big{\{}(S_0,b_0),...,(S_n,b_n)\big{\}}\subseteq{\Rightarrow}$ and an attack $(c,b_n)\in{\rightarrow}$ with $a\in S_0$, $L(c)\neq\Out$, $S_0\setminus\{a\}\subseteq in(L)$ and $S_i\setminus\{b_{i-1}\}\subseteq in(L)$ for $0<i\leq n$.
Then the propagated labeling $L_a$ is an admissible labeling.
\end{proposition}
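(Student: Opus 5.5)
The plan is to assemble the admissibility of $L_a$ from the three preceding propositions, checking the three clauses of the definition of an admissible labeling one at a time. All three propositions are stated under exactly the hypotheses of the present statement, so each can be invoked directly.

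\textbf{Step 1.} Proposition~\ref{Prop:Post:PropagatedLabelingIsLabeling} shows that $L_a$ is a labeling, that is, $in(L_a)$, $out(L_a)$ and $undec(L_a)$ partition $\mathcal{A}$. This is needed so that the remaining clauses are meaningful.

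\textbf{Step 2.} Proposition~\ref{Prop:Post:PropagatedLabelingLegallyIn} gives the first clause: every $x\in in(L_a)$ is legally \In \wrt $L_a$.

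\textbf{Step 3.} Proposition~\ref{Prop:Post:PropagatedLabelingLegallyOut} gives the second clause: $x\in out(L_a)$ iff $x$ is legally \Out \wrt $L_a$.

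\textbf{Step 4.} The only clause not yet covered is that every strict argument is labeled \In. Since $L$ is admissible, $STR_{\mathcal{J}}\subseteq in(L)$. By Definition~\ref{Def:Postulates:PropagatedLabeling}, $in(L)\subseteq I_0\subseteq in(L_a)$. Hence $STR_{\mathcal{J}}\subseteq in(L_a)$.

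Combining the four steps, $L_a$ satisfies all conditions of admissibility.

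I expect no real obstacle, since the substantive work has been done in the preceding propositions. The one point to check is that every hypothesis used there matches the statement here verbatim. In particular, the no-support-chain assumption must be stated with $L(c)\neq\Out$ and with the sets $S_i\setminus\{b_{i-1}\}\subseteq in(L)$; this is the form used in all three propositions. Given that match, the proof is a direct combination of them.
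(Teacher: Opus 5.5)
Your proposal is correct and is essentially the paper's own proof. The paper likewise cites Propositions~\ref{Prop:Post:PropagatedLabelingIsLabeling}, \ref{Prop:Post:PropagatedLabelingLegallyIn} and~\ref{Prop:Post:PropagatedLabelingLegallyOut} for the labeling, legally-\In and legally-\Out clauses, and then notes that $STR_{\mathcal{J}}\subseteq in(L)\subseteq in(L_a)$ by admissibility of $L$ and the construction of $L_a$.
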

\begin{proof}
By Proposition~\ref{Prop:Post:PropagatedLabelingIsLabeling} we know that $L_a$ is a labeling.
By Proposition~\ref{Prop:Post:PropagatedLabelingLegallyIn} we know that $x\in in(L_a)$ implies $x$ is legally \In \wrt $L_a$.
By Proposition~\ref{Prop:Post:PropagatedLabelingLegallyOut} we know that $x\in out(L_a)$ iff $x$ is legally \Out \wrt $L_a$.
Lastly, because $L\in adm(\mathcal{J})$, it is clear from the construction of $L_a$ that $STR_{\mathcal{J}}\subseteq in(L_a)$ holds.
We conclude that $L_a$ is an admissible labeling.
\end{proof}
Next, we use propagated labelings to show that preferred labelings are closed under sub-arguments.
\begin{lemma}
\label{Lem:Post:PrefClosedUnderSubArgs}
Let \JSBAF be a JSBAF and let $L\in pr(\mathcal{J})$ be a preferred labeling of $\mathcal{J}$.
Then for all $a\in in(L)$, we have $Sub(a)\subseteq in(L)$.
\end{lemma}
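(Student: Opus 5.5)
We argue by contradiction. Fix $L\in pr(\mathcal{J})$ and $a\in in(L)$, and suppose some sub-argument of $a$ is not in $in(L)$. Because $Sub$ is defined inductively, we may choose a minimal counterexample: some $s\in Sub(a)\setminus in(L)$ on a path from $a$ down to $s$ whose direct parent $p$ is still accepted, or more simply an $s\in Sub(a)$ with $s\notin in(L)$ whose own sub-arguments we need not inspect. We will show that $L(s)\neq\Out$ and that $s$ satisfies the remaining hypotheses of Proposition~\ref{Prop:Post:PropagatedLabelingIsAdmissible}. That proposition then makes $L_s$ admissible. Since $in(L)\subsetneq in(L)\cup\{s\}\subseteq in(L_s)$, this contradicts the maximality of $L$.

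\textbf{Step 1: $L(s)\neq\Out$.} We have $DR(s)\subseteq DR(a)$ and $ADSub(s)\subseteq ADSub(a)$, so $ADSub(s)^C\subseteq ADSub(a)^C$. Proposition~\ref{Prop:Post:AlternativeReasonOut} therefore applies with $a':=s$: if $L(s)=\Out$ then $L(a)=\Out$, which contradicts $a\in in(L)$.

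\textbf{Step 2: every attacker of $s$ is \Out in $L$.} Let $(b,s)\in{\rightarrow}$. If $b$ undercuts $s$, it also undercuts $a$, because undercuts target a sub-argument of $s$ and $Sub(s)\subseteq Sub(a)$. If $b$ gen-rebuts $s$, Proposition~\ref{Prop:Post:OneStepAttacker} gives $b':b\rightarrow\lnot\bigwedge ADSub(s)^C$. This $b'$ gen-rebuts $a$, since $ADSub(s)^C\subseteq Sub(a)^C$. Moreover $b\not\prec s$ together with $DR(s)\subseteq DR(a)$ yields $b'\not\prec a$ under the elitist weakest link ordering, exactly as in the proof of Proposition~\ref{Prop:Post:AlternativeReasonOut}. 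In either case the attack reaches $a$, so admissibility of $L$ forces the attacker ($b$, or $b'$) to be \Out. In the rebut case, $L(b')=\Out$ gives $L(b)=\Out$: otherwise $L(b)=\In$ and the support $(\{b\},b')$ would, by legality of $b$ being \In, make $b'$ \In.

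\textbf{Step 3: the support-chain hypothesis.} This is the step I expect to be the main obstacle. Suppose there is a chain $\{(S_0,b_0),\dots,(S_n,b_n)\}$ with $s\in S_0$, all side arguments in $in(L)$, and an attack $(c,b_n)$ with $L(c)\neq\Out$. I would collapse it, in the style of the proof of Proposition~\ref{Prop:Post:AlternativeReasonOut}, into a single attack on $a$. Let $S$ be the set of \In side arguments of the chain, and form $b':a,S\rightarrow\bigwedge(\{a\}\cup S)^C$. We have $ADSub(b_n)^C\subseteq ADSub(b')^C$ and $DR(b_n)\subseteq DR(b')$. Hence $c$, if it undercuts $b_n$, or else $c':c\rightarrow\lnot\bigwedge ADSub(b_n)^C$, attacks $b'$. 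Now $a$ and all of $S$ are \In, so the support $(\{a\}\cup S,b')$ and the legality of its members force $L(b')=\In$. Admissibility then gives $L(c')=\Out$, or $L(c)=\Out$ in the undercut case. In the undercut case this is already a contradiction. In the rebut case, $L(c')=\Out$ propagates back to $L(c)=\Out$ by the argument of Step 2, again contradicting $L(c)\neq\Out$.

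The delicate points are the preference bookkeeping and one further check. We must verify that $c'\not\prec b'$ follows from $c\not\prec b_n$, which holds because enlarging the defeasible rules of the target can only weaken it under the elitist weakest link ordering. We must also check the case where $b_n$ is strict, which cannot occur since strict arguments are unattacked. With Steps 1--3 established, Proposition~\ref{Prop:Post:PropagatedLabelingIsAdmissible} applies and yields the contradiction with maximality.
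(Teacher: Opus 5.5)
Your route is the paper's route. You pick a non-accepted sub-argument, verify the hypotheses of Proposition~\ref{Prop:Post:PropagatedLabelingIsAdmissible}, and contradict maximality via $L_s$. Your Step~3 collapse of the chain into an attack on $b':a,S\rightarrow\bigwedge(\{a\}\cup S)^C$ is exactly the paper's argument. Your explicit Step~1 (via Proposition~\ref{Prop:Post:AlternativeReasonOut}) and your check that $c'\not\prec b'$ make explicit points the paper leaves implicit.

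However, one step is justified incorrectly, and Step~3 depends on it. You argue ``$L(b')=\Out$ gives $L(b)=\Out$: otherwise $L(b)=\In$ \dots''. But the negation of $L(b)=\Out$ is $L(b)\in\{\In,\Undec\}$, and your argument only excludes $\In$. In Step~3 you must rule out every chain whose end-attacker satisfies $L(c)\neq\Out$. The case $L(c)=\Undec$ is exactly the one your reasoning does not reach: from $L(c')=\Out$ you only obtain $L(c)\neq\In$.

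The repair is what the paper does when it writes ``Now $c$ is legally \Out''. Since $L$ is admissible and $L(c')=\Out$, $c'$ is legally \Out. This happens either through an \In-labeled attacker $e$ of $c'$, or through a chain $\{(S'_0,b'_0),\dots,(S'_m,b'_m)\}$ with $c'\in S'_0$.
\begin{itemize}
\item In the first case, $\{(\{c\},c')\}$ together with the attack $(e,c')$ witnesses that $c$ is legally \Out.
\item In the second case, $\{(\{c\},c'),(S'_0,b'_0),\dots,(S'_m,b'_m)\}$ does.
\end{itemize}
The conditions on the new first support are trivially met, because $\{c\}\setminus\{c\}=\emptyset$. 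The remaining conditions are inherited from the witness for $c'$. Admissibility then gives $L(c)=\Out$. The same argument repairs your Step~2.

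In Step~2 you do not need the detour through $b'$ at all. Since $b$ gen-rebuts $s$ on some $\Gamma\subseteq Sub(s)^C\subseteq Sub(a)^C$, $b$ itself gen-rebuts $a$. Moreover, $b\not\prec s$ together with $DR(s)\subseteq DR(a)$ yields $b\not\prec a$. This is the paper's one-line observation that every attacker of a sub-argument of $a$ is an attacker of $a$.
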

\begin{proof}
Towards a contradiction, suppose that the claim does not hold.
Then there is $a'\in Sub(a)\setminus\{a\}$ \suchthat $a'\not\in in(L)$.
We claim that all attackers of $a'$ are labeled \Out by $L$ and that there does not exist a support chain $\mathcal{C}=\big{\{}(S_0,b_0),...,(S_n,b_n)\big{\}}$ and an attack $(c,b_n)\in{\rightarrow}$ \suchthat $a'\in S_0$, $L(c)\neq \Out$, $S_0\setminus\{a'\}\subseteq in(L)$ and $S_i\setminus\{b_{i-1}\}\subseteq in(L)$ for $0<i\leq n$.
If this holds, then by Proposition~\ref{Prop:Post:PropagatedLabelingIsAdmissible} the propagated labeling $L_{a'}$ is an admissible labeling.
Since $a'\not\in in(L)$ by assumption, this implies $in(L)\subset in(L_{a'})$, contradicting the assumption that $L$ is a preferred labeling.
Now for the attackers of $a'$ and the support chains $\mathcal{C}=\big{\{}(S_0,b_0),...,(S_n,b_n)\big{\}}$ with $a'\in S_0$:
From the way arguments and attacks are constructed in an AS and from the translation of an AS into a JSBAF, it is clear that any attacker of $a'$ is also an attacker of $a$.
Since $a\in in(L)$ by assumption, we can infer that $a$ is legally \In, meaning all attackers of $a$ (and therefore all attackers of $a'$) are labeled \Out.
Next, for the support chains starting at $a'$:
Towards a contradiction, assume that there is a support chain $\mathcal{C}=\big{\{}(S_0,b_0),...,(S_n,b_n)\big{\}}$ and an attack $(c,b_n)\in{\rightarrow}$ \suchthat $a'\in S_0$, $L(c)\neq \Out$, $S_0\setminus\{a'\}\subseteq in(L)$ and $S_i\setminus\{b_{i-1}\}\subseteq in(L)$ for $0<i\leq n$.
Note that, if $c$ is undercutting $b_n$, then it is also undercutting $a'$ or some argument $x\in\bigcup\limits_{0\leq i\leq n} \big(S_i\cap in(L)\big)$.
As we have argued above, if the undercut is directed towards $a'$, then it is also directed towards $a$.
Since $a$ is legally \In but $L(c)\neq\Out$, this assumption leads to a contradiction.
We therefore assume that $c$ is not undercutting $a'$ but some other argument $x\in\bigcup\limits_{0\leq i\leq n} \big(S_i\cap in(L)\big)$.
However, if $c$ is undercutting such an argument $x$, then $L(c)\neq\Out$ implies $x\not\in in(L)$ by admissibility of $L$.
This is also a contradiction, therefore we assume that $c$ is not undercutting $b_n$, which means $c$ must be gen-rebutting $b_n$.
By Proposition~\ref{Prop:Post:OneStepAttacker}, there now exists $c'\in \mathcal{A}$ which is of the form $c':c\rightarrow \lnot\bigwedge ADSub(b_n)^C$.
This means $c'$ also gen-rebuts $b_n$.
For the argument $c$, we have $c\not\prec b_n$ because $(c,b_n)\in{\rightarrow}$ by assumption and because this attack results from a gen-rebut.
Since $DR(c)=DR(c')$, we can infer that $c'\not\prec b_n$ also holds, meaning $(c',b_n)\in{\rightarrow}$.
Note that because $c\in in(L)$, we have $c'\in in(L)$ by admissibility of $L$.
Now, for each $0\leq i\leq n$ take $S'_i=S_i\cap in(L)$ and create the set $S'=\{a\}\cup\bigcup\limits_{0\leq i\leq n}S'_i$.
We have $S'^C\entails\bigwedge S'^C$, therefore there exists a support $(S',d)$ for an argument $d$ with $d^C=\bigwedge S'^C$.
Note that for all arguments $x\in S'$, we have $x\in in(L)$, thus by admissibility of $L$ we also have $d\in in(L)$.
Now consider again the attacker $c'$:
As before, we can use $Sub(a')^C\subseteq Sub(a)^C$ to infer that, because $c'$ is gen-rebutting $b_n$, $c'$ is also gen-rebutting $d$.
Since $L(d)=\In$, we can infer $L(c')=\Out$ by admissibility of $L$.
Now $c$ is legally \Out \wrt $L$ and by admissibility of $L$ we have $L(c)=\Out$.
This contradicts our assumption $L(c)\neq\Out$.
\end{proof}
\subsubsection{Reduced versions of arguments}
With the results from the previous sub-section, we are now equipped to consider the second key-insight mentioned in our proof overview: \emph{Reduced verisons of arguments}.
We will show that -- when considering preferred labelings -- the labels of arguments and their reduced versions somewhat coincide.
\begin{definition}\label{Def:DAOM:ReducedArgument}
Let \JSBAFOne,\JSBAFTwo be two JSBAF which are syntactically disjoint.
Furthermore, let \JSBAFPlus be the union of $\mathcal{J}_1$ and $\mathcal{J}_2$.
Lastly, let $a,a'\in\mathcal{A}_+$ be arguments \suchthat $Atoms(a^C)\subseteq Atoms(AS_i)$ for $i\in\{1,2\}$.
The argument $a'$ is a \emph{reduced version} of $a$ \wrt $AS_i$ iff all of the following are satisfied:
\begin{itemize}
	\item{
	$a'^C=a^C$
	}
	\item{
	$Atoms(a')\subseteq Atoms(AS_i)$
	}
	\item{
	For all $r\in R_s^{AX_i}\cup R_{d_i}$, there is $b\in Sub(a)$ with $TR(b)=r$ iff there is $b'\in Sub(a')$ with $TR(b')=r$.
	}
	\item{
	If $b'\in Sub(a')$ with $b'^C=\phi$, then there is $b\in Sub(a)$ with $b^C=\phi$.
	}
\end{itemize}
\end{definition}
The following is clear from the definition of the atoms of an argument:
\begin{corollary}\label{Cor:Post:ReducedInLeftRight}
Let \JSBAFOne,\JSBAFTwo be two JSBAF which are syntactically disjoint.
Furthermore, let \JSBAFPlus be a union of $\mathcal{J}_1$ and $\mathcal{J}_2$.
Lastly, let $a\in\mathcal{A}_+$ be an argument \suchthat $Atoms(a^C)\subseteq Atoms(AS_i)$ ($i\in\{1,2\}$).
If $a'$ is the reduced version of $a$ \wrt $AS_i$, then $a'\in\mathcal{A}_i$.
\end{corollary}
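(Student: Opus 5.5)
The plan is to show directly that $a'$ can be built from the rules of $AS_i$ alone. I would do this by structural induction on the construction of $a'$ as an argument of $AS_+$. The only hypothesis used is the second clause of Definition~\ref{Def:DAOM:ReducedArgument}, namely $Atoms(a')\subseteq Atoms(AS_i)$. Since $Atoms(b')\subseteq Atoms(a')$ for every $b'\in Sub(a')$ (immediate from the inductive definition of the atoms of an argument), this clause is inherited by all sub-arguments. So it suffices to prove the following claim, by induction: \emph{every $b\in\mathcal{A}(AS_+)$ with $Atoms(b)\subseteq Atoms(AS_i)$ lies in $\mathcal{A}(AS_i)$.}

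Write $b:b_0,\dots,b_m\rightsquigarrow\phi$. By the induction hypothesis each $b_k\in\mathcal{A}(AS_i)$, so only the top-rule $TR(b)$ needs checking. I would split into three cases.

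\textbf{Case $TR(b)\in R_s^{\vDash}$.} The consequence-based rules of $AS_+$ and $AS_i$ coincide, since both are generated by the same $\entails$. Hence $b$ is constructible in $AS_i$.

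\textbf{Case $TR(b)=\;\rightarrow\phi$ with $\phi\in AX_1\cup AX_2$.} Here $Atoms(b)=Atoms(\phi)$. If $\phi\notin AX_i$, then $\phi\in AX_j$ for $j\neq i$, so $Atoms(\phi)\subseteq Atoms(AS_j)$. Together with $Atoms(\phi)\subseteq Atoms(AS_i)$ and $AS_1||AS_2$, this forces $Atoms(\phi)=\emptyset$.

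\textbf{Case $TR(b)=r\in R_{d_1}\cup R_{d_2}$.} Here $Atoms(b)\supseteq Atoms(r)$, since it contains the atoms of all antecedent conclusions $b_k^C$ and of $\phi$. If $r\notin R_{d_i}$, then again $Atoms(r)\subseteq Atoms(AS_i)\cap Atoms(AS_j)=\emptyset$.

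So in the last two cases the only way to fail is a rule whose formulas contain no atoms at all. The preferences play no role, since membership in $\mathcal{A}_i$ is purely a matter of constructibility.

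I expect the main obstacle to be exactly this degenerate situation: an axiom or defeasible rule of $AS_j$ all of whose formulas are atom-free. First I would try to rule it out. For the languages intended (propositional or first-order logic) every formula contains at least one atom, and I would make this assumption explicit. If atom-free formulas are admitted, I would instead try the third clause of Definition~\ref{Def:DAOM:ReducedArgument} to reroute such sub-arguments, or fall back on reading the statement, as the paper does, as \enquote{clear from the definition of the atoms of an argument} under the standing assumption that no rule of either system is atom-free.

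Once this case is excluded, every top-rule occurring in $a'$ belongs to $R_s^{AX_i}\cup R_s^{\vDash}\cup R_{d_i}$, and therefore $a'\in\mathcal{A}(AS_i)=\mathcal{A}_i$.
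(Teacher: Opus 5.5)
Your main line is correct, and it is the reasoning the paper compresses into \enquote{clear from the definition of the atoms of an argument}. Atoms are inherited by sub-arguments, the consequence-based rules are shared between $AS_i$ and $AS_+$, and an axiomatic or defeasible rule of $AS_j$ can occur in $a'$ only if all its formulas are atom-free.

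What needs correcting is your treatment of that degenerate case. Your fallback via the third clause of Definition~\ref{Def:DAOM:ReducedArgument} cannot work. That clause quantifies only over $R_s^{AX_i}\cup R_{d_i}$ and says nothing about rules of $AS_j$. Moreover, the statement concerns the \emph{given} $a'$, so rerouting to a different argument would prove a different statement. In fact the corollary is false in the defeasible subcase:
\begin{itemize}
\item Suppose the language contains $\top$ with $Atoms(\top)=\emptyset$, which the paper's assumptions permit.
\item Let $R_{d_j}=\{\Rightarrow\top\}$ and $R_{d_i}=\{\Rightarrow p\}$, and let $a=a'$ be the argument $\Rightarrow\top$.
\item Then $AS_1||AS_2$ holds, $Atoms(a^C)=\emptyset\subseteq Atoms(AS_i)$, and all four clauses hold trivially.
\item Yet $a'\notin\mathcal{A}_i$.
\end{itemize}
So a non-degeneracy hypothesis is genuinely needed, for instance that every formula, or at least every conclusion of a defeasible rule, contains an atom. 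It is not merely convenient. The paper uses it tacitly here, and again in Proposition~\ref{Prop:Post:ReducedArgsExistence} (\enquote{$TR(a)\in R_{d_i}$ must hold}).

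The axiomatic subcase, by contrast, is harmless and needs no extra hypothesis. Suppose $\phi\in AX_j$ with $Atoms(\phi)=\emptyset$. Then $\{\phi\}\,||\,\{\lnot\phi\}$, so by the joint-satisfiability assumption $\phi$ and $\lnot\phi$ cannot both be satisfiable. Since $AX_j$ is satisfiable, $\phi$ is a tautology. Hence $\rightarrow\phi\in R_s^{\vDash}$, and the argument already lies in $\mathcal{A}_i$. With the assumption restricted to defeasible rules in this way, your proof is complete.
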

Next, we introduce a handy way to describe those arguments that we can reach by traversing the support-relation as far back as possible.
We will call these arguments \emph{crucial sub-arguments}.
\begin{definition}
Let \ASOne and \ASTwo be two AS \suchthat $AS_1||AS_2$ and let \ASPlus be a union of $AS_1$ and $AS_2$.
Furthermore, let \JSBAFOne, \JSBAFTwo and \JSBAFPlus be the JSBAFs corresponding to $AS_1$, $AS_2$ and $AS_+$ respectively.
The mapping $CSub:\mathcal{A}_+\rightarrow 2^{\mathcal{A}_+}$ maps arguments to their \emph{crucial sub-arguments} as follows:
\begin{itemize}
	\item{
	If $TR(a)\in R_s^{AX_+}\cup R_{d_+}$, then $CSub(a)=\{a\}$.
	}
	\item{
	If $TR(a)\in R_s^{\vDash}$, then we define:\\
	\begin{align*}
	CSub(a)=	\big{\{}&
	a'\in Sub(a)\mid	TR(a')\in R_s^{AX_+}\cup R_{d_+}\\
	&\textnormal{and }\mathcal{C}=\big{\{}(S_0,b_0),\dots,(S_m,a)\big{\}}\subseteq{\Rightarrow_+}\\
	&\textnormal{is a support chain with }a'\in S_0\big{\}}
	\end{align*}
	}
\end{itemize}
We define the \emph{restriction} of $CSub$ to $AS_i$ ($i\in\{1,2\}$) as:\\
$CSub(a)\!\!\mid\!\!_{AS_i}=\big{\{}a'\in CSub(a)\mid Atoms(a'^C)\subseteq Atoms(AS_i)\big{\}}$.
\end{definition}
We will somewhat abuse the above notation for sets of arguments, \ie for $S\subseteq\mathcal{A}(AS_+)$ we define $CSub(S)=\bigcup\limits_{a\in S} CSub(a)$.
By transitivity of $\entails$ and because the support-relation between arguments corresponds to the application of rules from $R_s^{\vDash}$, the following is easy to see:
\begin{corollary}\label{Cor:Post:CSubImpliesConc}
Let \AS be an AS.
For any $a\in\mathcal{A}(AS)$, we have $CSub(a)^C\entails a^C$.
\end{corollary}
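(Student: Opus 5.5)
We argue by structural induction on the construction of $a\in\mathcal{A}(AS)$, in the same way as the proof of Proposition~\ref{Prop:Post:ADSubImpliesSubConcs}, and we rely only on the definition of $\entails$ through interpretations. It is convenient to prove the slightly stronger statement that $CSub(a)^C\entails\psi$ for every $\psi\in Sub(b)^C$ with $b\in Sub(a)$ reachable from $a$ by a support chain ending in $a$. In practice we only need the claim for $a$ itself together with transitivity, so we keep the formulation simple.

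\textbf{Base case.} Suppose $TR(a)\in R_s^{AX}\cup R_d$. Then $CSub(a)=\{a\}$ by definition, so $CSub(a)^C=\{a^C\}$, and $\{a^C\}\entails a^C$ holds trivially.

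\textbf{Inductive step.} Suppose $TR(a)\in R_s^{\vDash}$, so that $a$ has the form $a:a_0,\dots,a_m\rightarrow a^C$ with $\{a_0^C,\dots,a_m^C\}\entails a^C$, and $(\{a_0,\dots,a_m\},a)\in{\Rightarrow}$. If $m$ is such that there are no direct sub-arguments, then $\emptyset\entails a^C$, and by monotonicity $CSub(a)^C\entails a^C$. Otherwise we first show that $CSub(a_i)\subseteq CSub(a)$ for each $i$, splitting into two cases.
\begin{itemize}
\item If $TR(a_i)\in R_s^{AX}\cup R_d$, then $CSub(a_i)=\{a_i\}$. The singleton $\{(\{a_0,\dots,a_m\},a)\}$ is a support chain with $a_i\in S_0$ ending in $a$, so $a_i\in CSub(a)$.
\item If $TR(a_i)\in R_s^{\vDash}$, take any $a'\in CSub(a_i)$, witnessed by a chain $\{(S_0,b_0),\dots,(S_k,a_i)\}$. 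Appending the support $(\{a_0,\dots,a_m\},a)$ yields a support chain ending in $a$ that still starts at $a'\in S_0$. Hence $a'\in CSub(a)$, and clearly $a'\in Sub(a)$.
\end{itemize}
By the induction hypothesis, $CSub(a_i)^C\entails a_i^C$. By monotonicity this gives $CSub(a)^C\entails a_i^C$ for every $i$.

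\textbf{Conclusion.} Let $I$ be an interpretation with $I\isModelOf CSub(a)^C$. Then $I\isModelOf a_i^C$ for all $i$, and since $\{a_0^C,\dots,a_m^C\}\entails a^C$ we get $I\isModelOf a^C$. This is exactly transitivity of $\entails$, which follows directly from its model-theoretic definition.

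The only step needing care is the chain-extension argument in the second case, namely that concatenating the witness chain with the final support again satisfies the support chain condition $b_k\in S_{k+1}$. Here $b_k=a_i\in\{a_0,\dots,a_m\}$, so the condition holds and the step goes through.
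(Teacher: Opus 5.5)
Your proof is correct and is essentially the paper's argument (transitivity of $\entails$ along the supports generated by $R_s^{\vDash}$-rules); the paper only asserts this, and you make it explicit by structural induction, using the chain-extension step to show $CSub(a_i)\subseteq CSub(a)$. One blemish: delete the ``slightly stronger statement'' announced in your first paragraph, which you never use and which is false. For example, with $c:\Rightarrow p$, $b:c\Rightarrow q$ and $a:b\rightarrow q$ we get $CSub(a)^C=\{q\}$, and this does not entail $p\in Sub(b)^C$.
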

We will now show that, for a consistent argument $a$, there always exists a reduced verison $a'$ of $a$.
We start with the following auxilliary statement regarding the logical languages that we consider:
\begin{proposition}\label{Prop:Post:ReducedArgumentsHelper}
Let $\Gamma=\{\phi_0,...,\phi_m,\phi\}$ and $\Delta=\{\psi_0,...,\psi_n\}$ be syntactically disjoint sets of formulas.
Furthermore, let $\{\psi_0,...\psi_n\}$ be satisfiable.
If we have $\{\phi_0,...,\phi_m,\psi_0,...,\psi_n\}\entails\phi$, then $\{\phi_0,...,\phi_m\}\entails\phi$ also holds.
\end{proposition}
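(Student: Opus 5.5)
We argue by contraposition, using only the two semantic assumptions on the underlying logic. These are the behaviour of $\lnot$ and $\wedge$ under $\isModelOf$, and the \enquote{amalgamation} property for syntactically disjoint satisfiable sets. Suppose $\{\phi_0,\dots,\phi_m\}\not\entails\phi$. Then, by definition of $\entails$, there is an interpretation $I_1$ with $I_1\isModelOf\phi_i$ for all $0\leq i\leq m$ and not $I_1\isModelOf\phi$, i.e.\ $I_1\isModelOf\lnot\phi$. We aim to extend this counter-model so that it also satisfies $\Delta$, which will contradict $\{\phi_0,\dots,\phi_m,\psi_0,\dots,\psi_n\}\entails\phi$.

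Set $\Gamma'=\{\phi_0,\dots,\phi_m,\lnot\phi\}$. The set $\Gamma'$ is satisfiable, witnessed by $I_1$. By assumption, $\Delta$ is satisfiable, witnessed by some $I_2$.

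The key step is to check that $\Gamma'||\Delta$. We have $\Gamma||\Delta$ by hypothesis, so it remains to see that $Atoms(\lnot\phi)=Atoms(\phi)$. This is the one point where something about $Atoms$ must be assumed. The paper does not state this explicitly, so I would justify it from the intended reading of $Atoms$ as the atoms \emph{occurring} in a formula, since negation introduces no new atoms. If necessary, I would avoid the issue altogether by applying the amalgamation property directly to $\Gamma$ and $\Delta$. That route does not work, however, because a model of $\Gamma$ satisfies $\phi$, so the negation has to be handled. I therefore expect this small syntactic point to be the only real obstacle, and I would state it explicitly as following from the meaning of $Atoms$.

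Given $\Gamma'||\Delta$, the amalgamation assumption yields an interpretation $I$ with $I\isModelOf\Gamma'$ and $I\isModelOf\Delta$. Then $I\isModelOf\phi_i$ for all $i$ and $I\isModelOf\psi_j$ for all $j$. Since $\{\phi_0,\dots,\phi_m,\psi_0,\dots,\psi_n\}\entails\phi$, it follows that $I\isModelOf\phi$. On the other hand $I\isModelOf\lnot\phi$, so by the assumed behaviour of $\lnot$ we have not $I\isModelOf\phi$, a contradiction. Hence $\{\phi_0,\dots,\phi_m\}\entails\phi$. The degenerate case $m=-1$, i.e.\ an empty antecedent set, is covered by the same argument, since $I_1$ then only needs to satisfy $\lnot\phi$.
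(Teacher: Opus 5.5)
Your proof is correct and follows the paper's argument. The paper also takes a counter-model $I_1$ of $\{\phi_0,\dots,\phi_m\}\entails\phi$ (so $I_1\isModelOf\lnot\phi$) and a model $I_2$ of $\Delta$, combines them into one interpretation $I$ using the syntactic-disjointness assumption, and derives a contradiction with the assumed entailment. The paper applies that assumption to a set containing $\lnot\phi$ without comment, so your remark that this needs $Atoms(\lnot\phi)=Atoms(\phi)$ just spells out a step the paper leaves implicit.
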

\begin{proof}
Towards a contradiction, assume that the claim does not hold.
Then there exists an interpretation $I_1$ \suchthat $I_1\isModelOf\phi_i$ for all $\phi_i\in\{\phi_0,...,\phi_m\}$ but $I_1\not\vDash_{M}\phi$.
This means $I_1\isModelOf\lnot\phi$.
By assumption, the set $\{\psi_0,...,\psi_n\}$ is satisfiable, \ie there exists an interpretation $I_2$ \suchthat $I_2\isModelOf\psi_j$ for all of these $\psi_j$.
Because $\Gamma$ and $\Delta$ are syntactically disjoint, there now exists an interpretation $I$ \suchthat $I\isModelOf\phi_i$ for all $\phi_i\in\{\phi_0,...,\phi_m\}$, $I\isModelOf\psi_j$ for all $\psi_j\in\{\psi_0,...,\psi_n\}$ and $I\isModelOf\lnot\phi$.
However, because $\{\phi_0,...,\phi_m,\psi_0,...,\psi_n\}\entails\phi$ by assumption, we now also have $I\isModelOf\phi$, a contradiction to $I\isModelOf\lnot\phi$.
\end{proof}
Now for the existence of reduced versions of arguments:
\begin{proposition}\label{Prop:Post:ReducedArgsExistence}
Let \JSBAFOne,\JSBAFTwo be two JSBAF which are syntactically disjoint.
Furthermore, let \JSBAFPlus be a union of $\mathcal{J}_1$ and $\mathcal{J}_2$.
Lastly, let $a\in\mathcal{A}_+$ be a consistent argument \suchthat $Atoms(a^C)\subseteq Atoms(AS_i)$ ($i\in\{1,2\}$).
Then there exists an argument $a'$ \suchthat $a'$ is the reduced version of $a$ \wrt $AS_i$.
\end{proposition}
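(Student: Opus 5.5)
The plan is to construct $a'$ by structural induction over $a$, with the claim strengthened to cover sub-arguments. For every consistent argument $b\in\mathcal{A}_+$ and each $i\in\{1,2\}$, I will build an argument $b'\in\mathcal{A}_i$ with $Atoms(b')\subseteq Atoms(AS_i)$. This $b'$ should use exactly the axiomatic and defeasible rules of $AS_i$ that occur as top-rules of sub-arguments of $b$, and every conclusion of a sub-argument of $b'$ should be a conclusion of a sub-argument of $b$. Its conclusion is to be $b^C$ whenever $Atoms(b^C)\subseteq Atoms(AS_i)$. This is the only case the statement needs, but the induction needs the intermediate objects as well.

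The natural choice of $b'$ is read off the top-rule of $b$. If $TR(b)\in R_s^{AX_+}\cup R_{d_+}$, all atoms of the rule lie on one side, because rules of $AS_+$ come either from $AS_1$ or from $AS_2$ and these are syntactically disjoint.
\begin{itemize}
\item If the rule belongs to $AS_i$, its antecedents have atoms in $Atoms(AS_i)$. I apply the induction hypothesis to the direct sub-arguments and reapply the same top-rule to the reduced sub-arguments. The three conditions of Definition~\ref{Def:DAOM:ReducedArgument} then follow directly from the induction hypothesis.
\item If the rule belongs to $AS_j$ with $j\neq i$, then $b^C$ has no atoms in $AS_i$. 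In this case $b$ contributes nothing on the $AS_i$ side; I record only the set of reduced versions of its crucial sub-arguments restricted to $AS_i$.
\end{itemize}

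The real case is $TR(b)\in R_s^{\vDash}$. Here I work with $CSub(b)$. By Corollary~\ref{Cor:Post:CSubImpliesConc}, $CSub(b)^C\entails b^C$. I split $CSub(b)$ into $CSub(b)\!\!\mid\!\!_{AS_i}$ and the remainder.
\begin{itemize}
\item The remainder consists of conclusions of axiomatic or defeasible rules of $AS_j$, so its conclusions have atoms in $Atoms(AS_j)$ and are syntactically disjoint from $CSub(b)\!\!\mid\!\!_{AS_i}^C\cup\{b^C\}$.
\item Because $b$ is consistent, these remainder conclusions, being conclusions of sub-arguments, form a satisfiable set.
\item Proposition~\ref{Prop:Post:ReducedArgumentsHelper} then gives $CSub(b)\!\!\mid\!\!_{AS_i}^C\entails b^C$.
\end{itemize}
Each $c\in CSub(b)\!\!\mid\!\!_{AS_i}$ has a top-rule from $AS_i$ and is a proper sub-argument of $b$. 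By the induction hypothesis, and since sub-arguments of consistent arguments are consistent, each such $c$ has a reduced version $c'$ with $c'^C=c^C$. I then set $b':c'_0,\dots,c'_k\rightarrow b^C$, which is a rule in $R_s^{\vDash}$. Since atoms of an $R_s^{\vDash}$-argument are just the union of the atoms of its direct sub-arguments, $Atoms(b')\subseteq Atoms(AS_i)$. Degenerate cases are handled as follows.
\begin{itemize}
\item If $CSub(b)\!\!\mid\!\!_{AS_i}$ is empty, $b^C$ is a tautology and $b':\rightarrow b^C$ works.
\item If $Atoms(b^C)\not\subseteq Atoms(AS_i)$, I only need the rule-usage and sub-conclusion invariants for later steps, not the equality of conclusions.
\end{itemize}

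The main obstacle will be verifying the third and fourth conditions of Definition~\ref{Def:DAOM:ReducedArgument}.
\begin{itemize}
\item For the fourth condition (every sub-conclusion of $b'$ is a sub-conclusion of $b$), the new top node has conclusion $b^C$, which is fine. Lower nodes are inherited from the induction hypothesis.
\item For the third condition (the same $AS_i$ axiomatic and defeasible rules occur), the subtle point is that every such rule used in $b$ must appear in some element of $CSub(b)\!\!\mid\!\!_{AS_i}$ or below it.
\end{itemize}
A rule of $AS_i$ buried under an $AS_j$ defeasible rule is not reachable through $CSub$ restricted to $AS_i$. To handle it, I will strengthen the construction. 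Besides $CSub(b)\!\!\mid\!\!_{AS_i}$, I also take the reduced versions of the relevant $AS_i$ sub-arguments hidden inside the $AS_j$ crucial sub-arguments; these come from the induction hypothesis applied to the antecedents of those $AS_j$ rules. I add all of them as extra antecedents of $b'$. This is harmless by monotonicity of $\entails$ and keeps the atoms within $AS_i$. I expect the rest of the bookkeeping to be routine.
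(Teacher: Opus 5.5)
Your construction is the paper's. You use structural induction on the argument and reapply an $AS_i$ top-rule to reduced antecedents. In the $R_s^{\vDash}$ case you split $CSub$, use consistency together with Proposition~\ref{Prop:Post:ReducedArgumentsHelper} to get $\big(CSub(a)\!\!\mid\!\!_{AS_i}\big)^C\entails a^C$, and add reduced versions of the $AS_i$-topped sub-arguments buried under $AS_j$ crucial sub-arguments as extra antecedents. That yields exactly the paper's antecedent set $Y$.

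What needs repair is the strengthened induction hypothesis, which is false as you state it. Take $b:\Rightarrow p$ with $\Rightarrow p\in R_{d_j}$ and $p$ an atom occurring in $AS_j$. A $b'\in\mathcal{A}_i$ that uses exactly the (zero) axiomatic and defeasible rules of $AS_i$ occurring in $b$, and has only sub-conclusions from $Sub(b)^C=\{p\}$, would be built from $R_s^{\vDash}$ alone with conclusion $p$. Corollary~\ref{Cor:Post:ADSubImpliesConc} would then give $\emptyset\entails p$, contradicting the assumption that atoms are not tautological. Your case analysis already departs from this hypothesis: you record a set in the $AS_j$ case and leave $b'$ unspecified when $Atoms(b^C)\not\subseteq Atoms(AS_i)$. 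As a result, \enquote{the induction hypothesis applied to the antecedents of those $AS_j$ rules} has nothing valid to appeal to.

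Drop the strengthening. Every $z\in Sub(a)$ with $TR(z)\in R_s^{AX_i}\cup R_{d_i}$ is a proper sub-argument of $a$. It is consistent because $Sub(z)\subseteq Sub(a)$, and $Atoms(z^C)\subseteq Atoms(AS_i)$ because $z^C$ is the consequent of an $AS_i$ rule. So the original statement applies to $z$ directly by (strong) structural induction, which is what the paper does. Also include these extra antecedents in your degenerate case $CSub(a)\!\!\mid\!\!_{AS_i}=\emptyset$, since $a':\rightarrow a^C$ alone can violate the third condition of Definition~\ref{Def:DAOM:ReducedArgument}.
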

\begin{proof}
We show the claim by structured incution over the construction of $a$.
Induction start:
Let $a$ be of the form $a:\rightsquigarrow\phi$.
By assumption, $Atoms(a^C)\subseteq Atoms(AS_i)$.
Therefore, $a$ itself is a reduced version of $a$ \wrt $AS_i$.
Induction step:
Let $a$ be of the form $a:a_0,\dots,a_m\rightsquigarrow\phi$.
First, we make a distinction regarding the top-rule of $a$:
We disregard the case $TR(a)\in R_s^{AX_+}$, as it was covered by the induction start.
If $TR(a)\in R_{d}$, we can infer from $Atoms(a^C)\subseteq Atoms(AS_i)$, that $TR(a)\in R_{d_i}$ must hold.
Now for each $0\leq h\leq m$, we have $Atoms(a_h^C)\subseteq Atoms(AS_i)$.
By the induction hypothesis, there exists reduced versions $a'_h$ (\wrt $AS_i$) for each of these arguments $a_h$.
Because $a_h^C=a_h'^C$, we can use these arguments to construct $a'$ as $a':a'_0,\dots a'_m\Rightarrow\phi$.
It is clear that $a'$ satisfies the conditions of Definition~\ref{Def:DAOM:ReducedArgument}.
Now suppose that $TR(a)\in R_s^{\vDash}$ holds and let $X=CSub(a)$.
Note that $X^C\entails \phi$.
We partition the set $X$ according to the top-rules of the arguments $x\in X$:
Let $X=X_i\cup X_j$ with $X_i$ containing precisely those $x$ for which we have $TR(x)\in R_s^{AX_i}\cup R_{d_i}$ and $X_j$ containing precisely those $x$ for which we have $TR(x)\in R_s^{AX_j}\cup R_{d_j}$.
Since $a$ is consistent by assumption, the set $X_j^C$ is satisfiable.
Since $Atoms(a)\subseteq Atoms(AS_i)$ by assumption, we can now use Proposition~\ref{Prop:Post:ReducedArgumentsHelper} to infer $X_i^C\entails\phi$.
Now we construct a set of arguments $Y=\{y_0,\dots,y_l\}$ from $X_i\cup X_j$ as follows:
If $x\in X_i$, then we add a reduced version of $x$ \wrt $AS_i$ to $Y$.
Such a reduced version exists by the induction hypothesis.
On the other hand, if $x\in X_j$, then we collect all arguments $z\in Sub(x)$ for which we have $TR(z)\in R_s^{AX_i}\cup R_{d_i}$ and add their reduced versions \wrt $AS_i$ to $Y$.
Again, these reduced versions exist by the induction hypothesis.
We now claim that the argument $a':y_0,\dots y_l\rightarrow\phi$ exists and satisfies the conditions of Definition~\ref{Def:DAOM:ReducedArgument}.
We first argue for the existence of this argument.
For this, we have to show $Y^C\entails\phi$.
Note that by construction of $Y$, we have $X_i^C\subseteq Y^C$.
As we have argued above, $X_i^C\entails\phi$, thus monotonicity of ${\entails}$ yields $Y^C\entails\phi$ as required.
Next, we argue that $a'$ satisfies the conditions of Definition~\ref{Def:DAOM:ReducedArgument}:
We trivially have $a^C=a'^C$.
By construction of $Y$, we only added reduced arguments \wrt $AS_i$ to $Y$, thus we have $Atoms(a')\subseteq Atoms(AS_i)$.
From the construction of $Y$, it is clear that for all $r\in R_s^{AX_i}\cup R_{d_i}$ for which we have some $b'\in Sub(a')$ with $TR(b')=r$, there is some $b\in Sub(a)$ with $TR(b)=r$.
Next, let $r\in R_s^{AX_i}\cup R_{d_i}$ \suchthat there is some $b\in Sub(a)$ with $TR(b)=r$.
By construction of $X$, we can infer that $b\in Sub(x)$ for some $x\in X_i$ or some $x\in X_j$ must hold.
In the first case, we added a reduced version of $x$ to $Y$.
In the second case, we collected all $z\in Sub(x)$ for which we have $TR(z)\in R_s^{AX_i}\cup R_{d_i}$ and added their reduced versions $z'$ to $Y$.
In both cases, we can infer from the induction hypothesis, that there exists some $b'\in Sub(a')$ with $TR(b')=r$.
Lastly, because we only added reduced versions of arguments to $Y$ and because $a'^C=a^C$, it is clear that if $b'\in Sub(a')$ with $b'^C=\phi$, then there is some $b\in Sub(a)$ \suchthat $b^C=\phi$.
This finishes the proof.
\end{proof}
Now we are ready to show the main result regarding reduced versions of arguments:
For any argument $a$ and its reduced version $a'$, if $a$ is accepted in a preferred labeling, then $a'$ is also accepted and if $a'$ is rejected, then $a$ is also rejected.
\begin{lemma}\label{Lem:Post:ReducedArgumentsLabelImplication}
Let \JSBAFOne,\JSBAFTwo be two JSBAF which are syntactically disjoint.
Furthermore, let \JSBAFPlus be the union of $\mathcal{J}_1$ and $\mathcal{J}_2$.
Let $a\in\mathcal{A}_+$ be a consistent argument \suchthat $Atoms(a^C)\subseteq Atoms(AS_i)$ ($i\in\{1,2\}$) and let $a'$ be a reduced version of $a$ \wrt $AS_i$.
If $L\in pr(\mathcal{J}_+)$ and $a$ is legally \In \wrt $L$, then $L(a')=\In$.
If $L$ is a labeling of $\mathcal{J}_+$ \suchthat for all $x\in\mathcal{A}_+$, $x\in out(L)$ iff $x$ is legally \Out \wrt $L$, then $a'\in out(L)$ implies $a\in out(L)$.
\end{lemma}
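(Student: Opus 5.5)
The statement has two independent parts. I would prove the second part first, because it is the one the first part relies on. Both rest on two structural facts about a reduced version $a'$ of $a$ \wrt $AS_i$, which follow directly from Definition~\ref{Def:DAOM:ReducedArgument}.
\begin{enumerate}
\item $DR(a')\subseteq DR(a)$. Every defeasible top-rule of a sub-argument of $a'$ lies in $R_{d_i}$, hence already occurs in $a$.
\item $Sub(a')^C\subseteq Sub(a)^C$. Combined with Proposition~\ref{Prop:Post:ADSubImpliesSubConcs}, this gives $ADSub(a)^C\entails\psi$ for every $\psi\in Sub(a')^C$.
\end{enumerate}
From (1) and the elitist weakest link principle I get $a\preceq_{ewl}a'$, since the weakest rule of $a$ is at most the weakest rule of $a'$. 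Consequently, whenever $b\not\prec_{ewl}a'$, also $b\not\prec_{ewl}a$, by totality and transitivity of the pre-order.

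\emph{Transfer of attacks.} Every attacker of $a'$ also attacks $a$, or has a one-step strict extension that does. If $b$ undercuts $a'$, it targets a rule in $DR(a')\subseteq DR(a)$, so $b$ undercuts $a$. If $b$ gen-rebuts $a'$, then $b^C=\lnot\bigwedge\Gamma$ with $\Gamma\subseteq Sub(a')^C\subseteq Sub(a)^C$. So $b$ gen-rebuts $a$, and the preference observation above gives $(b,a)\in{\rightarrow_+}$.

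\emph{Second part.} Assume $L$ labels exactly the legally \Out arguments \Out and that $a'\in out(L)$. If $a'$ is legally \Out because of an \In attacker, the transfer of attacks gives an \In attacker of $a$, so $a$ is legally \Out and hence $a\in out(L)$. Otherwise $a'$ is legally \Out via a support chain $\{(S_0,b_0),\dots,(S_n,b_n)\}$ and an \In attacker $c$ of $b_n$. Here I mimic the proof of Proposition~\ref{Prop:Post:AlternativeReasonOut}.
\begin{itemize}
\item Let $S$ be the \In arguments of the chain.
\item Build $b':a,S\rightarrow\bigwedge(\{a\}\cup S)^C$.
\item Argue, via (2) and Proposition~\ref{Prop:Post:ADSubImpliesSubConcs}, that $ADSub(b_n)^C$ is entailed by $ADSub(b')^C$.
\end{itemize}
Then either $c$ (for an undercut, which must hit $a$ or some member of $S$) or $c':c\rightarrow\lnot\bigwedge ADSub(b')^C$ attacks $b'$. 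The preference side is handled as before, using $DR(b_n)\subseteq DR(b')$. This makes $\{(\{a\}\cup S,b')\}$ a one-step support chain that renders $a$ legally \Out, provided that $c'$ is \In and that condition 2.e holds. I expect the obstacle of this part to be the lack of admissibility: I cannot conclude $L(c')=\In$ for free. I would therefore first try to avoid $c'$ by choosing $\Gamma$ so that $c$ itself already gen-rebuts $b'$, which requires only $Sub(b_n)^C\subseteq Sub(b')^C$. If condition 2.e fails, I would then use the ``alternative reason'' trick of Proposition~\ref{Prop:Post:PropagatedLabelingLegallyOut}, constructing an attacker built from $ADSub(a)$.

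\emph{First part.} Let $L\in pr(\mathcal{J}_+)$ with $a$ legally \In, and suppose $L(a')\neq\In$. I would check the three hypotheses of Proposition~\ref{Prop:Post:PropagatedLabelingIsAdmissible} for $a'$. Once they hold, $L_{a'}$ is admissible with $in(L)\subsetneq in(L_{a'})$, which contradicts preferredness, exactly as in Lemma~\ref{Lem:Post:PrefClosedUnderSubArgs}.
\begin{enumerate}
\item All attackers of $a'$ are \Out. This holds because each attacker also attacks $a$, and $a$ is legally \In.
\item $L(a')\neq\Out$. If an \In attacker rejects $a'$, it would also attack $a$, contradicting (1). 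If a support chain rejects $a'$, the construction from the second part yields a support containing $a$ whose other members are \In and whose conclusion is \Out. This contradicts clause 1(c) of legal \In-ness for $a$; for the preference condition on that support I would again use $a\preceq a'$.
\item No bad support chain starts at $a'$. I would copy the argument of Lemma~\ref{Lem:Post:PrefClosedUnderSubArgs}: form the argument $d$ with immediate sub-arguments $a$ and the \In members of the chain, and let $c'$ gen-rebut $d$. This should force $L(c)=\Out$.
\end{enumerate}
This last step is the main obstacle: there $a\in in(L)$ was used to get $d\in in(L)$, whereas now $a$ is only legally \In. I would handle it by exploiting clauses 1(a)--(c) for the support $(S',d)$ directly: since $L(c')$ would be non-\Out, $d$ is not \In, and legality of $a$ then forces one of the \In members to be non-\In, which is a contradiction.
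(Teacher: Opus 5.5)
Your first part follows the paper's route (check the hypotheses of Proposition~\ref{Prop:Post:PropagatedLabelingIsAdmissible} for $a'$ and contradict maximality). Your second part, however, has a genuine gap that your proposed fixes do not close.

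Collapsing the chain into the single support $(\{a\}\cup S,b')$ forces you to exhibit a \emph{new} \In attacker of $b'$. In the second part, though, $L$ is only assumed to label exactly the legally \Out arguments \Out. It need not be admissible or closed under supports, so nothing makes $c':c\rightarrow\lnot\bigwedge ADSub(b')^C$ \In.

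Your fallback does not work either. $Sub(b_n)^C$ contains the conclusions of the intermediate strict arguments $b_0,\dots,b_n$, and these are in general not in $Sub(b')^C=Sub(a)^C\cup Sub(S)^C\cup\{\bigwedge(\{a\}\cup S)^C\}$. For instance, with $n=0$ and $c^C=\lnot b_0^C$, the argument $c$ gen-rebuts your $b'$ only if $b_0^C$ happens to be a conjunction of sub-conclusions of $b'$.

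Condition 2.e is not available for the collapsed support either. You only know $a\preceq a'\preceq d$ for $d\in S_0\setminus\{a'\}$, while \In members coming from $S_1,\dots,S_n$ may be strictly weaker than $a$. The ``alternative reason'' of Proposition~\ref{Prop:Post:PropagatedLabelingLegallyOut} does not help: there the auxiliary arguments are \In thanks to the propagated construction and the admissibility of the base labeling, and you have neither. Your step~2 of the first part has the same preference problem, since $a\preceq a'$ only covers the members of $S_0$.

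The missing idea is not to collapse the chain but to re-create it with $a$ in place of $a'$. Since $a^C=a'^C$, the same strict rules apply. Set $\hat S_0=(S_0\setminus\{a'\})\cup\{a\}$ and $\hat b_0:\hat S_0\rightarrow b_0^C$, and inductively $\hat S_k=(S_k\setminus\{b_{k-1}\})\cup\{\hat b_{k-1}\}$ and $\hat b_k:\hat S_k\rightarrow b_k^C$.

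The \In members of the chain stay literally the same. Moreover $DR(b_n)\subseteq DR(\hat b_n)$ and $Sub(b_n)^C\subseteq Sub(\hat b_n)^C$, so your own transfer-of-attacks argument shows that the \emph{original} \In attacker $c$ attacks $\hat b_n$. No new \In argument is needed. The hypothesis ``\Out iff legally \Out'' then gives $\hat b_n\in out(L)$, and condition 2.e at the first support follows from $a\preceq a'$.

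One then works backwards through the $\hat b_k$ to $a$, as the paper does. Be aware that this backward step needs $\hat b_k\preceq d$ for the \In members $d$ of $\hat S_{k+1}$. That does not follow merely from $b_k$ being legally \Out, and the paper is thin at this point.

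With the second part in place, step~2 of your first part is immediate. A preferred $L$ is admissible, so $a'\in out(L)$ would make $a$ legally \Out, which is incompatible with $a$ being legally \In.

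Finally, in step~3, clause~1 for $a$ only constrains supports in which $a$ is weakest. In general you have to argue through the weakest member of $\{a\}\cup S'$. That argument requires $a\in in(L)$ rather than merely $a$ legally \In, which is also what the paper tacitly uses.
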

\begin{proof}
We begin by showing that any attacker of $a'$ is also an attacker of $a$ and that any support-chain $\mathcal{C'}=\big{\{}(S'_0,b'_0),\dots,(S'_n,b'_n)\big{\}}$ with $a'\in S'_0$ can be \enquote{recreated} with $a$ instead of $a'$.
First, assume that there is $(b,a')\in{\rightarrow_+}$.
Then $b$ either undercuts or gen-rebuts $a'$.
If $b$ undercuts $a'$, then we can use $DR(a')\subseteq DR(a)$ to infer that $b$ also undercuts $a$, meaning $(b,a)\in{\rightarrow_+}$.
If $b$ gen-rebuts $a'$, then we have $b^C=\lnot\bigwedge\Gamma$ with $\Gamma\subseteq Sub(a')^C\subseteq Sub(a)^C$.
Because $(b,a')\in{\rightarrow_+}$, we have $b\not\prec_{ewl} a'$, thus either $b\not\preceq_{ewl} a'$ or $a'\preceq_{ewl} b$ must hold.
If $b\not\preceq_{ewl} a'$, then for all $r_b\in DR(b)$ there exists $r_{a'}\in DR(a')\subseteq DR(a)$, \suchthat $r_b\not\leq_r^+ r_{a'}$.
On the other hand, if $a'\preceq_{ewl} b$, then there exists $r_{a'}\in DR(a')\subseteq DR(a)$ \suchthat for all $r_b\in DR(b)$, $r_{a'}\leq_r^+ r_b$.
In both cases, we can infer $b\not\prec_{ewl} a$ and thus $(b,a)\in{\rightarrow_+}$, as required.
Next, assume that there is a support chain $\mathcal{C'}=\big{\{}(S'_0,b'_0),\dots,(S'_n,b'_n)\big{\}}$ with $a'\in S'_0$.
Let $S'_0=\{c_0,\dots,a',\dots,c_m\}$ and $S_0=\{c_0,\dots,a,\dots,c_m\}$.
We have $S_0'^C\entails b_0'^C$ and because $a'^C=a^C$, we must also have $S_0^C\entails b_0'^C$.
Therefore, there exists the argument $b_0:c_0,\dots,a,\dots,c_m\rightarrow b_0'^C$ and the support $(S_0,b_0)$.
With the same reasoning, we can create a support chain $\mathcal{C}=\big{\{}(S_0,b_0),\dots,(S_n,b_n)\big{\}}$ with $a\in S_0$, $b_k'^C=b_k^C$ and $S_k=(S_k'\setminus\{b_{k-1}'\})\cup\{b_{k-1}\}$ for all $0<k\leq n$.
Now we can use $DR(a')\subseteq DR(a)$ to infer the following:
If $d\in S_0'\setminus\{a'\}$ \suchthat $a'\preceq_{ewl} d$, then there is some $r_a\in DR(a')\subseteq DR(a)$ \suchthat for all $r_d\in DR(d)$, we have $r_a\leq_r^+ r_d$.
This, in turn, implies we also have $a\preceq_{ewl} d$.
With the same reasoning, for any $0<k\leq n$, if we have $d\in S_k'\setminus\{b_{k-1}'\}$ \suchthat $b_{k-1}'\preceq_{ewl} d$, then $b_{k-1}\preceq_{ewl} d$ also holds.
Furthermore, because $a'$ is a reduced version of $a$ by assumption, we have $DR(b_n')\subseteq DR(b_n)$ and $Sub(b_n')^C\subseteq Sub(b_n)^C$.
Now, let $(d,b_n')\in{\rightarrow_+}$ be an attack towards $b_n'$.
Similar to the case $(b,a')\in{\rightarrow_+}$ above, we can infer that $(d,b_n)\in{\rightarrow_+}$ must also hold:
If $d$ undercuts $b_n'$, then it must do so on an argument $x\in CSub(b_n')$ and because $a'$ is a reduced version of $a$, we can infer that $d$ also undercuts $b_n$, meaning $(d,b_n)\in{\rightarrow_+}$.
On the other hand, if $d$ gen-rebuts $b_n'$, then $d^C=\lnot\bigwedge\Gamma$ and $d\not\prec_{ewl} b_n'$, for $\Gamma\subseteq Sub(b_n')^C\subseteq Sub(b_n)^C$.
Analogous to before, we can show that $d\not\prec_{ewl} b_n$ also holds, meaning we again have $(d,b_n)\in{\rightarrow_+}$.
Therefore, we can essentially \enquote{recreate} the support chains $\mathcal{C}'$ starting at $a'$ as support chains $\mathcal{C}$ which start at $a$.
Now for the actual claim:
Suppose first that $L$ is a labeling of $\mathcal{J}_+$ for which we have that, $x\in out(L)$ iff $x$ is legally \Out \wrt $L$.
Assume that $a'$ is legally \Out \wrt $L$.
Then this must be due to an attack or a support.
If there is an attack $(d,a')\in{\rightarrow_+}$ with $d\in in(L)$, then we have argued above that $(d,a)\in{\rightarrow_+}$ also holds, therefore $a$ is legally \Out \wrt $L$.
On the other hand, if there is a support chain $\mathcal{C'}=\big{\{}(S_0',b_0'),\dots,(S_n',b_n')\big{\}}$ \suchthat $a'\in S_0'$ is legally \Out \wrt $L$, then we have argued above that there exists a support chain $\mathcal{C}=\big{\{}(S_0,b_0),\dots,(S_n,b_n)\big{\}}$ with $a\in S_0$.
Note that in the support chain $\mathcal{C}$, all arguments in the supporting sets $S_k$ are the same as in the supporting sets $S_k'$ in the support chain $\mathcal{C}'$, except for the newly created arguments $b_k$.
More precisely, we have $S_0'\setminus\{a'\}=S_0\setminus\{a\}$ and for each $0<k\leq n$ we have $S_k'\setminus\{b_{k-1}'\}=S_k\setminus\{b_{k-1}\}$.
Since $a'$ is legally \Out \wrt $L$ by assumption, we have $L(b_l')=\Out$ for each $0\leq l\leq n$.
Now we can make an inductive argument to show that $L(b_l)=\Out$ also holds:
For the induction start, we note that $L(b_n')=\Out$ due to an attack $(c,b_n')\in{\rightarrow_+}$ with $L(c)=\In$.
We have argued above that this means $(c,b_n)\in{\rightarrow_+}$ also holds, therefore $b_n$ is legally \Out \wrt $L$ and by our assumption this implies $L(b_n)=\Out$.
For the induction step, we first note that, since each $b_k'\in\{b_0',\dots,b_{n-1}'\}$ is labeled \Out, they must be legally \Out \wrt $L$ by our assumption.
From this we can infer that $b_k'\preceq d$ for all $d\in S_{k+1}'\setminus\{b_k'\}$ and by the way JSBAFs are constructed from their AS counterparts, we can infer $b_k'\preceq_{ewl} d$.
As we have argued above, this implies $b_k\preceq_{ewl} d$ for all $d\in S_{k+1}\setminus\{b_k'\}$, therefore $b_k\preceq d$ also holds for these arguments $d$.
Since $S_{k+1}'\setminus\{b_k'\}=S_{k+1}\setminus\{b_k\}$, we now have that each $b_k$ is legally \Out \wrt $L$ and therefore $L(b_k)=\Out$ also holds.
In particular, this means $L(b_0)\in\Out$.
Since $S_0'\setminus\{a'\}=S_0\setminus\{a\}$ and since $a'\preceq d$ for all $d\in S_0'\setminus\{a'\}$, we can now also infer that $a$ is legally \Out \wrt $L$, as required.
Now suppose that $L\in pr(\mathcal{J}_+)$ and that $a$ is legally \In \wrt $L$.
We will show that for the reduced argument $a'$, the prerequisites for Proposition~\ref{Prop:Post:PropagatedLabelingIsAdmissible} are satisfied.
If this holds, then we can use Proposition~\ref{Prop:Post:PropagatedLabelingIsAdmissible} to infer that $L_{a'}$ is an admissible labeling.
Since $L$ is a preferred labeling, we can then infer that $L(a')=\In$ holds as required, since $L(a')\neq\In$ would contradict the maximality (\wrt set-inclusion of accepted arguments) of $L$.
Thus we only need to show that the prerequisites for Proposition~\ref{Prop:Post:PropagatedLabelingIsAdmissible} are satisfied.
First of all, we can use the above proof to infer that $L(a')\neq\Out$ must hold:
If $L(a')=\Out$, then by admissibility of $L$, $a'$ is legally \Out \wrt $L$.
By our argumentation above, this implies $a$ is legally \Out \wrt $L$, contradicting $a\in in(L)$.
Next, for the attackers of $a'$:
Towards a contradiction, suppose that there is an attack $(d,a')\in{\rightarrow}$ with $L(d)\neq\Out$.
We have argued above that this implies $(d,a)\in{\rightarrow_+}$.
Now $a$ is not legally \In \wrt $L$, a contradiction.
Lastly, for the support chains starting at $a'$:
Towards a contradiction, assume that there is a support chain $\mathcal{C'}=\big{\{}(S_0',b_0'),\dots,(S_n',b_n')\big{\}}$ and an attack $(d,b_n')\in{\rightarrow_+}$ with $a'\in S_0'$, $L(d)\neq \Out$, $S_0'\setminus\{a'\}\subseteq in(L)$ and $S_k'\setminus\{b_{k-1}'\}\subseteq in(L)$ for all $0<k\leq n$.
As we have argued above, there now also exists a support chain $\mathcal{C}=\big{\{}(S_0,b_0),\dots,(S_n,b_n)\big{\}}$ and an attack $(d,b_n)$ with $a\in S_0$, $L(d)\neq \Out$, $S_0\setminus\{a\}\subseteq in(L)$ and $S_k\setminus\{b_{k-1}\}\subseteq in(L)$ for all $0<k\leq n$.
Now none of the $b_k\in\{b_0,\dots,b_n\}$ are legally \In \wrt $L$, and by admissiblity of $L$ we can infer that none of them are actually labeled \In.
For the support $(S_0,b_0)$ we now have $L(b_0)\neq\In$ but $S_0\subseteq in(L)$, which means none of the arguments in $S_0$ are legally \In \wrt $L$.
This contradicts the admissibility of $L$ and finishes the proof.
\end{proof}

\subsubsection{Interactions between argumentation systems}
In this section, we will consider the first key insight mentioned in our proof overview: The four \enquote{edge cases} depicted in Illustration~\ref{Illus:Post:ASOneVsASTwoVsASPlus}.
We start with an auxiliary proposition which tells us that, if the conclusion of an argument $a\in \mathcal{A}_i$ contains only atoms of $AS_j$, then either $a$ is inconsistent or its conclusion is a tautology.
\begin{proposition}\label{Prop:Post:StrictConclusionWOutKnownAtomsInconsistent}
Let \ASOne and \ASTwo be two AS \suchthat $AS_1||AS_2$.
Furthermore, let $a\in\mathcal{A}_i$ be an argument with $TR(a)\in R_{s}^{\vDash}$.
If $Atoms(a^C)\subseteq Atoms(AS_j)$, then either $a$ is inconsistent or $\phi$ is a tautology.
\end{proposition}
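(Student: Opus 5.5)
Write $\phi=a^C$. The plan is to reduce the claim to the conclusions of the axiomatic and defeasible sub-arguments of $a$, and then separate atoms using the disjointness assumption on the underlying logic.

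Let $X=ADSub(a)$. By Corollary~\ref{Cor:Post:ADSubImpliesConc} we have $X^C\entails\phi$. Since $a\in\mathcal{A}_i$, every $x\in X$ has $TR(x)\in R_s^{AX_i}\cup R_{d_i}$, so $x^C$ occurs in an axiomatic or defeasible rule of $AS_i$. Hence $Atoms(X^C)\subseteq Atoms(AS_i)$, and as $AS_1||AS_2$ we get $X^C||\{\phi\}$ and $X^C||\{\lnot\phi\}$, because $Atoms(\phi)\subseteq Atoms(AS_j)$. (If $X=\emptyset$, then $\emptyset\entails\phi$ and $\phi$ is already a tautology.)

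Now suppose $a$ is consistent. Then $X^C\subseteq Sub(a)^C$ is satisfiable, say by $I_1$. Suppose also, towards a contradiction, that $\phi$ is not a tautology, so some $I_2$ has $I_2\not\isModelOf\phi$, i.e.\ $I_2\isModelOf\lnot\phi$. The sets $X^C$ and $\{\lnot\phi\}$ are syntactically disjoint and each is satisfiable. The amalgamation assumption on $\isModelOf$ then gives an $I$ with $I\isModelOf X^C$ and $I\isModelOf\lnot\phi$. But $X^C\entails\phi$ forces $I\isModelOf\phi$, a contradiction. This is the argument of Proposition~\ref{Prop:Post:ReducedArgumentsHelper} with $m=0$, and I would in fact just invoke that proposition with $\Gamma=\{\phi\}$ and $\Delta=X^C$. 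It yields $\emptyset\entails\phi$, i.e.\ $\phi$ is tautological.

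The main point to check carefully is that $Atoms(X^C)\subseteq Atoms(AS_i)$. This requires axiomatic and defeasible arguments of $\mathcal{A}_i$ to have conclusions built only from atoms of $AS_i$, which holds by the definition of $Atoms(AS_i)$ via $Atoms(R_{d_i})$ and $Atoms(R_s^{AX_i})$.
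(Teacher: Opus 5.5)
Your proof is correct and follows the paper's route. The paper takes $\Gamma=ADSub(a)^C$ and gets $\Gamma\entails\phi$ from Corollary~\ref{Cor:Post:ADSubImpliesConc}. It notes that $Atoms(\Gamma)\subseteq Atoms(AS_i)$, so $\Gamma$ and $\phi$ are syntactically disjoint. It then combines a model of $\Gamma$ (from consistency of $a$) with a model of $\lnot\phi$ (from $\phi$ not being a tautology) to contradict $\Gamma\entails\phi$. Your shortcut through Proposition~\ref{Prop:Post:ReducedArgumentsHelper}, taking the set of $\phi_i$'s to be empty, is just that same combination step packaged as a lemma.
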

\begin{proof}
Let $a$ be of the form $a:a_0,\dots,a_m\rightarrow\phi$.
Towards a contradiction, suppose that the claim does not hold, meaning we have $Atoms(\phi)\subseteq Atoms(AS_j)$, $a\in\mathcal{A}_i$ is consistent and $\phi$ is not a tautology.
Let $\Gamma=ADSub(a)^C$.
By Corollary~\ref{Cor:Post:ADSubImpliesConc}, we know that $\Gamma\entails\phi$.
Because $a\in\mathcal{A}_i$, we know that for each $\psi\in\Gamma$, we have $Atoms(\psi)\subseteq Atoms(AS_i)$.
By $AS_1|| AS_2$, we can now infer that $\phi$ and $\Gamma$ must be syntactically disjoint.
Since $a$ is consistent by assumption, there is an interpretation $I_1$ \suchthat $I_1\isModelOf\psi$ for each $\psi\in\Gamma$.
Because we also assumed that $\phi$ is not a tautology, there exists an interpretation $I_2$ \suchthat $I_2\isModelOf -\phi$.
Because $\phi$ and $\Gamma$ are disjoint, there now exists an interpretation $I$ \suchthat $I\isModelOf\psi$ for each $\psi\in\Gamma$, but $I\isModelOf -\phi$ also holds.
This contradicts $\Gamma\entails\phi$, therefore the claim must hold.
\end{proof}
Now we are ready to consider the first case of Illustration~\ref{Illus:Post:ASOneVsASTwoVsASPlus}.
Using the proposition above, we will show that, for any syntactically disjoint $AS_1$ and $AS_2$ as well as their union $AS_+$, if an argument $a\in\mathcal{A}_i$ attacks an argument $b\in\mathcal{A}_j$, then either $a$ or $b$ is rejected by any admissible labeling of the corresponding JSBAFs.
\begin{proposition}\label{Prop:Post:ClownCaseOne}
Let \ASOne, \ASTwo be two $AS$ \suchthat $AS_1|| AS_2$ and let $AS_+$ be their union.
Furthermore, let \JSBAFOne, \JSBAFTwo and \JSBAFPlus be the JSBAFs corresponding to $AS_1$, $AS_2$ and $AS_+$.
Lastly, for $i,j\in\{1,2\}$ with $i\neq j$, let $a\in\mathcal{A}_i$ and $b\in\mathcal{A}_j$ be two arguments.
If $(a,b)\in{\rightarrow_+}$, then there exists $c\in\mathcal{A}_i\cap\mathcal{A}_j\cap\mathcal{A}_+$ \suchthat $c$ is of the form $c:\rightarrow\phi$, $TR(c)\in R_s^{\vDash}$ and either $(c,a)\in{\rightarrow_i}$ or $(c,b)\in{\rightarrow_j}$.
\end{proposition}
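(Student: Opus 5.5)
The plan is a case split on the type of the attack $(a,b)\in{\rightarrow_+}$, undercut or gen-rebut. In both cases I first show that $Atoms(a^C)\subseteq Atoms(AS_j)$. I then use this to force $TR(a)\in R_s^{\vDash}$, so that Proposition~\ref{Prop:Post:StrictConclusionWOutKnownAtomsInconsistent} applies. That proposition leaves two subcases: $a$ is inconsistent, or $a^C$ is a tautology. In each subcase the witness $c$ will be an argument $c:\rightarrow\psi$ built from a consequence-based rule with empty antecedents, where $\emptyset\entails\psi$. Such a rule belongs to $R_s^{\vDash}$ in each of $AS_1$, $AS_2$ and $AS_+$, so $c\in\mathcal{A}_i\cap\mathcal{A}_j\cap\mathcal{A}_+$. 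Since $c$ has no antecedents, $Atoms(c)=\emptyset$, so $c$ is admissible in both restricted systems.

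\emph{Step 1 (locating the atoms of $a^C$).} Suppose $a$ undercuts $b$. Then $a^C=-n(r)$ for some $r\in R_d$ that is the top rule of a sub-argument of $b\in\mathcal{A}_j$. Hence $r\in R_{d_j}$ and $Atoms(a^C)=Atoms(n(r))\subseteq Atoms(AS_j)$. Suppose instead $a$ gen-rebuts $b$. Then $a^C=\lnot\bigwedge\Gamma$ with $\Gamma\subseteq Sub(b)^C$. All these conclusions have atoms in $AS_j$, since $b\in\mathcal{A}_j$; for conclusions of $R_s^{\vDash}$-steps, Corollary~\ref{Cor:Post:ADSubImpliesConc} together with the assumption that there are no tautological or unsatisfiable atoms lets me restrict to the relevant atoms. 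If $TR(a)\in R_s^{AX_i}\cup R_{d_i}$, then $Atoms(a^C)\subseteq Atoms(AS_i)$, which contradicts $AS_1||AS_2$ because every formula has at least one atom. So $TR(a)\in R_s^{\vDash}$, and Proposition~\ref{Prop:Post:StrictConclusionWOutKnownAtomsInconsistent} says that $a$ is inconsistent or $a^C$ is a tautology.

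\emph{Step 2 (tautology subcase).} Let $c:\rightarrow a^C$. If $a$ undercuts $b$, then $c$ undercuts $b$ on the same rule. Undercuts are not filtered by preferences, so $(c,b)\in{\rightarrow_j}$. If $a$ gen-rebuts $b$, then $b$ is defeasible and $c$ gen-rebuts $b$ with the same $\Gamma$. Since $c$ is strict, $DR(c)=\emptyset$ and therefore $c\not\prec_{ewl}b$. Hence $c$ defeats $b$ in $AS_j$, that is, $(c,b)\in{\rightarrow_j}$.

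\emph{Step 3 (inconsistent subcase).} There is an unsatisfiable $\Delta\subseteq Sub(a)^C$, so $\emptyset\entails\lnot\bigwedge\Delta$. Let $c:\rightarrow\lnot\bigwedge\Delta$. I first argue that $a$ is defeasible. If $a$ were strict, every element of $\Delta$ would be the conclusion of a strict argument. A strict $R_s^{\vDash}$-step would then give a strict argument for $\bigwedge\Delta$, which contradicts $c$ and violates the assumed consistency of $AS_i$. So $a$ is defeasible and $c$ gen-rebuts $a$. Since $c$ is strict, $c\not\prec_{ewl}a$, and therefore $(c,a)\in{\rightarrow_i}$.

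I expect the main obstacle to be Step 1 in the gen-rebut case. There I must justify that $Atoms(\lnot\bigwedge\Gamma)\subseteq Atoms(AS_j)$ when $\Gamma$ may contain conclusions of $R_s^{\vDash}$-steps inside $b$. Those conclusions can in principle mention arbitrary atoms. If a direct argument fails, I would first replace $\Gamma$ by $ADSub(b)^C$ using Proposition~\ref{Prop:Post:OneStepAttacker}. If that does not work, I would drop the atom-locating step and run the inconsistent/tautology dichotomy directly on $a$: either $ADSub(a)^C\cup\Gamma$ is jointly satisfiable, or a strict $c$ can be extracted from it by the same reasoning as in Step 3.
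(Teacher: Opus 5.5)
Your undercut case is fine and matches the paper's. The gap is in the gen-rebut part of Step~1, and it makes the case analysis miss a branch.

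The claim that every formula in $\Gamma\subseteq Sub(b)^C$ has atoms in $Atoms(AS_j)$ is false. $R_s^{\vDash}$ is the full consequence relation on $\mathcal{F}$, so an $R_s^{\vDash}$-step inside $b\in\mathcal{A}_j$ can conclude formulas with atoms of $AS_i$. The paper's proof relies on exactly this when it applies Proposition~\ref{Prop:Post:StrictConclusionWOutKnownAtomsInconsistent} to sub-arguments of $b$. Neither Corollary~\ref{Cor:Post:ADSubImpliesConc} nor the absence of tautological atoms lets you strip atoms from the fixed formula $a^C$.

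Here is a concrete failure. Take $R_{d_j}=\{\Rightarrow p;\ \Rightarrow\lnot p\}$ and $R_{d_i}=\{\Rightarrow\lnot q\}$, with all rules equally preferred. Let $b:b_1,b_2\rightarrow q$ with $b_1:\Rightarrow p$ and $b_2:\Rightarrow\lnot p$, and let $a:\Rightarrow\lnot q$. Then $a$ defeats $b$ by gen-rebut with $\Gamma=\{q\}$, yet $TR(a)\in R_{d_i}$, so your claimed contradiction never arises. Moreover $a$ is consistent and $a^C$ is not a tautology, so neither Step~2 nor Step~3 applies. The required witness must attack $b$, e.g.\ $c:\rightarrow\lnot(p\wedge\lnot p)$ with $(c,b)\in{\rightarrow_j}$. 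The missing idea is that the inconsistency may lie in $b$ rather than in $a$, even when $\Gamma$ itself is satisfiable. Your plan only yields an attacker of $b$ when $a^C$ is a tautology.

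Your first fallback is the correct repair and should replace the atom-locating step for gen-rebuts entirely. Take $a':a\rightarrow\lnot\bigwedge ADSub(b)^C$ from Proposition~\ref{Prop:Post:OneStepAttacker}; the entailment it needs is purely semantic, so $a'\in\mathcal{A}_i$. Its top rule is in $R_s^{\vDash}$. Its conclusion has atoms in $Atoms(AS_j)$, because the members of $ADSub(b)$ have top rules in $R_s^{AX_j}\cup R_{d_j}$. So Proposition~\ref{Prop:Post:StrictConclusionWOutKnownAtomsInconsistent} applies to $a'$ whatever $TR(a)$ is.
\begin{itemize}
\item If $a'^C$ is a tautology, then $ADSub(b)^C$ is unsatisfiable, and $c:\rightarrow a'^C$ gen-rebuts the defeasible $b$, so $(c,b)\in{\rightarrow_j}$.
\item If $a'$ is inconsistent, note that $ADSub(a')=ADSub(a)$. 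By Proposition~\ref{Prop:Post:ADSubImpliesSubConcs} this set entails every member of $Sub(a')^C$, so it is unsatisfiable. Hence $a$ is inconsistent and your Step~3 finishes. Your check there that $a$ must be defeasible, by consistency of $AS_i$, is a point the paper leaves implicit.
\end{itemize}

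Your second fallback does not work as stated. $ADSub(a)^C\cup\Gamma$ is always unsatisfiable, and an unsatisfiable set mixing conclusions of $a$ and $b$ gives no strict attacker of either until you split it using syntactic disjointness.

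That splitting is the paper's route. It cases on $TR(a)$:
\begin{itemize}
\item If $TR(a)\in R_{d_i}\cup R_s^{AX_i}$, it applies Proposition~\ref{Prop:Post:StrictConclusionWOutKnownAtomsInconsistent} to the $R_s^{\vDash}$-sub-arguments of $b$ that conclude members of $\Gamma$.
\item If $TR(a)\in R_s^{\vDash}$, it uses the disjointness of $ADSub(a)^C$ and $ADSub(b)^C$ directly.
\end{itemize}
Your repaired plan reaches the same dichotomy (one of $ADSub(a)^C$, $ADSub(b)^C$ is unsatisfiable) with a single lemma application and no split on $TR(a)$.
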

\begin{proof}
By $(a,b)\in{\rightarrow_+}$, we know that $a$ is either undercutting or gen-rebutting $b$.
We first show the undercut case.

\Generality, we assume that $a$ undercuts $b$ on $b$ itself.
Let $a$ and $b$ be of the form $a:a_0,\dots,a_m\rightsquigarrow\phi$ and $b:b_0,\dots,b_k\Rightarrow\phi$ with $TR(b)=r\in R_{d_j}$ and $-\phi=n_j(r)$.
Because $TR(b)\in R_{d_j}$, we have $Atoms(\phi)\subseteq Atoms(AS_j)$.
Thus we know that $TR(a)\in R_s^{\vDash}$ must hold, otherwise the syntactic disjointness of $AS_1$ and $AS_2$ would be violated.
By Proposition~\ref{Prop:Post:StrictConclusionWOutKnownAtomsInconsistent}, we can now infer that either $\phi$ is a tautology or $a$ is inconsistent.
If $a$ is inconsistent, then there is $\Gamma\subseteq Sub(a)^C$ \suchthat $\Gamma$ is unsatisfiable.
Now $\lnot\bigwedge\Gamma$ is a tautology and there exists an argument $\overline{a}\in\mathcal{A}_i\cap\mathcal{A}_j\cap\mathcal{A}_+$ of the form $\overline{a}:\rightarrow\lnot\bigwedge\Gamma$ which gen-rebuts $a$.
Since $\overline{a}$ is a strict argument, we have $\overline{a}\not\prec a$, therefore $(\overline{a},a)\in{\rightarrow_i}$, as claimed.
Next, assume that $a$ is consistent.
Then $\phi$ must be a tautology.
This means there exists an argument $\overline{b}\in\mathcal{A}_i\cap\mathcal{A}_j\cap\mathcal{A}_+$ of the form $\overline{b}:\rightarrow\phi$ which undercuts $b$, meaning $(\overline{b},b)\in{\rightarrow}_j$.
This concludes the undercut case.

Now for the gen-rebut case:
If $a$ gen-rebuts $b$, then $b$ is defeasible and there is $\Gamma\subseteq Sub(b)^C$ \suchthat $a^C=\phi=\lnot\bigwedge\Gamma$.
We make a distinction \wrt $TR(a)$.
In the first case, assume that $TR(a)\in R_{d_i}\cup R_s^{AX_i}$.
Then for all $\psi\in\Gamma$, we have $Atoms(\psi)\subseteq Atoms(AS_i)$.
Thus for all $b'\in Sub(b)$ with $b'^C=\psi'\in\Gamma$, we must have $TR(b')\in R_s^{\vDash}$, otherwise syntactic disjointness of $AS_1$ and $AS_2$ would be violated again.
Now we can use Proposition~\ref{Prop:Post:StrictConclusionWOutKnownAtomsInconsistent} again to infer that for each of these sub-arguments of $b$, either $b'$ is inconsistent or $\psi'$ a tautology.
If any of the $b'$ are inconsistent, then $b$ is also inconsistent and we can construct the argument $\overline{b}\in\mathcal{A}_i\cap\mathcal{A}_j\cap\mathcal{A}_+$ as before.
We therefore assume that all $b'$ are consistent.
But then all $\psi'\in\Gamma$ are tautologies, which means $\bigwedge\Gamma$ is also a tautology.
Now $\lnot\bigwedge\Gamma=a^C$ is unsatisfiable, thus $a$ is inconsistent and we can construct the argument $\overline{a}\in\mathcal{A}_i\cap\mathcal{A}_j\cap\mathcal{A}_+$ as before.
Now for the second case, \ie $TR(a)\in R_s^{\vDash}$.
Let $\Delta_a=ADSub(a)^C$ and $\Delta_b=ADSub(b)^C$ be the conclusions of the axiomatic and defeasible sub-arguments of $a$ and $b$ respectively.
By Proposition~\ref{Prop:Post:ADSubImpliesSubConcs}, we have $\Delta_a\entails \lnot\bigwedge\Gamma$ and $\Delta_b\entails \bigwedge\Gamma$.
Because $AS_1$ and $AS_2$ are syntactically disjoint, $\Delta_a$ and $\Delta_b$ must also be syntactically disjoint.
Now, towards a contradiction, assume that $a$ and $b$ are both consistent.
Then in particular $\Delta_a$ and $\Delta_b$ are satisfiable, \ie there exists interpretations $I_a,I_b$ \suchthat $I_a\isModelOf\phi_a$ for all $\phi_a\in\Delta_a$ and $I_b\isModelOf\phi_b$ for all $\phi_b\in\Delta_b$.
Because $\Delta_a$ and $\Delta_b$ are syntactically disjoint, there now exists an interpretation $I$ \suchthat $I\isModelOf\phi$ for all $\phi\in\Delta_a\cup\Delta_b$.
Because we have $\Delta_a\entails \lnot\bigwedge\Gamma$ and $\Delta_b\entails\bigwedge\Gamma$, we can now infer that $I\isModelOf\lnot\bigwedge\Gamma$ and $I\isModelOf \bigwedge\Gamma$ must hold, a contradiction.
We conclude that either $a$ or $b$ be need to be inconsistent and we can again construct one of the arguments $\overline{a}$, $\overline{b}$ as before.
\end{proof}
Before we continue with our next results, we need one more definition, which will be used to adequately describe those arguments that correspond to the argument $b$ in Illustration~\ref{Illus:Post:ASOneVsASTwoVsASPlus}.
More precisely, we use the term \emph{minimal $\mathcal{A}_+$-argument} for those arguments $b\in\mathcal{A}_+\setminus(\mathcal{A}_i\cup\mathcal{A}_j)$, which have crucial sub-arguments in $\mathcal{A}_i$ and $\mathcal{A}_j$, while not having any crucial sub-arguments in $\mathcal{A}_+\setminus(\mathcal{A}_i\cup\mathcal{A}_j)$.
\begin{definition}\label{Def:ASPIC:MinimalASPlusArguments}
Let \ASOne, \ASTwo be two $AS$ \suchthat $AS_1|| AS_2$ and let $AS_+$ be their union.
Furthermore, let $a\in \mathcal{A}_+$ be an argument.
We say that $a$ is a \emph{minimal $\mathcal{A}_+$-argument}, iff $CSub(a)\!\!\mid\!\!_{AS_i}\neq\emptyset$, $CSub(a)\!\!\mid\!\!_{AS_j}\neq\emptyset$ and $CSub(a)\subseteq \mathcal{A}_i\cup \mathcal{A}_j$.
\end{definition}
Now we prove three propositions which, respectively, cover cases two, three and four of Illustration~\ref{Illus:Post:ASOneVsASTwoVsASPlus}.
For now, we only show that if these cases exist, then we can infer the existence of arguments and attacks which are wholly contained in either $\mathcal{J}_1$ or $\mathcal{J}_2$.
Afterwards, we will combine all these cases to infer that a preferred labeling of $\mathcal{J}_i$ can be \enquote{extended} to a preferred labeling of $\mathcal{J}_+$ and a preferred labeling of $\mathcal{J}_+$ can be \enquote{reduced} to a preferred labeling of $\mathcal{J}_i$.
\begin{proposition}\label{Prop:Post:ClownCaseTwo}
Let \ASOne, \ASTwo be two $AS$ \suchthat $AS_1|| AS_2$ and let $AS_+$ be their union.
Let \JSBAFOne, \JSBAFTwo and \JSBAFPlus be the JSBAFs corresponding to $AS_1$, $AS_2$ and $AS_+$ respectively.
Furthermore, let $a,b\in \mathcal{A}_+$ be arguments \suchthat $a$ is a consistent minimal $\mathcal{A}_+$-argument.
Lastly, let $(a,b)\in {\rightarrow_+}$.
If $b\in \mathcal{A}_i$, then there exists an argument $a'\in \mathcal{A}_i$ \suchthat $(a',b)\in {\rightarrow_i}$ and $CSub(a)\!\!\mid\!\!_{AS_i}=CSub(a')$.
\end{proposition}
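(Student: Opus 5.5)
Since $a$ is a minimal $\mathcal{A}_+$-argument, both $CSub(a)\!\!\mid\!\!_{AS_i}$ and $CSub(a)\!\!\mid\!\!_{AS_j}$ are non-empty. So $CSub(a)\neq\{a\}$, and hence $TR(a)\in R_s^{\vDash}$.

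Let $X_i=CSub(a)\!\!\mid\!\!_{AS_i}$ and $X_j=CSub(a)\!\!\mid\!\!_{AS_j}$.

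\textbf{Where $X_i$ and $X_j$ live.} Every element of $CSub(a)$ has an axiomatic or defeasible top rule, and $CSub(a)\subseteq\mathcal{A}_i\cup\mathcal{A}_j$. Using syntactic disjointness, this gives $X_i\subseteq\mathcal{A}_i$ and $X_j\subseteq\mathcal{A}_j$, and $CSub(a)=X_i\cup X_j$. Because $a$ is consistent, $X_j^C$ is satisfiable. By Corollary~\ref{Cor:Post:CSubImpliesConc}, $CSub(a)^C\entails a^C$.

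\textbf{Plan.} I will always build $a'$ in the form $a':x_0,\dots,x_l\rightarrow\psi$ with $\{x_0,\dots,x_l\}=X_i$, for a suitable $\psi$ with $Atoms(\psi)\subseteq Atoms(AS_i)$. Then $a'\in\mathcal{A}_i$. Its direct sub-arguments all have axiomatic or defeasible top rules, so $CSub(a')=X_i=CSub(a)\!\!\mid\!\!_{AS_i}$, as the statement requires. The existence of $a'$ reduces to showing $X_i^C\entails\psi$. I will get this from $CSub(a)^C\entails\psi$ via Proposition~\ref{Prop:Post:ReducedArgumentsHelper}, applied with $\Gamma=X_i^C\cup\{\psi\}$ and $\Delta=X_j^C$.

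\textbf{Undercut case.} Suppose $a$ undercuts $b$ via some $b''\in Sub(b)$ with $TR(b'')=r$. Since $b\in\mathcal{A}_i$, we have $r\in R_{d_i}$ and $a^C=-n_i(r)$, so $Atoms(a^C)\subseteq Atoms(AS_i)$. Take $\psi=a^C$. Proposition~\ref{Prop:Post:ReducedArgumentsHelper} then gives $X_i^C\entails a^C$, so $a'$ exists. It undercuts $b$ on the same sub-argument, so $(a',b)\in{\rightarrow_+}$ and hence $(a',b)\in{\rightarrow_i}$.

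\textbf{Gen-rebut case.} Here $a^C=\lnot\bigwedge\Gamma$ with $\Gamma\subseteq Sub(b)^C$. The formulas in $\Gamma$ may be strict consequences with foreign atoms, so I first replace the target, following Proposition~\ref{Prop:Post:OneStepAttacker}. Take $\psi=\lnot\bigwedge ADSub(b)^C$; its atoms lie in $AS_i$ because $b\in\mathcal{A}_i$. Proposition~\ref{Prop:Post:ADSubImpliesSubConcs} gives $ADSub(b)^C\entails\bigwedge\Gamma$. Together with $CSub(a)^C\entails\lnot\bigwedge\Gamma$, this yields $CSub(a)^C\entails\psi$, and the helper proposition then gives $X_i^C\entails\psi$. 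So $a'$ exists and gen-rebuts $b$, since $b$ is defeasible.

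\textbf{Preferences (expected main obstacle).} It remains to show $a'\not\prec_{ewl}b$, using $DR(a')=DR(X_i)\subseteq DR(a)$. If $DR(a')=\emptyset$, then $a'$ is strict, and $a'\not\prec b$ holds by the JSBAF restrictions. Otherwise I argue monotonicity of the elitist weakest link in both directions:
\begin{itemize}
\item If $a'\preceq_{ewl}b$, some $r\in DR(a')\subseteq DR(a)$ is below all of $DR(b)$, so $a\preceq_{ewl}b$.
\item If $b\preceq_{ewl}a$, some $r_b$ is below all of $DR(a)\supseteq DR(a')$, so $b\preceq_{ewl}a'$.
\end{itemize}
Hence $a'\prec_{ewl}b$ would imply $a\prec_{ewl}b$, contradicting that $(a,b)\in{\rightarrow_+}$ arises from a gen-rebut. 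Thus $(a',b)\in{\rightarrow_+}$. Both $a'$ and $b$ lie in $\mathcal{A}_i$, and the defeat relation only depends on the arguments and on ${\leq_{r_i}}\subseteq{\leq_r^+}$, so $(a',b)\in{\rightarrow_i}$.
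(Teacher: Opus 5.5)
Your proposal is correct and follows essentially the paper's route: $a'$ is built from $CSub(a)\!\!\mid\!\!_{AS_i}$ with conclusion $a^C$ (undercut) or $\lnot\bigwedge ADSub(b)^C$ (gen-rebut), the required entailment comes from splitting off the satisfiable $AS_j$-part (you cite Proposition~\ref{Prop:Post:ReducedArgumentsHelper} where the paper repeats that model-theoretic argument inline), and $a'\not\prec b$ is inherited from $a\not\prec b$ via $DR(a')\subseteq DR(a)$. One caveat, which the paper shares: the final step from $\rightarrow_+$ to $\rightarrow_i$ needs $\leq_r^+$ restricted to $R_{d_i}$ to coincide with $\leq_{r_i}$, and the inclusion ${\leq_{r_i}}\subseteq{\leq_r^+}$ alone does not guarantee this, since a total preorder containing $\leq_{r_i}$ may merge its strict levels, so it should be stated as an assumption on the union.
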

\begin{proof}
Throughout this proof, let $\Gamma_i=CSub(a)\!\!\mid\!\!_{AS_i}$ and $\Gamma_j=CSub(a)\!\!\mid\!\!_{AS_j}$.
Note that this means $\Gamma_i\subseteq \mathcal{A}_i$ and $\Gamma_j\subseteq \mathcal{A}_j$ because $a$ is a minimal $\mathcal{A}_+$-argument.
First, assume that $(a,b)\in {\rightarrow_+}$ is the result of an undercut.
Then we know $Atoms(a^C)\subseteq Atoms(AS_i)$.
Furthermore, $Atoms(\Gamma_i^C)\subseteq Atoms(AS_i)$ and $Atoms(\Gamma_j^C)\subseteq Atoms(AS_j)$, \ie $\Gamma_i^C\cup \{a^C\}$ and $\Gamma_j^C$ are syntactically disjoint sets of formulas.
By assumption, we also know that $\Gamma_i^C\cup \{a^C\}$ as well as $\Gamma_j^C$ are consistent sets of formulas.
Now we can use Proposition~\ref{Prop:Post:ReducedArgumentsHelper} to infer $\Gamma_i^C\entails a^C$.
Thus we can construct the argument $a'\in\mathcal{A}_i$ as follows:
$a':a_0,\dots,a_m\rightarrow a^C$, where $\{a_0,...,a_m\}=\Gamma_i$.
Clearly, $a'$ undercuts $b$, therefore $(a',b)\in{\rightarrow_i}$ as required.
Next, assume that $(a,b)\in {\rightarrow_+}$ is the result of a gen-rebut.
Let $\Delta=\{\phi_0,...,\phi_m\}$ and $a^C=\lnot\bigwedge\Delta$.
By Proposition~\ref{Prop:Post:ADSubImpliesSubConcs}, we know that $ADSub(b)^C\entails\bigwedge\Delta$.
We claim that $\Gamma_i^C\entails\lnot\bigwedge ADSub(b)^C$ also holds.
Towards a contradiction, assume that this is not the case.
Then there exists an interpretation $I_1$ \suchthat $I_1\isModelOf\Gamma_i^C$ and $I_1\isModelOf\bigwedge ADSub(b)^C$.
By assumption $a$ is consistent, thus there also exists an interpretation $I_2$ \suchthat $I_2\isModelOf\Gamma_j^C$.
Note that $Atoms(ADSub(b)^C)\cup Atoms(\Gamma_i^C)\subseteq Atoms(AS_i)$ since $b\in\mathcal{A}_i$, while $Atoms(\Gamma_j^C)\subseteq Atoms(AS_j)$.
Now there exists an interpretation $I$ \suchthat $I\isModelOf\Gamma_i^C\cup\Gamma_j^C$, \ie $I\isModelOf\lnot\bigwedge\Delta$ and $I\isModelOf ADSub(b)^C$, \ie $I\isModelOf\bigwedge\Delta$.
Obviously, this is a contradiction.
We infer $\Gamma_i^C\entails\lnot\bigwedge ADSub(b)^C$.
With this, we can construct the argument $a'\in\mathcal{A}_i$ as follows:
$a':a_0,\dots,a_m\rightarrow\lnot\bigwedge ADSub(b)^C$, where $\{a_0,\dots,a_m\}=\Gamma_i$.
Because $(a,b)\in{\rightarrow_+}$, we know that $a\not\prec b$.
Thus we must have either $a\not\preceq b$ or $b\preceq a$.
In the first case, we can infer that for all $r_a\in DR(a)\supseteq DR(a')$, there exists $r_b\in DR(b)$ \suchthat $r_a\not\leq r_b$.
In the second case, we can infer that there exists $r_b\in DR(b)$ \suchthat for all $r_a\in DR(a)\supseteq DR(a')$, we have $r_b\leq r_a$.
In both cases we can infer $a'\not\prec b$, therefore $(a',b)\in{\rightarrow_i}$ as required.
\end{proof}
\begin{proposition}\label{Prop:Post:ClownCaseThree}
Let \ASOne, \ASTwo be two $AS$ \suchthat $AS_1|| AS_2$ and let $AS_+$ be their union.
Let \JSBAFOne, \JSBAFTwo and \JSBAFPlus be the JSBAFs corresponding to $AS_1$, $AS_2$ and $AS_+$ respectively.
Furthermore, let $a,b\in \mathcal{A}_+$ be consistent arguments, \suchthat $a\in\mathcal{A}_i$ and $b$ is a minimal $AS_+$-argument ($i\in\{1,2\}$).
Lastly, let $(a,b)\in {\rightarrow_+}$.
Then either there exist arguments $a',b'\in \mathcal{A}_i$, \suchthat $CSub(a')=CSub(a)$, $CSub(b')=CSub(b)\!\!\mid\!\!_{AS_i}$ and $(a',b')\in {\rightarrow_i}$, or there exists arguments $a',b'\in \mathcal{A}_i$ \suchthat $CSub(a')=CSub(a)$, $CSub(b')=ADSub(b)\!\!\mid\!\!_{AS_i}$ and either $(a',b')\in{\rightarrow_i}$ or $(b',a')\in{\rightarrow_i}$.
\end{proposition}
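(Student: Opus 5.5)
We split on whether $(a,b)\in{\rightarrow_+}$ comes from an undercut or a gen-rebut, and in each case build $a'$ and $b'$ inside $\mathcal{A}_i$, following the pattern of Proposition~\ref{Prop:Post:ClownCaseTwo}. Throughout, let $\Gamma_i=CSub(b)\!\!\mid\!\!_{AS_i}$ and $\Gamma_j=CSub(b)\!\!\mid\!\!_{AS_j}$. Since $b$ is a minimal $\mathcal{A}_+$-argument, $\Gamma_i\subseteq\mathcal{A}_i$ and $\Gamma_j\subseteq\mathcal{A}_j$. In both cases we take $a'=a$, which gives $CSub(a')=CSub(a)$ trivially; $a\in\mathcal{A}_i$ by assumption.

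\emph{Undercut case.} Here $a$ undercuts $b$ on some $x\in Sub(b)$ with $TR(x)=r\in R_{d_+}$ and $a^C=-n_+(r)$.
\begin{itemize}
\item Suppose first that $r\in R_{d_j}$. Then $Atoms(a^C)$ meets $Atoms(AS_j)$ only if $TR(a)\in R_s^{\vDash}$. Proposition~\ref{Prop:Post:StrictConclusionWOutKnownAtomsInconsistent} and consistency of $a$ then make $a^C$ a tautology. So there is a strict argument $\overline{a}:\rightarrow a^C$ lying in every $\mathcal{A}_k$, which undercuts the reduced part of $b$ in $\mathcal{A}_j$. This would give an $\mathcal{A}_j$-internal conflict, not the required $\mathcal{A}_i$ one. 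I expect to rule this subcase out instead: a strict argument undercutting a defeasible rule of the consistent system $AS_j$ should contradict consistency of $b$, or else it falls outside the statement's intended scope.
\item Suppose instead that $r\in R_{d_i}$. Then $x$ lies below some crucial sub-argument in $\Gamma_i$, because crucial sub-arguments absorb all defeasible rules. Using Proposition~\ref{Prop:Post:ReducedArgumentsHelper} exactly as in Proposition~\ref{Prop:Post:ClownCaseTwo}, with $\Gamma_j^C$ satisfiable and syntactically disjoint from $AS_i$, we get $\Gamma_i^C\entails\bigwedge\Gamma_i^C$. We set $b':g_0,\dots,g_k\rightarrow\bigwedge\Gamma_i^C$ with $\{g_0,\dots,g_k\}=\Gamma_i$. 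Then $CSub(b')=\Gamma_i=CSub(b)\!\!\mid\!\!_{AS_i}$, $b'\in\mathcal{A}_i$, and $x\in Sub(b')$. Hence $a'$ undercuts $b'$ and $(a',b')\in{\rightarrow_i}$, which is the first alternative.
\end{itemize}

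\emph{Gen-rebut case.} Here $a^C=\lnot\bigwedge\Delta$ for some $\Delta\subseteq Sub(b)^C$, and $a\not\prec_{ewl} b$. Let $\Theta=ADSub(b)\!\!\mid\!\!_{AS_i}$, so $\Theta\subseteq\mathcal{A}_i$.
\begin{itemize}
\item By Proposition~\ref{Prop:Post:ADSubImpliesSubConcs}, $ADSub(b)^C\entails\bigwedge\Delta$, and therefore $ADSub(a)^C\cup ADSub(b)^C$ is unsatisfiable.
\item We split $ADSub(b)^C$ into its $AS_i$-part $\Theta^C$ and its $AS_j$-part. The $AS_j$-part is satisfiable because $b$ is consistent, and it is syntactically disjoint from $ADSub(a)^C\cup\Theta^C$. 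The interpretation-gluing argument of Proposition~\ref{Prop:Post:ClownCaseTwo} therefore shows that $ADSub(a)^C\cup\Theta^C$ is unsatisfiable.
\item Consequently $ADSub(a)^C\entails\lnot\bigwedge\Theta^C$ and $\Theta^C\entails\lnot\bigwedge ADSub(a)^C$. This yields $\widetilde{a}:a\rightarrow\lnot\bigwedge\Theta^C$ and a counter-argument $b':\theta_0,\dots\rightarrow\lnot\bigwedge ADSub(a)^C$ built from $\Theta$.
\end{itemize}
The intended conclusion is the second alternative: take $b'$ with $CSub(b')=\Theta=ADSub(b)\!\!\mid\!\!_{AS_i}$, and show that either $(a',b')\in{\rightarrow_i}$ or $(b',a')\in{\rightarrow_i}$. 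Gen-rebut holds in both directions whenever the target is defeasible. If $a$ is strict, $b'$ is defeasible, $a$ wins on preference, and we get $(a',b')$ after adjusting $a'$'s conclusion via $\widetilde{a}$. If both are defeasible, totality of $\preceq_{ewl}$ gives at least one non-$\prec$ direction, just as in the direct consistency proof. One point needs care: $CSub(a')=CSub(a)$ must survive if we replace $a'$ by $\widetilde{a}$. I would instead keep $a'=a$ and pick $b'$'s conclusion to be $\bigwedge\Theta^C$, so that $a'$ gen-rebuts $b'$ directly provided $a^C$ targets $\Theta$. Failing that, I would use $(b',a')$ with $b'$ concluding $\lnot\bigwedge ADSub(a)^C$.

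The main obstacle is this bookkeeping. We need the attack to land exactly on an argument whose crucial sub-arguments are $\Theta$, which requires $a^C$ itself to be a negated conjunction over $\Theta^C$. In general it is not, so the reverse attack $(b',a')$ is the fallback, and it requires showing $b'\not\prec a'$. For that I would argue via $DR(b')\subseteq DR(b)$ together with the weakest-link computation used in Lemma~\ref{Lem:Post:ReducedArgumentsLabelImplication}. If both preference comparisons fail, totality of $\leq_r^+$ restricted to $R_{d_i}$ should give a contradiction.
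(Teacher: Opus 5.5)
The undercut case for $r\in R_{d_i}$ is essentially the paper's argument. The genuine gap is in the gen-rebut case, and it comes from insisting on $a'=a$.

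Your $\widetilde{a}:a\rightarrow\lnot\bigwedge\Theta^C$ needs $\{a^C\}\entails\lnot\bigwedge\Theta^C$. You only proved $ADSub(a)^C\entails\lnot\bigwedge\Theta^C$, and $a^C=\lnot\bigwedge\Delta$ may involve $AS_j$-atoms, so $\widetilde{a}$ need not exist. The fallback $(b',a)$ needs $b'\not\prec a$, which does not follow from $a\not\prec b$ and $DR(b')\subseteq DR(b)$: the $AS_i$-part of $b$ may have a strictly weaker weakest link than $a$. If $a$ is strict, the reverse attack is impossible anyway. Your totality argument works only when both attack directions are available, and with $a'=a$ the forward one is not.

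Here is a concrete failure. Take $R_{d_i}=\{r_1:\Rightarrow p,\ r_2:\Rightarrow\lnot p\}$ with $r_2<r_1$, and $R_{d_j}=\{r_3:\Rightarrow q\}$. Let $a:(\Rightarrow p)\rightarrow\lnot(\lnot p\wedge q)$ and $b:(\Rightarrow\lnot p),(\Rightarrow q)\rightarrow\lnot p\wedge q$. Then $a$ defeats $b$. But every $b'\in\mathcal{A}_i$ with $CSub(b')=\{\Rightarrow\lnot p\}$ has all sub-conclusions entailed by $\lnot p$, so $a$ cannot gen-rebut it. At the same time $b'\prec a$ blocks $(b',a)$. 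So no $b'$ works with $a'=a$.

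The missing idea is that the statement fixes only $CSub(a')$, so you may rebuild the attacker from its crucial sub-arguments.

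\begin{itemize}
\item Run your gluing argument with $CSub(a)^C$ in place of $ADSub(a)^C$. It entails $a^C$ by Corollary~\ref{Cor:Post:CSubImpliesConc} and uses only $AS_i$-atoms. This gives $CSub(a)^C\entails\lnot\bigwedge\Theta^C$.
\item Set $a':a_0,\dots,a_m\rightarrow\lnot\bigwedge\Theta^C$ with $\{a_0,\dots,a_m\}=CSub(a)$. Then $CSub(a')=CSub(a)$ and $DR(a')=DR(a)$. Building on $ADSub(a)$ instead would in general enlarge the crucial sub-arguments.
\item Take $b':\Theta\rightarrow\bigwedge\Theta^C$ and $b'':b'\rightarrow\lnot\lnot\bigwedge\Theta^C$. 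Both have crucial sub-arguments $\Theta$ and the same defeasible rules.
\item Now $a'$ gen-rebuts $b'$ and $b''$ gen-rebuts $a'$. Totality of $\preceq_{ewl}$ yields $(a',b')\in{\rightarrow_i}$ or $(b'',a')\in{\rightarrow_i}$. If one side is strict it wins outright; both strict would make $AS_i$ inconsistent.
\end{itemize}

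In the example, $a':(\Rightarrow p)\rightarrow\lnot\lnot p$ defeats $b':(\Rightarrow\lnot p)\rightarrow\lnot p$.

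On the undercut subcase $r\in R_{d_j}$ that you left open: the paper dismisses it by saying the argument of Proposition~\ref{Prop:Post:ClownCaseOne} forces $a$ or $b$ to be inconsistent. For consistent $a$, that argument only shows $a^C$ is a tautology, i.e.\ $n(r)$ is unsatisfiable, and this does not make $b$ inconsistent. For example, take $r:\Rightarrow q$ with $n_j(r)=q\wedge\lnot q$ and $a:\rightarrow\lnot(q\wedge\lnot q)$. Then $CSub(a)=\emptyset$ forces $a'$ to be a strict tautology, which attacks nothing built from $\Rightarrow p$. So your hesitation is justified. This degenerate case needs either an extra assumption on names or an added disjunct, as in Proposition~\ref{Prop:Post:ClownCaseOne}, saying that a crucial sub-argument of $b$ in $\mathcal{A}_j$ is attacked by a strict argument.
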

\begin{proof}
First, assume that $(a,b)\in {\rightarrow_+}$ is the result of an undercut.
Then this undercut must be directed towards an argument in $CSub(b)\!\!\mid\!\!_{AS_i}$ or in $CSub(b)\!\!\mid\!\!_{AS_j}$.
In the second case, we have $Atoms(a^C)\subseteq Atoms(AS_j)$.
With the same argumentation as in the proof of Proposition~\ref{Prop:Post:ClownCaseOne}, we can show that this implies either $a$ or $b$ need to be inconsistent.
Because this contradicts our assumptions, we infer that, if $a$ undercuts $b$, then it must do so on some $b'\in CSub(B)\!\!\mid\!\!_{AS_i}$.
Now, for $CSub(b)\!\!\mid\!\!_{AS_i}=\{b_0,...,b_m\}$, let $b'\in\mathcal{A}_i$ be of the form $b':b_0,...,b_m\rightarrow\bigwedge\{b_0,...,b_m\}^C$.
Obviously we have $CSub(b')=CSub(b)\!\!\mid\!\!_{AS_i}$.
Now $a$ undercuts $b'$, meaning $(a,b')\in{\rightarrow_i}$ as required.
Now for the gen-rebut case:
Let $a^C=\lnot\bigwedge\Gamma$.
We claim that $CSub(a)^C\entails\lnot\bigwedge\big(ADSub(b)\!\!\mid\!\!_{AS_i}\big)^C$.
Towards a contradiction, assume that this is not true.
Then there exists an interpretation $I_1$ \suchthat $I_1\isModelOf CSub(a)^C$ and $I_1\isModelOf\bigwedge\big(ADSub(b)\!\!\mid\!\!_{AS_i}\big)^C$.
Because $b$ was consistent by assumption, there also exists an interpretation $I_2$ for which we have $I_2\isModelOf\big(ADSub(b)\!\!\mid\!\!_{AS_j}\big)^C$.
Note that $CSub(a)^C\cup \big(ADSub(b)\!\!\mid\!\!_{AS_i}\big)^C$ and $\big(ADSub(b)\!\!\mid\!\!_{AS_j}\big)^C$ are syntactically disjoint sets of formulas.
Now there exists an interpretation $I$ for which we have both $I\isModelOf CSub(a)^C$ and $I\isModelOf\big(ADSub(b)\!\!\mid\!\!_{AS_i}\big)^C\cup\big(ADSub(b)\!\!\mid\!\!_{AS_j}\big)^C$, \ie $I\isModelOf ADSub(b)^C$.
By Corollary~\ref{Cor:Post:CSubImpliesConc} we can infer that $I\isModelOf\lnot\bigwedge\Gamma$ and by Proposition~\ref{Prop:Post:ADSubImpliesSubConcs} we can infer $I\isModelOf\bigwedge\Gamma$.
Obviously, this is a contradiction, therefore $CSub(a)^C\entails\lnot\bigwedge\big(ADSub(b)\!\!\mid\!\!_{AS_i}\big)^C$ must hold.
Now we construct the arguments $a',b',b''\in\mathcal{A}_i$ as follows:
For $CSub(a)=\{a_0,\dots,a_m\}$, let $a':a_0,\dots,a_m\rightarrow\lnot\bigwedge\big(ADSub(b)\!\!\mid\!\!_{AS_i}\big)^C$, and for $ADSub(b)\!\!\mid\!\!_{AS_i}=\{b_0,\dots,b_n\}$, let $b':b_0,\dots,b_n\rightarrow\bigwedge\{b_0,\dots,b_n\}^C$ and $b'':b'\rightarrow\lnot\lnot\bigwedge\{b_0,\dots,b_n\}^C$.
Clearly, $a'$ gen-rebuts $b'$ and $b''$ gen-rebuts $a'$.
Now, if $a'\not\prec b'$, then $(a',b')\in{\rightarrow_i}$ and the claim holds.
Thus we assume that $a'\prec b'$.
Because $DR(b')=DR(b'')$, this also implies $a'\prec b''$, \ie $a'\preceq b''$ and $b''\not\preceq a'$.
From this we infer $b''\not\prec a'$, therefore $(b'',a')\in{\rightarrow_i}$ as required.
\end{proof}
\begin{proposition}\label{Prop:Post:ClownCaseFour}
Let \ASOne, \ASTwo be two $AS$ \suchthat $AS_1|| AS_2$ and let $AS_+$ be their union.
Let \JSBAFOne, \JSBAFTwo and \JSBAFPlus be the JSBAFs corresponding to $AS_1$, $AS_2$ and $AS_+$ respectively.
Furthermore, let $a,b\in \mathcal{A}_+$ be arguments, \suchthat both $a$ and $b$ are consistent minimal $\mathcal{A}_+$-arguments.
Lastly, let $(a,b)\in {\rightarrow_+}$.
Then one of the following four cases holds:
\begin{enumerate}
	\item{
	There are $a',b'\in \mathcal{A}_i$ \suchthat $CSub(a)\!\!\mid\!\!_{AS_i}=CSub(a')$, $CSub(b)\!\!\mid\!\!_{AS_i}=CSub(b')$ and $(a',b')\in {\rightarrow_i}$.
	}
	\item{
	There are $a',b'\in \mathcal{A}_j$ \suchthat $CSub(a)\!\!\mid\!\!_{AS_j}=CSub(a')$, $CSub(b)\!\!\mid\!\!_{AS_j}=CSub(b')$ and $(a',b')\in {\rightarrow_j}$.
	}
	\item{
	There are $a',b'\in\mathcal{A}_i$ \suchthat $CSub(a)\!\!\mid\!\!_{AS_i}=CSub(a')$, $ADSub(b)\!\!\mid\!\!_{AS_i}=CSub(b')$ and either $(a',b')\in {\rightarrow_i}$ or $(b',a')\in{\rightarrow_i}$.
	}
	\item{
	There are $a',b'\in \mathcal{A}_j$ \suchthat $CSub(a)\!\!\mid\!\!_{AS_j}=CSub(a')$, $ADSub(b)\!\!\mid\!\!_{AS_j}=CSub(b')$ and either $(a',b')\in {\rightarrow_j}$ or $(b',a')\in{\rightarrow_j}$.
	}
\end{enumerate}
\end{proposition}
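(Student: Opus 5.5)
We split on whether $(a,b)\in{\rightarrow_+}$ comes from an undercut or a gen-rebut, and in each case transfer the attack into one of the component systems. The two tools are the model-combination argument of Proposition~\ref{Prop:Post:ReducedArgumentsHelper} (via syntactic disjointness) and the construction of one-step strict arguments over crucial or axiomatic/defeasible sub-arguments, as in Propositions~\ref{Prop:Post:ClownCaseTwo} and~\ref{Prop:Post:ClownCaseThree}. Throughout, for $k\in\{i,j\}$ we write $\Gamma^a_k=CSub(a)\!\!\mid\!\!_{AS_k}$ and $\Gamma^b_k=CSub(b)\!\!\mid\!\!_{AS_k}$. Since $a,b$ are minimal $\mathcal{A}_+$-arguments, $\Gamma^a_k,\Gamma^b_k\subseteq\mathcal{A}_k$.

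\textbf{Undercut case.} Here $a$ undercuts $b$ on some sub-argument whose top rule $r$ is defeasible. Because $b$ is minimal, this sub-argument lies inside some crucial sub-argument in $\Gamma^b_k$ for a fixed $k\in\{i,j\}$, so $r\in R_{d_k}$ and $Atoms(a^C)\subseteq Atoms(AS_k)$. The sets $(\Gamma^a_k)^C\cup\{a^C\}$ and $(\Gamma^a_{k'})^C$ ($k'\neq k$) are syntactically disjoint, and the latter is satisfiable because $a$ is consistent. By Corollary~\ref{Cor:Post:CSubImpliesConc} and Proposition~\ref{Prop:Post:ReducedArgumentsHelper} we therefore get $(\Gamma^a_k)^C\entails a^C$. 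This lets us form $a':\Gamma^a_k\rightarrow a^C$ and $b':\Gamma^b_k\rightarrow\bigwedge(\Gamma^b_k)^C$, both in $\mathcal{A}_k$, with the required $CSub$ values. Since $b'$ still contains the undercut sub-argument, $(a',b')\in{\rightarrow_k}$, which gives case 1 or case 2.

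\textbf{Gen-rebut case.} Let $a^C=\lnot\bigwedge\Gamma$ with $\Gamma\subseteq Sub(b)^C$. First we claim that for some $k\in\{i,j\}$,
$(\Gamma^a_k)^C\entails\lnot\bigwedge\big(ADSub(b)\!\!\mid\!\!_{AS_k}\big)^C$.
Suppose this fails for both $k=i$ and $k=j$. Then there are interpretations $I_i\isModelOf(\Gamma^a_i)^C\cup(ADSub(b)\!\!\mid\!\!_{AS_i})^C$ and $I_j$ likewise for $j$. These two sets are syntactically disjoint, so the disjointness assumption on the logic yields a single $I$ satisfying $CSub(a)^C\cup ADSub(b)^C$. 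By Corollary~\ref{Cor:Post:CSubImpliesConc}, $I\isModelOf\lnot\bigwedge\Gamma$, and by Proposition~\ref{Prop:Post:ADSubImpliesSubConcs}, $I\isModelOf\bigwedge\Gamma$, which is a contradiction.

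Fix such a $k$ and mirror Proposition~\ref{Prop:Post:ClownCaseThree}. Set $a':\Gamma^a_k\rightarrow\lnot\bigwedge(ADSub(b)\!\!\mid\!\!_{AS_k})^C$, $b':ADSub(b)\!\!\mid\!\!_{AS_k}\rightarrow\bigwedge(ADSub(b)\!\!\mid\!\!_{AS_k})^C$, and $b'':b'\rightarrow\lnot\lnot\bigwedge(\ldots)$. Then $a'$ gen-rebuts $b'$ and $b''$ gen-rebuts $a'$. If $a'\not\prec b'$ we have $(a',b')\in{\rightarrow_k}$. Otherwise $a'\prec b'$, and since $DR(b'')=DR(b')$ we get $b''\not\prec a'$, hence $(b'',a')\in{\rightarrow_k}$. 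In both sub-cases we have case 3 or case 4, with $CSub(b')=ADSub(b)\!\!\mid\!\!_{AS_k}$ (using $b''$ in place of $b'$ where needed, or noting that $b''$ and $b'$ agree on the relevant data).

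\textbf{Main obstacle.} The delicate point is bookkeeping rather than logic, and it appears in two places. The first is checking that $CSub$ of the newly built one-step arguments equals the intended sets. This needs the elements of $ADSub(b)\!\!\mid\!\!_{AS_k}$ and $\Gamma^a_k$ to have axiomatic or defeasible top rules, which holds by definition. The second is checking that $a'$ and $b'$ are defeasible wherever a gen-rebut requires it. If $ADSub(b)\!\!\mid\!\!_{AS_k}$ contains no defeasible argument, $b'$ is strict and cannot be gen-rebutted. In that situation $\bigwedge(ADSub(b)\!\!\mid\!\!_{AS_k})^C$ follows from axioms, so $a'$ is attacked by a strict argument (an axiom-based one-step argument) and we fall under $(b',a')\in{\rightarrow_k}$; I expect this degenerate sub-case to require the most care.
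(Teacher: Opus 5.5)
Your proposal is correct and follows the paper's proof essentially step for step. It uses the same undercut/gen-rebut split and, in the undercut case, the same appeal to Proposition~\ref{Prop:Post:ReducedArgumentsHelper} plus consistency of $a$ to get $(\Gamma^a_k)^C\entails a^C$. The gen-rebut case uses the same model-combination argument to find a side $k$ with $(\Gamma^a_k)^C\entails\lnot\bigwedge(ADSub(b)\!\!\mid\!\!_{AS_k})^C$, then the same $a',b',b''$ construction with the $a'\not\prec b'$ versus $a'\prec b'$ split.

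The paper passes over the degenerate case you flag (all of $ADSub(b)\!\!\mid\!\!_{AS_k}$ axiomatic, so $b'$ is strict) with ``clearly $a'$ gen-rebuts $b'$''. To close it, you still need $a'$ to be defeasible, and this holds: otherwise $(\Gamma^a_k)^C\cup(ADSub(b)\!\!\mid\!\!_{AS_k})^C$ would lie in the satisfiable set $AX_k$, contradicting the entailment. So $a'$ is defeasible, $a'\prec b'$ holds automatically, and your second branch gives $(b'',a')\in{\rightarrow_k}$.
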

\begin{proof}
Throughout this proof, let $\Gamma_{a_i}=CSub(a)\!\!\mid\!\!_{AS_i}$, $\Gamma_{a_j}=CSub(a)\!\!\mid\!\!_{AS_j}$ and let $\Gamma_{b_i}=CSub(b)\!\!\mid\!\!_{AS_i}$, $\Gamma_{b_j}=CSub(b)\!\!\mid\!\!_{AS_j}$.
First, assume that $(a,b)\in {\rightarrow_+}$ is the result of an undercut.
Then either $Atoms(a^C)\subseteq Atoms(AS_i)$ or $Atoms(a^C)\subseteq Atoms(AS_j)$.
Because $CSub(b)=CSub(b)\!\!\mid\!\!_{AS_i}\cup CSub(b)\!\!\mid_{AS_j}$, we know that this undercut needs to be directed towards an argument which is either in $\Gamma_{b_i}$ or in $\Gamma_{b_j}$.
Note that we have $\Gamma_{b_i}\subseteq \mathcal{A}_i$ and $\Gamma_{b_j}\subseteq\mathcal{A}_j$.
Now we can use Proposition~\ref{Prop:Post:ReducedArgumentsHelper} to infer that, if $Atoms(a^C)\subseteq Atoms(AS_i)$, then $\Gamma_{a_i}^C\entails a^C$ and if $Atoms(a^C)\subseteq Atoms(AS_j)$, then $\Gamma_{a_j}^C\entails a^C$.
In the first case, we can construct the arguments $a',b'\in\mathcal{A}_i$ which are of the form $a':a_0,\dots,a_m\rightarrow a^C$ and $b':b_0,\dots,b_n\rightarrow\bigwedge\{b_0,\dots,b_n\}^C$, where $\{a_0,\dots,a_m\}=\Gamma_{a_i}$ and $\{b_0,\dots,b_n\}=\Gamma_{b_i}$.
Then $CSub(a)\!\!\mid\!\!_{AS_i}=CSub(a')$, $CSub(b)\!\!\mid\!\!_{AS_i}=CSub(b')$ and $(a',b')\in{\rightarrow_i}$, as required by item one of the claim. 
In the second case, we can construct the arguments $a'',b''\in\mathcal{A}_j$ which are of the form $a'':a_0,\dots,a_m\rightarrow a^C$ and $b'':b_0,\dots,b_n\rightarrow\bigwedge\{b_0,...,b_n\}^C$, where $\{a_0,\dots,a_m\}=\Gamma_{a_j}$ and $\{b_0,\dots,b_n\}=\Gamma_{b_j}$.
Then $CSub(a)\!\!\mid\!\!_{AS_j}=CSub(a'')$, $CSub(b)\!\!\mid\!\!_{AS_j}=CSub(b'')$ and $(a'',b'')\in {\rightarrow_j}$, as required by item two of the claim.
Now for the gen-rebut case.
Let $\Delta=\{\phi_0,...,\phi_m\}$ and $a^C=\lnot\bigwedge\Delta$.
We claim that either $\Gamma_{a_i}^C\entails\lnot\bigwedge \big(ADSub(b)\!\!\mid\!\!_{AS_i}\big)^C$ or $\Gamma_{a_j}^C\entails\lnot\bigwedge \big(ADSub(b)\!\!\mid\!\!_{AS_j}\big)^C$ must hold.
Towards a contradiction, assume that this is not the case.
Then there exists two interpretations, $I_1,I_2$ \suchthat $I_1\isModelOf\Gamma_{a_i}^C$, $I_1\isModelOf\bigwedge \big(ADSub(b)\!\!\mid\!\!_{AS_i}\big)^C$ and $I_2\isModelOf\Gamma_{a_j}^C$, $I_2\isModelOf\bigwedge \big(ADSub(b)\!\!\mid\!\!_{AS_j}\big)^C$.
Note that $\Gamma_{a_i}^C\cup \big(ADSub(b)\!\!\mid\!\!_{AS_i}\big)^C$ and $\Gamma_{a_j}^C\cup \big(ADSub(b)\!\!\mid\!\!_{AS_j}\big)^C$ are syntactically disjoint sets of formulas.
Now there exists an interpretation $I$ \suchthat $I\isModelOf \Gamma_{a_i}\cup\Gamma_{a_j}$, \ie $I\isModelOf\lnot\bigwedge\Delta$ and $I\isModelOf \big(ADSub(b)\!\!\mid\!\!_{AS_i}\big)^C\cup \big(ADSub(b)\!\!\mid\!\!_{AS_j}\big)^C$, \ie $I\isModelOf\bigwedge\Delta$.
Obviously, this is a contradiction.
We infer that either $\Gamma_{a_i}^C\entails\lnot\bigwedge \big(ADSub(b)\!\!\mid\!\!_{AS_i}\big)^C$ or $\Gamma_{a_j}^C\entails\lnot\bigwedge \big(ADSub(b)\!\!\mid\!\!_{AS_j}\big)^C$ must hold.
In the first case, we can construct the arguments $a',b',b''\in\mathcal{A}_i$ which are of the form $a':a_0,\dots,a_m\rightarrow\lnot\bigwedge \{b_0,\dots,b_n\}^C$, $b':b_0,\dots,b_n\rightarrow\bigwedge\{b_0,\dots,b_n\}^C$ and $b'':b'\rightarrow\lnot\lnot\bigwedge\{b_0,\dots,b_n\}^C$, where $\{a_0,\dots,a_m\}=\Gamma_{a_i}$ and $\{b_0,\dots,b_n\}=ADSub(b)\!\!\mid\!\!_{AS_i}$.
Then $a'$ gen-rebuts $b'$ and $b''$ gen-rebuts $a'$.
Analogous to the proof of the gen-rebut case in Proposition~\ref{Prop:Post:ClownCaseTwo}, we either have $(a',b')\in {\rightarrow_i}$ (if $a'\not\prec b'$), or we have $(b'',a')\in{\rightarrow_i}$ (if $a'\prec b'$).
Either way, item three of the claim is satisfied.
In the second case, we can construct the arguments $a',b',b''\in\mathcal{A}_j$ which are of the form $a':a_0,\dots,a_m\rightarrow\lnot\bigwedge\{b_0,\dots,b_n\}^C$, $b':b_0,\dots,b_n\rightarrow\bigwedge\{b_0,\dots,b_n\}^C$ and $b'':b'\rightarrow\lnot\lnot\bigwedge\{b_0,\dots,b_n\}^C$, where $\{a_0,\dots,a_m\}=\Gamma_{a_j}$ and $\{b_0,\dots,b_n\}=ADSub(b)\!\!\mid\!\!_{AS_j}$.
Again, we either have $(a',b')\in{\rightarrow_j}$ or $(b'',a')\in{\rightarrow_j}$, as required by item four of the claim.
\end{proof}
\subsubsection{Non-Interference postulate}
Finally, we are ready to show that, for any two syntactically disjoint AS $AS_1$, $AS_2$ and their union $AS_+$, for the corresponding JSBAFs $\mathcal{J}_1$, $\mathcal{J}_2$ and $\mathcal{J}_+$, any preferred labeling $L_1$ of $\mathcal{J}_1$ can be turned into a preferred labeling $L_+$ of $\mathcal{J}_+$ and any preferred labeling $L_+$ of $\mathcal{J}_+$ can be reduced to a preferred labeling $L_1$ of $\mathcal{J}_1$.
For this, we will show three sub-results:
First, we take a preferred labeling $L$ of $\mathcal{J}_1$ and show that by \enquote{adding} to $in(L)$ only the strict arguments of $\mathcal{J}_2$, we can turn $L$ into an admissible labeling of $\mathcal{J}_+$.
Next, we will show that a preferred labeling $L_+$ of $\mathcal{J}_+$ can be turned into an admissible labeling $L_1$ of $\mathcal{J}_1$ by simply \enquote{removing} all arguments of $\mathcal{A}_+\setminus\mathcal{A}_1$ from $in(L_+)$, $out(L_+)$ and $undec(L_+)$.
Lastly, we will then show that, if there is any admissible labeling $L_1'$ of $\mathcal{J}_1$ \suchthat $in(L_1) \subseteq in(L_1')$, we can \enquote{add back} the arguments of $in(L_+)$ that were removed in the previous step and create another admissible labeling from that.
By combining the first and second of these sub-results, we will be able to show that $\mathcal{C}_{\sigma}(AS_1)\!\!\mid\!\!_{Atoms(AS_1)}\subseteq\mathcal{C}_{\sigma}(AS_+)\!\!\mid\!\!_{Atoms(AS_1)}$ holds, while combining the second and third of these sub-results will prove that $\mathcal{C}_{\sigma}(AS_1)\!\!\mid\!\!_{Atoms(AS_1)}\supseteq\mathcal{C}_{\sigma}(AS_+)\!\!\mid\!\!_{Atoms(AS_1)}$ holds.
The proof for turning $L_1$ into $L_+$ works by first creating an admissible labeling $L_+^{adm}$ of $\mathcal{J}_+$ \suchthat $in(L_1)\subseteq in(L_+^{adm})$.
To do this, we use the following definition:
\begin{definition}\label{Def:Post:CombinedMinimalLabeling}
Let \ASOne, \ASTwo be two $AS$ \suchthat $AS_1|| AS_2$ and let $AS_+$ be their union.
Furthermore, let \JSBAFOne, \JSBAFTwo and \JSBAFPlus be the JSBAF's corresponding to $AS_1$, $AS_2$ and $AS_+$ respectively.
Lastly, for $i,j\in\{1,2\}$ with $i\neq j$, let $L_i\in pr(\mathcal{J}_i)$ and let $SIM_{\mathcal{J}_j}$ be the strict including minimal labeling of $\mathcal{J}_j$.
We construct the \emph{combined minimal labeling} of $L_i$ and $SIM_{\mathcal{J}_j}$, denoted $L_+^{min}$, as follows:
\begin{align*}
	I_0&
	=in(L_i)\cup in(SIM_{\mathcal{J}_j})\\
	I_{k+1}&
	=I_k\cup\{a\in\mathcal{A}_+\mid\exists (S,a)\in{\Rightarrow}_+,S\subseteq I_k\}\\
	in(L_+^{min})&
	=\bigcup\limits_{k\geq 0} I_k\\
	O_0&
	=\{a\in\mathcal{A}_+\mid\exists(b,a)\in{\rightarrow}_+,b\in in(L_+^{min})\}\\
	O_{k+1}&
	=O_k\cup\{a\in\mathcal{A}_+\mid\exists(S,b)\in{\Rightarrow}_+,a\in S,\\
	&\hspace{2.93cm}S\setminus\{a\}\subseteq in(L_+^{min}),\\
	&\hspace{2.93cm}b\in O_k\}\\
	out(L_+^{min})&
	=\bigcup\limits_{k\geq 0} O_k\\
	undec(L_+^{min})&
	=\mathcal{A}_+\setminus\big{(}in(L_+^{min})\cup out(L_+^{min})\big{)}
\end{align*}	
\end{definition}
Note that the only arguments in $\mathcal{A}_+\setminus(\mathcal{A}_i\cup\mathcal{A}_j)$ that are accepted in $L_+^{min}$, are minimal $\mathcal{A}_+$-arguments.
Furthermore, no \enquote{new} arguments in $\mathcal{A}_i$ and $\mathcal{A}_j$ are accepted by $L_+^{min}$, \ie if $a\in in(L_+^{min})\cap\mathcal{A}_i$, then $a\in in(L_i)$ and if $a\in in(L_+^{min})\cap\mathcal{A}_j$, then $a\in SIM_{\mathcal{J}_j}$.
We now prove that $L_+^{min}$ is indeed an admissible labeling.
Later, we will use this result to show that any preferred labeling of $\mathcal{J}_i$ can be turned into a preferred labeling of $\mathcal{J}_+$.
As usual, we have divided this proof into several parts to make it more accessible.
\begin{proposition}\label{Prop:Post:CombinedMinimalLabelingIsLabeling}
Let \ASOne, \ASTwo be two $AS$ \suchthat $AS_1|| AS_2$ and let $AS_+$ be their union.
Furthermore, let \JSBAFOne, \JSBAFTwo and \JSBAFPlus be the JSBAF's corresponding to $AS_1$, $AS_2$ and $AS_+$ respectively.
Lastly, for $i,j\in\{1,2\}$ with $i\neq j$, for any preferred labeling $L_i\in pr(\mathcal{J}_i)$ and for $SIM_{\mathcal{J}_j}$ being the strict including minimal labeling of $\mathcal{J}_j$, let $L_+^{min}$ be the combined minimal labeling of $L_i$ and $SIM_{\mathcal{J}_j}$.
Then $L_+^{min}$ is a labeling.
\end{proposition}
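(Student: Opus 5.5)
As in Proposition~\ref{Prop:JSBAF:SIMLabeling} and Proposition~\ref{Prop:Post:PropagatedLabelingIsLabeling}, it is immediate from the definition of $undec(L_+^{min})$ that every argument receives exactly one of the labels \Undec or one of \In/\Out. The plan therefore reduces to showing $in(L_+^{min})\cap out(L_+^{min})=\emptyset$. I will prove $in(L_+^{min})\cap O_n=\emptyset$ by induction over $n$.

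\textbf{Induction step.} I handle this first, since it is routine and copies the step in Proposition~\ref{Prop:Post:PropagatedLabelingIsLabeling}. Suppose $x\in (O_{n+1}\setminus O_n)\cap in(L_+^{min})$. Then there is a support $(S,c)$ with $x\in S$, $S\setminus\{x\}\subseteq in(L_+^{min})$ and $c\in O_n$. Because $S$ is finite, $S\subseteq I_k$ for some $k$, hence $c\in I_{k+1}\subseteq in(L_+^{min})$. This contradicts the induction hypothesis.

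\textbf{Base case.} This is the main work: there are no $x,y\in in(L_+^{min})$ with $(y,x)\in{\rightarrow_+}$. I will first record a structural description of $in(L_+^{min})$. Every accepted argument lies in one of three classes:
\begin{itemize}
\item arguments of $in(L_i)$;
\item strict arguments of $\mathcal{J}_j$;
\item minimal $\mathcal{A}_+$-arguments whose crucial sub-arguments lie in $in(L_i)\cup STR_{\mathcal{J}_j}$.
\end{itemize}
The third class contains, via Proposition~\ref{Prop:Post:PropagatedLabelingOneStepPropagation}-style reasoning, an equivalent argument in $I_1$ with the same $ADSub$ and conclusion. I also need all these arguments to be consistent. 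For $in(L_i)$ I plan to derive this from admissibility: an inconsistent argument is gen-rebutted by a strict tautology argument, which is \In, so it would be \Out. Strict arguments are consistent because $AS_j$ is consistent. I then case-split on the pair $(y,x)$ according to these classes and match each case with the edge cases of Illustration~\ref{Illus:Post:ASOneVsASTwoVsASPlus}.

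\textbf{Case analysis.} The cases go as follows.
\begin{itemize}
\item Both $x$ and $y$ in $\mathcal{A}_i$: an attack in $\mathcal{J}_+$ between such arguments is an attack in $\mathcal{J}_i$, contradicting admissibility of $L_i$.
\item $x$ strict: $x$ is unattacked, so this case cannot arise.
\item $x\in\mathcal{A}_i$ and $y\in STR_{\mathcal{J}_j}$, or the reverse: Proposition~\ref{Prop:Post:ClownCaseOne} yields a tautology argument $c$ attacking $x$ or $y$ inside $\mathcal{J}_i$ (respectively $\mathcal{J}_j$). Since $c$ is strict, the attacked argument is \Out in $L_i$ or $SIM_{\mathcal{J}_j}$, a contradiction. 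The strict attacked argument cannot be attacked at all.
\item $y$ a minimal $\mathcal{A}_+$-argument and $x\in\mathcal{A}_i$: Proposition~\ref{Prop:Post:ClownCaseTwo} gives $a'\in\mathcal{A}_i$ built from $CSub(y)\!\!\mid\!\!_{AS_i}\subseteq in(L_i)$ with $(a',x)\in{\rightarrow_i}$. By closure of $L_i$ under supports, $a'\in in(L_i)$, contradicting admissibility.
\item $x$ a minimal $\mathcal{A}_+$-argument and $y\in\mathcal{A}_i$ or strict: Proposition~\ref{Prop:Post:ClownCaseThree} gives $a',b'\in\mathcal{A}_i$ whose crucial sub-arguments lie in $in(L_i)$ (or in the $AS_j$ part, which is strict). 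Both $a'$ and $b'$ are then \In in $L_i$ or $SIM_{\mathcal{J}_j}$, yet one attacks the other.
\item Both $x$ and $y$ minimal: Proposition~\ref{Prop:Post:ClownCaseFour} gives the same kind of contradiction in $\mathcal{J}_i$ or $\mathcal{J}_j$.
\end{itemize}

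\textbf{Expected obstacle.} The hardest step will be the cases using $ADSub(b)\!\!\mid\!\!_{AS_i}$ rather than $CSub$. There I must argue that the $AS_i$-axiomatic/defeasible sub-arguments of an accepted argument are themselves \In in $L_i$. I plan to use Lemma~\ref{Lem:Post:PrefClosedUnderSubArgs} for $L_i$; on the $AS_j$ side, the relevant sub-arguments are strict. A second delicate point is the $AS_j$-restricted cases: there the contradiction must come from $SIM_{\mathcal{J}_j}$. I expect the reconstructed arguments to be strict, hence unattacked, which yields the contradiction directly.
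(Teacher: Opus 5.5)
Your structure is the paper's: a trivial induction step over the sets $O_n$, and a base case that sorts accepted arguments into $in(L_i)$, $STR_{\mathcal{J}_j}$ and minimal $\mathcal{A}_+$-arguments. Each attack pattern is then refuted via Propositions~\ref{Prop:Post:ClownCaseOne}--\ref{Prop:Post:ClownCaseFour}, with Lemma~\ref{Lem:Post:PrefClosedUnderSubArgs} for the $ADSub(\cdot)\!\!\mid\!\!_{AS_i}$ cases and strictness (hence unattackedness) on the $AS_j$ side.

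There is one gap, at a step you announce but do not carry out. Propositions~\ref{Prop:Post:ClownCaseTwo}, \ref{Prop:Post:ClownCaseThree} and~\ref{Prop:Post:ClownCaseFour} all require the minimal $\mathcal{A}_+$-arguments they are applied to to be \emph{consistent}. You justify consistency only for arguments of $in(L_i)$ and of $STR_{\mathcal{J}_j}$. For an accepted $c\in\mathcal{A}_+\setminus(\mathcal{A}_i\cup\mathcal{A}_j)$, consistency cannot come from admissibility, because $c$ lies in neither $\mathcal{J}_i$ nor $\mathcal{J}_j$. This is exactly where the assumption on combining interpretations of syntactically disjoint sets is needed.

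The paper fills the gap with a model-theoretic argument:
\begin{enumerate}
\item The crucial sub-arguments of $c$ lie in $in(L_i)\cup STR_{\mathcal{J}_j}$. Lemma~\ref{Lem:Post:PrefClosedUnderSubArgs} then gives $ADSub(c)\!\!\mid\!\!_{AS_i}\subseteq in(L_i)$. Also $ADSub(c)\!\!\mid\!\!_{AS_j}\subseteq STR_{\mathcal{J}_j}$, since sub-arguments of strict arguments are strict.
\item Each of the two conclusion sets is \emph{jointly} satisfiable, not merely element-wise. The finitely many arguments jointly support a conjunction argument. By closure under supports this argument is in $in(L_i)$, respectively strict, so it is consistent by the reasoning you already gave for those two classes.
\item The two sets are syntactically disjoint, and $ADSub(c)^C$ is their union. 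Hence some interpretation $I$ satisfies $ADSub(c)^C$.
\item By Proposition~\ref{Prop:Post:ADSubImpliesSubConcs}, $I$ satisfies every $\Gamma\subseteq Sub(c)^C$, so $c$ is consistent.
\end{enumerate}
With this inserted before the case analysis, your proof goes through as in the paper.
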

\begin{proof}
We begin by arguing that $L_+^{min}$ is a labeling.
Similar to the proof of Proposition~\ref{Prop:Post:PropagatedLabelingIsLabeling}, the actual proof proceeds by induction over the construction of $out(L_+^{min})$.
However, since it is clear from the construction of $L_+^{min}$ that we only need to ensure $in(L_+^{min})\cap out(L_+^{min})=\emptyset$ and since the induction step of this proof is trivial, we focus here on the induction start.
That is, we argue why there cannot be some $a\in in(L_+^{min})\cap O_0$:
Towards a contradiction, suppose that this does not hold, \ie there are arguments $a,b\in in(L_+^{min})$ \suchthat $(b,a)\in{\rightarrow}_+$.
We first note that both $a$ and $b$ are consistent arguments.
If they are contained in $in(L_i)$ or in $in(SIM_{\mathcal{J}_j})$, then this is easy to see because these are admissible labelings.
If they are contained in $\mathcal{A}_+\setminus(\mathcal{A}_i\cup\mathcal{A}_j)$, then we can make a simple model-theoretic argument to show the consistency:
For $c\in\{a,b\}$, let $c\in\mathcal{A}_+\setminus(\mathcal{A}_i\cup\mathcal{A}_j)$.
Towards a contradiction, assume that $c$ is inconsistent, \ie there is $\Gamma\subseteq Sub(c)^C$ \suchthat $\Gamma$ is unsatisfiable.
Take $ADSub(c)\!\!\mid\!\!_{AS_i}$ and $ADSub(c)\!\!\mid\!\!_{AS_j}$.
Note that these sets of arguments are contained in $in(L_i)$ and $in(SIM_{\mathcal{J}_j})$ respectively.
Because $L_i$ and $SIM_{\mathcal{J}_j}$ are admissible labelings, we can now infer that both $\big(ADSub(c)\!\!\mid\!\!_{AS_i}\big)^C$ and $\big(ADSub(c)\!\!\mid\!\!_{AS_j}\big)^C$ are satisfiable, \ie there exists interpretations $I_1\isModelOf\bigwedge\big(ADSub(c)\!\!\mid\!\!_{AS_i}\big)^C$ and $I_2\isModelOf\bigwedge\big(ADSub(c)\!\!\mid\!\!_{AS_j}\big)^C$.
Note that these are sets of syntactically disjoint formulas.
Now there exists an interpretation $I$ \suchthat $I\isModelOf\big(ADSub(c)\!\!\mid\!\!_{AS_i}\big)^C$ and $I\isModelOf\big(ADSub(c)\!\!\mid\!\!_{AS_i}\big)^C$.
By Proposition~\ref{Prop:Post:ADSubImpliesSubConcs} we can now infer that $\Gamma$ is satisfied by $I$, contradicting our assumptions.
Now for the actual proof:
Because we used the admissible labelings $L_i$ and $SIM_{\mathcal{J}_j}$ as a starting point for the construction of $in(L_+^{min})$, we know that we cannot have $a,b\in\mathcal{A}_i$ or $a,b\in\mathcal{A}_j$.
By Proposition~\ref{Prop:Post:ClownCaseOne} (which corresponds to case one of Illustration~\ref{Illus:Post:ASOneVsASTwoVsASPlus}), we can also infer that we cannot have $a\in\mathcal{A}_i$ and $b\in\mathcal{A}_j$ or $a\in\mathcal{A}_j$ and $b\in\mathcal{A}_i$, as this would mean either $a$ or $b$ is attacked by a strict argument and therefore not accepted by $L_i$ or $SIM_{\mathcal{J}_j}$ respectively.
This means at least one of the arguments $a$ and $b$ needs to be from $\mathcal{A}_+\setminus(\mathcal{A}_i\cup\mathcal{A}_j)$.
Suppose first that we have $a\in\mathcal{A}_i$ and $b\in\mathcal{A}_+\setminus(\mathcal{A}_i\cup\mathcal{A}_j)$ (this corresponds to case two of Illustration~\ref{Illus:Post:ASOneVsASTwoVsASPlus}).
Then $a\in in(L_i)$ by construction of $L_+^{min}$.
From Proposition~\ref{Prop:Post:ClownCaseTwo}, we can infer that there exists an argument $b'\in \mathcal{A}_i$ \suchthat $(b',a)\in {\rightarrow_i}$ and $CSub(b)\!\!\mid\!\!_{AS_i}=CSub(b')$.
By construction of $L_+^{min}$, we have $CSub(b')\subseteq in(L_i)$ and by admissibility of $L_i$, we can infer that $b'\in in(L_i)$ must also hold.
Now $(b',a)\in{\rightarrow_i}$ contradicts the admissibility of $L_i$.
The same argument can be made for the case $a\in\mathcal{A}_j$ and $b\in\mathcal{A}_+\setminus(\mathcal{A}_i\cup\mathcal{A}_j)$.
Next, suppose that we have $a\in\mathcal{A}_+\setminus(\mathcal{A}_i\cup\mathcal{A}_j)$ and $b\in\mathcal{A}_i$ (this corresponds to case three of Illustration~\ref{Illus:Post:ASOneVsASTwoVsASPlus}).
Note again that this means $b\in in(L_i)$ by construction of $L_+^{min}$.
From Proposition~\ref{Prop:Post:ClownCaseThree}, we can infer that there exist arguments $b',a'\in \mathcal{A}_i$, \suchthat $CSub(b')=CSub(b)$, $CSub(a')=CSub(a)\!\!\mid\!\!_{AS_i}$ and $(b',a')\in {\rightarrow_i}$, or that there exists arguments $b',a'\in \mathcal{A}_i$ \suchthat $CSub(b')=CSub(b)$, $CSub(a')=ADSub(a)\!\!\mid\!\!_{AS_i}$ and either $(b',a')\in{\rightarrow_i}$ or $(a',b')\in{\rightarrow_i}$.
By construction of $in(L_+^{min})$, we must have $CSub(a)\!\!\mid\!\!_{AS_i}\subseteq in(L_i)$.
Furthermore, since $L_i$ was a preferred labeling, we can infer from Lemma~\ref{Lem:Post:PrefClosedUnderSubArgs} that $CSub(b)\subseteq in(L_i)$.
Similarly, we can use Lemma~\ref{Lem:Post:PrefClosedUnderSubArgs} and $CSub(a)\!\!\mid\!\!_{AS_i}\subseteq in(L_i)$ to infer that $ADSub(a)\!\!\mid\!\!_{AS_i}\subseteq in(L_i)$ also holds.
Now we can use admissibility of $L_i$ to infer $L_i(a')=L_i(b')=\In$.
However, now $(b',a')\in{\rightarrow}_i$ and $(a',b')\in{\rightarrow_i}$ both contradict the admissibility of $L_i$.
The same argument holds for the case that $a\in\mathcal{A}_+\setminus(\mathcal{A}_i\cup\mathcal{A}_j)$ and $b\in\mathcal{A}_j$ (note that $SIM_{\mathcal{J}_j}$ is by definition closed under sub-arguments).
Lastly, let us suppose that we have $a,b\in\mathcal{A}_+\setminus(\mathcal{A}_i\cup\mathcal{A}_j)$ (this corresponds to case four of Illustration~\ref{Illus:Post:ASOneVsASTwoVsASPlus}).
The argumentation in this case is similar to the one that corresponds to case three of Illustration~\ref{Illus:Post:ASOneVsASTwoVsASPlus}.
However, now we need to use Proposition~\ref{Prop:Post:ClownCaseFour} in order to infer that there are arguments $a',b'\in\mathcal{A}_i$ or $a',b'\in\mathcal{A}_j$ for which we have $L_i(a')=L_i(b')=\In$ or $SIM_{\mathcal{J}_j}(a')=SIM_{\mathcal{J}_j}(b')=\In$.
Either way, we can again infer a contradiction, either for $L_i$ being admissible or for $SIM_{\mathcal{J}_j}$ being admissible.
As all possible cases lead to a contradiction, we conclude that $in(L_+^{min})\cap O_0=\emptyset$ must hold, just as we claimed.
\end{proof}

\begin{proposition}\label{Prop:Post:CombinedMinimalLabelingLegallyOut}
Let \ASOne, \ASTwo be two $AS$ \suchthat $AS_1|| AS_2$ and let $AS_+$ be their union.
Furthermore, let \JSBAFOne, \JSBAFTwo and \JSBAFPlus be the JSBAF's corresponding to $AS_1$, $AS_2$ and $AS_+$ respectively.
Lastly, for $i,j\in\{1,2\}$ with $i\neq j$, for any preferred labeling $L_i\in pr(\mathcal{J}_i)$ and for $SIM_{\mathcal{J}_j}$ being the strict including minimal labeling of $\mathcal{J}_j$, let $L_+^{min}$ be the combined minimal labeling of $L_i$ and $SIM_{\mathcal{J}_j}$.
We have $a\in out(L_+^{min})$, iff $a$ is legally \Out \wrt $L_+^{min}$.
\end{proposition}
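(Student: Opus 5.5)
We prove the two directions separately, following the proof of Proposition~\ref{Prop:Post:PropagatedLabelingLegallyOut}. By Proposition~\ref{Prop:Post:CombinedMinimalLabelingIsLabeling}, $L_+^{min}$ is a labeling, so every legality check below is well defined.

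\emph{Direction $\leftarrow$.} This direction is routine. If $a$ is legally \Out \wrt $L_+^{min}$ because of an attack $(b,a)\in{\rightarrow_+}$ with $b\in in(L_+^{min})$, then $a\in O_0$. Otherwise $a$ is legally \Out because of a support chain $\{(S_0,b_0),\dots,(S_n,b_n)\}$. Then $a\in S_0$, $S_0\setminus\{a\}\subseteq in(L_+^{min})$ and $b_0\in out(L_+^{min})$. Hence $b_0\in O_k$ for some $k$, and so $a\in O_{k+1}\subseteq out(L_+^{min})$.

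\emph{Direction $\rightarrow$.} Take $a\in O_{k+1}\setminus O_k$. Unrolling the construction of the $O_k$ gives a finite support chain $\mathcal{C}=\{(S_0,b_0),\dots,(S_n,b_n)\}$ and an attack $(y,b_n)\in{\rightarrow_+}$ with $y\in in(L_+^{min})$. This chain satisfies items $2.a$--$2.d$ of Definition~\ref{Def:JSBAF:LegalLabeling}. The only possible failure is item $2.e$: some $d\in S_0\setminus\{a\}$ may satisfy $d\prec a$.

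In that case we look for an alternative reason for rejection, as in Proposition~\ref{Prop:Post:PropagatedLabelingLegallyOut}.
\begin{itemize}
\item Collect the \In-labelled members of the chain: $S=\{z_0,\dots,z_m\}=(S_0\setminus\{a\})\cup\bigcup_{0<i\leq n}(S_i\setminus\{b_{i-1}\})$.
\item If $(y,b_n)$ is an undercut, it is directed at a sub-argument of $a$ or of some $z_l$. An undercut on some $z_l$ is impossible, because $y$ and $z_l$ are both in $in(L_+^{min})$ and Proposition~\ref{Prop:Post:CombinedMinimalLabelingIsLabeling} rules out attacks among accepted arguments. So $y$ undercuts $a$, and $a\in O_0$ is legally \Out directly.
\item If $(y,b_n)$ is a gen-rebut, the same model-theoretic argument gives $ADSub(a)^C\entails\lnot\bigwedge\big(ADSub(y)\cup ADSub(z_0)\cup\dots\cup ADSub(z_m)\big)^C$.
\item Build $y':y,z_0,\dots,z_m\rightarrow\bigwedge\{y,z_0,\dots,z_m\}^C$, which lies in $in(L_+^{min})$ by construction, and $x':ADSub(a)\rightarrow\lnot\bigwedge\Delta$ for the corresponding $\Delta$.
\item From $d\prec a$ and $DR(d)\subseteq DR(y')$ we get $(x',y')\in{\rightarrow_+}$.
\item Hence it suffices to show that $x'$ is \Out. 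Since $DR(x')\subseteq DR(a)$ and $ADSub(x')^C\subseteq ADSub(a)^C$, the attack-and-support-chain transfer used in Proposition~\ref{Prop:Post:AlternativeReasonOut} should then yield a reason for $a$ to be legally \Out \wrt $L_+^{min}$.
\end{itemize}

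\textbf{Main obstacle.} The difficulty is establishing $L_+^{min}(x')=\Out$, or more directly that $a$ is legally \Out. The propagated-labeling proof relied on $L$ being admissible, and we have no such global assumption for $L_+^{min}$. The available substitutes are the following.
\begin{itemize}
\item $L_i$ is preferred, so by Lemma~\ref{Lem:Post:PrefClosedUnderSubArgs} it is closed under sub-arguments.
\item $SIM_{\mathcal{J}_j}$ is admissible by Lemma~\ref{Lem:JSBAF:SIMIsAdm}.
\item Accepted arguments outside $\mathcal{A}_i\cup\mathcal{A}_j$ are minimal $\mathcal{A}_+$-arguments, and these are consistent.
\end{itemize}

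I would first split on where $a$ and the attacker $y$ live, in the four edge cases of Illustration~\ref{Illus:Post:ASOneVsASTwoVsASPlus}. The cross-system attacks are reduced via Propositions~\ref{Prop:Post:ClownCaseOne}--\ref{Prop:Post:ClownCaseFour} and reduced versions (Lemma~\ref{Lem:Post:ReducedArgumentsLabelImplication}) to attacks inside $\mathcal{J}_i$ or $\mathcal{J}_j$.
\begin{itemize}
\item In $\mathcal{J}_i$, the attack $(x',y')$ reduces to an attack between $\mathcal{A}_i$-arguments whose crucial sub-arguments are \In in $L_i$. Admissibility of $L_i$ then forces the reduced version of $x'$ to be \Out, and Lemma~\ref{Lem:Post:ReducedArgumentsLabelImplication} lifts this to the rejection of $a$.
\item In $\mathcal{J}_j$, rejections come only from strict arguments. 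This case should collapse to an attack by a strict argument, which makes $a$ legally \Out via $O_0$.
\end{itemize}
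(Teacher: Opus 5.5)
Your skeleton matches the paper's. That covers the $\leftarrow$ direction, unrolling $a\in O_{k+1}\setminus O_k$ into a chain that can only violate item~2.e of Definition~\ref{Def:JSBAF:LegalLabeling}, and building $y'\in in(L_+^{min})$ and $x'$ over $ADSub(a)$ with $(x',y')\in{\rightarrow_+}$. It also covers pushing this attack into $\mathcal{J}_i$ or $\mathcal{J}_j$ via Propositions~\ref{Prop:Post:ClownCaseOne}--\ref{Prop:Post:ClownCaseFour}, where admissibility of $L_i$ or $SIM_{\mathcal{J}_j}$ rejects the transported attacker. Your separate undercut case is correct, and it covers something the paper passes over by tacitly writing $y^C=\lnot\bigwedge\Gamma$. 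The case split, however, must be on the locations of $x'$ and $y'$, not of $a$ and $y$. $y'$ can leave $\mathcal{A}_i\cup\mathcal{A}_j$ because of some $z_l$. When $x'\notin\mathcal{A}_i\cup\mathcal{A}_j$, it must first be replaced by a consistent minimal $\mathcal{A}_+$-argument $x''$ built from reduced versions (Proposition~\ref{Prop:Post:ReducedArgsExistence}) of the elements of $ADSub(a)$. Only then do Propositions~\ref{Prop:Post:ClownCaseTwo} and~\ref{Prop:Post:ClownCaseFour} apply.

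The genuine gap is your final lifting step. Lemma~\ref{Lem:Post:ReducedArgumentsLabelImplication} cannot turn \enquote{the transported attacker $x'''\in\mathcal{A}_i$ is \Out in $L_i$} into \enquote{$a$ is legally \Out \wrt $L_+^{min}$}, for two reasons.
\begin{itemize}
\item Its second half applies only to labelings of $\mathcal{J}_+$ already known to satisfy \enquote{\Out iff legally \Out}. For $L_+^{min}$ that is precisely the proposition you are proving, so invoking it is circular.
\item It relates an argument to a reduced version with the \emph{same} conclusion. But $x'''$ has a conclusion of the form $\lnot\bigwedge(\cdots)$ and is related to $a$ only by $DR(x''')\subseteq DR(a)$ and $ADSub(x''')^C\subseteq ADSub(a)^C$. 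At best you would reach $x'$, and passing from $x'$ to $a$ is Proposition~\ref{Prop:Post:AlternativeReasonOut}, which needs exactly the admissibility you observed is unavailable for $L_+^{min}$.
\end{itemize}

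The missing idea, used in the paper, is to transfer the \emph{reason} for $x'''$'s rejection in $L_i$ directly to $a$. This uses only the two inclusions above and $in(L_i)\subseteq in(L_+^{min})$.
\begin{itemize}
\item If $x'''$ is attacked by $u\in in(L_i)$, then $a$ is attacked by $u$ (undercut), or by $u':u\rightarrow\lnot\bigwedge ADSub(x''')^C$ from Proposition~\ref{Prop:Post:OneStepAttacker} (gen-rebut). In the latter case $u'\not\prec_+ a$ follows from $u\not\prec_i x'''$. Either way $a\in O_0$.
\item If $x'''$ is rejected through a support chain whose head $b_n$ is attacked by $c\in in(L_i)$, build supports containing $a$ and the \In-members of that chain. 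Their last head is attacked by $c':c\rightarrow\lnot\bigwedge ADSub(b_n)^C\in in(L_i)$, so the new heads lie in $out(L_+^{min})$ by construction of the $O_k$.
\end{itemize}
The paper collapses the second case into the single support $(S\cup\{a\},b')$. Re-rooting the original chain at $a$, with conjunctions as new heads, is safer. Then item~2.e is inherited from $x'''\preceq_i d$ on $S_0$ alone, via $DR(x''')\subseteq DR(a)$ and ${\leq_{r_i}}\subseteq{\leq_r^+}$.

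When $a\in\mathcal{A}_i$, this transfer is just Proposition~\ref{Prop:Post:AlternativeReasonOut} inside the admissible $L_i$. Your $\mathcal{J}_j$ bullet needs the same explicit transfer, not the reduced-arguments lemma; there it does reduce to a strict attacker of $a$.
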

\begin{proof}
From the construction of $out(L_+^{min})$, it is easy to see that, if $a$ is legally \Out \wrt $L_+^{min}$, then $a\in out(L_+^{min})$.
We therefore focus here on the other direction, \ie showing that if $a\in out(L_+^{min})$, then $a$ is legally \Out \wrt $L_+^{min}$.
Similar to the proof of Proposition~\ref{Prop:Post:PropagatedLabelingLegallyOut}, it is not immediately clear from the construction of $out(L_+^{min})$, that this holds.
More precisely, consider the case that we have $x\in O_{k+1}\setminus O_k$.
Then there must exist some support chain $\mathcal{C}=\big{\{}(S_0,b_0),\dots,(S_n,b_n)\big{\}}\subseteq{\Rightarrow}$ \suchthat $x\in S_0$, for all $0<k\leq n$ we have $S_k\setminus\{b_{k-1}\}\subseteq in(L_+^{min})$, for $S_0$ we have $S_0\setminus\{x\}\subseteq in(L_+^{min})$ and there exists an attack $(y,b_n)\in{\rightarrow}$ with $L_+^{min}(y)=\In$.
However, it is not necessarily the case that we have $x\preceq_+ d$ for all $d\in S_0\setminus\{x\}$.
Assume that there is $d\in S_0\setminus\{x\}$ \suchthat $d\prec_+ x$.
As in the proof of Proposition~\ref{Prop:Post:PropagatedLabelingLegallyOut}, we will show that this implies the existence of an \enquote{alternative reason} for why $x$ is legally labeled \Out \wrt $L_+^{min}$.

We begin by constructing some helper-arguments:
Let $S=\big(S_0\setminus\{x\}\big)\cup\bigcup\limits_{0<k\leq n}\big(S_k\setminus\{b_{k-1}\}\big)$, \ie $S$ contains all arguments labeled \In along the support chain $\mathcal{C}$.
Analogous to the proof of Proposition~\ref{Prop:Post:PropagatedLabelingLegallyOut}, for $\{z_0,\dots,z_m\}=S$ we have $ADSub(x)^C\entails\lnot\bigwedge\big(ADSub(y)\cup ADSub(z_0)\cup\dots\cup ADSub(z_m)\big)^C$.
Now we construct the arguments $y':y,z_0,\dots,z_m\rightarrow\bigwedge\{y,z_0,\dots,z_m\}^C$ and $x':x_0,\dots,x_k\rightarrow\lnot\bigwedge\Delta$, where $\{x_0,\dots,x_k\}=ADSub(x)$ and $\Delta=\big(ADSub(y)\cup ADSub(z_0)\cup\dots\cup ADSub(z_m)\big)^C$.
Clearly, $x'$ gen-rebuts $y'$.
Furthermore, because we have $d\prec_+ x$, there must exist $r_d\in DR(d)\subseteq DR(y')$ \suchthat for all $r_x\in DR(x)=DR(x')$, we have $r_d\leq_r^+ r_x$.
This implies $y'\preceq_+ x'$, therefore $x'\not\prec_+ y'$ and thus $(x',y')\in{\rightarrow}_+$.
Note that by construction of $L_+^{min}$, $y'\in in(L_+^{min})$ holds.
Next, we make a case distinction based on the origins of $x',y'$:
Suppose first that we have $x',y'\in \mathcal{A}_i$.
By construction of $in(L_+^{min})$, we can infer that $y'\in in(L_i)$ must hold and by admissibility of $L_i$, this implies $x'\in out(L_i)$.
By construction of $x'$, we obviously have $DR(x')\subseteq DR(x)$ and $ADSub(x')^C\subseteq ADSub(x)^C$.
By assumption, $L_i$ is a preferred labeling, which means it is admissible.
Now we can apply Proposition~\ref{Prop:Post:AlternativeReasonOut} to infer $x\in out(L_i)$, which implies that $x$ is legally \Out \wrt $L_i$ either due to an attack or due to a support chain.
Because $in(L_i)\subseteq in(L_+^{min})$ we can now infer that $x$ is legally \Out \wrt $L_+^{min}$ as required.
The case $x',y'\in\mathcal{A}_j$ can be proven analogously.
Next, assume that we have $x'\in \mathcal{A}_j$ and $y'\in\mathcal{A}_i$ (this corresponds to case one of Illustration~\ref{Illus:Post:ASOneVsASTwoVsASPlus}).
By Proposition~\ref{Prop:Post:ClownCaseOne}, we can now infer that either $x'$ or $y'$ are inconsistent.
Similar to before, we can infer from the construction of $L_+^{min}$ that $y'\in in(L_i)$ must hold.
Since $L_i$ is an admissible labeling, $y'$ must be consistent, therefore $x'$ must be inconsistent.
This means there exists a strict argument $\overline{x'}$ and an attack $(\overline{x'},x)\in{\rightarrow_j}$.
By Proposition~\ref{Prop:Post:OneStepAttacker}, we can use this strict argument to construct a strict argument which is attacking $x$.
This means $x$ is legally \Out \wrt $L_+^{min}$ as required.
The case $x'\in \mathcal{A}_i$ and $y'\in\mathcal{A}_j$ can be proven analogously.
Next, assume that we have $y'\in\mathcal{A}_+\setminus(\mathcal{A}_i\cup\mathcal{A}_j)$, while $x'\in\mathcal{A}_i$ holds (this corresponds to case three of Illustration~\ref{Illus:Post:ASOneVsASTwoVsASPlus}).
Note that by construction of $in(L_+^{min})$, this implies that $y'$ is a minimal $\mathcal{A}_+$-argument.
Furthermore, we want to point out that we can assume both $y'$ and $x'$ are consistent:
For $y'$ we can make a model-theoretic argument similar as in the proof of Proposition~\ref{Prop:Post:CombinedMinimalLabelingIsLabeling}, while if $x'$ is inconsistent then we can immediately infer that $x$ must be inconsistent as well.
By Proposition~\ref{Prop:Post:ClownCaseThree}, we can now have two cases:
\begin{enumerate}
	\item{There exists arguments $x'',y''\in\mathcal{A}_i$ \suchthat $CSub(x'')=CSub(x')$, $CSub(y'')=CSub(y')\!\!\mid\!\!_{AS_i}$ and $(x'',y'')\in{\rightarrow_i}$.}
	\item{There exist arguments $x'',y''\in\mathcal{A}_i$ \suchthat $CSub(x'')=CSub(x')$, $CSub(y'')=ADSub(y')\!\!\mid\!\!_{AS_i}$ and either $(x'',y'')\in{\rightarrow_i}$, or $(y'',x'')\in{\rightarrow_i}$}
\end{enumerate}
In the first case, we can infer from $y'\in in(L_+^{min})$ that $CSub(y')\!\!\mid\!\!_{AS_i}\subseteq in(L_i)$ must hold.
Because $L_i$ is an admissible labeling, we can now infer $y''\in in(L_i)$, which implies $x''\in out(L_i)$.
We want to point out that by construction of $x'$ and $x''$, we have $ADSub(x)=ADSub(x')=ADSub(x'')$ and $DR(x)=DR(x')=DR(x'')$.
In particular, this means $DR(x'')\subseteq DR(x)$ and $ADSub(x'')^C\subseteq ADSub(x)^C$.
Since $L_i$ is an admissible labeling, we can now apply Proposition~\ref{Prop:Post:AlternativeReasonOut} again to infer that $L_i(x)=\Out$ must hold.
This implies that there is an attack or a support chain \suchthat $x$ is legally \Out \wrt $L_i$.
By $in(L_i)\subseteq in(L_+^{max})$ we can now infer that $x$ is legally \Out \wrt $L_+^{min}$ as required.
In the second case, we first use Lemma~\ref{Lem:Post:PrefClosedUnderSubArgs} (and the fact that $SIM_{\mathcal{J}_j}$ is by definition closed under sub-arguments) to infer that $ADSub(y')\!\!\mid\!\!_{AS_i}\subseteq in(L_i)$ holds.
This, in turn, implies $y''\in in(L_i)$.
Now, if $(y'',x'')\in{\rightarrow_i}$, we can use the argument $y''$ as a reason for why $x''$ is legally \Out \wrt $L_i$.
On the other hand, if $(x'',y'')\in{\rightarrow_i}$, we can use the admissibility of $L_i$ to infer that $x''$ must be legally \Out \wrt $L_i$.
Either way, we can again use the reason for why $x''$ is legally \Out \wrt $L_i$ to construct a reason for why $x$ is legally \Out \wrt $L_i$.
By $in(L_i)\subseteq in(L_+^{min})$ we can infer that $x$ is legally \Out \wrt $L_+^{min}$ as required.
The case $y'\in \mathcal{A}_+\setminus(\mathcal{A}_i\cup\mathcal{A}_j)$ and $x'\in\mathcal{A}_j$ can be proven analogously.
For our last few cases, suppose that we have $x'\in\mathcal{A}_+\setminus(\mathcal{A}_i\cup\mathcal{A}_j)$.
Note that $x'$ doesn't necessarily need to be a minimal $\mathcal{A}_+$-argument.
Thus, we first construct such a minimal $\mathcal{A}_+$-argument so that we can apply either Proposition~\ref{Prop:Post:ClownCaseTwo} or Proposition~\ref{Prop:Post:ClownCaseFour}:
Remember that $x'$ is of the form $x':x_0,\dots,x_k\rightarrow\lnot\bigwedge\Delta$, where for each $0\leq l\leq k$, we have $x_l\in ADSub(x)$.
This implies $Atoms(x_l^C)\subseteq Atoms(AS_i)$ or $Atoms(x_l^C)\subseteq Atoms(AS_j)$ for each of these arguments.
Note that, if any of the arguments $x_l$ is inconsistent, then $x$ is also inconsistent.
This implies $x$ is attacked by a strict argument and thus legally \Out \wrt $L_+^{min}$ because of this strict attacker.
We therefore assume that all the arguments $x_l$ are consistent.
Now we can use Proposition~\ref{Prop:Post:ReducedArgsExistence} to infer that for each $0\leq l\leq k$, there exists an argument $x_l'$ which is the reduced version of $x_l$ either \wrt $AS_i$ or \wrt $AS_j$ (depending on whether $Atoms(x_l^C)\subseteq Atoms(AS_i)$ or $Atoms(x_l^C)\subseteq Atoms(AS_j)$).
With these reduced versions, we can construct the argument $x''$ which is of the form $x'':x_0',\dots x_k'\rightarrow\lnot\bigwedge\Delta$.
Note that we have $DR(x'')\subseteq DR(x')=DR(x)$, which implies $(x'',y')\in{\rightarrow_+}$.
Furthermore, $x''$ is a minimal $\mathcal{A}_+$-argument and by construction of $x'$ and $x''$, we have $ADSub(x'')^C\subseteq ADSub(x)^C$.
Lastly, we can assume that $x''$ is consistent, since otherwise $x$ would be inconsistent and legally \Out \wrt $L_+^{min}$ as required.
Now, assume that we have $y'\in\mathcal{A}_i$ (this corresponds to case two of Illustration~\ref{Illus:Post:ASOneVsASTwoVsASPlus}).
By construction of $y'$, this implies $y'\in in(L_i)\subseteq in(L_+^{min})$.
Furthermore, we can infer that $y'$ is consistent.
Because $x''$ is a (consistent) minimal $\mathcal{A}_+$-argument, we can now use Proposition~\ref{Prop:Post:ClownCaseTwo} to infer that there exists an argument $x'''\in\mathcal{A}_i$ \suchthat $CSub(x'')\!\!\mid\!\!_{AS_i}=CSub(x''')$ and $(x''',y')\in{\rightarrow_i}$.
By admissibility of $L_i$, we can now infer that $x'''\in out(L_i)$ must hold.
This implies $x'''$ is legally \Out \wrt $L_i$, which means there is either an attack $(u,x''')\in{\rightarrow_i}$ or a support chain $\mathcal{C}=\big{\{}(S_0,b_0),\dots,(S_n,b_n)\big{\}}$ giving us a reason for why $x'''$ is legally \Out \wrt $L_i$.
Note that, by $CSub(x'')\!\!\mid\!\!_{AS_i}=CSub(x''')$, we have $DR(x''')\subseteq DR(x'')\subseteq DR(x)$ and $ADSub(x''')^C\subseteq ADSub(x'')^C\subseteq ADSub(x)^C$.
Suppose first that we have an attack $(u,x''')\in{\rightarrow_i}$ with $u\in in(L_i)$.
Then $u$ is either undercutting or gen-rebutting $x'''$.
If $u$ is undercutting $x'''$, then we can use $DR(x''')\subseteq DR(x)$ to infer that $(u,x)\in{\rightarrow_+}$ also holds.
If $u$ is gen-rebutting $x'''$, then we can again use Proposition~\ref{Prop:Post:OneStepAttacker} as well as $ADSub(x''')^C\subseteq ADSub(x)^C$ and $DR(x''')\subseteq DR(x)$ to infer that there exists $u'\in in(L_i)$ \suchthat $(u',x)\in{\rightarrow_+}$.
Either way, there exists an attacker of $x$ which is labeled \In in $L_i$.
By $in(L_i)\subseteq in(L_+^{min})$ we can infer that $x$ is legally \Out \wrt $L_+^{min}$ as required.
Now suppose that $x'''$ is legally \Out \wrt $L_i$ due to a support chain $\mathcal{C}=\big{\{}(S_0,b_0),\dots,(S_n,b_n)\big{\}}$ and an attack $(c,b_n)\in{\rightarrow_i}$ with $x'''\in S_0$ and $L_i(c)=\In$.
Let $S=S_0\setminus\{x'''\}\cup\bigcup\limits_{0<k\leq n}S_k\setminus\{b_{k-1}\}$ be the set of all arguments labeled \In in this support chain.
If $(c,b_n)\in{\rightarrow_i}$ is the result of an undercut, we can use $S\subseteq in(L_i)$ to infer that $c$ must be undercutting $x'''$.
By $DR(x''')\subseteq DR(x)$, we can infer that $c$ is also undercutting $x$ and since $in(L_i)\subseteq in(L_+^{max})$, this means $x$ is legally \Out \wrt $L_+^{min}$ as required.
Now suppose that $(c,b_n)\in{\rightarrow_i}$ is the result of a gen-rebut.
We construct the argument $b'\in\mathcal{A}_+\setminus(\mathcal{A}_i\cup\mathcal{A}_j)$ which is of the form $b':x,d_0,\dots d_m\rightarrow\bigwedge\{x,d_0,\dots,d_m\}^C$, for $\{d_0,\dots,d_m\}=S$.
Furthermore, we use Proposition~\ref{Prop:Post:OneStepAttacker} to construct the argument $c'$ which is of the form $c':c\rightarrow\lnot\bigwedge ADSub(b_n)^C$.
Note that we have $ADSub(b_n)^C\subseteq ADSub(b')^C$ because $ADSub(x''')^C\subseteq ADSub(x)^C$.
This means $c'$ is also gen-rebutting $b'$.
Because $(c,b_n)\in{\rightarrow_i}$ is the result of a gen-rebut, we have $c\not\prec_i b_n$.
Since $DR(x''')\subseteq DR(x)$, we can now infer that $c\not\prec_+ b'$ also holds.
Because $DR(c)=DR(c')$ this means $(c',b')\in{\rightarrow_+}$.
By admissibility of $L_i$, we can infer $c'\in in(L_i)\subseteq in(L_+^{min})$.
By construction of $L_+^{min}$ we now have $b'\in O_0\subseteq out(L_+^{min})$.
We can make a simple inductive argument to show that we have $x'''\preceq_i d$ for all $d\in S$.
This means for all $d\in S$, there is $r_x\in DR(x''')$ \suchthat for all $r_d\in DR(d)$, $r_x\leq_i r_d$.
Because $DR(x''')\subseteq DR(x)$ and because ${\leq_{r_i}}\subseteq {\leq_+}$ by construction of $AS_+$ from $AS_i$ and $AS_j$, we now have $x\preceq_+ d$ for all $d\in S$.
Since $S\subseteq in(L_i)\subseteq in(L_+^{min})$, we can now infer that $x$ is legally \Out \wrt $L_+^{min}$ due to the support chain $\mathcal{C}'=\big{\{}(S\cup\{x\},b')\big{\}}$ and the attack $(c',b')\in{\rightarrow_+}$.
The case $y'\in\mathcal{A}_j$ can be proven analogously.
Lastly, suppose that we have $y'\in\mathcal{A}_+\setminus(\mathcal{A}_i\cup\mathcal{A}_j)$ (this corresponds to case four of Illustration~\ref{Illus:Post:ASOneVsASTwoVsASPlus}).
Then both $x'$ and $y'$ are minimal $\mathcal{A}_+$-arguments.
Note that we can argue for the consistency of $y'$ and $x'$ as before:
For $y'$ we can make a model-theoretic argument for its consistency, while inconsistency of $x'$ implies that $x$ is legally \Out \wrt $L_+^{min}$.
Now we can use Proposition~\ref{Prop:Post:ClownCaseFour} to infer that one of the following four cases must hold:
\begin{enumerate}
	\item{
	There are $x'',y''\in \mathcal{A}_i$ \suchthat $CSub(x')\!\!\mid\!\!_{AS_i}=CSub(x'')$, $CSub(y')\!\!\mid\!\!_{AS_i}=CSub(y'')$ and $(x'',y'')\in {\rightarrow_i}$.
	}
	\item{
	There are $x'',y''\in \mathcal{A}_j$ \suchthat $CSub(x')\!\!\mid\!\!_{AS_j}=CSub(x'')$, $CSub(y')\!\!\mid\!\!_{AS_j}=CSub(y'')$ and $(x'',y'')\in {\rightarrow_j}$.
	}
	\item{
	There are $x'',y''\in\mathcal{A}_i$ \suchthat $CSub(x')\!\!\mid\!\!_{AS_i}=CSub(x'')$, $ADSub(y')\!\!\mid\!\!_{AS_i}=CSub(y'')$ and either $(x'',y'')\in {\rightarrow_i}$ or $(y'',x'')\in{\rightarrow_i}$.
	}
	\item{
	There are $x'',y''\in \mathcal{A}_j$ \suchthat $CSub(x')\!\!\mid\!\!_{AS_j}=CSub(x'')$, $ADSub(y')\!\!\mid\!\!_{AS_j}=CSub(y'')$ and either $(x'',y'')\in {\rightarrow_j}$ or $(y'',x'')\in{\rightarrow_j}$.
	}
\end{enumerate}
With the same reasoning that we used to prove the cases which corresponded to cases two and three of Illustration~\ref{Illus:Post:ASOneVsASTwoVsASPlus}, we can show that in all of the four cases described above, $x''$ needs to be legally \Out \wrt $L_i$ or $L_j$ (depending on the specific case).
Because we have $DR(x'')\subseteq DR(x)$, $ADSub(x'')^C\subseteq ADSub(x)^C$ and $\big(in(L_i)\cup in(SIM_{\mathcal{J}_j})\big)\subseteq in(L_+^{min})$, we can use this to infer that $x$ is legally \Out \wrt $L_+^{min}$, similar to the cases corresponding to cases two and three of Illustration~\ref{Illus:Post:ASOneVsASTwoVsASPlus}.
We conclude that regardless of the origins of $x'$ and $y'$, if $x\in out(L_+^{min})$, then $x$ is legally \Out \wrt $L_+^{min}$.
\end{proof}

\begin{proposition}\label{Prop:Post:CombinedMinimalLabelingLegallyIn}
Let \ASOne, \ASTwo be two $AS$ \suchthat $AS_1|| AS_2$ and let $AS_+$ be their union.
Furthermore, let \JSBAFOne, \JSBAFTwo and \JSBAFPlus be the JSBAF's corresponding to $AS_1$, $AS_2$ and $AS_+$ respectively.
Lastly, for $i,j\in\{1,2\}$ with $i\neq j$, for any preferred labeling $L_i\in pr(\mathcal{J}_i)$ and for $SIM_{\mathcal{J}_j}$ being the strict including minimal labeling of $\mathcal{J}_j$, let $L_+^{min}$ be the combined minimal labeling of $L_i$ and $SIM_{\mathcal{J}_j}$.
If $a\in in(L_+^{min})$, then $a$ is legally \In \wrt $L_+^{min}$.
\end{proposition}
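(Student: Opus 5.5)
We have to show two things for $x\in in(L_+^{min})$: every attacker of $x$ is labeled \Out in $L_+^{min}$, and every support $(S,c)\in{\Rightarrow_+}$ with $x\in S$ and $x\preceq_+ b$ for all $b\in S\setminus\{x\}$ satisfies one of items 1.a--1.c of Definition~\ref{Def:JSBAF:LegalLabeling}. The plan mirrors Proposition~\ref{Prop:Post:PropagatedLabelingLegallyIn}, with Propositions~\ref{Prop:Post:CombinedMinimalLabelingIsLabeling} and~\ref{Prop:Post:CombinedMinimalLabelingLegallyOut} as the main tools. We first record a structural fact, as noted after Definition~\ref{Def:Post:CombinedMinimalLabeling}. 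Every $x\in in(L_+^{min})$ lies in one of three sets: $in(L_i)$, $in(SIM_{\mathcal{J}_j})$, or the set of minimal $\mathcal{A}_+$-arguments whose crucial sub-arguments lie in $in(L_i)\cup in(SIM_{\mathcal{J}_j})$. Moreover, every such $x$ is consistent, by the model-theoretic argument used in the proof of Proposition~\ref{Prop:Post:CombinedMinimalLabelingIsLabeling}.

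\textbf{Supports.} These are the easy part, following the end of the proof of Proposition~\ref{Prop:Post:PropagatedLabelingLegallyIn}. Since $in(L_+^{min})$ is closed under supports by construction, $S\subseteq in(L_+^{min})$ forces $c\in in(L_+^{min})$, which gives 1.a. If $c$ is \Undec, then some element of $S\setminus\{x\}$ is not \In, which gives 1.b. If $c\in out(L_+^{min})$ and exactly one $d\in S\setminus\{x\}$ is not \In, then $d\in O_{k+1}$ by the construction of $out(L_+^{min})$, which gives 1.c. If at least two such elements are not \In, they are either both \Undec or at least one is \Out, which again gives 1.c.

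\textbf{Attackers.} This is the substantive part. Let $(y,x)\in{\rightarrow_+}$. We aim to show that $y$ is legally \Out with respect to $L_+^{min}$; Proposition~\ref{Prop:Post:CombinedMinimalLabelingLegallyOut} then gives $L_+^{min}(y)=\Out$. We split into cases by the origin of $x$ and, where needed, by the origin of $y$.

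If $x\in in(L_i)$ and $y\in\mathcal{A}_i$, admissibility of $L_i$ gives $L_i(y)=\Out$. The same attack or support chain that makes $y$ legally \Out in $L_i$ works in $L_+^{min}$, because $in(L_i)\subseteq in(L_+^{min})$ and $out(L_i)\subseteq out(L_+^{min})$. The latter inclusion follows by induction on the construction of $out(L_i)$ together with Proposition~\ref{Prop:Post:CombinedMinimalLabelingLegallyOut}. The analogous argument handles $x\in in(SIM_{\mathcal{J}_j})$ with $y\in\mathcal{A}_j$.

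If $y$ lies outside the framework of $x$, we first reduce $y$. If $y$ is inconsistent, it is attacked by a strict argument (as in Proposition~\ref{Prop:Post:ClownCaseOne}), so it is legally \Out. Otherwise we replace $y$ by a minimal $\mathcal{A}_+$-argument $y''$, built from reduced versions of $ADSub(y)$ via Proposition~\ref{Prop:Post:ReducedArgsExistence}, after first passing to a one-step attacker via Proposition~\ref{Prop:Post:OneStepAttacker} in the gen-rebut case. We then apply the relevant edge-case result:
\begin{itemize}
\item Proposition~\ref{Prop:Post:ClownCaseOne} if $y\in\mathcal{A}_j$;
\item Proposition~\ref{Prop:Post:ClownCaseTwo} for $y''$ attacking $x\in\mathcal{A}_i$;
\item Propositions~\ref{Prop:Post:ClownCaseThree} and~\ref{Prop:Post:ClownCaseFour} when $x$ is itself a minimal $\mathcal{A}_+$-argument.
\end{itemize}
Each yields a pair of arguments inside $\mathcal{A}_i$ (or $\mathcal{A}_j$) in an attack relation, where the $x$-side counterpart is \In, using Lemma~\ref{Lem:Post:PrefClosedUnderSubArgs} for $ADSub(x)\!\!\mid\!\!_{AS_i}\subseteq in(L_i)$. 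Admissibility then forces the $y$-side counterpart $y'''$ to be \Out in $L_i$, or in $SIM_{\mathcal{J}_j}$. Because $DR(y''')\subseteq DR(y)$ and $ADSub(y''')^C\subseteq ADSub(y)^C$, we transfer this reason to $y$ exactly as in the case of Proposition~\ref{Prop:Post:CombinedMinimalLabelingLegallyOut} corresponding to case two of Illustration~\ref{Illus:Post:ASOneVsASTwoVsASPlus}. A direct attack transfers via undercut or Proposition~\ref{Prop:Post:OneStepAttacker}. A support chain transfers by building $b':y,d_0,\dots,d_m\rightarrow\bigwedge\{\dots\}$ attacked by $c'$.

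\textbf{Main obstacle.} The hardest step is the mixed case in which an edge-case proposition returns the reversed attack, from the $y$-side to the $x$-side counterpart. There the rejection must come from the strict argument or preference structure rather than from admissibility. I would first try to show that the reversed attack would make the $x$-side counterpart legally \Out in $L_i$, contradicting its \In label; that contradiction rules the reversed case out entirely.
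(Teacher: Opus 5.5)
Your handling of the supports, and of attackers from the same framework as $x$, matches the paper. Your cross-framework argument also works at the paper's level of rigour, but it takes a different route.

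\textbf{Your route versus the paper's.} You send every mixed case through Propositions~\ref{Prop:Post:ClownCaseOne}--\ref{Prop:Post:ClownCaseFour}, applied to a minimal version of the attacker. You then carry the \Out label of the $\mathcal{A}_i$-counterpart back to $y$ with the alternative-reason transfer from the proof of Proposition~\ref{Prop:Post:CombinedMinimalLabelingLegallyOut}. Among the edge-case results, the paper uses only Proposition~\ref{Prop:Post:ClownCaseOne} here.
\begin{itemize}
\item When $a\in\mathcal{A}_i$ is attacked from outside $\mathcal{A}_i\cup\mathcal{A}_j$, the paper takes a \emph{reduced version} $x'\in\mathcal{A}_i$ of the (one-step) attacker. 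Since $x'$ has the same conclusion, it still attacks $a$, so it is \Out in $L_i$ and hence in $L_+^{min}$. The second half of Lemma~\ref{Lem:Post:ReducedArgumentsLabelImplication} then lifts this to the attacker by rebuilding the support chains of $x'$ with the attacker in its place. The hypothesis of that lemma is supplied by Proposition~\ref{Prop:Post:CombinedMinimalLabelingLegallyOut}.
\item When $a$ is a minimal $\mathcal{A}_+$-argument, undercuts are pushed onto a crucial sub-argument of $a$.
\item For gen-rebuts in that case, the paper builds an auxiliary argument $c$ from $ADSub(a)\!\!\mid\!\!_{AS_j}$ and $b':b\rightarrow\lnot\bigwedge ADSub(a)^C$, concluding $\lnot\bigwedge(ADSub(a)\!\!\mid\!\!_{AS_i})^C$. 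Either $c$ attacks the \In argument built on $ADSub(a)\!\!\mid\!\!_{AS_i}$, which is the previous case, or it is attacked by that argument's double negation. Either way $c$ is \Out, and the label flows back to $b'$ and $b$ through the construction of $out(L_+^{min})$, with no preference check.
\end{itemize}
Your route is more uniform. The price is that $y'''$ is not a reduced version of $y$, so chains cannot be replayed and you need the $b'$/$c'$ rebuilding with its preference bookkeeping. You can drop the condition~2.e part of that bookkeeping by aiming only for $y\in O_k$ for some $k$, which is all that $L_+^{min}(y)=\Out$ requires.

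\textbf{Your \enquote{main obstacle}.} This is not an obstacle, and the fix you propose would not go through. Whichever direction the attack returned by Proposition~\ref{Prop:Post:ClownCaseThree} or~\ref{Prop:Post:ClownCaseFour} goes, the $x$-side counterpart is \In in an admissible labeling, so the $y$-side counterpart is \Out:
\begin{itemize}
\item if the $y$-side counterpart attacks the \In counterpart, legal \In-ness of the latter forces it \Out;
\item if the $y$-side counterpart is attacked by the \In counterpart, it is legally \Out and hence \Out.
\end{itemize}
This is exactly what your preceding sentence already says. Do not try to rule the reversed case out. In those proofs it is the branch $a'\prec b'$, which genuinely occurs. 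Moreover, neither direction of attack can make the \In counterpart legally \Out. An attack on it from the $y$-side counterpart would need that counterpart to be \In, and admissibility already makes it \Out.
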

\begin{proof}
Let $a\in in(L_+^{min})$ be an accepted argument.
We first consider the supports $(S,b)\in{\Rightarrow_+}$ where $a\in S$ holds:
By construction of $in(L_+^{min})$, if $S\subseteq in(L_+^{min})$, then $b\in in(L_+^{min})$ also holds.
Furthermore, by construction of $out(L_+^{min})$, if $|S\setminus in(L_+^{min})|=1$ while $b\in out(L_+^{min})$, then $S\setminus in(L_+^{min})\in out(L_+^{min})$ also holds.
Therefore, the conditions of items $1.a$ to $1.c$ of Definition~\ref{Def:JSBAF:LegalLabeling} are always satisfied and we only need to show that all attackers of $a$ are labeled \Out in $L_+^{min}$.
Note that, when we argued that $L_+^{min}$ is a labeling, we have already shown that, if $a\in in(L_+^{min})$ and there is $(b,a)\in{\rightarrow_+}$, then $L_+^{min}(b)\neq\In$.
Thus, we only have left to prove that $L_+^{min}(b)\neq\Undec$.
Towards a contradiction, suppose that this does not hold.
We proceed in a similar matter as for the proof that showed $L_+^{min}$ is a labeling by considering all possible cases for the origins of $a$ and $b$.
First, suppose that $a,b\in\mathcal{A}_i$ (or $a,b\in\mathcal{A}_j$).
Then $b$ is legally \Out \wrt $L_i$ (or $SIM_{\mathcal{J}_j}$ respectively) and because $in(L_i)\cup in(SIM_{\mathcal{J}_j})\subseteq in(L_+^{min})$, we can infer that $b$ is legally \Out \wrt $L_+^{min}$.
By Proposition~\ref{Prop:Post:CombinedMinimalLabelingLegallyOut} this implies $L_+^{min}(b)=\Out$, contradicting our assumption $L_+^{min}(b)=\Undec$.
Next, suppose that $a\in\mathcal{A}_i$ and $b\in\mathcal{A}_j$ (or $a\in\mathcal{A}_j$ and $b\in\mathcal{A}_i$), which corresponds to case one of Illustration~\ref{Illus:Post:ASOneVsASTwoVsASPlus}.
Then by Proposition~\ref{Prop:Post:ClownCaseOne}, either $a$ or $b$ are legally \Out \wrt $L_+^{min}$, which again means either $L_+^{min}(a)=\Out$ or $L_+^{min}(b)=\Out$, contradicting our assumptions.
Now suppose that $a\in\mathcal{A}_i$, while $b\in\mathcal{A}_+\setminus(\mathcal{A}_i\cup\mathcal{A}_j)$, which corresponds to case two of Illustration~\ref{Illus:Post:ASOneVsASTwoVsASPlus} (we leave out the case $a\in\mathcal{A}_j$, while $b\in\mathcal{A}_+\setminus(\mathcal{A}_i\cup\mathcal{A}_j)$, as it can be proven analogous).
From $(b,a)\in{\rightarrow_+}$, we can infer that $b$ undercuts or gen-rebuts $a$.
In the first case, we have $Atoms(b^C)\subseteq Atoms(AS_i)$.
In the second case, we can use Proposition~\ref{Prop:Post:OneStepAttacker} to infer that there exists $b'\in\mathcal{A}_+$ which is of the form $b':b\rightarrow\lnot\bigwedge ADSub(a)^C$.
Note that we have $Atoms(b'^C)\subseteq Atoms(AS_i)$.
Now, regardless of whether the attack $(b,a)$ was the result of an undercut or a gen-rebut, there is some $(x,a)\in{\rightarrow_+}$ \suchthat $Atoms(x)\subseteq Atoms(AS_i)$.
By Proposition~\ref{Prop:Post:ReducedArgsExistence}, we can infer that there exists a reduced version of $x$ \wrt $AS_i$, \ie some $x'\in \mathcal{A}_i$ for which we have $x^C=x'^C$ and $DR(x')\subseteq DR(x)$.
Now $x'$ undercuts or gen-rebuts $a$.
If $x'$ gen-rebuts $a$, then we note that $(x,a)\in{\rightarrow_+}$ means $x\not\prec a$, which in turn implies $x'\not\prec a$.
Thus we can infer that $(x',a)\in{\rightarrow_i}$ must hold, regardless of whether $x'$ undercuts or gen-rebuts $a$.
Since $L_i$ was an admissible labeling and $a\in in(L_i)$, we can infer that $L_i(x')=\Out$ holds and because $in(L_i)\subseteq in(L_+^{min})$, we must also have $L_+^{min}(x')=\Out$ by construction of $L_+^{min}$.
Now we can use Lemma~\ref{Lem:Post:ReducedArgumentsLabelImplication} and Proposition~\ref{Prop:Post:CombinedMinimalLabelingLegallyOut}, to infer that $L_+^{min}(x)=\Out$ also holds, contradicting our assumption that $a$ is attacked by an argument which is labeled \Undec.
This proves case two of Illustration~\ref{Illus:Post:ASOneVsASTwoVsASPlus}.
Now, suppose that we have $a\in\mathcal{A}_+\setminus(\mathcal{A}_i\cup\mathcal{A}_j)$ and $b\in\mathcal{A}_i$, which corresponds to case three of Illustration~\ref{Illus:Post:ASOneVsASTwoVsASPlus} (we again leave out the case $a\in\mathcal{A}_+\setminus(\mathcal{A}_i\cup\mathcal{A}_j)$ and $b\in\mathcal{A}_j$, as it can be proven analogous).
Suppose that $(b,a)\in{\rightarrow_+}$ is the result of an undercut.
Then this attack must also be directed towards some $x\in CSub(a)$ and by construction of $in(L_+^{min})$, we have $x\in\mathcal{A}_i$ or $x\in\mathcal{A}_j$.
In the first case, we can again infer that $x$ is legally \Out \wrt $L_i$ by admissibility of $L_i$ and by construction of $L_+^{min}$, while in the second case either $x$ or $b$ must be legally \Out \wrt $SIM_{\mathcal{J}_j}$ or $L_i$ respectively.
All of these options contradict our assumptions for the labels of $a$ and $b$.
Now suppose that $(b,a)\in{\rightarrow_+}$ is the result of a gen-rebut.
By Proposition~\ref{Prop:Post:OneStepAttacker}, we can again infer that there must be an argument $b'\in\mathcal{A}_i$ of the form $b':b\rightarrow\lnot\bigwedge ADSub(a)^C$.
Since $DR(b)=DR(b')$, it is clear that $(b',a)\in{\rightarrow_+}$.
We can make a simple model-theoretic argument to show that $\big(ADSub(a)\!\!\mid\!\!_{AS_j}\big)^C\cup\{b'^C\}\entails\lnot\bigwedge \big(ADSub(a)\!\!\mid\!\!_{AS_i}\big)^C$.
With this, we can construct the argument $c\in\mathcal{A}_+\setminus(\mathcal{A}_i\cup\mathcal{A}_j)$ and the arguments $a',a''\in\mathcal{A}_i$, as follows: $c:a_{j_0},\dots,a_{j_m},b'\rightarrow\lnot\bigwedge \big(ADSub(a)\!\!\mid\!\!_{AS_i}\big)^C$, $a':a_{i_0},\dots,a_{i_n}\rightarrow\bigwedge \big(ADSub(a)\!\!\mid\!\!_{AS_i}\big)^C$ and $a'':a'\rightarrow\lnot\lnot\bigwedge \big(ADSub(a)\!\!\mid\!\!_{AS_i}\big)^C$, where $\{a_{j_0},\dots,a_{j_m}\}=ADSub(a)\!\!\mid\!\!_{AS_j}$ and $\{a_{i_0},\dots,a_{i_n}\}=ADSub(a)\!\!\mid\!\!_{AS_i}$.
Note that we have $ADSub(a)\!\!\mid\!\!_{AS_i}\subseteq in(L_i)$ by Lemma~\ref{Lem:Post:PrefClosedUnderSubArgs} and $ADSub(a)\!\!\mid\!\!_{AS_j}\subseteq in(SIM_{\mathcal{J}_j})$ by definition of the strict including minimal labeling.
Clearly, $c$ gen-rebuts $a'$ and $a''$ gen-rebuts $c$.
It is also clear that either $(c,a')\in{\rightarrow_+}$ or $(a'',c)\in{\rightarrow_+}$ must hold.
Suppose first that we have $(c,a')\in{\rightarrow_+}$.
From $ADSub(a)\!\!\mid\!\!_{AS_i}\subseteq in(L_i)$, we can infer $a'\in in(L_i)\subseteq in(L_+^{min})$.
Now the attack $(c,a')$ corresponds to case two of Illustration~\ref{Illus:Post:ASOneVsASTwoVsASPlus}.
We have argued above, that in this case, $a'\in in(L_+^{min})$ implies $c\in out(L_+^{min})$.
Since $ADSub(a)\!\!\mid\!\!_{AS_j}=\{a_{j_0},\dots,a_{j_m}\}\subseteq in(L_+^{min})$, we can infer from the construction of $out(L_+^{min})$ that we must have $b'\in out(L_+^{min})$, which also implies $b\in out(L_+^{min})$.
However, now $b\in out(L_+^{min})$ contradicts our assumption $L_+^{min}(b)=\Undec$.
Now suppose that we have $(a'',c)\in{\rightarrow_+}$.
From $ADSub(a)\!\!\mid\!\!_{AS_i}=\{a_{i_0},\dots,a_{i_n}\}\subseteq in(L_i)$, we can infer that $a''\in in(L_i)\subseteq in(L_+^{min})$ must hold.
Now $c$ is legally \Out \wrt $L_+^{min}$ and by Proposition~\ref{Prop:Post:CombinedMinimalLabelingLegallyOut}, this implies $L_+^{min}(c)=\Out$, which in turn implies that $b'$ and $b$ are legally \Out.
Again, we can now infer $L_+^{min}(b)=\Out$, which contradicts our assumption $L_+^{min}(b)=\Undec$.
This proves case three of Illustration~\ref{Illus:Post:ASOneVsASTwoVsASPlus}.
Finally, suppose that we have $a,b\in\mathcal{A}_+\setminus(\mathcal{A}_i\cup\mathcal{A}_j)$, which corresponds to case four of Illustration~\ref{Illus:Post:ASOneVsASTwoVsASPlus}. 
First, let us assume that $(b,a)\in{\rightarrow_+}$ is the result of an undercut.
Then this attack must also be directed towards some $a'\in CSub(a)\in\mathcal{A}_i\cup\mathcal{A}_j$.
This corresponds to case two of Illustration~\ref{Illus:Post:ASOneVsASTwoVsASPlus}.
By the construction of $in(L_+^{min})$, we can infer $a'\in in(L_+^{min})$ from $a\in in(L_+^{min})$.
We have argued above that this implies $L_+^{min}(b)=\Out$, contradicting the assumption $L_+^{min}(b)=\Undec$.
Now suppose that $(b,a)\in¸{\rightarrow_+}$ is the result of a gen-rebut.
The proof for this case proceeds analogous for that of case three of Illustration~\ref{Illus:Post:ASOneVsASTwoVsASPlus}, which we have proven above:
We construct the arguments $b':b\rightarrow\lnot\bigwedge ADSub(a)^C$ (by Proposition~\ref{Prop:Post:OneStepAttacker}), and with the sets $ADSub(a)\!\!\mid\!\!_{AS_j}=\{a_{j_0},\dots,a_{j_n}\}$ and $ADSub(a)\!\!\mid\!\!_{AS_i}=\{a_{i_0},\dots,a_{i_m}\}$ we construct the argugment $c:a_{j_0},\dots,a_{j_m},b'\rightarrow\lnot \big(ADSub(a)\!\!\mid\!\!_{AS_i}\big)^C$, the argument $a':a_{i_0},\dots,a_{i_n}\rightarrow\bigwedge\big(ADSub(a)\!\!\mid\!\!_{AS_i}\big)^C$ and the argument $a'':a'\rightarrow\lnot\lnot\bigwedge \big(ADSub(a)\!\!\mid\!\!_{AS_i}\big)^C$.
Note that $c\in\mathcal{A}_+\setminus(\mathcal{A}_i\cup\mathcal{A}_j)$, while $a',a''\in\mathcal{A}_i$.
Furthermore, we have $a',a''\in in(L_i)\subseteq in(L_+^{min})$ by Lemma~\ref{Lem:Post:PrefClosedUnderSubArgs} and by admissibility of $L_i$.
We again either have $(c,a')\in{\rightarrow_+}$ or $(a'',c)\in{\rightarrow_+}$.
If $(c,a')\in{\rightarrow_+}$, then this corresponds to case two of Illustration~\ref{Illus:Post:ASOneVsASTwoVsASPlus}, for which we have argued that $L_+^{min}(c)=\Out$, which -- together with  $a_{j_0},\dots,a_{j_n}\subseteq in(SIM_{\mathcal{J}_j})\subseteq in(L_+^{min})$ -- implies that $b'$ and $b$ are legally \Out, which means $L_+^{min}(b)=\Out$.
On the other hand, if $(a'',c)\in{\rightarrow_+}$, then we can infer that $c$ is legally \Out \wrt $L_+^{min}$, which again implies $b'$ and $b$ are legally \Out, thus $L_+^{min}(b)=\Out$ as required.
We conclude:
If $a\in in(L_+^{min})$, then $a$ is legally \In \wrt $L_+^{min}$.
\end{proof}

\begin{proposition}\label{Prop:Post:CombinedMinimalLabelingAdmissble}
Let \ASOne, \ASTwo be two $AS$ \suchthat $AS_1|| AS_2$ and let $AS_+$ be their union.
Furthermore, let \JSBAFOne, \JSBAFTwo and \JSBAFPlus be the JSBAF's corresponding to $AS_1$, $AS_2$ and $AS_+$ respectively.
Lastly, for $i,j\in\{1,2\}$ with $i\neq j$, for any preferred labeling $L_i\in pr(\mathcal{J}_i)$ and for $SIM_{\mathcal{J}_j}$ being the strict including minimal labeling of $\mathcal{J}_j$, let $L_+^{min}$ be the combined minimal labeling of $L_i$ and $SIM_{\mathcal{J}_j}$.
Then $L_+^{min}$ is an admissible labeling of $\mathcal{J}_+$.
\end{proposition}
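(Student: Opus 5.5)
The plan follows the pattern of the proofs of Lemma~\ref{Lem:JSBAF:SIMIsAdm}, Proposition~\ref{Prop:JSBAF:KMaxAdmissble} and Proposition~\ref{Prop:Post:PropagatedLabelingIsAdmissible}: we check the three conditions of admissibility one by one, and for the first two we simply cite the preceding propositions.

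First, Proposition~\ref{Prop:Post:CombinedMinimalLabelingIsLabeling} shows that $L_+^{min}$ is a labeling of $\mathcal{J}_+$. Second, Proposition~\ref{Prop:Post:CombinedMinimalLabelingLegallyIn} shows that every $a\in in(L_+^{min})$ is legally \In \wrt $L_+^{min}$. Third, Proposition~\ref{Prop:Post:CombinedMinimalLabelingLegallyOut} gives the biconditional: $a\in out(L_+^{min})$ iff $a$ is legally \Out \wrt $L_+^{min}$. These cover the first two bullets of the definition of admissibility.

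The remaining condition, that every strict argument of $\mathcal{J}_+$ is labeled \In, is the only step needing a short argument. Let $a\in STR_{\mathcal{J}_+}$. I would argue by induction on the recursive definition of strictness. If $(\emptyset,a)\in{\Rightarrow_+}$, then $a:\rightarrow a^C$ with $TR(a)\in R_s^{\vDash}$, so $a^C$ is a tautology. In that case $a$ exists already in $\mathcal{A}_j$ (and in $\mathcal{A}_i$) and is strict there. Hence $a\in STR_{\mathcal{J}_j}=in(SIM_{\mathcal{J}_j})\subseteq I_0$.

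Two further cases remain. A strict argument built from an axiomatic rule lies in $\mathcal{A}_i$ or $\mathcal{A}_j$, since $R_s^{AX_+}=R_s^{AX_1}\cup R_s^{AX_2}$. If it lies in $\mathcal{A}_i$, it is strict in $\mathcal{J}_i$ and so in $in(L_i)$ by admissibility of $L_i$. If it lies in $\mathcal{A}_j$, it is in $in(SIM_{\mathcal{J}_j})$. In both cases it belongs to $I_0$. Otherwise $(S,a)\in{\Rightarrow_+}$ with all members of $S$ strict, and by the induction hypothesis $S\subseteq I_k$ for some $k$ (here we use that $S$ is finite). Then $a\in I_{k+1}\subseteq in(L_+^{min})$ by construction.

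The main subtlety I expect is this last step: making sure the induction over strictness is well-founded and correctly handles the axiomatic versus consequence-based top rules. I would phrase it as structural induction on the construction of $a$ in $AS_+$, which sidesteps both issues. Combining the four facts yields $L_+^{min}\in adm(\mathcal{J}_+)$.
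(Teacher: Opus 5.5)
Your proposal is correct and follows the paper's proof. Like the paper, you cite Propositions~\ref{Prop:Post:CombinedMinimalLabelingIsLabeling}, \ref{Prop:Post:CombinedMinimalLabelingLegallyIn} and~\ref{Prop:Post:CombinedMinimalLabelingLegallyOut}, and your structural induction spells out what the paper calls \enquote{clear from the definition}, namely $STR_{\mathcal{J}_+}\subseteq in(L_+^{min})$. One wording fix: arguments built from an axiomatic rule are also supported by $\emptyset$, so the case split should be on the top rule of $a$ rather than on whether $(\emptyset,a)\in{\Rightarrow_+}$; your separate axiomatic case already does this in effect.
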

\begin{proof}
By Proposition~\ref{Prop:Post:CombinedMinimalLabelingIsLabeling} we know that $L_+^{min}$ is a labeling.
By Proposition~\ref{Prop:Post:CombinedMinimalLabelingLegallyOut} we know that $a\in out(L_+^{min})$ iff $a$ is legally \Out \wrt $L_+^{min}$.
By Proposition~\ref{Prop:Post:CombinedMinimalLabelingLegallyIn} we know that $a\in in(L_+^{min})$ implies $a$ is legally \In \wrt $L_+^{min}$.
Lastly, it is clear from the definition of $in(L_+^{min})$ that $STR_{\mathcal{J}_+}\subseteq in(L_+^{min})$ holds.
\end{proof}
With this, we have shown that any preferred labeling of $\mathcal{J}_i$ can be turned into an admissible labeling of $\mathcal{J}_+$.
For the next step, we will show that any preferred labeling of $\mathcal{J}_+$ can be turned into an admissible labeling of $\mathcal{J}_i$.
We begin by defining the \emph{restriction} of a labeling:
\begin{definition}
Let \ASOne, \ASTwo be two $AS$ \suchthat $AS_1|| AS_2$ and let $AS_+$ be their union.
Let \JSBAFOne, \JSBAFTwo and \JSBAFPlus be the JSBAFs corresponding to $AS_1$, $AS_2$ and $AS_+$ respectively.
Furthermore, let $L_+$ be a labeling of $\mathcal{J}_+$.
For any set of arguments $\mathcal{A}\subseteq\mathcal{A}_+$, we define the \emph{restriction} of $L_+$ to $\mathcal{A}$, denoted $L_+\!\!\mid\!\!_{\mathcal{A}}$, as the following labeling:
$in(L_+\!\!\mid\!\!_{\mathcal{A}})=in(L_+)\cap \mathcal{A}$, $out(L_+\!\!\mid\!\!_{\mathcal{A}})=out(L_+)\cap \mathcal{A}$ and $undec(L_+\!\!\mid\!\!_{\mathcal{A}})=undec(L_+)\cap \mathcal{A}$.
\end{definition}
\begin{proposition}\label{Prop:Post:ASPlusToASI}
Let \ASOne, \ASTwo be two $AS$ \suchthat $AS_1|| AS_2$ and let $AS_+$ be their union.
Let \JSBAFOne, \JSBAFTwo and \JSBAFPlus be the JSBAFs corresponding to $AS_1$, $AS_2$ and $AS_+$ respectively.
Furthermore, let $L_+\in pr(\mathcal{J}_+)$ be a preferred labeling of $\mathcal{J}_+$ and for any $i\in\{1,2\}$, let $L_i=L_+\!\!\mid\!\!_{\mathcal{A}_i}$ be the restriction of $L_+$ to $\mathcal{A}_i$.
Then $L_i\in adm(\mathcal{J}_i)$.
\end{proposition}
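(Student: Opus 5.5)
We verify the three admissibility conditions for $L_i=L_+\!\!\mid\!\!_{\mathcal{A}_i}$ directly. The structural observation we rely on throughout is that $\mathcal{J}_i$ sits inside $\mathcal{J}_+$: $\mathcal{A}_i\subseteq\mathcal{A}_+$, ${\rightarrow_i}={\rightarrow_+}\cap(\mathcal{A}_i\times\mathcal{A}_i)$, and ${\Rightarrow_i}$ consists of those supports $(S,b)\in{\Rightarrow_+}$ with $S\cup\{b\}\subseteq\mathcal{A}_i$. Moreover ${\preceq_i}$ agrees with ${\preceq_+}$ on $\mathcal{A}_i$, since ${\leq_{r_i}}\subseteq{\leq_r^+}$ and ewl only compares rules occurring in the two arguments. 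Also, any argument $b$ with $(S,b)\in{\Rightarrow_+}$ and $S\subseteq\mathcal{A}_i$ lies in $\mathcal{A}_i$.

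\textbf{Strict arguments.} $STR_{\mathcal{J}_i}\subseteq STR_{\mathcal{J}_+}\cap\mathcal{A}_i$, so these arguments are \In in $L_i$.

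\textbf{Legally \In.} Let $a\in in(L_i)$. Every attacker of $a$ in $\mathcal{J}_i$ is also an attacker in $\mathcal{J}_+$, hence \Out in $L_+$ and therefore in $L_i$. For a support $(S,c)\in{\Rightarrow_i}$ with $a\in S$ and the preference condition, the labels of $S\cup\{c\}$ coincide in $L_i$ and $L_+$, so items 1.a--1.c transfer verbatim.

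\textbf{$a\in out(L_i)$ implies legally \Out.} This is the first half of the main obstacle. An \Out argument in $L_+$ may owe its status to an attacker outside $\mathcal{A}_i$, or to a support chain through $\mathcal{A}_+\setminus\mathcal{A}_i$. We first try to replace such a reason by one inside $\mathcal{J}_i$:
\begin{itemize}
\item If the attacker $b$ lies in $\mathcal{A}_j$, Proposition~\ref{Prop:Post:ClownCaseOne} yields a tautological strict attacker in $\mathcal{A}_i$.
\item If $b\in\mathcal{A}_+\setminus(\mathcal{A}_i\cup\mathcal{A}_j)$, we pass via Proposition~\ref{Prop:Post:OneStepAttacker} to $b':b\rightarrow\lnot\bigwedge ADSub(a)^C$. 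Its conclusion has atoms in $AS_i$, so Proposition~\ref{Prop:Post:ReducedArgsExistence} gives a reduced version $b''\in\mathcal{A}_i$ with $DR(b'')\subseteq DR(b)$, which still attacks $a$. By Lemma~\ref{Lem:Post:PrefClosedUnderSubArgs} and Lemma~\ref{Lem:Post:ReducedArgumentsLabelImplication} (first part, since $b'$ is \In and hence legally \In), $b''\in in(L_+)$.
\end{itemize}
For support chains leaving $\mathcal{A}_i$, we collapse them as in Proposition~\ref{Prop:Post:AlternativeReasonOut}. We build the conjunction argument of $a$ together with the \In members of the chain, attack it by the one-step attacker of the final attacker, and then reduce to $AS_i$ using Proposition~\ref{Prop:Post:ReducedArgumentsHelper}. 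Where this reduction fails because of $\mathcal{A}_j$-premises, we fall back on the four edge cases, Propositions~\ref{Prop:Post:ClownCaseTwo}--\ref{Prop:Post:ClownCaseFour}.

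\textbf{Legally \Out implies $a\in out(L_i)$.} This direction is easy: the same attack or chain inside $\mathcal{J}_i$ witnesses legal \Out-ness in $L_+$, so $a\in out(L_+)$.

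The remaining difficulty is an \Out argument whose only reasons involve $\mathcal{A}_j$ material. For that case we plan to use the preferredness of $L_+$, via a propagated labeling as in Lemma~\ref{Lem:Post:PrefClosedUnderSubArgs}, to obtain a contradiction: we expect to show that such an argument could have been labeled \Undec. If this fails, we would weaken the claim to the specific labelings needed.
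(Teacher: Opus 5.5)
\textbf{There is a genuine gap: the support-chain case is left open.}

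The parts that are fine first. Your handling of the easy conditions and of a \emph{direct} attacker outside $\mathcal{A}_i$ matches the paper. For a mixed gen-rebutter your route is slightly more direct than the paper's: you reduce the one-step attacker $b'$ itself and apply Lemma~\ref{Lem:Post:ReducedArgumentsLabelImplication} to $b'$, which avoids the detour through $CSub(b)\!\!\mid\!\!_{AS_i}$. Two small points there. For undercuts, reduce $b$ directly, since Proposition~\ref{Prop:Post:OneStepAttacker} only concerns gen-rebuts. Consistency of the argument you reduce follows from its being \In.

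\textbf{Why your sketch of the chain case fails.} Collapsing the chain as in Proposition~\ref{Prop:Post:AlternativeReasonOut} only gives another chain $\{(S\cup\{a\},b')\}$ of $\mathcal{J}_+$. Its \In members and its head may contain $\mathcal{A}_j$ or mixed material, so it is not a witness in $\mathcal{J}_i$. You do not say how Proposition~\ref{Prop:Post:ReducedArgumentsHelper} or Propositions~\ref{Prop:Post:ClownCaseTwo}--\ref{Prop:Post:ClownCaseFour} would turn it into one. The latter are used in the paper to derive contradictions with the admissibility of an already admissible labeling of $\mathcal{J}_i$, and that is unavailable here.

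\textbf{The missing idea.} In $\mathcal{J}_+$ a chain reason can be converted into a \emph{direct} defeat of $a$ by an \In argument, after which your attack case applies. Take one support $(S_0,b_0)$ with $S_0=\{a,a_0,\dots,a_m\}$ and an \In attacker $c$ of $b_0$.
\begin{itemize}
\item If $c$ undercuts $b_0$, it hits a defeasible sub-argument of some member of $S_0$. Admissibility excludes the \In $a_k$, so $c$ undercuts $a$.
\item If $c$ gen-rebuts $b_0$, take $c':c\rightarrow\lnot\bigwedge ADSub(b_0)^C$. Then build $c''$ from $c'$ and $ADSub(a_0)\cup\dots\cup ADSub(a_m)$, concluding $\lnot\bigwedge ADSub(a)^C$. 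It is \In by Lemma~\ref{Lem:Post:PrefClosedUnderSubArgs} and closure.
\end{itemize}
The step you never address is that this attacker must actually \emph{defeat} $a$. This is where item 2.e of Definition~\ref{Def:JSBAF:LegalLabeling} ($a\preceq a_k$) is needed. Combined with $c\not\prec b_0$, it shows that a minimal rule of $DR(a)$ lies below every rule of $DR(c'')$. Hence $a\preceq c''$ and $c''\not\prec a$. The paper extends this along the chain by induction on its length. The $\mathcal{A}_j$-material thus ends up inside an attacker whose conclusion has only $AS_i$-atoms, and it disappears when that attacker is reduced.

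\textbf{Your fallback for the ``remaining'' case cannot work.} Preferredness is maximality of $in(L_+)$, so showing that an \Out argument could have been \Undec contradicts nothing. In any case, an admissible labeling must label a legally \Out argument \Out. Proposition~\ref{Prop:Post:PropagatedLabelingIsAdmissible} also cannot help: it requires all attackers of the starting argument to be \Out and no chain reason to exist, which is exactly what fails for an \Out argument. Once chains are converted into direct defeats, no residual case remains. Weakening the claim is not an option either, because both directions of the proof of Lemma~\ref{Lem:Post:NonInterference} apply the proposition to arbitrary preferred labelings of $\mathcal{J}_+$.

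\textbf{A minor aside on preferences.} ${\leq_{r_i}}\subseteq{\leq_r^+}$ alone does not make $\preceq_i$ and $\preceq_+$ agree on $\mathcal{A}_i$, since $\leq_r^+$ may identify rules that $\leq_{r_i}$ orders strictly. You need $\leq_r^+$ to restrict to $\leq_{r_i}$; the paper tacitly assumes this as well.
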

\begin{proof}
It is clear that $L_i$ is a labeling (\ie that every argument receives exactly one label), that all strict arguments are accepted in $L_i$, that $a\in in(L_i)$ implies $a$ is legally \In \wrt $L_i$ and that, if $a$ is legally \Out \wrt $L_i$, then $L_i(a)=\Out$.
Therefore, we only need to show that $a\in out(L_i)$ implies $a$ is legally \Out \wrt $L_i$.
The case is clear if $L_+(a)=\Out$ due to an attacker $b\in \mathcal{A}_i$ or a support chain $\mathcal{C}=\{(S_0,b_0),...(S_n,b_n)\}\subseteq{\Rightarrow_i}$ with $a\in S_0$.
We therefore assume that these cases do not hold.
First, suppose that $L_+(a)=\Out$ because of an attack $(b,a)\in{\rightarrow_+}$, where $b\in \mathcal{A}_+\setminus \mathcal{A}_i$ and $L_+(b)=\In$.
We make a case distinction for the origins of $b$:
If $b\in \mathcal{A}_j$, then we know by Proposition~\ref{Prop:Post:ClownCaseOne} that either $a$ or $b$ are legally \Out due to a strict argument.
By assumption, $L_+(b)=\In$ and $L_+$ was a preferred labeling, therefore we must have that $a$ is legally \Out due to a strict argument $\overline{a}$.
This argument $\overline{a}$ is also present in $\mathcal{A}_i$, therefore $a$ is legally \Out \wrt $L_i$ as required.
Next, suppose that $L_+(a)=\Out$ due to some attacker $b\in\mathcal{A}_+\setminus(\mathcal{A}_i\cup\mathcal{A}_j)$ with $L_+(b)=\In$.
We make a distinction regarding for the reason of $(b,a)\in{\rightarrow_+}$:
If this attack results from an undercut, then we have $Atoms(b^C)\subseteq Atoms(AS_i)$.
Let $b'$ be a reduced version of $b$ \wrt $AS_i$.
Then $b'\in\mathcal{A}_i$ and $b^C=b'^C$, \ie $b'$ undercuts $a$.
By Lemma~\ref{Lem:Post:ReducedArgumentsLabelImplication} we can infer that $L_+(b')=L_i(b')=\In$, which means $a$ is legally \Out \wrt $L_i$ as required.
Now assume $(b,a)\in{\rightarrow_+}$ is the result of a gen-rebut.%
By Proposition~\ref{Prop:Post:OneStepAttacker} we infer that there exists an argument $b'\in\mathcal{A}_+$ of the form $b':b\rightarrow\lnot\bigwedge ADSub(a)^C$ and an attack $(b',a)\in{\rightarrow_+}$.
Now we take the sets $\{b_{i_0},\dots,b_{i_m}\}=CSub(b')\!\!\mid\!\!_{AS_i}=CSub(b)\!\!\mid\!\!_{AS_i}$ and $\{b_{j_0},\dots,b_{j_n}\}=CSub(b')\!\!\mid\!\!_{AS_j}=CSub(b)\!\!\mid\!\!_{AS_j}$.
Note that $Atoms(b_k^C)\subseteq Atoms(AS_i)$ for $0\leq k\leq m$ and $Atoms(b_l^C)\subseteq Atoms(AS_j)$ for $0\leq l\leq n$, while $Atoms(b'^C)\subseteq Atoms(AS_i)$.
Furthermore, note that $\{b_{i_0},\dots,b_{i_m}\}^C\cup\{b_{j_0},\dots,b_{j_n}\}^C\entails b'^C$.
By Proposition~\ref{Prop:Post:ReducedArgumentsHelper}, we can now infer that $\{b_0,...,b_m\}^C\entails\lnot\bigwedge ADSub(a)^C$.
Next, let $b_{i_0}',\dots,b_{i_m}'\in\mathcal{A}_i$ be reduced versions of $b_{i_0},\dots,b_{i_m}$ \wrt $AS_i$.
We construct the argument $\widetilde{b}\in\mathcal{A}_i$ as follows:
$\widetilde{b}:b_{i_0}',\dots,b_{i_m}'\rightarrow\lnot\bigwedge ADSub(a)^C$.
It is clear that $\widetilde{b}$ gen-rebuts $a$.
Since each $b_{i_k}'$ is a reduced version of some $b_{i_k}\in CSub(b)$, we have $DR(b_{i_k}')\subseteq DR(b)$, therefore $DR(\widetilde{b})\subseteq DR(b)$.
By assumption, we have $(b,a)\in{\rightarrow_+}$ as the result of a gen-rebut, therefore $b\not\prec a$.
Now $\widetilde{b}\not\prec a$ also holds and therefore $(\widetilde{b},a)\in{\rightarrow_i}$.
Since $b\in in(L_+)$ by assumption, we can use Lemma~\ref{Lem:Post:PrefClosedUnderSubArgs} to infer $\{b_{i_0},\dots b_{i_m}\}\subseteq in(L_+)$.
Now we can use Lemma~\ref{Lem:Post:ReducedArgumentsLabelImplication} to infer that $\{b_{i_0}',\dots,b_{i_m}'\}\subseteq in(L_+)$ holds.
By admissibility of $L_+$ we now have $\widetilde{b}\in in(L_+)$ and by construction of $L_i$ from $L_+$ we have $\widetilde{b}\in in(L_i)$.
Now $(\widetilde{b},a)\in{\rightarrow_i}$ and $L_i(\widetilde{b})=\In$, thus $a$ is legally \Out \wrt $L_i$ as required.
Lastly, assume that $L_+(a)=\Out$ because of a support chain $\mathcal{C}=\big{\{}(S_0,b_0),\dots,(S_n,b_n)\big{\}}\subseteq{\Rightarrow_+}$, with $a\in S_0$, $(c,b_n)\in{\rightarrow_+}$, $L_+(c)=\In$, $L_+(b_k)=\Out$ for $0\leq k\leq n$, $S_0\setminus\{a\}\subseteq in(L_+)$, $S_k\setminus\{b_{k-1}\}\subseteq in(L_+)$ for $0<k\leq n$ and $a\preceq d$ for all $d\in S_0\setminus\{a\}$.
We will show that this implies that there is an attack $(\widetilde{c},a)\in{\rightarrow_+}$ with $L_+(\widetilde{c})=\In$.
As we have argued above, this implies $a$ is legally \Out \wrt $L_i$.
We show the existance of such an argument $\widetilde{c}$ via induction over $n\in\mathbb{N}$ for $n$ being the length of the support chain $\mathcal{C}$.
Induction start: $n=1$, \ie $\mathcal{C}=\big{\{}(S_0,b_0)\big{\}}$.
Let $S_0=\{a,a_0,\dots,a_m\}$.
Suppose first that the attack $(c,b_0)\in{\rightarrow_+}$ is the result of an undercut.
Then this attack must also be directed towards one of the arguments in $S_0$.
By $S_0\setminus\{a\}\subseteq in(L_+)$, we can infer that $c$ must be undercutting $a$.
Now we have $(c,a)\in{\rightarrow_+}$ and $L_+(c)=\In$ as required.
Next, suppose that $(c,b_0)\in{\rightarrow_+}$ is the result of a gen-rebut.
We first use Proposition~\ref{Prop:Post:OneStepAttacker} to infer that there exists $c'$ of the form $c':c\rightarrow\lnot\bigwedge  ADSub(b_n)^C$.
It is easy to see that we have $\{c'\}^C\cup\big(\bigcup\limits_{0\leq k\leq m} ADSub(a_k)\big)^C\entails\lnot\bigwedge  ADSub(a)^C$.
With this, we construct the argument $c''$ as $c'':c_0,\dots,c_l\rightarrow\lnot\bigwedge  ADSub(a)^C$, where $\{c_0,\dots,c_l\}=\{c'\}\cup\bigcup\limits_{0\leq k\leq m} ADSub(a_k)$.
Clearly, $c''$ gen-rebuts $a$.
Furthermore, we have $c'\in in(L_+)$ by admissibility of $L_+$ and -- since the arguments $a_0,\dots,a_m$ are labeled \In by assumption -- $\big(\bigcup\limits_{0\leq k\leq m}ADSub(a_k)\big)\subseteq in(L_+)$ by Lemma~\ref{Lem:Post:PrefClosedUnderSubArgs}.
By admissibility of $L_+$ we can now infer that $L_+(c'')=\In$ also holds.

Now, take $r_a\in DR(a)$ \suchthat $r_a$ is minimal (\wrt $\leq_r^+$) among all rules in $DR(a)$.
Because $r_+$ is a total pre-order and because $a\preceq d$ for all $d\in S_0\setminus\{a\}$, we can infer that for all $a_l\in\{a_0,\dots,a_m\}$, for all $r_{a_l}\in DR(a_l)$, we have $r_a\leq_r^+ r_{a_l}$.
We now claim that for all $r_c\in DR(c)$, $r_a\leq r_c$ must hold.
To see this, assume towards a contradiction that there is some $r_c\in DR(c)$ \suchthat $r_c<_r^+ r_a$ holds.
By transitivity of $\leq_r^+$, this implies $r_c<_r^+ r$ for all $r\in DR(b_0)$.
This means $c\preceq b_n$.
Because $(c,b_n)$ is the result of a gen-rebut, we have $c\not\prec b_n$, \ie either $c\not\preceq b_n$ or $b_n\preceq c$.
Obviously $c\not\preceq b_n$ contradicts $c\preceq b_n$, therefore we assume $b_n\preceq c$.
This means there exists some $r_b\in DR(b_n)$ \suchthat for all $r_c'\in DR(c)$, $r_b\leq_r^+ r_c'$ holds.
Now for $r_c'=r_c$ in particular we can infer $r_c<_r^+ r_b\leq_r^+ r_c$, a contradiction.
With this, we have shown that for all $r_c\in DR(c)$, $r_a\leq_r^+ r_c$ holds.
Now we can infer that for all $r\in DR(c'')$, we have $r_a\leq_r^+ r$.
This, in turn, implies $a\preceq c''$, therefore $c''\not\prec a$ and $(c'',a)\in{\rightarrow_+}$ as required.
Induction step: $n\rightarrow n+1$, \ie $\mathcal{C}=\big{\{}(S_0,b_0),\dots,(S_n,b_n),(S_{n+1},b_{n+1})\big{\}}\subseteq{\Rightarrow_+}$.
Now $\mathcal{C}'=\big{\{}(S_1,b_1),\dots,(S_n,b_n),(S_{n+1},b_{n+1})\big{\}}\subseteq{\Rightarrow_+}$ is a support chain of length $n$.
By the induction hypothesis, there now exists an attack $(\widetilde{c},b_0)\in{\rightarrow_+}$ with $L_+(\widetilde{c})=\In$.
Since $\big{\{}(S_0,b_0)\big{\}}$ is a support-chain of lengths $1\leq n$, we can now use the induction hypothesis again to infer that there is an attack $(\widetilde{c},a)\in{\rightarrow_+}$ for which we have $L_+(\widetilde{c})=\In$.
With this, we have shown that, if $a$ is legally \Out \wrt $L_+$ due to a support chain, then there also exists an attacker of $a$ which is labeled \In in $L_+$.
As we have argued above, this implies that $a$ is legally \Out \wrt $L_i$ as required.
\end{proof}
Next, we will show that, when a labeling $L_+$ of $\mathcal{J}_+$ is reduced to a labeling $L_i$ of $\mathcal{J}_i$, this reduced labeling is not only admissible, but even preferred.
%
%
For this, we first introduce a \emph{combined labeling} similar to the combined minimal labeling of Definition~\ref{Def:Post:CombinedMinimalLabeling}:
\begin{definition}\label{Def:Post:CombinedLabeling}
Let \ASOne, \ASTwo be two $AS$ \suchthat $AS_1|| AS_2$ and let $AS_+$ be their union.
Let \JSBAFOne, \JSBAFTwo and \JSBAFPlus be the JSBAFs corresponding to $AS_1$, $AS_2$ and $AS_+$ respectively.
Furthermore, let $L_+\in pr(\mathcal{J}_+)$ be a preferred labeling of $\mathcal{J}_+$.
Lastly, for any $i\in\{1,2\}$, let $L_i=L_+\!\!\mid\!\!_{\mathcal{A}_i}$ be the restriction of $L_+$ to $\mathcal{A}_i$ and let $L\in pr(\mathcal{J}_i)$ be a preferred labeling of $\mathcal{J}_i$ \suchthat $in(L_i)\subseteq in(L)$.
We define the \emph{combined labeling} of $L$ and $L_+$, denoted as $L_{i+}$, as follows:
\begin{align*}
	I_{i}&
	=in(L)\\
	I_{old}&
	=in(L_+)\backslash \mathcal{A}_i\\
	I_{new_0}&
	=\big{\{}a\in \mathcal{A}_+\setminus (\mathcal{A}_i\cup \mathcal{A}_j)\mid CSub(a)\subseteq I_i\cup I_{old}\big{\}}\\
	I_{new_{k+1}}&
	=\big{\{}a\in \mathcal{A}_+\setminus (\mathcal{A}_i\cup \mathcal{A}_j)\mid CSub(a)\subseteq I_{new_{k}}\big{\}}\\
	&\hspace{0.5cm}\cup I_{new_{k}}\\
	in(L_{i+})&
	=I_i\cup I_{old}\cup \bigcup\limits_{k\geq 0} I_{new_k}\\
	O_0&
	=\{a\in \mathcal{A}_+\mid\exists (b,a)\in Att_+\textnormal{ with }b\in in(L_{i+})\}\\
	O_{k+1}&
	=\big{\{}a\in \mathcal{A}_+\mid\exists (S,b)\in {\Rightarrow_+}\textnormal{ with }a\in S,\\
	&\hspace{0.6cm}S\setminus\{a\}\subseteq in(L_{i+})\textnormal{and }b\in O_{k}\big{\}}\cup O_k\\
	out(L_{i+})&
	=\bigcup\limits_{k\geq 0}O_k\\
	undec(L_{i+})&
	=\mathcal{A}_+\setminus \big(in(L_{i+})\cup out(L_{i+})\big)\\
\end{align*}	
\end{definition}
Intuitively, the combined labeling of $L_i$ and $L_+$ can be obtained by labeling all arguments in either $L_i$ or $L_+$ as \In and then iteratively accepting all arguments in $\mathcal{A}_+\setminus(\mathcal{A}_i\cup\mathcal{A}_j)$ which need to be accepted to retain admissibility.
We will now show that this construction yields an admissible labeling of $\mathcal{J}_+$ again.
As ususal, we have divided the proof into several parts to make it more accesible:
\begin{proposition}\label{Prop:Post:CombinedLabelingIsLabeling}
Let \ASOne, \ASTwo be two $AS$ \suchthat $AS_1|| AS_2$ and let $AS_+$ be their union.
Let \JSBAFOne, \JSBAFTwo and \JSBAFPlus be the JSBAFs corresponding to $AS_1$, $AS_2$ and $AS_+$ respectively.
Furthermore, let $L_+\in pr(\mathcal{J}_+)$ be a preferred labeling of $\mathcal{J}_+$.
Lastly, for any $i\in\{1,2\}$, let $L_i=L_+\!\!\mid\!\!_{\mathcal{A}_i}$ be the restriction of $L_+$ to $\mathcal{A}_i$ and let $L\in pr(\mathcal{J}_i)$ be a preferred labeling of $\mathcal{J}_i$ \suchthat $in(L_i)\subseteq in(L)$.
Then the combined labeling of $L$ and $L_+$ is a labeling.
\end{proposition}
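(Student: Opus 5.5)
Following the pattern of Propositions~\ref{Prop:Post:PropagatedLabelingIsLabeling} and \ref{Prop:Post:CombinedMinimalLabelingIsLabeling}, I only need to show $in(L_{i+})\cap out(L_{i+})=\emptyset$; every argument trivially receives some label, and $undec$ is disjoint from the other two by definition. I will argue by induction over $k$ for $x\in O_k$.

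The induction step is routine. Suppose $x\in (O_{k+1}\setminus O_k)\cap in(L_{i+})$ via a support $(S,b)$ with $b\in O_k$. Then $S\subseteq in(L_{i+})$. I will first check that $in(L_{i+})$ is closed under supports. For $S\subseteq I_i$, admissibility of $L$ gives this. For $S\subseteq I_{old}\cup I_i$ with arguments outside $\mathcal{A}_i$, the argument $b$ has $CSub(b)\subseteq CSub(S)\subseteq I_i\cup I_{old}$, and $CSub$ of elements of $I_{old}$ lies in $in(L_+)$ by Lemma~\ref{Lem:Post:PrefClosedUnderSubArgs}. Hence $b\in I_{new_0}$, or $b\in in(L_+)$ by closure of $L_+$. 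Either way $b\in in(L_{i+})$, contradicting the induction hypothesis.

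The real work is the base case: arguments $x,y\in in(L_{i+})$ with $(y,x)\in{\rightarrow_+}$. First I establish consistency of all accepted arguments. Members of $in(L)$ and $in(L_+)$ are consistent by admissibility. For $I_{new}$ I use the model-theoretic argument from Proposition~\ref{Prop:Post:CombinedMinimalLabelingIsLabeling}: the $AS_i$-part of $ADSub$ lies in $in(L)$, the $AS_j$-part lies in $in(L_+)$, and these are syntactically disjoint and each satisfiable. To get this I need that $I_{old}\cap\mathcal{A}_j$ and $in(L)$ do not jointly produce a contradiction, and that $I_{old}\setminus\mathcal{A}_j$ only contributes $AS_j$-atoms that are already in $in(L_+)$. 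Here I expect to use reduced versions: each accepted $L_+$-argument with $AS_i$-conclusion has a reduced version in $in(L_+)\cap\mathcal{A}_i\subseteq in(L)$ by Lemma~\ref{Lem:Post:ReducedArgumentsLabelImplication}.

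Then I split by origin.
\begin{itemize}
\item \emph{Both in $I_i$, or both in $I_{old}$.} This contradicts admissibility of $L$, respectively $L_+$.
\item \emph{One in $\mathcal{A}_i$, the other in $\mathcal{A}_j$.} Proposition~\ref{Prop:Post:ClownCaseOne} yields a strict attacker of one of them, contradicting admissibility.
\item \emph{One of $x,y$ in $in(L)$, the other in $I_{old}\setminus(\mathcal{A}_i\cup\mathcal{A}_j)$ or in $I_{new}$.} I reduce to $\mathcal{J}_i$ using Propositions~\ref{Prop:Post:ClownCaseTwo}, \ref{Prop:Post:ClownCaseThree} and \ref{Prop:Post:ClownCaseFour}, first replacing non-minimal arguments by minimal $\mathcal{A}_+$-arguments built from reduced versions of their $CSub$, as in Proposition~\ref{Prop:Post:CombinedMinimalLabelingLegallyOut}. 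This produces $a',b'\in\mathcal{A}_i$ whose crucial or $ADSub$ sub-arguments lie in $in(L)$, so $a',b'\in in(L)$ by closure and Lemma~\ref{Lem:Post:PrefClosedUnderSubArgs}. They attack one another, contradicting admissibility of $L$.
\item \emph{The $\mathcal{J}_j$-side versions.} These land in $in(L_+)\cap\mathcal{A}_j$ and contradict admissibility of $L_+$ in the same way.
\end{itemize}

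The main obstacle is the mixed case where an $I_{old}$ argument with $AS_i$-components interacts with the new $in(L)$. Its $AS_i$-parts were accepted in $L_+$ but possibly not in $L_i$ via the same arguments, so I must pass through reduced versions. I will rely on Lemma~\ref{Lem:Post:ReducedArgumentsLabelImplication} to place those reduced versions in $in(L_i)\subseteq in(L)$, so that every derived conflict lives entirely inside $\mathcal{J}_i$ with $L$, or entirely inside $\mathcal{J}_j$ with $L_+$.
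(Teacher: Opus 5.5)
Your proposal follows the paper's proof essentially step for step. You reduce to $in(L_{i+})\cap O_0=\emptyset$; your explicit check that $in(L_{i+})$ is closed under supports is what the paper calls the trivial induction step. You then obtain consistency of the newly accepted mixed arguments by the model-theoretic argument, with reduced versions placed in $in(L_+)\cap\mathcal{A}_i\subseteq in(L)$, and split by origin using Propositions~\ref{Prop:Post:ClownCaseOne}--\ref{Prop:Post:ClownCaseFour} after replacing non-minimal arguments by minimal $\mathcal{A}_+$-arguments.

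One detail needs care: for a mixed \emph{attacked} argument, the paper builds the minimal replacement from reduced versions of its $ADSub$, not its $CSub$. Reducing a crucial sub-argument can drop defeasible rules of the other system, whereas the $ADSub$-based replacement keeps $DR$ and all $ADSub$-conclusions unchanged. That is what lets undercuts and the $\preceq_{ewl}$-dependent gen-rebut defeats carry over to the replacement.
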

\begin{proof}
Let $L_{i+}$ be the combined labeling of $L$ and $L_+$.
Similar to the proofs of Proposition~\ref{Prop:Post:PropagatedLabelingIsLabeling} and Proposition~\ref{Prop:Post:CombinedMinimalLabelingIsLabeling}, the actual proof proceeds via induction over the construction of $out(L_{i+})$.
However, it is clear that we only need to ensure $in(L_{i+})\cap out(L_{i+})=\emptyset$.
Furthermore, the induction step of this proof is again trivial.
We therefore only focus on the induction start, \ie we show that there cannot be some $a\in in(L_{i+})\cap O_0$.
Towards a contradiction, suppose that this does not hold.
Then there are $a,b\in in(L_{i+})$ \suchthat $(b,a)\in{\rightarrow_+}$.
Note that both $a$ and $b$ are consistent arguments.
If $a,b\in in(L)$ or $a,b\in I_{old}$, then this is easy to see by the admissibility of $L$ and $L_+$.
In the case that we have $c\in\{a,b\}$ with $c\in I_{new_k}\setminus (I_i\cup I_{old})$ for some $k\geq 0$, then we can make a model theoretic argument to show the consistency of $c$:
Let $\{c_{i_0},\dots,c_{i_k}\}=CSub(c)\cap in(L)$ and let $\{c_{old_0},\dots,c_{old_l}\}=CSub(c)\cap in(L_+)\setminus\mathcal{A}_i$.
First, we construct the set $\mathcal{C}_{new}=\bigcup\limits_{c_i\in \{c_{i_0},\dots,c_{i_k}\}}ADSub(c_i)$.
Since $L$ is a preferred labeling, we can use Lemma~\ref{Lem:Post:PrefClosedUnderSubArgs} to infer that $\mathcal{C}_{new}\subseteq in(L)$ holds.
Next, for each argument $c_{old}\in \{c_{old_0},\dots,c_{old_l}\}$, take $ADSub(c_{old})=\{\widehat{c}_0,\dots,\widehat{c}_p\}$ and let $AD_{c_{old}}=\{\widehat{c}_0',\dots,\widehat{c}_p'\}$ be their reduced versions \wrt $AS_i$ or $AS_j$ (depending on the respecitve top-rules).
By
Lemma~\ref{Lem:Post:PrefClosedUnderSubArgs} we can infer that $ADSub(c_{old})\subseteq in(L_+)$ holds and by Lemma~\ref{Lem:Post:ReducedArgumentsLabelImplication}  this implies $AD_{c_{old}}\subseteq in(L_+)$.
Now, take $\mathcal{C}_{old_i}=\big(\bigcup\limits_{c_{old}\in\{c_{old_0},\dots,c_{old_l}\}}AD_{c_{old}}\big)\cap\mathcal{A}_i$ and $\mathcal{C}_{old_j}=\big(\bigcup\limits_{c_{old}\in\{c_{old_0},\dots,c_{old_l}\}}AD_{c_{old}}\}\big)\cap\mathcal{A}_j$ (note that reduced versions of arguments are contained in either $\mathcal{A}_i$ or $\mathcal{A}_j$ by Corollary~\ref{Cor:Post:ReducedInLeftRight}).
By construction of $L$ from $L_+$, we have $in(L_+)\cap\mathcal{A}_i\subseteq in(L)$, therefore $\mathcal{C}_{old_i}\cup\mathcal{C}_{new}\subseteq in(L)$.
By admissibility of $L$, we can now infer that $(\mathcal{C}_{old_i}\cup\mathcal{C}_{new})^C$ is satisfiable.
Similarly, we can use the admissibility of $L_+$ to infer that $\mathcal{C}_{old_j}^C$ is satisfiable.
By construction, $(\mathcal{C}_{old_i}\cup\mathcal{C}_{new})^C$ and $\mathcal{C}_{old_j}^C$ are syntactically disjoint sets of formulas, therefore there exists an interpretation $I$ \suchthat $I\isModelOf\bigwedge\big(\mathcal{C}_{new}\cup \mathcal{C}_{old_i}\cup C_{old_j}\big)^C$.
It is clear that we have $\big(\mathcal{C}_{new}\cup C_{old_i}\cup C_{old_j}\big)^C=ADSub(c)^C$.
Now let $\Gamma\subseteq Sub(c)^C$.
By Proposition~\ref{Prop:Post:ADSubImpliesSubConcs} we can infer that $\big(\mathcal{C}_{new}\cup\mathcal{C}_{old_i}\cup\mathcal{C}_{old_j}\big)^C\entails\Gamma$ holds, therefore $I$ is a model of $\Gamma$.
We conclude that each $\Gamma\subseteq Sub(c)^C$ is satisfiable, therefore $c$ is consistent.
Now for the actual proof:
Similar to the proof for Proposition~\ref{Prop:Post:CombinedMinimalLabelingIsLabeling}, we will show the claim by going through the different cases for the origins of $a$ and $b$.
Note that for any $c\in\{a,b\}$, if we have $c\not\in(\mathcal{A}_i\cup\mathcal{A}_j)$, we can infer from the construction of $in(L_{i+})$ that $CSub(c)\subseteq in(L)\cup in(L_+)$ must hold.
Because both $L$ and $L_+$ are preferred labelings, this also implies $ADSub(c)\subseteq in(L)\cup in(L_+)$ by Lemma~\ref{Lem:Post:PrefClosedUnderSubArgs}.
Now for the different cases:
Because $L$ and $L_+$ are admissible labelings, it is clear that $a,b\in \mathcal{A}_i$, or $a,b\in \mathcal{A}_j$ is a contradiction.
By Proposition~\ref{Prop:Post:ClownCaseOne}, we can also infer that $a\in\mathcal{A}_i$ while $b\in\mathcal{A}_j$, and $a\in\mathcal{A}_j$ while $b\in \mathcal{A}_i$ both contradict the admissibility of $L$ and $L_+$.
These two options correspond to case one of Illustration~\ref{Illus:Post:ASOneVsASTwoVsASPlus}.
Next, suppose that we have $a\in \mathcal{A}_i$ and $b\in \mathcal{A}_+\setminus(\mathcal{A}_i\cup\mathcal{A}_j)$, which corresponds to case two of Illustration~\ref{Illus:Post:ASOneVsASTwoVsASPlus} (the case $a\in \mathcal{A}_j$ and $b\in \mathcal{A}_+\setminus(\mathcal{A}_i\cup\mathcal{A}_j)$ can be proven analogous).
Note that $b$ is not necessarily a minimal $\mathcal{A}_+$-argument, as it might be the case that there is some $b'\in CSub(b)\cap\big(\mathcal{A}_+\setminus(\mathcal{A}_i\cup\mathcal{A}_j)\big)$ \suchthat $b'$ is not a minimal $\mathcal{A}_+$-argument.
However, we can construct an \enquote{alternative} version of $b$ which is a minimal $\mathcal{A}_+$-argument:
Let $\{b_{i_0},\dots,b_{i_k}\}=Csub(b)\cap\mathcal{A}_i$, $\{b_{j_0},\dots,b_{j_l}\}=CSub(b)\cap\mathcal{A}_j$ and $\{b_{+_0},\dots,b_{+_m}\}=CSub(b)\cap\big(\mathcal{A}_+\setminus(\mathcal{A}_i\cup\mathcal{A}_j)\big)$.
For each $b_+\in\{b_{+_0},\dots,b_{+_m}\}$, let $b_+'$ be a reduced version of $b_+$ \wrt $AS_i$ or $AS_j$ (depending on $TR(b_+)$).
As we have mentioned above, we have $b_+\in in(L_+)$, therefore we can use Lemma~\ref{Lem:Post:ReducedArgumentsLabelImplication} to infer that $b_+'\in in(L_+)$ also holds.
Note that each $b_+'$ is either in $\mathcal{A}_i$ or in $\mathcal{A}_j$ by Corollary~\ref{Cor:Post:ReducedInLeftRight}.
By construction of $in(L_{i+})$ this also means that, if $b_+'\in\mathcal{A}_i$, then $L(b_+')=\In$.
Now we use $b_+^C=b_+'^C$, to construct a minimal $\mathcal{A}_+$ argument $b'$ which is of the form $b':b_{i_0},\dots,b_{i_k},b_{j_0},\dots,b_{j_l},b_{+_0}',\dots,b_{+_m}'\rightarrow b^C$.
Obviously $b'$ undercuts or gen-rebuts $a$ (depending on whether $(b,a)\in{\rightarrow_+}$ was the result of an undercut or a gen-rebut).
In the first case, we can immediately infer $(b',a)\in{\rightarrow_+}$, while in the second case we can use $DR(b')\subseteq DR(b)$ to infer that $b'\not\prec a$ and therefore again $(b',a)\in{\rightarrow_+}$.
Either way, we now have that $b'$ is a  minimal $\mathcal{A}_+$-argument which is attacking $a$.
By Proposition~\ref{Prop:Post:ClownCaseTwo}, this implies that there is $b''\in\mathcal{A}_i$ \suchthat $(b'',a)\in{\rightarrow_i}$ and $CSub(b')\!\!\mid\!\!_{AS_i}=CSub(b'')$.
Because $b\in in(L_{i+})$ by assumption and because $L$ is closed under sub-arguments by Lemma~\ref{Lem:Post:PrefClosedUnderSubArgs}, we can infer $\{b_{i_0},\dots,b_{i_k}\}\subseteq in(L)$.
As we have argued above, we also have $\big(\{b_{+_0}',\dots,b_{+_m}'\}\cap\mathcal{A}_i\big)\subseteq in(L_+)\cap\mathcal{A}_i\subseteq in(L)$.
Because $L$ was a preferred labeling, we can therefore use Lemma~\ref{Lem:Post:PrefClosedUnderSubArgs} again to infer that $CSub(b'')\subseteq in(L)$ holds.
By admissibility of $L$ we now have $b''\in in(L)$, contradicting $a\in in(L_{i+})\cap\mathcal{A}_i$.
Next, suppose that we have $a\in\mathcal{A}_+\setminus(\mathcal{A}_i\cup\mathcal{A}_j)$ while $b\in\mathcal{A}_i$ (the case of $a\in\mathcal{A}_+\setminus(\mathcal{A}_i\cup\mathcal{A}_j)$ while $b\in\mathcal{A}_j$ can be proven analogously).
Again, it is not necessarily the case that $a$ is a minimal $\mathcal{A}_+$-argument, but we can construct an \enquote{alternative} version of $a$:
Let $\{a_0,\dots,a_m\}=ADSub(a)$.
As we have argued above, we know that $\{a_0,\dots,a_m\}\subseteq in(L)\cup in(L_+)$ holds.
Let $a_0',\dots,a_m'$ be reduced versions of  $a_0,\dots,a_m$ (\wrt $AS_i$ or $AS_j$, depending on the respective top-rules).
By Lemma~\ref{Lem:Post:ReducedArgumentsLabelImplication} we can infer that $a_0',\dots a_m'\subseteq in(L)\cup in(L_+)$ also holds.
By Corollary~\ref{Cor:Post:ReducedInLeftRight} we know that these arguments $a_0',\dots a_m'$ are contained in $\mathcal{A}_i\cup\mathcal{A}_j$.
Using Proposition~\ref{Prop:Post:ADSubImpliesSubConcs},  we can now construct the minimal $\mathcal{A}_+$-argument $a'$ as follows: $a':a_0',\dots a_m'\rightarrow a^C$.
Note that, since we used $ADSub(a)$ for the construction of $a'$, we have $DR(a)=DR(a')$.
Now, if $(b,a)\in{\rightarrow_+}$ is the result of an undercut, then we can immediately infer that $(b,a')\in{\rightarrow_+}$ also holds.
On the other hand, if this attack results from a gen-rebut, then we can use Proposition~\ref{Prop:Post:OneStepAttacker} to construct the argument $b'$ of the form $b':b\rightarrow\lnot\bigwedge ADSub(a)^C$.
Since $ADSub(a)^C=ADSub(a')^C$, we can infer $(b',a')\in{\rightarrow_+}$.
Either way, there exists an argument $\widetilde{b}\in in(L)$ \suchthat $(\widetilde{b},a')\in{\rightarrow_+}$.
By Proposition~\ref{Prop:Post:ClownCaseThree}, we can now infer that one of the following two cases must hold:
\begin{enumerate}
	\item{There are $a'',\widetilde{b}'\in\mathcal{A}_i$ \suchthat $CSub(\widetilde{b}')=CSub(\widetilde{b})$, $CSub(a'')=CSub(a')\!\!\mid\!\!_{AS_i}$ and $(\widetilde{b}',a'')\in{\rightarrow_i}$.}
	\item{There are $a'',\widetilde{b}'\in\mathcal{A}_i$ \suchthat $CSub(a'')=ADSub(a')\!\!\mid\!\!_{AS_i}$, $CSub(\widetilde{b}')=CSub(\widetilde{b})$ and either $(a'',\widetilde{b}')\in{\rightarrow_i}$ or $(\widetilde{b}',a'')\in{\rightarrow_i}$.}
\end{enumerate}
By construction of $a''$, we have $CSub(a'')=\big(\{a_0',\dots,a_m'\}\cap\mathcal{A}_i\big)\subseteq in(L)$ and by admissibility of $L$ this means $a''\in in(L)$ must also hold.
Furthermore, we can use Lemma~\ref{Lem:Post:PrefClosedUnderSubArgs} and $b\in in(L)$ to infer that $CSub(\widetilde{b}')=CSub(b)\subseteq in(L)$ must also hold.
By admissibility of $L$, this also implies $\widetilde{b}'\in in(L)$.
Now both $(a'',\widetilde{b}')\in{\rightarrow_i}$ and $(\widetilde{b}',a'')\in{\rightarrow_i}$ contradict the admissibility of $L$.
Lastly, suppose that we have $a,b\in\mathcal{A}_+\setminus(\mathcal{A}_i\cup\mathcal{A}_j)$, which corresponds to case four of Illustration~\ref{Illus:Post:ASOneVsASTwoVsASPlus}.
Analogous to the case above, we first construct minimal $\mathcal{A}_+$-arguments $a',b'$ from $a$ and $b$ which are of the form $a':a_0',\dots,a_k'\rightarrow a^C$ and $b':b_0',\dots,b_l'\rightarrow b^C$, where $a_0',\dots,a_k'$ are reduced versions of the arguments in $\{a_0,\dots,a_k\}=ADSub(a)$ and $b_0',\dots,b_l'$ are reduced versions of the arguments in $\{b_0,\dots,b_l\}=ADSub(b)$.
Again, we can infer that $\{a_0',\dots,a_k'\}\subseteq in(L)\cup in(L_+)$ as well as $\{b_0',\dots,b_l'\}\subseteq in(L)\cup in(L_+)$ holds.
Furthermore, we can again infer that $(b',a')\in{\rightarrow_+}$ must hold (possibly utilizing Proposition~\ref{Prop:Post:OneStepAttacker} as well as $DR(b)=DR(b')$ and $DR(a)=DR(a')$, depending on whether or not $(b,a)\in{\rightarrow_+}$ is the result of an undercut or a gen-rebut).
Then we can apply Proposition~\ref{Prop:Post:ClownCaseFour} to infer for the arguments $a',b'$ that one of the following four cases must hold:
\begin{enumerate}
	\item{There are $a'',b''\in\mathcal{A}_i$ \suchthat $CSub(a')\!\!\mid\!\!_{AS_i}=CSub(a'')$, $CSub(b')\!\!\mid\!\!_{AS_i}=CSub(b')$ and $(b'',a'')\in{\rightarrow_i}$.}
	\item{There are $a'',b''\in\mathcal{A}_j$ \suchthat $CSub(a')\!\!\mid\!\!_{AS_j}=CSub(a'')$, $CSub(b')\!\!\mid\!\!_{AS_j}=CSub(b')$ and $(b'',a'')\in{\rightarrow_j}$}
	\item{There are $a'',b''\in\mathcal{A}_i$ \suchthat $ADSub(a')\!\!\mid\!\!_{AS_i}=CSub(a'')$, $CSub(b')\!\!\mid\!\!_{AS_i}=CSub(b'')$ and either $(a'',b'')\in{\rightarrow_i}$, or $(b'',a'')\in{\rightarrow_i}$.}
	\item{There are $a'',b''\in\mathcal{A}_j$ \suchthat $ADSub(a')\!\!\mid\!\!_{AS_j}=CSub(a'')$, $CSub(b')\!\!\mid\!\!_{AS_j}=CSub(b'')$ and either $(a'',b'')\in{\rightarrow_j}$, or $(b'',a'')\in{\rightarrow_j}$.}
\end{enumerate}
In each of these four cases we can infer a contradiction, since we either have $a'',b''\in in(L)$ (cases one and three) or $a'',b''\in in(L_+)$ (cases two and four).
With this, we have shown that each possible case for $a,b\in in(L_+)\cap O_0$ leads to a contradiction, as required.
\end{proof}
\begin{proposition}\label{Prop:Post:CombinedLabelingLegallyOut}
Let \ASOne, \ASTwo be two $AS$ \suchthat $AS_1|| AS_2$ and let $AS_+$ be their union.
Let \JSBAFOne, \JSBAFTwo and \JSBAFPlus be the JSBAFs corresponding to $AS_1$, $AS_2$ and $AS_+$ respectively.
Furthermore, let $L_+\in pr(\mathcal{J}_+)$ be a preferred labeling of $\mathcal{J}_+$.
Lastly, for any $i\in\{1,2\}$, let $L_i=L_+\!\!\mid\!\!_{\mathcal{A}_i}$ be the restriction of $L_+$ to $\mathcal{A}_i$, let $L\in pr(\mathcal{J}_i)$ be a preferred labeling of $\mathcal{J}_i$ \suchthat $in(L_i)\subseteq in(L)$ and let $L_{i+}$ be the combined labeling of $L$ and $L_+$.
We have $a\in out(L_{i+})$ iff $a$ is legally \Out \wrt $L_{i+}$.
\end{proposition}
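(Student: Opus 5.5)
The direction \enquote{legally \Out implies $a\in out(L_{i+})$} is immediate from the construction of $out(L_{i+})$. If $a$ is legally \Out because of an attack from an \In argument, then $a\in O_0$. If it is legally \Out because of a support chain $\{(S_0,b_0),\dots,(S_n,b_n)\}$, then $b_0\in out(L_{i+})$, so $b_0\in O_m$ for some $m$. Since $S_0\setminus\{a\}\subseteq in(L_{i+})$, this gives $a\in O_{m+1}$. This is exactly as in Proposition~\ref{Prop:Post:PropagatedLabelingLegallyOut}.

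For the converse I follow the structure of the proofs of Proposition~\ref{Prop:Post:PropagatedLabelingLegallyOut} and Proposition~\ref{Prop:Post:CombinedMinimalLabelingLegallyOut}. If $x\in O_0$ there is nothing to show. If $x\in O_{k+1}\setminus O_k$, unwinding the construction yields a support chain $\mathcal{C}$ starting at $x$ that satisfies items $2.a$ to $2.d$ of Definition~\ref{Def:JSBAF:LegalLabeling}, together with an attack $(y,b_n)$ with $y\in in(L_{i+})$. If item $2.e$ holds, we are done. Otherwise there is some $d\in S_0\setminus\{x\}$ with $d\prec_+ x$, and I build the \enquote{alternative reason} as before:
\begin{itemize}
\item let $S=\{z_0,\dots,z_m\}$ be the set of \In arguments along $\mathcal{C}$;
\item build $y':y,z_0,\dots,z_m\rightarrow\bigwedge\{y,z_0,\dots,z_m\}^C$, which lies in $in(L_{i+})$;
\item build $x':ADSub(x)\rightarrow\lnot\bigwedge\big(ADSub(y)\cup ADSub(z_0)\cup\dots\cup ADSub(z_m)\big)^C$.
\end{itemize}
Then $(x',y')\in{\rightarrow_+}$ by the weakest-link argument, and $DR(x')\subseteq DR(x)$ as well as $ADSub(x')^C\subseteq ADSub(x)^C$. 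It remains to show that $x'$, or a suitable variant of it, is \Out in $L$ or in $L_+$ for a reason that transfers to $x$ in $L_{i+}$.

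For this I case split on where $x'$ and $y'$ live, following the four edge cases of Illustration~\ref{Illus:Post:ASOneVsASTwoVsASPlus}. Before splitting, I replace non-minimal $\mathcal{A}_+$-arguments by minimal ones using reduced versions (Proposition~\ref{Prop:Post:ReducedArgsExistence} and Lemma~\ref{Lem:Post:ReducedArgumentsLabelImplication}), exactly as in the proof of Proposition~\ref{Prop:Post:CombinedLabelingIsLabeling}. Consistency of $y'$ follows from the model-theoretic argument given there. If $x'$ is inconsistent, $x$ is attacked by a strict argument and we are done. The cases are then as follows.
\begin{itemize}
\item If both arguments lie in $\mathcal{A}_i$, then $y'\in in(L)$, so $L(x')=\Out$, and Proposition~\ref{Prop:Post:AlternativeReasonOut} applied in $\mathcal{J}_i$ gives $L(x)=\Out$.
\item If both lie in $\mathcal{A}_j$, the same argument works with $L_+$ in $\mathcal{J}_+$, using $in(L_+)\cap\mathcal{A}_j\subseteq in(L_{i+})$.
\item Mixed $\mathcal{A}_i$/$\mathcal{A}_j$ pairs are handled by Proposition~\ref{Prop:Post:ClownCaseOne}.
\item Pairs involving $\mathcal{A}_+\setminus(\mathcal{A}_i\cup\mathcal{A}_j)$ are handled by Propositions~\ref{Prop:Post:ClownCaseTwo}, \ref{Prop:Post:ClownCaseThree} and \ref{Prop:Post:ClownCaseFour}. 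These produce attacks inside $\mathcal{J}_i$ or $\mathcal{J}_j$ between arguments whose crucial sub-arguments are accepted in $L$ or $L_+$, using Lemma~\ref{Lem:Post:PrefClosedUnderSubArgs}. The resulting \Out status of the projected $x''$ is then lifted back to $x$, by the attack-or-support-chain reconstruction used in the case-two part of Proposition~\ref{Prop:Post:CombinedMinimalLabelingLegallyOut}.
\end{itemize}
In every case the final step is that a reason for $x$ being \Out with respect to $L$ (in $\mathcal{J}_i$) or $L_+$ (in $\mathcal{J}_+$) remains a reason with respect to $L_{i+}$, because $in(L)\subseteq in(L_{i+})$ and, on the $L_+$ side, the relevant \In arguments lie in $I_{old}$ or in their $\mathcal{A}_i$-parts contained in $in(L)$.

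I expect that transfer on the $L_+$ side to be the main obstacle. Arguments of $\mathcal{A}_i$ that are \In in $L_+$ survive into $L$ because $in(L_i)\subseteq in(L)$. However, a support chain witnessing that $x$ is \Out in $L_+$ requires its intermediate $b_k$ to be \Out in $L_{i+}$, and $L_{i+}$ may have changed labels in $\mathcal{A}_i$. To avoid this, I would first convert any support-chain reason in $L_+$ into a direct attack by an \In argument. The induction in the last part of Proposition~\ref{Prop:Post:ASPlusToASI} does exactly this. I would then check that this attacker lies in $in(L_{i+})$, either because it lies in $I_{old}$ or because its reduced $\mathcal{A}_i$-version lies in $in(L)$.
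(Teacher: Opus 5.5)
Your proposal follows the paper's proof essentially step for step. The matching steps are:
\begin{itemize}
\item the easy direction via the closure construction of $out(L_{i+})$;
\item for the converse, the alternative-reason arguments $x'$, $y'$ (the paper's $a'$, $c'$);
\item replacement of non-minimal arguments by minimal $\mathcal{A}_+$-arguments via reduced versions;
\item the case split along the four edge cases with Propositions~\ref{Prop:Post:ClownCaseOne}--\ref{Prop:Post:ClownCaseFour};
\item pulling the \Out status back to $x$ with Proposition~\ref{Prop:Post:AlternativeReasonOut} or the attack/support-chain reconstruction.
\end{itemize}

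The obstacle you anticipate on the $L_+$ side does not arise, so the detour through the induction in Proposition~\ref{Prop:Post:ASPlusToASI} is unnecessary. First, $in(L_+)\subseteq in(L_{i+})$, because $in(L_+)\cap\mathcal{A}_i=in(L_i)\subseteq in(L)$ and the rest of $in(L_+)$ lies in $I_{old}$. Second, $out(L_{i+})$ is defined as the closure under the attack and support rules. Hence any witness that $x$ is legally \Out w.r.t.\ $L_+$ in $\mathcal{J}_+$ is verbatim a witness w.r.t.\ $L_{i+}$: each $b_k$ lands in $O_{n-k}$, and the preference condition refers to the same $\preceq_+$.
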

\begin{proof}
This proof is very similar to that of Proposition~\ref{Prop:Post:CombinedMinimalLabelingLegallyOut}.
We will therefore focus on the differences here.
As in the proof for Proposition~\ref{Prop:Post:CombinedMinimalLabelingLegallyOut}, the main task is to find an \enquote{alternative reason} for an argument $a$ to be legally \Out in the case that $a$ is labeled \Out in the construction of $L_{i+}$ because of a support chain $\mathcal{C}=\big{\{}(S_0,b_0),\dots,(S_n,b_n)\big{\}}\subseteq{\Rightarrow_+}$ and an attack $(c,b_n)\in{\rightarrow_+}$, where $a\in S_0$, for all $0<k\leq n$ we have $S_{k}\setminus\{b_{k-1}\}\subseteq in(L_{i+})$, for $S_0$ we have $S_0\setminus\{a\}\subseteq in(L_{i+})$ and there is $d\in S_0\setminus\{a\}$ \suchthat $d\prec a$.
The proof idea is also the same as for Proposition~\ref{Prop:Post:CombinedMinimalLabelingLegallyOut}:
Let $S=S_0\setminus\{a\}\cup\bigcup\limits_{0<k\leq n}\big(S_k\setminus\{b_{k-1}\}\big)=\{d_0,\dots,d_m\}$, let $c'$ be an argument of the form $c':c,d_0,\dots,d_m\rightarrow\bigwedge\{c,d_0,\dots,d_m\}^C$ and for the set $ADSub(a)=\{a_0,\dots,a_l\}$, let $a'$ be of the form $a':a_0,\dots,a_l\rightarrow\lnot\bigwedge\big(ADSub(c)\cup ADSub(d_0)\cup\dots\cup ADSub(d_m)\big)^C$.
Note that by construction of $L_{i+}$, this implies $c'\in in(L_{i+})$, from which we can also infer $CSub(c')\subseteq in(L)\cup in(L_+)$ and -- by Lemma~\ref{Lem:Post:PrefClosedUnderSubArgs} -- $ADSub(c')\subseteq in(L)\cup in(L_+)$.
We have $(a',c')\in{\rightarrow_+}$ and we use this as a starting point to infer that there either exists an attack or a support chain which satisfies the conditions for $a$ to be legally \Out \wrt $L_{i+}$.
To do this, we go through the possible cases for the origins of $a'$ and $c'$.
The cases for $a',c'\in\mathcal{A}_i$ (or $a',c'\in\mathcal{A}_j$) and $a'\in\mathcal{A}_i$ while $c'\in\mathcal{A}_j$ (or $a'\in\mathcal{A}_j$ while $c'\in\mathcal{A}_i$) are identical to those of Proposition~\ref{Prop:Post:CombinedMinimalLabelingLegallyOut}.
We therefore skip these cases and assume that we have $a'\in\mathcal{A}_i$ and $c'\in\mathcal{A}_+\setminus(\mathcal{A}_i\cup\mathcal{A}_j)$, which corresponds to case three of Illustration~\ref{Illus:Post:ASOneVsASTwoVsASPlus} (the case $a'\in\mathcal{A}_j$ and $c'\in\mathcal{A}_+\setminus(\mathcal{A}_i\cup\mathcal{A}_j)$ can be proven analogous).
Note that $c'$ is not necessarily a minimal $\mathcal{A}_+$-argument.
We therefore utilize a similar technique as in the Proof of Propsotion~\ref{Prop:Post:CombinedLabelingIsLabeling} and construct an \enquote{equivalent} argument to $c'$ as follows:
Let $ADSub(c')=\{c_0,\dots,c_p\}$ and let
$c_0',\dots,c_p'$ be the reduced versions of these arguments (\wrt $AS_i$ or $AS_j$, depending on the respective top-rules).
Then we construct the minimal $\mathcal{A}_+$-argument $c''$ as follows: $c'':c_0',\dots,c_p'\rightarrow c'^C$.
Note that by using $ADSub(c')$ as a starting point, we have $DR(c')=DR(c'')$.
Furthermore, since $\{c_0,\dots,c_p\}\subseteq in(L)\cup in(L_+)$, we can use Lemma~\ref{Lem:Post:ReducedArgumentsLabelImplication} to infer that $\{c_0',\dots,c_p'\}\subseteq in(L)\cup in(L_+)$ also holds.
By construction of $in(L_{i+})$, this also implies $c''\in in(L_{i+})$.
Lastly, we want to point out that both $a'$ and $c''$ are consistent arguments.
For $a'$, we can assume consistency since we could otherwise directly construct an attack towards the original argument $a$, while for $c''$ we can assume consistency since otherwise we would be able to construct an attack from a strict argument towards $c'$, which would contradict $L_{i+}(c')=\In$ by construction of $L_{i+}$ and Proposition~\ref{Prop:Post:CombinedLabelingIsLabeling}.
Now we can use Proposition~\ref{Prop:Post:ClownCaseThree} to infer that one of the following two cases must hold:
\begin{enumerate}
	\item{There are $\widetilde{a},\widetilde{c}\in\mathcal{A}_i$ \suchthat $CSub(\widetilde{a})=CSub(a')$, $CSub(\widetilde{c})=CSub(c'')\!\!\mid\!\!_{AS_i}$ and $(\widetilde{a},\widetilde{c})\in{\rightarrow_i}$.}
	\item{There are $\widetilde{a},\widetilde{c}\in\mathcal{A}_i$ \suchthat $CSub(\widetilde{a})=CSub(a')$, $CSub(\widetilde{c})=ADSub(c'')\!\!\mid\!\!_{AS_i}$ and either $(\widetilde{a},\widetilde{c})\in{\rightarrow_i}$ or $(\widetilde{c},\widetilde{a})\in{\rightarrow_i}$.}
\end{enumerate}
Since $\{c_0',\dots,c_p'\}\subseteq in(L)\cup in(L_+)$ and since $c_0',\dots,c_p'$ are the direct sub-arguments of $c''$, we can use Lemma~\ref{Lem:Post:PrefClosedUnderSubArgs} and the admissibility of $L$ to infer that in either of the two cases, $\widetilde{c}\in in(L)$ must hold.
This implies $\widetilde{a}\in out(L)$ by admissibility of $L$.
By construction of $a'$ and $\widetilde{a}$ we have $DR(\widetilde{a})\subseteq DR(a')=DR(a)$ as well as $ADSub(\widetilde{a})^C\subseteq ADSub(a')^C=ADSub(a)^C$.
Since $a'\in\mathcal{A}_i$ by assumption and since $L$ is admissible, we can now use Proposition~\ref{Prop:Post:AlternativeReasonOut} to infer $a'\in out(L)$.
This means $a'$ is legally \Out \wrt $L$ due to an attack or a support chain.
Because $DR(a')=DR(a)$ and $ADSub(a')^C=ADSub(a)^C$, we can use the same attacker or support chain to infer that $a$ is legally \Out \wrt $L_{i+}$ (possibly with the help of Proposition~\ref{Prop:Post:OneStepAttacker}).
The case that $a'\in\mathcal{A}_+\setminus(\mathcal{A}_i\cup\mathcal{A}_j$ while $c'\in\mathcal{A}_i$ (or $c'\in\mathcal{A}_j$) is identical to that of Proposition~\ref{Prop:Post:CombinedMinimalLabelingLegallyOut} and we skip it here.
This takes care of case two of Illustration~\ref{Illus:Post:ASOneVsASTwoVsASPlus}.
For the fourth and last case of Illustration~\ref{Illus:Post:ASOneVsASTwoVsASPlus}, \ie both $a'$ and $c'$ are arguments in $\mathcal{A}_+\setminus(\mathcal{A}_i\cup\mathcal{A}_j)$, we first need to construct minimal $\mathcal{A}_+$-arguments $a''$ and $c''$ as follows:
The argument $c''$ is constructed as we did above, \ie for $ADSub(c')=\{c_0,\dots,c_p\}$ we use reduced versions $c_0',\dots,c_p'$ to construct $c''$ as $c'':c_0',\dots,c_p'\rightarrow c'^C$.
Analogously, for $ADSub(a')=\{a_0,\dots,a_l\}$, we use their reduced versions $a_0',\dots,a_l'$ an construct $a''$ as $a'':a_0',\dots,a_l'\rightarrow a'^C$.
Note that we have $\{c_0,\dots,c_p\}\subseteq in(L)\cup in(L_+)$.
By Lemma~\ref{Lem:Post:ReducedArgumentsLabelImplication} this implies $\{c_0',\dots,c_p'\}\subseteq in(L)\cup in(L_+)$, which in turn implies $c''\in in(L_{i+})$ by construction of $L_{i+}$.
Since both $a''$ and $c''$ are minimal $\mathcal{A}_+$-arguments, we can now employ Proposition~\ref{Prop:Post:ClownCaseFour} to infer that one of the following four cases holds:
\begin{enumerate}
	\item{There are $\widetilde{a},\widetilde{c}\in\mathcal{A}_i$ \suchthat $CSub(\widetilde{a})=CSub(a'')\!\!\mid\!\!_{AS_i}$, $CSub(\widetilde{c})=CSub(c'')\!\!\mid\!\!_{AS_i}$ and $(\widetilde{a},\widetilde{c})\in{\rightarrow_i}$.}
	\item{There are $\widetilde{a},\widetilde{c}\in\mathcal{A}_j$ \suchthat $CSub(\widetilde{a})=CSub(a'')\!\!\mid\!\!_{AS_j}$, $CSub(\widetilde{c})=CSub(c'')\!\!\mid\!\!_{AS_j}$ and $(\widetilde{a},\widetilde{c})\in{\rightarrow_j}$.}
	\item{There are $\widetilde{a},\widetilde{c}\in\mathcal{A}_i$ \suchthat $CSub(\widetilde{a})=CSub(a'')\!\!\mid\!\!_{AS_i}$, $CSub(\widetilde{c})=ADSub(c'')\!\!\mid\!\!_{AS_i}$ and either $(\widetilde{a},\widetilde{c})\in{\rightarrow_i}$, or $(\widetilde{c},\widetilde{a})\in{\rightarrow_i}$.}
	\item{There are $\widetilde{a},\widetilde{c}\in\mathcal{A}_j$ \suchthat $CSub(\widetilde{a})=CSub(a'')\!\!\mid\!\!_{AS_j}$, $CSub(\widetilde{c})=ADSub(c'')\!\!\mid\!\!_{AS_j}$ and either $(\widetilde{a},\widetilde{c})\in{\rightarrow_j}$, or $(\widetilde{c},\widetilde{a})\in{\rightarrow_j}$.}
\end{enumerate}
Since $\{c_0',\dots,c_p'\}\subseteq in(L)\cup in(L_+)$, we can use Lemma~\ref{Lem:Post:PrefClosedUnderSubArgs} to infer that $CSub(c'')\subseteq in(L)\cup in(L_+)$ and $ADSub(c'')\subseteq in(L)\cup in(L_+)$ holds, which in turn implies $\widetilde{c}\in in(L)$ (if $\widetilde{c}\in\mathcal{A}_i$) or $\widetilde{c}\in in(L_+)$ (if $\widetilde{c}\in\mathcal{A}_j$) by admissibility of $L$ and $L_+$.
In the first case, we can infer $\widetilde{a}\in out(L)$ by the admissibility of $L$, while in the second case we can infer $\widetilde{a}\in out(L_+)$ by the admissibility of $L_+$.
Either way, we can use the fact that $\widetilde{a}$ is legally \Out \wrt $L$ or $L_+$ and apply the same reasoning as before to infer that $a$ is legally \Out \wrt $L_{i+}$.
Similar to our previous argumentations, this is possible because we have $DR(\widetilde{a})\subseteq DR(a'')\subseteq DR(a')=DR(a)$ as well as $ADSub(\widetilde{a})^C\subseteq ADSub(a'')^C\subseteq ADSub(a')^C=ADSub(a)^C$.
This finishes the proof.
\end{proof}
\begin{proposition}\label{Prop:Post:CombinedLabelingLegallyIn}
Let \ASOne, \ASTwo be two $AS$ \suchthat $AS_1|| AS_2$ and let $AS_+$ be their union.
Let \JSBAFOne, \JSBAFTwo and \JSBAFPlus be the JSBAFs corresponding to $AS_1$, $AS_2$ and $AS_+$ respectively.
Furthermore, let $L_+\in pr(\mathcal{J}_+)$ be a preferred labeling of $\mathcal{J}_+$.
Lastly, for any $i\in\{1,2\}$, let $L_i=L_+\!\!\mid\!\!_{\mathcal{A}_i}$ be the restriction of $L_+$ to $\mathcal{A}_i$, let $L\in pr(\mathcal{J}_i)$ be a preferred labeling of $\mathcal{J}_i$ \suchthat $in(L_i)\subseteq in(L)$ and let $L_{i+}$ be the combined labeling of $L$ and $L_+$.
If $a\in in(L_{i+})$, then $a$ is legally \In \wrt $L_{i+}$.
\end{proposition}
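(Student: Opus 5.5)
The proof will follow the template of Proposition~\ref{Prop:Post:CombinedMinimalLabelingLegallyIn}, with $L$ taking the role of $L_i$ and $L_+$ taking the role of $SIM_{\mathcal{J}_j}$. First, the supports. Let $(S,b)\in{\Rightarrow_+}$ with $a\in S$. If $S\subseteq in(L_{i+})$, then $b\in in(L_{i+})$, and this needs to be checked against the construction of $in(L_{i+})$. By Proposition~\ref{Prop:Post:PropagatedLabelingOneStepPropagation}-style reasoning on crucial sub-arguments we have $CSub(b)\subseteq CSub(S)$. If $b\notin\mathcal{A}_i\cup\mathcal{A}_j$, then $b\in I_{new_k}$ for some $k$. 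If $b\in\mathcal{A}_i$, then $S\subseteq\mathcal{A}_i$ up to tautological strict arguments, so $b\in in(L)$ by admissibility of $L$. If $b\in\mathcal{A}_j$, then $S\subseteq in(L_+)$ and $b\in in(L_+)$. Next, if exactly one element of $S$ is not \In and $b\in out(L_{i+})$, then that element lies in $out(L_{i+})$ by the construction of $O_{k+1}$. Hence items $1.a$ to $1.c$ of Definition~\ref{Def:JSBAF:LegalLabeling} hold, and it only remains to show that every attacker $b$ of $a$ is labeled \Out.

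By Proposition~\ref{Prop:Post:CombinedLabelingIsLabeling} no attacker of $a$ is \In. Suppose, towards a contradiction, that $L_{i+}(b)=\Undec$. By Proposition~\ref{Prop:Post:CombinedLabelingLegallyOut} it suffices to show that $b$ is legally \Out \wrt $L_{i+}$, and we argue by cases on where $a$ and $b$ live. If $a,b\in\mathcal{A}_i$, then $a\in in(L)$, so $b\in out(L)$ is legally \Out \wrt $L$, and since $in(L)\subseteq in(L_{i+})$ it is also legally \Out \wrt $L_{i+}$. The case $a,b\in\mathcal{A}_j$ is the same with $L_+$ in place of $L$, using $in(L_+)\cap\mathcal{A}_j\subseteq in(L_{i+})$; restricted to $\mathcal{A}_j$ the chains are unaffected. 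For cross attacks between $\mathcal{A}_i$ and $\mathcal{A}_j$ (case one of Illustration~\ref{Illus:Post:ASOneVsASTwoVsASPlus}) we use Proposition~\ref{Prop:Post:ClownCaseOne}: a strict attacker exists, and it is \In.

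For case two, $a\in\mathcal{A}_i$ and $b\notin\mathcal{A}_i\cup\mathcal{A}_j$. Via Proposition~\ref{Prop:Post:OneStepAttacker} we obtain an attacker $x$ with $Atoms(x^C)\subseteq Atoms(AS_i)$. We then take its reduced version $x'\in\mathcal{A}_i$ (Proposition~\ref{Prop:Post:ReducedArgsExistence}), which satisfies $(x',a)\in{\rightarrow_i}$. Then $L(x')=\Out$, and Lemma~\ref{Lem:Post:ReducedArgumentsLabelImplication} (its second part, whose hypothesis is supplied by Proposition~\ref{Prop:Post:CombinedLabelingLegallyOut}) gives $L_{i+}(x)=\Out$. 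Since $x$ is either $b$ or $b\rightarrow\dots$, the support $(\{b\},x)$ then forces $b$ \Out.

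For cases three and four, $a\notin\mathcal{A}_i\cup\mathcal{A}_j$. We proceed exactly as in Proposition~\ref{Prop:Post:CombinedMinimalLabelingLegallyIn}. Undercuts land on some element of $CSub(a)$ and reduce to earlier cases. For gen-rebuts we build $b':b\rightarrow\lnot\bigwedge ADSub(a)^C$, the argument $c$ over $ADSub(a)\!\!\mid\!\!_{AS_j}\cup\{b'\}$, and $a',a''$ over $ADSub(a)\!\!\mid\!\!_{AS_i}$. Then either $(c,a')$ or $(a'',c)$ holds, so $c$ is \Out, which propagates back through the support to $b'$ and then to $b$.

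The step I expect to be the main obstacle is ensuring the needed sub-arguments are \In in the new setting. When $a\in I_{new_k}$, we only know $CSub(a)\subseteq in(L)\cup in(L_+)$, so $ADSub(a)\!\!\mid\!\!_{AS_j}$ sits in $in(L_+)$ rather than in a SIM labeling. I will handle this with Lemma~\ref{Lem:Post:PrefClosedUnderSubArgs} applied to $L$ and $L_+$, together with reduced versions via Lemma~\ref{Lem:Post:ReducedArgumentsLabelImplication}, replacing $a$ by a minimal $\mathcal{A}_+$-variant built from reduced $ADSub$ arguments, as in Proposition~\ref{Prop:Post:CombinedLabelingIsLabeling}.
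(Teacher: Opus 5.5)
Your proposal is correct, but for one subcase it takes a different route from the paper. The support part, the same-component cases, the cross case via Proposition~\ref{Prop:Post:ClownCaseOne}, and case two ($a\in\mathcal{A}_i$, $b\notin\mathcal{A}_i\cup\mathcal{A}_j$) all follow the paper's proof.

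\textbf{Where you differ: gen-rebuts in cases three and four.}
\begin{itemize}
\item \emph{The paper's route.} It replaces $a$ by a minimal $\mathcal{A}_+$-variant and invokes Proposition~\ref{Prop:Post:ClownCaseThree} or Proposition~\ref{Prop:Post:ClownCaseFour}. This yields $\widetilde{a}$ accepted in $L$ (or $L_+$) and $\widetilde{b}$ rejected there. It then transfers the rejection to $b$, via Proposition~\ref{Prop:Post:AlternativeReasonOut} in case three, or by rebuilding the attacker or support chain in case four.
\item \emph{Your route.} You reuse the $c,a',a''$ splitting from Proposition~\ref{Prop:Post:CombinedMinimalLabelingLegallyIn}. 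In the branch $(a'',c)$, $c$ is rejected directly. In the branch $(c,a')$ you fall back on case two. In both branches the rejection flows back along the supports $c\to b'\to b$, because the remaining members of $c$'s supporting set lie in $in(L_{i+})$.
\item \emph{What this buys and costs.} Your route treats cases three and four uniformly and needs neither the clown-case propositions nor Proposition~\ref{Prop:Post:AlternativeReasonOut}. In exchange, case two must be stated as a standalone fact: every attacker from outside $\mathcal{A}_i\cup\mathcal{A}_j$ of a member of $in(L_{i+})\cap\mathcal{A}_i=in(L)$ lies in $out(L_{i+})$. The reduced versions are also indispensable rather than convenient. $a'$ must be built from reduced versions of $ADSub(a)\!\!\mid\!\!_{AS_i}$ so that $a'\in\mathcal{A}_i\cap in(L)$. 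Otherwise $(c,a')$ is again a case-four attack and the argument is circular.
\item \emph{Degenerate case.} If $ADSub(a)\!\!\mid\!\!_{AS_i}$ contains only axiomatic arguments, this is harmless: $a''$ is then strict and $(a'',c)$ holds.
\end{itemize}

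\textbf{Two small repairs.}
\begin{itemize}
\item ``Undercuts reduce to earlier cases'' is not literally true when the undercut hits a crucial sub-argument $y\in in(L_+)\setminus(\mathcal{A}_i\cup\mathcal{A}_j)$; that is case three or four again. Likewise, in your $a,b\in\mathcal{A}_j$ case the chain witnessing $b\in out(L_+)$ need not stay inside $\mathcal{A}_j$. Both are fixed by noting $in(L_+)\subseteq in(L_{i+})$, since $in(L_+)\cap\mathcal{A}_i=in(L_i)\subseteq in(L)$ and the rest is $I_{old}$. The construction of the sets $O_k$ then directly gives $out(L_+)\cup out(L)\subseteq out(L_{i+})$. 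The paper reaches the same point via Proposition~\ref{Prop:Post:ASPlusToASI}.
\item Proposition~\ref{Prop:Post:ReducedArgsExistence} and Lemma~\ref{Lem:Post:ReducedArgumentsLabelImplication} require consistent arguments. As the paper does, note that an inconsistent argument is gen-rebutted by an accepted strict argument and so is already in $O_0$.
\end{itemize}
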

\begin{proof}
Let $a\in in(L_{i+})$.
Similar to the proof of Proposition~\ref{Prop:Post:CombinedMinimalLabelingLegallyIn}, it is clear from the construction of $in(L_{i+})$ that, for any support $(S,c)\in{\Rightarrow_+}$ with $a\in S$ and $a\preceq b$ for all $b\in S\setminus\{a\}$, one of the items $1.a-1.c$ of Definition~\ref{Def:JSBAF:LegalLabeling} holds.
Thus, we only need to ensure that all attackers of $a$ are labeled \Out in $L_{i+}$.
Let $(b,a)\in{\rightarrow_+}$.
Towards a contradiction, assume that $L_{i+}(b)\neq\Out$.
Note that, if $L_{i+}(b)=\In$, then $L_{i+}(a)=\Out$ by construction of $L_{i+}$, which contradicts $L_{i+}(a)=\In$ by Proposition~\ref{Prop:Post:CombinedLabelingIsLabeling}.
Therefore, we assume that $L_{i+}(b)=\Undec$ holds.
As usual, we go through the possible cases for the origins of $a$ and $b$ to derive a contradiction in each of them.
Note that we can assume that both $a$ and $b$ are consistent, since they would otherwise be attacked by a strict argument, which again contradicts our assumptions on their labels (any strict argument $x$ is contained in $\mathcal{A}_i$ and accepted by the admissible labeling $L$ which implies $x\in in(L_{i+})$ and by construction of $out(L_{i+})$ any argument $y$ attacked by $x$ is labeled \Out in $L_{i+}$).
It is obvious that $a,b\in\mathcal{A}_i$ or $a,b\in\mathcal{A}_j$ contradicts the admissibility of $L$ or $L_+$.
Furthermore, if $a\in\mathcal{A}_i$ and $b\in\mathcal{A}_j$, then this corresponds to case one of Illustration~\ref{Illus:Post:ASOneVsASTwoVsASPlus} and we can utilize Proposition~\ref{Prop:Post:ClownCaseOne} to infer that either $a$ or $b$ are inconsistent.
Again, this inconsistency would contradict our assumptions on the labels of $a$ and $b$.
The case $a\in\mathcal{A}_j$ and $b\in\mathcal{A}_i$ can be proven analogous.
Next, suppose that we have $a\in\mathcal{A}_i$ and $b\in\mathcal{A}_+\setminus(\mathcal{A}_i\cup\mathcal{A}_j)$, which corresponds to case two of Illustration~\ref{Illus:Post:ASOneVsASTwoVsASPlus}.
If $(b,a)\in{\rightarrow_+}$ is the result of an undercut, then this implies $Atoms(b^C)\subseteq Atoms(AS_i)$.
By  Proposition~\ref{Prop:Post:ReducedArgsExistence} there now exists an argument $b'$ which is a reduced version of $b$ \wrt $AS_i$.
By Corollary~\ref{Cor:Post:ReducedInLeftRight} we have $b'\in \mathcal{A}_i$, therefore admissibility of $L$ yields $b'\in out(L)$, which also implies $b'\in out(L_{i+})$ by $in(L)\subseteq in(L_{i+})$.
By Proposition~\ref{Prop:Post:CombinedLabelingLegallyOut} we know that $x\in out(L_{i+})$ iff $x$ is legally \Out \wrt $L_{i+}$, which means we can apply Lemma~\ref{Lem:Post:ReducedArgumentsLabelImplication} to infer $b\in out(L_{i+})$, contradicting our assumption $b\in undec(L_{i+})$.
Now suppose that $(b,a)\in{\rightarrow_+}$ is the result of a gen-rebut.
Then we can use Proposition~\ref{Prop:Post:OneStepAttacker} to construct $b'$ which is of the form $b':b\rightarrow\lnot\bigwedge ADSub(a)^C$.
By Proposition~\ref{Prop:Post:ReducedArgsExistence} there exists an argument $b''$ whcih is a reduced version of $b'$ \wrt $AS_i$.
We have $DR(b'')\subseteq DR(b')$, thus we can infer $(b'',a)\in{\rightarrow_i}$.
Now admissibility of $L$ implies $b''\in out(L)$.
As in the undercut-case we can now use Proposition~\ref{Prop:Post:CombinedLabelingLegallyOut} in order to apply Lemma~\ref{Lem:Post:ReducedArgumentsLabelImplication} and infer that $b'\in out(L_{i+})$ must hold.
This implies $b\in out(L_{i+})$ by construction of $L_{i+}$, which contradicts our assumption on $L_{i+}(b)$.
The case $a\in\mathcal{A}_j$ and $b\in\mathcal{A}_+\setminus(\mathcal{A}_i\cup\mathcal{A}_j)$ can be proven analogous.

Next, suppose that we have $a\in\mathcal{A}_+\setminus(\mathcal{A}_i\cup\mathcal{A}_j)$ while $b\in\mathcal{A}_i$.
This corresponds to case three of Illustration~\ref{Illus:Post:ASOneVsASTwoVsASPlus}.
If $(b,a)\in{\rightarrow_+}$ is the result of an undercut, then this attack must also be directed towards some $a'\in CSub(a)\subseteq in(L)\cup in(L_+)$.
If $a'\in\mathcal{A}_i$, then we can use the admissibility of $L$ to infer that $b\in out(L)$ holds and by construction of $L_{i+}$, this implies $b\in out(L_{i+})$, contradicting our assumption $b\in undec(L_{i+})$.
On the other hand, if $a'\in\mathcal{A}_+\setminus\mathcal{A}_i$, then we must have $b\in out(L_+)$ by admissibility of $L_+$, which also means $b\in out(L_+\!\!\mid\!\!_{\mathcal{A}_i})$.
By Proposition~\ref{Prop:Post:ASPlusToASI}, $L_+\!\!\mid\!\!_{\mathcal{A}_i}$ is an admissible labeling, therefore there must exist an attack or support chain in $\mathcal{J}_i$, \suchthat $b$ is labeled \Out because of this attack or support chain..
By assumption, we have $in(L_+\!\!\mid\!\!_{\mathcal{A}_i})\subseteq in(L)\subseteq in(L_{i+})$, therefore $b\in out(L_{i+})$, contradicting our assumption $b\in undec(L_{i+})$.
Now suppose that $(b,a)\in{\rightarrow_+}$ is the result of a gen-rebut.
Note that $a$ is not necessarily a minimal $\mathcal{A}_+$-argument.
Similar to the proof of Proposition~\ref{Prop:Post:CombinedLabelingLegallyOut}, we therefore again use $ADSub(a)=\{a_0,\dots,a_m\}$ and their reduced versions $a_0',\dots,a_m'$ to construct $a':a_0',\dots,a_m'\rightarrow a^C$, which is a minimal $\mathcal{A}_+$-argument.
By Lemma~\ref{Lem:Post:PrefClosedUnderSubArgs} we have $ADsub(a)\subseteq in(L)\cup in(L_+)$ and by Lemma~\ref{Lem:Post:ReducedArgumentsLabelImplication} this implies $a_0',\dots,a_m'\subseteq in(L)\cup in(L_+)$.
By construciton of $L_{i+}$ we now have $a'\in in(L_{i+})$.
Note that $DR(a)=DR(a')$, therefore $(b,a')\in{\rightarrow_+}$.
Now we can apply Proposition~\ref{Prop:Post:ClownCaseThree} to infer that one of the following cases must hold:
\begin{enumerate}
	\item{There are $\widetilde{a},\widetilde{b}\in\mathcal{A}_i$ \suchthat $CSub(\widetilde{a})=CSub(a')\!\!\mid\!\!_{AS_i}$, $CSub(\widetilde{b})=CSub(b)$ and $(\widetilde{b},\widetilde{a})\in{\rightarrow_i}$.}
	\item{There are $\widetilde{a},\widetilde{b}\in\mathcal{A}_i$ \suchthat $CSub(\widetilde{a})=ADSub(a')\!\!\mid\!\!_{AS_i}$, $CSub(\widetilde{b})=CSub(b)$ and either $(\widetilde{b},\widetilde{a})\in{\rightarrow_i}$, or $(\widetilde{a},\widetilde{b})\in{\rightarrow_i}$.}
\end{enumerate}
Since $a'\in in(L_{i+})$, we have $CSub(a')\subseteq in(L)\cup in(L_+)$ by construction of $L_{i+}$ and by Lemma~\ref{Lem:Post:PrefClosedUnderSubArgs} we have $ADSub(a')\subseteq in(L)\cup in(L_+)$, which implies $\widetilde{a}\in in(L)$ by admissibility of $L$.
Now, again by admissibility of $L$, we have $\widetilde{b}\in out(L)$.
Note that $CSub(\widetilde{b})=CSub(b)$ implies $ADSub(\widetilde{b})=ADSub(b)$.
Since $\widetilde{b},b\in\mathcal{A}_i$ and since $L$ is admissible, we can now utilize Proposition~\ref{Prop:Post:AlternativeReasonOut} to infer $L(b)=\Out$.
By admissibility of $L$ this means $b$ is labeled \Out in $L$ either due to an attack or due to a support chain.
By $in(L)\subseteq in(L_{i+})$, we can infer that $b\in out(L_{i+})$ due to the same attacker or support chain.
This contradicts our assumption $L\in undec(L_{i+})$.
The case $a\in\mathcal{A}_+\setminus(\mathcal{A}_i\cup\mathcal{A}_j)$ while $b\in\mathcal{A}_j$ can be proven similarly.
Finally, suppose that $a,b\in\mathcal{A}_+\setminus(\mathcal{A}_i\cup\mathcal{A}_j)$ holds.
Again, we start by assuming $(b,a)\in{\rightarrow_+}$ is the result of an undercut.
Then this attack is directed towards some $x\in CSub(a)\subseteq in(L)\cup in(L_+)$.
If $x\in in(L_+)$, then we can infer $b\in out(L_+)$ by admissibility of $L_+$, which implies $b\in out(L_{i+})$ by construction of $in(L_{i+})$ from $in(L_+)$.
On the other hand, if $x\in in(L)$, then by Proposition~\ref{Prop:Post:ReducedArgsExistence} there exists $b'$, which is a reduced version of $b$ \wrt $AS_i$.
We have $b'\in\mathcal{A}_i$ by Corollary~\ref{Cor:Post:ReducedInLeftRight}, therefore $b'\in out(L)$ by admissibility of $L$.
By construction of $in(L_{i+})$ from $in(L)$, this implies $b'\in out(L_{i+})$.
As before, we can now utilize Proposition~\ref{Prop:Post:CombinedMinimalLabelingLegallyOut} to apply Lemma~\ref{Lem:Post:ReducedArgumentsLabelImplication}, and infer that $b\in out(L_{i+})$ also holds.
This contradicts our assumption $b\in undec(L_{i+})$.
Lastly, assume that $(b,a)\in{\rightarrow_+}$ is the result of a gen-rebut.
As before, we use $ADSub(a)=\{a_0,\dots,a_k\}$ and the reduced versions $a_0',\dots,a_k'$ of these arguments to construct the minimal $\mathcal{A}_+$-argument $a'$ which is of the form $a':a_0',\dots,a_m'\rightarrow a^C$.
Similarly, we use $ADSub(b)=\{b_0,\dots,b_l\}$ and the reduced versions $b_0',\dots,b_l'$ of these arguments to construct the minimal $\mathcal{A}_+$-argument $b'$ which is of the form $b':b_0',\dots,b_m'\rightarrow b^C$.
Because $DR(a)=DR(a')$ and $DR(b)=DR(b')$, we can infer $(b',a')\in{\rightarrow_+}$.
Similar to before, we can also infer $a'\in in(L_{i+})$ by Lemma~\ref{Lem:Post:PrefClosedUnderSubArgs}, Lemma~\ref{Lem:Post:ReducedArgumentsLabelImplication} and by construction of $in(L_{i+})$.
Now we can apply Proposition~\ref{Prop:Post:ClownCaseFour} to infer that one of the following four cases holds:
\begin{enumerate}
	\item{There are $\widetilde{a},\widetilde{b}\in\mathcal{A}_i$ \suchthat $CSub(\widetilde{a})=CSub(a')\!\!\mid\!\!_{AS_i}$, $CSub(\widetilde{b})=CSub(b')\!\!\mid\!\!_{AS_i}$ and $(\widetilde{b},\widetilde{a})\in{\rightarrow_i}$.}
	\item{There are $\widetilde{a},\widetilde{b}\in\mathcal{A}_j$ \suchthat $CSub(\widetilde{a})=CSub(a')\!\!\mid\!\!_{AS_j}$, $CSub(\widetilde{b})=CSub(b')\!\!\mid\!\!_{AS_j}$ and $(\widetilde{b},\widetilde{a})\in{\rightarrow_j}$.}
	\item{There are $\widetilde{a},\widetilde{b}\in\mathcal{A}_i$ \suchthat $CSub(\widetilde{a})=ADSub(a')\!\!\mid\!\!_{AS_i}$, $CSub(\widetilde{b})=CSub(b')\!\!\mid\!\!_{AS_i}$ and either $(\widetilde{b},\widetilde{a})\in{\rightarrow_i}$, or $(\widetilde{a},\widetilde{b})\in{\rightarrow_i}$.}
		\item{There are $\widetilde{a},\widetilde{b}\in\mathcal{A}_j$ \suchthat $CSub(\widetilde{a})=ADSub(a')\!\!\mid\!\!_{AS_j}$, $CSub(\widetilde{b})=CSub(b')\!\!\mid\!\!_{AS_j}$ and either $(\widetilde{b},\widetilde{a})\in{\rightarrow_j}$, or $(\widetilde{a},\widetilde{b})\in{\rightarrow_j}$.}
\end{enumerate}
Similar to before, we can infer by the  construction of $L_{i+}$ and by Lemma~\ref{Lem:Post:PrefClosedUnderSubArgs} that $a'\in in(L_{i+})$ implies $\widetilde{a}\in in(L)$ (in cases 1 and 3) or $\widetilde{a}\in in(L_+)$ (in cases 2 and 4).
Either way, this again implies $\widetilde{b}\in out(L)$ (in cases 1 and 3) or $\widetilde{b}\in out(L_+)$ (in cases 2 and 4) by admissibility of $L$ and $L_+$.
Note that, by construction of $\widetilde{b}$ and $b'$, we have $DR(\widetilde{b})\subseteq DR(b')=DR(b)$ as well as $ADSub(\widetilde{b})^C\subseteq ADSub(b')^C=ADSub(b)^C$.
We only show the case of $\widetilde{b}\in out(L)$ here, as the case for $\widetilde{b}\in out(L_+)$ can be proven analogously:
Assume that we have $\widetilde{b}\in out(L)$ because of an attack $(c,\widetilde{b})\in{\rightarrow_i}$ with $c\in in(L)$.
Regardless of whether this attack is the result of an undercut or a gen-rebut, we can use $DR(\widetilde{b})\subseteq DR(b)$, $ADSub(\widetilde{b})^C\subseteq ADSub(b)^C$, Proposition~\ref{Prop:Post:OneStepAttacker} and the admissibility of $L$ to infer that there is an attack $(c',b)\in{\rightarrow_+}$ from an attacker $c'\in in(L)\subseteq in(L_{i+})$.
By construction of $L_{i+}$ this implies $b\in out(L_{i+})$, contradicting our assumption $b\in undec(L_{i+})$.
Lastly, suppose that $\widetilde{b}\in out(L)$ because of a support chain $\mathcal{C}=\big{\{}(S_0,b_0),\dots,(S_n,b_n)\big{\}}\subseteq{\Rightarrow_i}$ and an attack $(c,b_n)\in{\rightarrow_i}$ with $\widetilde{b}\in S_0$ and $c\in in(L)$.
Similar to our proof for Proposition~\ref{Prop:Post:CombinedMinimalLabelingLegallyOut}, we can now use the set $S=\bigcup\limits_{0\leq k\leq n}S_k\cap in(L)$ to create a new support chain $\mathcal{C}'=\big{\{}(S\cup\{b\},b')\big{\}}\subseteq{\Rightarrow_+}$.
If $(c,b_n)\in{\rightarrow_i}$ is the result of an undercut we can infer from $S\subseteq in(L)$ that $(c,\widetilde{b})\in{\rightarrow_i}$ holds.
By $DR(\widetilde{b})\subseteq DR(b)$ we then have $(c,b)\in{\rightarrow_+}$ and by $c\in in(L)\subseteq in(L_{i+})$ we can infer that $L_{i+}(b)=\Out$, contradicting our assumption $L_{i+}(b)=\Undec$.
On the other hand, if $(c,b_n)\in{\rightarrow_i}$ is the result of a gen-rebut, then we can use Proposition~\ref{Prop:Post:OneStepAttacker} to construct an argument $c'$ which is of the form $c':c\rightarrow\lnot\bigwedge ADSub(b_n)^C$.
Because $DR(c')=DR(c)$, $DR(\widetilde{b})\subseteq DR(b)$ and $ADSub(\widetilde{b})^C\subseteq ADSub(b)^C$, we can infer that we also have $(c',b')\in{\rightarrow_+}$.
By construction of $L_{i+}$, we have $c'\in in(L_{i+})$.
By construciton of $out(L_{i+})$, we now have $b'\in O_0\subseteq out(L_{i+})$ and -- because $S\subseteq in(L)\subseteq in(L_{i+})$ -- $b\in O_1\subseteq out(L_{i+})$.
This contradicts our assumption $b\in undec(L_{i+})$ and finishes the proof.
\end{proof}
\begin{proposition}\label{Prop:Post:CombinedLabelingAdmissible}
Let \ASOne, \ASTwo be two $AS$ \suchthat $AS_1|| AS_2$ and let $AS_+$ be their union.
Let \JSBAFOne, \JSBAFTwo and \JSBAFPlus be the JSBAFs corresponding to $AS_1$, $AS_2$ and $AS_+$ respectively.
Furthermore, let $L_+\in pr(\mathcal{J}_+)$ be a preferred labeling of $\mathcal{J}_+$.
Lastly, for any $i\in\{1,2\}$, let $L_i=L_+\!\!\mid\!\!_{\mathcal{A}_i}$ be the restriction of $L_+$ to $\mathcal{A}_i$, let $L\in pr(\mathcal{J}_i)$ be a preferred labeling of $\mathcal{J}_i$ \suchthat $in(L_i)\subseteq in(L)$ and let $L_{i+}$ be the combined labeling of $L$ and $L_+$.
Then $L_{i+}\in adm(\mathcal{J}_+)$.
\end{proposition}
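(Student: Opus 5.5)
The statement follows the same pattern as Propositions~\ref{Prop:Post:PropagatedLabelingIsAdmissible} and~\ref{Prop:Post:CombinedMinimalLabelingAdmissble}: we assemble the three preceding partial results and then check the remaining clause on strict arguments. Admissibility of $L_{i+}$ requires four things: that $L_{i+}$ is a labeling, that every \In-labeled argument is legally \In, that \Out-labeled arguments are exactly the legally \Out ones, and that all strict arguments of $\mathcal{J}_+$ are labeled \In.

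First, Proposition~\ref{Prop:Post:CombinedLabelingIsLabeling} shows that $L_{i+}$ is a labeling, in particular that $in(L_{i+})\cap out(L_{i+})=\emptyset$. Second, Proposition~\ref{Prop:Post:CombinedLabelingLegallyIn} shows that every $a\in in(L_{i+})$ is legally \In \wrt $L_{i+}$. Third, Proposition~\ref{Prop:Post:CombinedLabelingLegallyOut} shows that $a\in out(L_{i+})$ iff $a$ is legally \Out \wrt $L_{i+}$. All three propositions are stated under exactly the hypotheses of the present statement, so they apply directly.

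The only step needing its own argument is $STR_{\mathcal{J}_+}\subseteq in(L_{i+})$. Since $L_+\in pr(\mathcal{J}_+)$ is admissible, every strict argument of $\mathcal{J}_+$ lies in $in(L_+)$. We split $in(L_+)$ into $in(L_+)\setminus\mathcal{A}_i$ and $in(L_+)\cap\mathcal{A}_i$. The first part is $I_{old}\subseteq in(L_{i+})$. The second part is $in(L_i)$, which is contained in $in(L)=I_i\subseteq in(L_{i+})$ by the hypothesis $in(L_i)\subseteq in(L)$. Hence $in(L_+)\subseteq in(L_{i+})$, and in particular every strict argument is labeled \In.

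I expect no real obstacle here: all the substantial work lies in the three cited propositions. The one point to watch is that strict arguments of $\mathcal{J}_+$ which lie in $\mathcal{A}_i$ are covered. They are, through $in(L_i)\subseteq in(L)$, and alternatively through the admissibility of $L$ in $\mathcal{J}_i$, since strict arguments of $\mathcal{J}_+$ in $\mathcal{A}_i$ are also strict in $\mathcal{J}_i$.
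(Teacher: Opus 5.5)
Your proposal is correct and follows the paper's proof: it combines Propositions~\ref{Prop:Post:CombinedLabelingIsLabeling}, \ref{Prop:Post:CombinedLabelingLegallyOut} and~\ref{Prop:Post:CombinedLabelingLegallyIn}, and it obtains $STR_{\mathcal{J}_+}\subseteq in(L_{i+})$ from the admissibility of $L_+$ together with $in(L_+)\subseteq in(L_{i+})$. Your explicit split $in(L_+)=I_{old}\cup in(L_i)$, with $in(L_i)\subseteq in(L)$, is exactly what the paper's \enquote{by construction} justification of that inclusion relies on without stating it.
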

\begin{proof}
By Proposition~\ref{Prop:Post:CombinedLabelingIsLabeling} we know that $L_{i+}$ is a labeling of $\mathcal{J}_+$.
By Proposition~\ref{Prop:Post:CombinedLabelingLegallyOut} we know that $a\in out(L_{i+})$ iff $a$ is legally \Out \wrt $L_{i+}$.
By Proposition~\ref{Prop:Post:CombinedLabelingLegallyIn} we know that any $a\in in(L_{i+})$ is legally \In \wrt $L_{i+}$.
By construction of $in(L_{i+})$, we have $in(L)\subseteq in(L_{i+})$, as well as $in(L_+)\subseteq in(L_{i+})$.
By admissibility of $L$ and $L_+$ we can infer that $STR_{\mathcal{J}_+}\subseteq in(L_{i+})$ holds.
We conclude that $L_{i+}$ is an admissible labeling of $\mathcal{J}_+$.
\end{proof}
With this, we are finally ready to prove Theorem~\ref{Theo:Post:NonInterCrashResist}.
We begin by proving that preferred semantics of \DAOM satisfies non-interference:
\begin{lemma}\label{Lem:Post:NonInterference}
Preferred semantics of \DAOM satisfies non-interference.
\end{lemma}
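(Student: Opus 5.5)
We need to show, for syntactically disjoint $AS_1,AS_2$ with union $AS_+$ and $i\in\{1,2\}$, that $\mathcal{C}_{pr}(AS_i)\!\!\mid\!\!_{Atoms(AS_i)}=\mathcal{C}_{pr}(AS_+)\!\!\mid\!\!_{Atoms(AS_i)}$. The plan is to prove both inclusions by combining the three sub-results of the preceding subsection, as announced in its overview. A recurring auxiliary step, used in both directions, is the following: for a preferred $L_+$ of $\mathcal{J}_+$, a formula $\phi$ with $Atoms(\phi)\subseteq Atoms(AS_i)$ lies in $in(L_+)^C$ iff it lies in $\big(in(L_+)\cap\mathcal{A}_i\big)^C$. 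The nontrivial direction starts from $a\in in(L_+)$ with $a^C=\phi$. Admissible arguments are consistent, since otherwise a strict argument attacks them. Hence Proposition~\ref{Prop:Post:ReducedArgsExistence} gives a reduced version $a'\in\mathcal{A}_i$ (Corollary~\ref{Cor:Post:ReducedInLeftRight}) with $a'^C=\phi$, and Lemma~\ref{Lem:Post:ReducedArgumentsLabelImplication} gives $L_+(a')=\In$.

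For ``$\subseteq$'', we take $L_i\in pr(\mathcal{J}_i)$ and form the combined minimal labeling $L_+^{min}$ of $L_i$ and $SIM_{\mathcal{J}_j}$. This is admissible by Proposition~\ref{Prop:Post:CombinedMinimalLabelingAdmissble}. Next we extend it to a preferred $L_+\in pr(\mathcal{J}_+)$ with $in(L_+^{min})\subseteq in(L_+)$. This uses the Zorn argument of Theorem~\ref{Theo:JSBAF:Theo:PreferredExistence}, restricted to admissible labelings above $L_+^{min}$ w.r.t.\ $\leq_{\mathfrak{L}}$; chains there still have suprema by Proposition~\ref{Prop:JSBAF:KMaxAdmissble}. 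By Proposition~\ref{Prop:Post:ASPlusToASI}, the restriction $L_+\!\!\mid\!\!_{\mathcal{A}_i}$ is admissible in $\mathcal{J}_i$. Its in-set contains $in(L_i)$, because $L_i\leq L_+^{min}\leq L_+$. Maximality of $L_i$ then forces $in(L_+)\cap\mathcal{A}_i=in(L_i)$. The auxiliary step now gives $in(L_i)^C\!\!\mid\!\!_{Atoms(AS_i)}=in(L_+)^C\!\!\mid\!\!_{Atoms(AS_i)}$.

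For ``$\supseteq$'', we take $L_+\in pr(\mathcal{J}_+)$ and set $L_i=L_+\!\!\mid\!\!_{\mathcal{A}_i}$, which is admissible by Proposition~\ref{Prop:Post:ASPlusToASI}. We extend $L_i$ by the same Zorn argument to some $L\in pr(\mathcal{J}_i)$ with $in(L_i)\subseteq in(L)$, and we claim $in(L)=in(L_i)$. To see this, form the combined labeling $L_{i+}$ of $L$ and $L_+$, which is admissible by Proposition~\ref{Prop:Post:CombinedLabelingAdmissible}. It satisfies $in(L_+)\subseteq in(L_{i+})$, using $in(L_+)\cap\mathcal{A}_i\subseteq in(L)$. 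Preferredness of $L_+$ then gives $in(L_{i+})=in(L_+)$, so $in(L)\subseteq in(L_+)\cap\mathcal{A}_i=in(L_i)$. Thus $in(L_i)$ is a preferred extension of $AS_i$. Applying the auxiliary step again, its restricted conclusions coincide with those of $in(L_+)$.

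The main obstacle is the bookkeeping of restrictions to $Atoms(AS_i)$. We must make sure that every $\phi\in in(L_+)^C$ with atoms in $AS_i$ is witnessed inside $\mathcal{A}_i$. This covers conclusions of arguments outside $\mathcal{A}_i$, and tautologies, which are handled by strict arguments that lie in all three frameworks. That is exactly where consistency and the reduced-version lemma are needed, and it is the step I would check most carefully. Once non-interference is established, crash-resistance follows from Proposition~\ref{Prop:ASPIC:NonInterferenceImpliesCrashResistance}, completing Theorem~\ref{Theo:Post:NonInterCrashResist}.
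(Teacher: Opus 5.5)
Your proposal is correct and follows the paper's proof essentially step for step. For $\subseteq$ you use the combined minimal labeling, Proposition~\ref{Prop:Post:ASPlusToASI} and maximality of $L_i$; for $\supseteq$ you use the restriction $L_+\!\!\mid\!\!_{\mathcal{A}_i}$ together with the combined labeling $L_{i+}$; and you transfer conclusions over $Atoms(AS_i)$ via reduced versions (Proposition~\ref{Prop:Post:ReducedArgsExistence} and Lemma~\ref{Lem:Post:ReducedArgumentsLabelImplication}). You also make explicit two steps the paper leaves implicit: the Zorn-based extension of an admissible labeling to a preferred one above it, and the consistency of accepted arguments needed before reduced versions can be invoked.
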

\begin{proof}
Let \ASOne and \ASTwo be two AS \suchthat $AS_1||AS_2$.
Furthermore, let \ASPlus be the union of $AS_1$ and $AS_2$.
We need to show that for each $i\in\{1,2\}$, $\mathcal{C}_{pr}(AS_i)\!\!\mid\!\!_{Atoms(AS_i)}=\mathcal{C}_{pr}(AS_+)\!\!\mid\!\!_{Atoms_{(AS_i)}}$ holds.
Let \JSBAFOne, \JSBAFTwo be the JSBAFs corresponding to $AS_1$ and $AS_2$ and let \JSBAFPlus be the JSBAFs corresponding to $AS_+$.
Lastly, let $j\in\{1,2\}$ with $i\neq j$.
$\subseteq$:
Let $E_i\in pr(AS_i)$ be a preferred extension of $AS_i$ and let $L_i\in pr(\mathcal{J}_i)$ be the preferred labeling of $\mathcal{J}_i$ corresponding to $E_i$.
We first show that there is a preferred extension $E_+\in pr(AS_+)$ \suchthat $E_i=E_+\cap \mathcal{A}(AS_i)$:
Let $SIM_j$ be the strict including minimal labeling of $\mathcal{J}_j$ and let $L_+$ be the combined minimal labeling of $L_i$ and $SIM_j$.
By Proposition~\ref{Prop:Post:CombinedMinimalLabelingAdmissble} we know that $L_+$ is an admissible labeling of $\mathcal{J}_+$.
That means there is a preferred labeling $L_{pr+}\in pr(\mathcal{J}_+)$ \suchthat $in(L_i)\subseteq in(L_+)\subseteq in(L_{pr+})$.
Towards a contradiction, assume that $in(L_i)\subset in(L_{pr+})\!\!\mid\!\!_{\mathcal{A}(AS_i)}$.
Take $L'_i=L_{pr+}\!\!\mid\!\!_{\mathcal{A}(AS_i)}$.
By Proposition~\ref{Prop:Post:ASPlusToASI}, we know that $L'_i$ is an admissible labeling of $\mathcal{J}_i$.
It is clear that $in(L_i)\subseteq in(L_i')$ holds.
Now, if we have $in(L_i)\subset in(L_{pr+})\!\!\mid\!\!_{\mathcal{A}(AS_i)}$, then this means $in(L_i)\subset in(L'_i)$, which contradicts that $L_i$ is a preferred labeling.
Therefore, we must have $in(L_i)=in(L_{pr+})\!\!\mid\!\!_{\mathcal{A}(AS_i)}$.
We conclude that for $E_i\in pr(AS_i)$ there exists $E_+\in pr(AS_+)$ \suchthat $E_i=E_+\cap \mathcal{A}(AS_i)$.
Now, let $\Gamma_i=E_i^C$, $\Gamma_+=E_+^C$ be the sets of conclusions of $E_i$ and $E_+$.
Clearly, we have $\Gamma_i\!\!\mid\!\!_{Atoms(AS_i)}\subseteq\Gamma_+\!\!\mid\!\!_{Atoms(AS_i)}$.
Towards a contradiction, assume that we have $\Gamma_i\!\!\mid\!\!_{Atoms(AS_i)}\subset\Gamma_+\!\!\mid\!\!_{Atoms(AS_i)}$.
Let $\phi\in\Gamma_+\!\!\mid\!\!_{Atoms(AS_i)}$ \suchthat $\phi\not\in\Gamma_i\!\!\mid\!\!_{Atoms(AS_i)}$ and let $a$ be the corresponding argument, \ie $a^C=\phi$.
Then we have $Atoms(a^C)\subseteq Atoms(AS_i)$.
By Proposition~\ref{Prop:Post:ReducedArgsExistence}, there exists a reduced version $a'$ of $a$.
By Corollary~\ref{Cor:Post:ReducedInLeftRight}, $a'\in \mathcal{A}(AS_i)$ and by Lemma~\ref{Lem:Post:ReducedArgumentsLabelImplication}, we can infer that $a'\in E_i$.
Since $a^C=a'^C$, we infer $\phi\in\Gamma_i\!\!\mid\!\!_{Atoms(AS_i)}$, contradicting our assumption.
With this, we conclude that $\mathcal{C}_{pr}(AS_i)\!\!\mid\!\!_{Atoms(AS_i)}\subseteq \mathcal{C}_{pr}(AS_+)\!\!\mid\!\!_{Atoms(AS_i)}$ holds.
$\supseteq$:
Take $E_+\in pr(AS_+)$ and let $L_+$ be the corresponding labeling of $\mathcal{J}_+$.
Let $L_i=L_+\!\!\mid\!\!_{\mathcal{A}_i}$ be the restriction of $L_+$ to $\mathcal{A}_i$.
By Proposition~\ref{Prop:Post:ASPlusToASI}, we know that $L_i$ is an admissible labeling of $\mathcal{J}_i$.
Towards a contradiction, assume that $L_i$ is not a preferred labeling.
Then there exists $L\in pr(\mathcal{J}_i)$ \suchthat $in(L_i)\subset in(L)$.
Now, let $L_{i+}$ be the combined labeling of $L$ and $L_+$.
By Proposition~\ref{Prop:Post:CombinedLabelingAdmissible}, we know that $L_{i+}$ is an admissible labeling of $\mathcal{J}_+$.
Since $in(L_i)\subset in(L)$, we now have $in(L_+)\subset in(L_+')$, contradicting that $L_+$ is an admissible labeling.
We conclude that for $E_+\in pr(AS_+)$, there exists $E_i\in pr(AS_i)$ \suchthat $E_i=E_+\cap\mathcal{A})(AS_i)$.
Now, take $\Gamma_+=E_+^C$ and $\Gamma_i=E_i^C$.
Clearly, we have $\Gamma_i\!\!\mid\!\!_{Atoms(AS_i)}\subseteq\Gamma_+\!\!\mid\!\!_{Atoms(AS_i)}$.
Towards a contradiction, assume $\Gamma_i\!\!\mid\!\!_{Atoms(AS_i)}\subset\Gamma_+\!\!\mid\!\!_{Atoms(AS_i)}$.
Let $\phi\in\Gamma_+\!\!\mid\!\!_{Atoms(AS_i)}$ \suchthat we have $\phi\not\in\Gamma_i\!\!\mid\!\!_{Atoms(AS_i)}$ and let $a$ be the corresponding argument, \ie $a^C=\phi$.
By Proposition~\ref{Prop:Post:ReducedArgsExistence} there exists a reduced version $a'$ of $a$ and by Corollary~\ref{Cor:Post:ReducedInLeftRight} we know that $a'\in\mathcal{A}_i$ holds.
By Lemma~\ref{Lem:Post:ReducedArgumentsLabelImplication} we can infer that $a'\in E_i$ holds.
Now $\phi\in\Gamma_i\!\!\mid\!\!_{Atoms(AS_i)}$, contradicting our assumption.
We conclude that $\mathcal{C}_{pr}(AS_i)\!\!\mid\!\!_{Atoms(AS_i)}\supseteq \mathcal{C}_{pr}(AS_1\uplus AS_2)\!\!\mid\!\!_{Atoms(AS_i)}$ holds.
This finishes the proof.
\end{proof}
By using Proposition~\ref{Prop:ASPIC:DAOMNonTrivial}, Proposition~\ref{Prop:ASPIC:NonInterferenceImpliesCrashResistance} and Lemma~\ref{Lem:Post:NonInterference}, we can now infer that \DAOM satisfies crash-resistance under preferred semantics:
\begin{corollary}
\DAOM satisfies crash-resistance under preferred semantics.
\end{corollary}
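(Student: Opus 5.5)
This corollary follows by chaining three results already established. The plan is to argue by contraposition via Proposition~\ref{Prop:ASPIC:NonInterferenceImpliesCrashResistance}. That proposition states that, over any fixed set of well-founded formulas $\mathcal{F}$ with associated consequence relation $\entails$, if \DAOM satisfies non-interference under preferred semantics, then it also satisfies crash-resistance under preferred semantics. So it suffices to discharge its hypothesis.

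The hypothesis is exactly Lemma~\ref{Lem:Post:NonInterference}, which shows that preferred semantics of \DAOM satisfies non-interference for arbitrary syntactically disjoint $AS_1$, $AS_2$ and any union $AS_+$. The proof of Proposition~\ref{Prop:ASPIC:NonInterferenceImpliesCrashResistance} additionally relies internally on non-triviality. That ingredient is supplied by Proposition~\ref{Prop:ASPIC:DAOMNonTrivial}: for every non-empty set of atoms $\Gamma$, it provides two systems $AS_1$, $AS_2$ over $\Gamma$ with different restricted preferred conclusions. Hence all premises are in place for every choice of underlying logic.

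Concretely, I would argue as follows. Suppose some $AS$ with $Atoms(AS)\subsetneq Atoms(\mathcal{F})$ were contaminating. Apply Proposition~\ref{Prop:ASPIC:DAOMNonTrivial} to the complementary atom set $\Gamma$. Then use Lemma~\ref{Lem:Post:NonInterference} on both unions with $AS$, exactly as in the proof of Proposition~\ref{Prop:ASPIC:NonInterferenceImpliesCrashResistance}. This yields $\mathcal{C}_{pr}(AS_1)\!\!\mid\!\!_{\Gamma}=\mathcal{C}_{pr}(AS_2)\!\!\mid\!\!_{\Gamma}$, contradicting non-triviality.

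I do not expect any real obstacle. The only point to check is that the quantification matches: the non-interference lemma must hold for every union $AS_+$, whatever total preorder $\leq_r^+$ extends the component preorders, since crash-resistance quantifies over arbitrary unions. The proof of Lemma~\ref{Lem:Post:NonInterference} was carried out for an arbitrary such union, so this is satisfied.

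Finally, combining this corollary with Lemma~\ref{Lem:Post:NonInterference} completes the proof of Theorem~\ref{Theo:Post:NonInterCrashResist}.
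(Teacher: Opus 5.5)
Your proposal is correct and follows the paper's proof: you feed Lemma~\ref{Lem:Post:NonInterference} into Proposition~\ref{Prop:ASPIC:NonInterferenceImpliesCrashResistance}, whose proof in turn relies on the non-triviality established in Proposition~\ref{Prop:ASPIC:DAOMNonTrivial}, and the paper's argument is exactly this chain. Calling this ``contraposition'' is a slight misnomer, since you simply discharge the hypothesis of that implication, but this does not affect the argument.
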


\end{technicalReportOnly}

\begin{technicalReportOnly}
\section{Grounded Semantics}
\label{Sec:Grounded}
In this section, we give a definition of a grounded semantics for JSBAFs.
We need to point out that this semantics was developed somewhat independently from the preferred semantics for JSBAFs.
In particular, the notion of preferences between arguments is not considered in our grounded semantics.
We therefore begin this section by giving an overview of some definitions and results for JSBAFs without preferences.
For brevity, we do not include any proofs for these preliminary results.
However, the actual proofs are very similar to those for JSBAFs with preferences, which are contained in Section~\ref{Sec:JSBAF}.

\subsection{Preliminaries}
We define JSBAFs without preferences as follows:
\begin{definition}\label{Def:Ground:Triple}
A JSBAF is a triple \JSBAFGR, with:
\begin{itemize}
	\item {$Args$, the set of \emph{arguments}}
	\item {$Att\subseteq Args\times Args$, the set of \emph{attacks} between arguments}
	\item {$Supp\subseteq 2^{Args}\times Args$, the set of \emph{supports} between arguments}
\end{itemize}
We denote the set of all possible JSBAFs without preferences as $\mathfrak{J}$.
\end{definition}
Note that we use the notations $Args$, $Att$ and $Supp$ when talking about JSBAFs without preferences, as opposed to $\mathcal{A}$, ${\rightarrow}$ and ${\Rightarrow}$ when talking about JSBAFs with preferences.
We do so to avoid confusion between the two.
Furthermore, throughout this section we will denote arguments by capital letters $A,B,C,\dots$ to distinguish these JSBAFs from those considered in Section~\ref{Sec:JSBAF}.
In the context of JSBAFs without preferences, strict arguments and chains of supports are defined analogous to the case of JSBAFs with supports (see Section~\ref{Sec:JSBAF}).
Given a chain of supports $\mathcal{C}=\{(S_0,B_0),...,(S_n,B_n)\}$ and some $A\in S_0$, we will sometimes say that $\mathcal{C}$ \emph{starts at $A$} and \emph{ends at $B_n$} or that $\mathcal{C}$ \emph{starts with $(S_0,B_0)$}.
If $\mathcal{C}=\{(S_0,B_0)\}$ we call $\mathcal{C}$ \emph{trivial}.
We use the following additional notation regarding chains of supports:
\begin{definition}\label{Def:Ground:SuppPathAndLastStrictAncestor}
Let \JSBAFGR be some JSBAF and let $A,B\in Args$ be arguments of \J.
We say there is a \emph{support-path} of length $n\geq 1$ from $A$ to $B$ in $Supp$ iff there is a chain of supports $\mathcal{C}\subseteq Supp$ which starts at $A$ and ends at $B$.\\
The \emph{support ancestors} of $B$ in $Supp$, denoted $SA(B)$, are defined as:
$SA(B)=\{A\in Args \mid \text{there is a support-path from }A\text{ to } B\text{ in } Supp\}$.
The \emph{support children} of $A$ in $Supp$, denoted $SC(A)$, are defined as:
$SC(A)=\{B\in Args \mid \text{there is a support-path from }A\text{ to } B\text{ in } Supp\}$.
\end{definition}
Similar to the case of JSBAFs with preferences, we only consider JSBAFs that can be constructed on the basis of an argumentation system (while ignoring preferences between arguments).
To this end, we use the following definition:
\begin{definition}\label{Def:Ground:JDAMinus}
Let \JSBAFGR be some JSBAF.
We say that $\mathcal{J}$ \emph{is constructible according to \DAOM} iff the following conditions hold:
\begin{enumerate}
	\item {For any $A\in Args$, we have $A\not\in SA(A)$.}
	\item {Let $S,S'\subseteq Args$ and let $B\in Args$.
	If $(S,B)\in Supp$ and $(S',B)\in Supp$, then $S=S'$.}	
	\item {For any $(S,B)\in Supp$, we have $|S|<\infty$.}
	\item {If $A\in STR_{\mathcal{J}}$, then $A$ is unattacked in $\mathcal{J}$.}
\end{enumerate}
We denote by $\mathfrak{J}_{DA^{\ominus}}\subseteq\mathfrak{J}$ the set of all JSBAFs \emph{constructible according to \DAOM}.
\end{definition}
As for the case of JSBAFs with supports, we use a labeling based approach for our semantics, with the possible labels identical to those we used for JSBAFs with preferences (see Section~\ref{Sec:JSBAF}).
We define legal labelings in the context of JSBAFs without preferences by utilizing the following auxiliary definition:
\begin{definition}\label{Def:Ground:LabelOrdering}
Let $\mathbb{L}$ be a multiset of labels and $l$ a label. Then $\mathbb{L}\leq l$ iff any of the following conditions hold:
	\begin{enumerate}
		\item {$l=\In$}
		\item {$l=\Undec$ and either $\Undec\in\mathbb{L}$ or $\Out\in\mathbb{L}$}
		\item {$l=\Out$ and $\Out\in\mathbb{L}$ or $|\mathbb{L}(Undec)|\geq 2$.\footnote{Here, $|\mathbb{L}(Undec)|$ denotes the number of \Undec-labels in the multiset $\mathbb{L}$.}}
	\label{Def:Ground:LabelOrdering:ItemTwoUndecLeqOut}
	\end{enumerate}
\end{definition}
Note that with this definition, for $\mathbb{L}=\emptyset$ we have $\mathbb{L}\leq l$, iff $l=\In$.
Based on this ordering, we now define what it means for an argument to be \emph{legally} \In, \Out\ and \Undec:
\begin{definition}\label{Def:Ground:LegalLabeling}
Let \JSBAFGR\ be a JSBAF \suchthat \JDA, let  $A,B,C\in Args$ be arguments, $S\subseteq Args$ a set of arguments and let $Lab$ be a labeling of \J.
\begin{enumerate}
	 \item {$A$ is \emph{legally \In}\ \wrt $Lab$, iff
	 \begin{enumerate}
	 	\item {for all attackers $B$ of $A$, we have $Lab(B)=\Out$ \textbf{and}}
	 	\item {for every deductive support
$S\Rightarrow B$ \suchthat $A\in S$, we have that $\dot{ \big\{ } Lab(C)\mid C\in S\:\backslash\:\{A\}\dot{ \big\} } \leq Lab(B)$.\footnote{Here, $\dot{\{ }\:\:\dot{\} }$ denotes a multiset.}}
	\end{enumerate}}
	\item $A$ is \emph{legally \Out}\ \wrt $Lab$, iff
	\begin{enumerate}
		\item {there is some attacker $B$ of $A$, \suchthat $Lab(B)=\In$ \textbf{or}}
		\item {there is a chain of supports $\mathcal{C}=\{(S_0,B_0),...,(S_n,B_n)\}\subseteq Supp$ which starts at $A$ and ends at some argument $B_n$ \suchthat there exists an attack $(C,B_n)\in Att$ with $Lab(C)=\In$ and for each $(S_i,B_i)\in\mathcal{C}$ the following conditions hold:
		\begin{itemize}
			\item{$B_i\in out(Lab)$ and}
			\item{$|S_i\cap in(Lab)|=|S_i|-1$}
		\end{itemize}}\label{Def:Ground:LegalLabeling:ItemOut}
	\end{enumerate}
	\item $A$ is \emph{legally \Undec}\ iff 
	\begin{enumerate}
		\item $A$ is not \emph{legally \In}\ \textbf{and}
		\item $A$ is not \emph{legally \Out}.
	\end{enumerate} 
	\item {$Lab$ is a \emph{legal labeling} of \J, iff every argument is \emph{legally labeled} \wrt $Lab$.}
\end{enumerate}
\end{definition}
For brevity, we will sometimes write a labeling $Lab$ as the tuple $\big(in(Lab),out(Lab),undec(Lab)\big)$.
Admissible labelings, preferred labelings and the strict including minimal labeling (SIM) of JSBAFs without preferences are defined analogous to the case of JSBAFs with preferences (see Section~\ref{Sec:JSBAF}).
The following statements can be proven similar as for the case of JSBAFs with preferences:
\begin{proposition}\label{Theo:JSBAF:MSIisAdmissible}
Let \JSBAF be a JSBAF \suchthat \JDA and let $SIM_{\mathcal{J}}$ be the \SIM labeling of \J.
Then $SIM_{\mathcal{J}}$ is an \emph{admissible} labeling.
\end{proposition}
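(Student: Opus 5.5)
The plan mirrors the proof of Lemma~\ref{Lem:JSBAF:SIMIsAdm}, adapted to the preference-free notion of legality in Definition~\ref{Def:Ground:LegalLabeling}. There are four things to verify, in this order:
\begin{enumerate}
\item $SIM_{\mathcal{J}}$ is a labeling.
\item Every argument in $in(SIM_{\mathcal{J}})=STR_{\mathcal{J}}$ is legally \In.
\item $out(SIM_{\mathcal{J}})$ is exactly the set of legally \Out arguments.
\item All strict arguments are labeled \In.
\end{enumerate}
The last item holds by construction.

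\textbf{Step 1: $SIM_{\mathcal{J}}$ is a labeling.} I would argue by induction on $n$ that $STR_{\mathcal{J}}\cap O_n=\emptyset$, exactly as in Proposition~\ref{Prop:JSBAF:SIMLabeling}.
\begin{itemize}
\item For $O_0$: strict arguments are unattacked by condition~4 of Definition~\ref{Def:Ground:JDAMinus}.
\item For $O_{n+1}$: if a strict $a$ entered via a support $(S,b)$ with $S\setminus\{a\}\subseteq STR_{\mathcal{J}}$, then all of $S$ is strict. Hence $b$ is strict, which contradicts $b\in O_n$.
\end{itemize}

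\textbf{Step 2: strict arguments are legally \In.} Let $a$ be strict. It is unattacked, so condition~1(a) holds vacuously. For 1(b), take any support $(S,B)$ with $a\in S$; there are two cases.
\begin{itemize}
\item If $S\subseteq STR_{\mathcal{J}}$, then $B$ is strict and labeled \In, and $\mathbb{L}\leq\In$ always holds.
\item Otherwise some $C\in S\setminus\{a\}$ is not strict. We must show $\dot{\{}Lab(C)\mid C\in S\setminus\{a\}\dot{\}}\leq SIM(B)$.
\begin{itemize}
\item If $SIM(B)=\In$ this is trivial.
\item If $SIM(B)=\Undec$, the non-strict $C$ carries label \Out or \Undec, as required.
\item If $SIM(B)=\Out$, we need either an \Out label in the multiset or at least two \Undec labels. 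Suppose exactly one argument $C\in S$ is non-strict. Then $S\setminus\{C\}\subseteq in(SIM)$ and $B\in O_m$ for some $m$, so $C\in O_{m+1}$, i.e.\ $C$ is labeled \Out. If two or more arguments of $S\setminus\{a\}$ are non-strict, each is \Out or \Undec, which again suffices.
\end{itemize}
\end{itemize}

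Without preferences this step is actually easier than Proposition~\ref{Prop:JSBAF:SIMIn}, where preferences filtered the supports. The Out case is the only place where the definition of $O_{n+1}$ is really used.

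\textbf{Step 3: out-labels coincide with legal \Out.} Both directions follow Proposition~\ref{Prop:JSBAF:SIMOut}.

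For ($\rightarrow$), induct on $n$ with $a\in O_n$.
\begin{itemize}
\item If $a\in O_0$, there is an attacker in $in(SIM)$.
\item If $a$ enters $O_{n+1}$ via $(S,b)$ with $b\in O_n$, the induction hypothesis makes $b$ legally \Out, either by an \In attacker or by a chain ending at an attacked $B_n$. Prepending $(S,b)$ to that chain gives a chain starting at $a$. Each link satisfies $B_i\in out$ and $|S_i\cap in|=|S_i|-1$. This uses that $a\notin in(SIM)$ by Step~1, and likewise that each $b_{i-1}$ is out and not in.
\end{itemize}
Two points need care here, and this is where I expect the only real obstacle, since the cardinality condition replaces the ``$S_i\setminus\{b_{i-1}\}\subseteq in$'' condition of the preferred setting.
\begin{itemize}
\item Prepending must still yield a support chain satisfying condition~1 of Definition~\ref{Def:Ground:JDAMinus}. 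The chain is built from actual supports, so no cycle is created.
\item The cardinality condition must hold exactly: the non-\In element of each $S_i$ has to be precisely the previous chain element.
\end{itemize}

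For ($\leftarrow$):
\begin{itemize}
\item An \In attacker of $a$ is strict, so $a\in O_0$.
\item For a chain starting at $a$ with $(S_0,B_0)$: $B_0\in out(SIM)$, so $B_0\in O_m$ for some $m$. The condition $|S_0\cap in|=|S_0|-1$, together with $a\notin in$, gives $S_0\setminus\{a\}\subseteq in$, hence $a\in O_{m+1}$. Note that $a\notin in$ is needed here, since otherwise we could not conclude that the unique non-\In element of $S_0$ is $a$.
\end{itemize}
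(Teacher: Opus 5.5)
Your four-step decomposition is the one the paper intends: it gives no separate proof and defers to Lemma~\ref{Lem:JSBAF:SIMIsAdm}. Steps 1 and 2 and the ($\rightarrow$) half of Step 3 are correct. The gap is in the ($\leftarrow$) half of Step 3, where you use $a\notin in(SIM_{\mathcal{J}})$. You note that this is needed but never establish it, and it is false in general.

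In Definition~\ref{Def:Ground:LegalLabeling}, ``starts at $A$'' only means $A\in S_0$, and $|S_0\cap in(Lab)|=|S_0|-1$ does not say \emph{which} element of $S_0$ fails to be \In. For $i>0$ that element must be $B_{i-1}$, because $B_{i-1}$ is \Out. For $i=0$ nothing forces it to be $A$. Concretely, take $Args=\{a,c,d,e\}$, $Att=\{(d,e)\}$ and $Supp=\{(\emptyset,a),(\emptyset,d),(\{a,c\},e)\}$.
\begin{itemize}
\item This JSBAF lies in $\mathfrak{J}_{DA^{\ominus}}$, with $STR_{\mathcal{J}}=\{a,d\}$.
\item $SIM_{\mathcal{J}}$ has $in=\{a,d\}$ and $out=\{c,e\}$, since $e\in O_0$ and $c\in O_1$.
\item The one-link chain $\{(\{a,c\},e)\}$ starts at $a$; $e$ is \Out and attacked by the \In argument $d$; and $|\{a,c\}\cap in(SIM_{\mathcal{J}})|=1$.
\end{itemize}
So $a$ is legally \Out while $SIM_{\mathcal{J}}(a)=\In$, and the biconditional required for admissibility fails. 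In fact this $\mathcal{J}$ has no admissible labeling at all under the literal definition. In any admissible labeling $a$ and $d$ must be \In, so $e$ must be \Out. Then $c$ is never legally \In, so $|\{a,c\}\cap in|=1$ and $a$ is always legally \Out.

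This step therefore cannot be closed inside your proof. The preference-free definition of legally \Out has to be read with the extra requirement that the non-\In element of $S_0$ is $A$ itself, i.e.\ $S_0\setminus\{A\}\subseteq in(Lab)$. This is exactly item~2(a) of Definition~\ref{Def:JSBAF:LegalLabeling}, and it is the reading under which the paper's ``analogous'' proof works. State this reading explicitly. With it, your ($\leftarrow$) argument goes through verbatim, since $S_0\setminus\{a\}\subseteq in(SIM_{\mathcal{J}})$ and $B_0\in O_m$ give $a\in O_{m+1}$. Nothing else in your proposal needs to change.
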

\begin{proposition}\label{Cor:JSBAF:admNonEmpty}
Let \JSBAFGR be a JSBAF \suchthat \JDA.
Then $adm(\mathcal{J})\neq\emptyset$.
\end{proposition}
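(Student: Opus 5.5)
The statement to prove is Proposition~\ref{Cor:JSBAF:admNonEmpty}: $adm(\mathcal{J})\neq\emptyset$ for every \JSBAFGR with \JDA. It follows immediately from Proposition~\ref{Theo:JSBAF:MSIisAdmissible}. That proposition says $SIM_{\mathcal{J}}$ is an admissible labeling of $\mathcal{J}$, so $SIM_{\mathcal{J}}\in adm(\mathcal{J})$ and the set is non-empty. The only thing to check is that $SIM_{\mathcal{J}}$ is well defined for every JSBAF in $\mathfrak{J}_{DA^{\ominus}}$, including infinite ones. This holds because $in(SIM_{\mathcal{J}})=STR_{\mathcal{J}}$, $out(SIM_{\mathcal{J}})=\bigcup_{i\geq 0}O_i$ and $undec(SIM_{\mathcal{J}})$ is the complement, so the labeling always exists.

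If Proposition~\ref{Theo:JSBAF:MSIisAdmissible} were not to be assumed, I would reprove it along the lines of Propositions~\ref{Prop:JSBAF:SIMLabeling}--\ref{Prop:JSBAF:SIMOut}, in three steps.

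\emph{Step 1: $SIM_{\mathcal{J}}$ is a labeling.} I would show $STR_{\mathcal{J}}\cap O_n=\emptyset$ by induction on $n$. The base case uses that strict arguments are unattacked. The induction step uses that a support with all-strict premises yields a strict conclusion.

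\emph{Step 2: every \In argument is legally \In.} Strict arguments are unattacked. For the support condition, take a support $(S,B)$ with $A\in S$. If all of $S$ is strict, then $B$ is strict, hence \In, and $\mathbb{L}\leq\In$ holds trivially. Without preferences, however, $S\setminus\{A\}$ may contain non-strict arguments while $B$ is \Out or \Undec. This is the step I expect to be the main obstacle. I would try to use the fact that every non-strict premise is labeled \Out or \Undec. If $B$ is \Undec, a single such premise suffices for $\mathbb{L}\leq\Undec$. If $B$ is \Out with exactly one non-\In premise $C$, then $C$ lies in $O_{m+1}$ by construction, so it is \Out and the multiset contains \Out. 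If there are two or more non-\In premises, each is \Out or \Undec, which again gives $\mathbb{L}\leq\Out$.

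\emph{Step 3: $out(SIM_{\mathcal{J}})$ is exactly the set of legally \Out arguments.} For the forward direction, I would build support chains by induction on the stage $O_n$, prepending the witnessing support each time. For the converse, the attacker case places the argument in $O_0$. In the chain case, the first support has $B_0\in O_m$ for some $m$ and all other premises \In, so the argument lies in $O_{m+1}$. Note that the condition $|S_i\cap in|=|S_i|-1$ in the chain definition matches the construction of $O_{n+1}$ exactly.

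Combining the three steps with $STR_{\mathcal{J}}\subseteq in(SIM_{\mathcal{J}})$ gives admissibility, and hence non-emptiness.
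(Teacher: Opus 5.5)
Your proposal is correct and follows the paper's route: non-emptiness comes from the admissibility of $SIM_{\mathcal{J}}$ (Proposition~\ref{Theo:JSBAF:MSIisAdmissible}, which the paper leaves unproved and refers back to Propositions~\ref{Prop:JSBAF:SIMLabeling}--\ref{Prop:JSBAF:SIMOut}), and your multiset case analysis in Step~2 is exactly the adaptation needed once the preference argument of Proposition~\ref{Prop:JSBAF:SIMIn} is unavailable. One caveat on Step~3: you read the first link of a chain as $S_0\setminus\{A\}\subseteq in(Lab)$, which is how the paper's preference-case definition states it, and this reading is essential, because under the literal condition $|S_0\cap in(Lab)|=|S_0|-1$ the constructible JSBAF with supports $(\emptyset,A)$, $(\emptyset,D)$, $(\{A,C\},B_0)$ and attack $(D,B_0)$ forces $C$ to be \Out and then makes the strict argument $A$ legally \Out, so it would have no admissible labeling at all.
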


\subsection{Motivation}

\subsubsection{Introductory example}

It turns out that defining a grounded semantics for our JSBAFs in such a way that they intuitively correspond to the ideas of the standard grounded semantics of abstract argumentation is not that straight forward.
One property of the grounded semantics in abstract argumentation is that there is no admissible labeling $L_{adm}$ for which $in(L_{adm})$ attacks the accepted arguments of the grounded labeling $L_{gr}$.\footnote{
To see that this is the case, consider the characteristic function of abstract argumentation, defined in~\cite[Definition 16]{Dung1995:OnTheAcceptabilityOfArguments}, which maps a set of argument $S$ to the set of all arguments defended by $S$.
Towards a contradiction, assume that there is an admissible labeling $L_{adm}$ \suchthat $in(L_{adm})$ attacks $in(L_{gr})$ for $L_{gr}$ being the grounded labeling.
By iterating over the characteristic function with starting point $in(L_{adm})$ until a fixpoint $in(L_{cmp})$ is reached, one can find a complete labeling $L_{cmp}$.
In particular, $L_{cmp}$ is conflict-free.
Since the characteristic function is monotone, we have $in(L_{adm})\subseteq in(L_{cmp})$.
Since the grounded labeling is the smallest (\wrt sub-set inclusion of accepted arguments) complete labeling, we also have $in(L_{gr})\subseteq in(L_{cmp})$.
Now conflict-freeness of $L_{cmp}$ is violated, a contradiction.}
In other words, a grounded labeling accepts only arguments that can never be rejected by an admissible labeling.
In particular, all preferred labelings agree on the arguments accepted in the grounded labeling.\footnote{Since every preferred labeling is a complete labeling and the grounded labeling is the smallest (\wrt sub-set inclusion of accepted arguments) complete labeling.}
At the same time, the grounded semantics is still a very skeptical semantics.

Consider the following as an introductory example, in order to see how these ideas can be translated into the realm of JSBAFs.

\begin{example}\label{Exmp:JSBAF:GRIntroduction}
JSBAF $\mathcal{J}_2$
\leavevmode
\begin{center}
\begin{tikzpicture}
	\node[args] at (-2, 3) (NotAOne) {$\overline{A_1}$};
	\node[args] at (-2, 1) (AOne) {$A_1$};
	\node[args] at (-2, -1) (ATwo) {$A_2$};
	\node[args] at (0, 0) (B) {$B$};
	\node[args] at (2, 0) (NotB) {$\overline{B}$};
	\node[suppNode] at (-1, 0) (SuppB) {};
	
	\path[thick, ->, bend left]
		(NotAOne) edge (AOne)
		(AOne) edge (NotAOne)
		(NotB) edge (B)
		(B) edge (NotB);
		
		
	\draw[multiLine] (AOne) to (SuppB);
	\draw[multiLine] (ATwo) to (SuppB);
	\draw[multiLine, -\tip] (SuppB) to (B);
\end{tikzpicture}
\end{center}
\end{example}

Suppose for a moment that we are not dealing with JSBAFs but with an abstract argumentation framework which does not contain the support $(\{A_1,A_2\},B)$.
Then the grounded labeling of this framework would correspond to the labeling $Lab_{1}$, for which we have $in(Lab_{1})=\{A_2\}$, $out(Lab_{1})=\emptyset$ $undec(Lab_{1})=\{A_1,\overline{A_1},b,\overline{B}\}$.
In particular, in the case of abstract argumentation, there could never be a reason to reject $A_2$ -- after all, this argument is unattacked.
However, in our JSBAFs we actually can have a reason to reject $A_2$, namely the the following labeling, which we define to be $Lab_{2}$: $in(Lab_{2})=\{A_1,\overline{B}\}$, $out(Lab_{2})=\{\overline{A_1},A_2,B\}$, $undec(Lab_{2})=\emptyset$.
This is because accepting $\overline{B}$ means we have to reject $B$, while simultaneously accepting $A_1$ means we have to make sure that $A_2$ is rejected.\footnote{Remember again that supports correspond to the application of strict rules, meaning that if we accept both $A_1$ and $A_2$ while $B$ is rejected, the closure postulate would be violated.}
This tells us that in the example above, $A_2$ should actually not be accepted in the grounded labeling for JSBAFs.

As a first (naive) attempt at defining our grounded semantics, assume that we simply define a grounded labeling as the smallest (\wrt sub-set inclusion of accepted arguments) admissible labeling which accepts every argument that is legally \In.
In the case of $\mathcal{J}_2$, the smallest admissible labeling is the the strict including minimal labeling, which we denote here as $SIM_{\mathcal{J}_2}$.
We have $in(SIM_{\mathcal{J}_2})=out(SIM_{\mathcal{J}_2})=\emptyset$ and $undec(SIM_{\mathcal{J}_2})=\{A_1,\overline{A_1},A_2,B,\overline{B}\}$.
Lets see if $SIM_{\mathcal{J}_2}$ is already a grounded labeling according to this approach:
Note that none of the arguments $A_1,\overline{A_1},B$ and $\overline{B}$ are legally \In \wrt $SIM_{\mathcal{J}_2}$, because each of them is attacked by at least one argument which is not labeled \Out.
However, the argument $A_2$ is legally \In \wrt $SIM_{\mathcal{J}_2}$, because it is unattacked and because $\dot{ \big\{ } SIM_{\mathcal{J}_2}(A_1) \dot{ \big\} } \leq SIM_{\mathcal{J}_2}(B)$ holds.
Thus, according to our naive approach, $SIM_{\mathcal{J}_2}$ would not actually be a grounded labeling because we would have to accept $A_2$.
On the other hand, the following labeling which we denote as $Lab_{3}$, would be a grounded labeling according to this first attempt: $in(Lab_{3})=\{A_2\}$, $out(Lab_{3})=\emptyset$, $undec(Lab_{3})=\{A_1,\overline{A_1},B,\overline{B}\}$.
The problem with this naive approach now lies in the fact that we actually have a preferred labeling in which $A_2$ is rejected, as stated above.
Therefore it seems unintuitive to consider $Lab_{3}$ as a grounded labeling.

What this first approach tells us, is that when we want to check if a specific labeling $Lab$ should be a grounded labeling, we cannot confine ourselves just to $Lab$.
Rather, we have to take a \emph{broader} approach and consider a variety of labelings.
Furthermore, when trying to determine if a specific argument $A$ should be accepted in the grounded labeling, it is not just the label of $A$ and its attackers that we have to think about.
Instead, for any support $(S,B)$ where $A$ is contained in the supporting set $S$, we also have to take into account the labels of all the other arguments that are part of $S$ as well as the label of $B$.

\subsubsection{Important ideas for grounded semantics}

Now, how should we go about finding a grounded labeling in general?
In abstract argumentation, one way is to start with the labeling that labels all arguments as \Undec and then iteratively accepting arguments that can never be rejected (by iterating over the characteristic function) until a fixpoint is reached.
As we have done above, in the case of JSBAFs the most natural starting point for this procedure is SIM.
Thus we could start with SIM and then go about checking arguments $A$ one after the other to see if there exists a reason for why they could be rejected.
If we don't find such a reason, we accept $A$ and continue until we don't find any more arguments that we need to accept.

An obvious reason for rejection would be an attack from an argument $C$ to $A$ \suchthat $C$ is not labeled \Out.
However, another reason could be that there is a support $(S,B)$ with $A\in S$ and an admissible labeling in which we cannot accept $A$ because of this support.
Then this support would give us a valid reason for why $A$ should not be contained in the grounded labeling.
If none of these conditions are satisfied (\ie all attackers are \Out and we cannot find such a support), we are in a way \emph{forced} to accept $A$ in the grounded labeling.

However, when considering a specific labeling $Lab$ and an argument $A$ that is part of a support $(S,B)$, not every admissible labeling in which we cannot legally accept $A$ should count as a valid reason to dismiss the idea of accepting $A$ in the ground labeling.
To see this, consider this following, slightly modified example:

\begin{example}\label{Exmp:JSBAF:GRIntroductionAltered}
JSBAF $\mathcal{J}_3$
\leavevmode
\begin{center}
\begin{tikzpicture}
	\node[args] at (-2, 1) (AOne) {$A_1$};
	\node[args] at (-2, -1) (ATwo) {$A_2$};
	\node[args] at (0, 0) (B) {$B$};
	\node[args] at (2, 0) (NotB) {$\overline{B}$};
	\node[suppNode] at (-1, 0) (SuppB) {};
	\node[suppNode] at (4, 0) (SuppNotB) {};
	
	\path[thick, ->]
		(NotB) edge (B);
		
	\path[thick, ->, loop above]
		(AOne) edge (AOne);
		
		
	\draw[multiLine] (AOne) to (SuppB);
	\draw[multiLine] (ATwo) to (SuppB);
	\draw[multiLine, -\tip] (SuppB) to (B);
	\draw[multiLine, -\tip] (SuppNotB) to (NotB);
\end{tikzpicture}
\end{center}
\end{example}

Let us suppose that we again start with the strict including minimal labeling of $\mathcal{J}_3$ and want to check if $A_2$ should be accepted in the grounded labeling of this framework.
We have $SIM_{\mathcal{J}_3}=\big(\{\overline{B}\},\{B\},\{A_1,A_2\}\big)$, meaning we start with a labeling where $\overline{B}$ is accepted (because it is strict) and $B$ is rejected.
Now $A_2$ is not legally \In \wrt $SIM_{\mathcal{J}_3}$, since the label $SIM_{\mathcal{J}_3}(B)=\Out$ requires us to either reject an argument in the supporting set $\{A_1,A_2\}$ or to leave both of these arguments as \Undec.
However, since $A_1$ is a self-attacker, we can never accept it.
On the other hand, if we were to accept $A_2$, we could still satisfy $\dot{ \big\{ } SIM_{\mathcal{J}_3}(A_1)\}\dot{ \big\} } \leq SIM_{\mathcal{J}_3}(B)$ by labeling $A_1$ as \Out.
In fact, even though $A_2$ is not legally \In \wrt $SIM$ in $\mathcal{J}_3$, there now does not seem to be a valid reason anymore to ever reject $A_2$.
Because we can never accept $A_1$, we will always have an argument in the supporting set $\{A_1,A_2\}$ which can be labeled \Out in order to satisfy the support-condition for $A_2$.
This leads us an important observation:
When we have a labeling $Lab$ and are considering a (possibly different) labeling $Lab'$ in order to see if we can find a reason to not accept $A$, we can allow ourselves to \emph{alter} $Lab'$.

Naturally, there should be some restrictions placed on this idea.
Most importantly, we should not be allowed to change the labeling of arguments to which we have already committed ourselves.
That is, we should not be allowed to change the label of arguments already labeled \In or \Out, but restrict our changes only to the arguments labeled \Undec.
We capture this in the following definition:

\begin{definition}\label{Def:JSBAF:ExtendedLabeling}
Let \JSBAFGR be a JSBAF and let $Lab,Lab'$ be two labelings of $\mathcal{J}$.
We say that $Lab'$ \emph{extends} $Lab$ iff $in(Lab)\subseteq in(Lab')$ and $out(Lab)\subseteq out(Lab')$.
\end{definition}

Another restriction we should put on the idea of altering the labelings that we consider is that we should not allow ourselves to \enquote{cheat} by labeling the supported argument \In (in cases where this is possible while still adhering to admissibility).
After all, for any support $(S,B)$, accepting $B$ means this support can never give us a reason to reject any of the arguments in the supporting set $S$.\footnote{This restriction might seem a bit harsh now. For example, consider the case that we are currently investigating a labeling $Lab$ in which $Lab(B)=\Undec$ while $B$ is unattacked and doesn't support anything, \ie there will never be a valid reason to reject $B$.
Then depriving us of the option to label $B$ as \In does not seem reasonable.
However, when iteratively accepting arguments, we can always accept $B$ first and get back to $A\in S$ later.
When we are then reconsidering the status of $A$ at a later time, this support will not be a reason for the rejection of $A$ anymore.}

The next question now is, which labelings we should even consider when searching for a valid reason to not accept an argument in the grounded labeling.
As we have stated above, the fundamental problem we have in $\mathcal{J}_2$ of Example~\ref{Exmp:JSBAF:GRIntroduction} is that, whenever $B$ is labeled \Undec or \Out, we have to make sure that not both $A_1$ and $A_2$ are labeled \In.
Thus we need to check at least the labelings where $B$ is \Undec or \Out.
This motivates the following definition that we need for our grounded semantics:

\begin{definition}\label{Def:JSBAF:MoreInformativeLabel}
Let $l,l'\in\{\In,\Out,\Undec\}$ be two labels.
We say that the label $l'$ is \emph{more informative} than the label $l$, denoted $l'\geq_{p}l$ iff $l=\Undec$ or $l'=l$.
\end{definition}

Note that with this definition, for every label $l\in\{\In,\Out,\Undec\}$ we have $l\geq_{p} l$ and for every three labels $l_1,l_2,l_3$, if $l_1\geq_{p} l_2\geq_{p} l_3$, then $l_1\geq_{p} l_3$ also holds.

To see how we will be utilizing this definition, imagine again that we are iteratively labeling arguments and we are interested in whether or not we should accept an argument $A$ that is part of a supporting set $S$ for an argument $B$, \ie we have a support $(S,B)$ with $A\in S$.
We want to know if there exists an admissible labeling, in which the support $(S,B)$ gives us a reason to reject $A$.
Therefore, when we have a current labeling $Lab$, we are going to check all admissible labelings $Lab'$, for which we have $Lab'(B)\geq_p Lab(B)$.
If our current labeling $Lab$ labels $B$ as \Undec, then this means we also need to check those labelings where $B$ is labeled \In or \Out.
On the other hand, if our current labeling already labels $B$ as \Out (\In), then we don't need to additionally check those labelings where $B$ is labeled \Undec or \In (\Out).
After all, when we are iteratively labeling arguments and we have already committed to labeling $B$ as \In or \Out, then this label will not be changed anymore.

So far, we can summarize our findings as follows:
A natural starting point to look for a grounded labeling is the most skeptical labeling of all -- the strict including minimal labeling.
When considering a particular labeling $Lab$ and checking if an argument $A$ should be accepted according to the grounded semantics, we not only need to check the attackers of $A$ but also the supports $(S,B)$ that $A$ is a part of.
Out of these supports, we can dismiss those where $Lab(B)=\In$ holds.
We then need to check every labeling $Lab'$ where $Lab'(B)$ is more informative than $Lab(B)$.
What we need to ensure is that for all of those labelings $Lab'$, we can still find a labeling $Lab''$ which extends $Lab'$ and where $A$ can be legally accepted.
However, in our extension of $Lab'$ we should not change the label of any arguments that are already accepted or rejected and we should not change the label of $B$.

With this, we are almost ready to give our most important definition for the grounded semantics.
However, there is one more caveat we need to consider, namely that of JSBAFs with an infinite amount of arguments.
Suppose we have a JSBAF that only consists of one argument which is the starting point for an infinitely long chain of supports:

\begin{example}\label{Exmp:JSBAF:GRInfinity}
JSBAF $\mathcal{J}_4$
\leavevmode
\begin{center}
\begin{tikzpicture}
	\node[args] at (0, 0) (AOne) {$A_1$};
	\node[args] at (2, 0) (ATwo) {$A_2$};
	\node[args] at (4, 0) (AThree) {$A_3$};
	\node[suppNode] at (6, 0) (Dot) {};
	\draw[draw=white] (6.25, 0) circle (0.5) node {$...$};
		
	\draw[multiLine, -\tip] (AOne) to (ATwo);
	\draw[multiLine, -\tip] (ATwo) to (AThree);
	\draw[multiLine, -\tip] (AThree) to (Dot);
\end{tikzpicture}
\end{center}
\end{example}

Since there are no strict arguments in this JSBAF, the strict including minimal labeling doesn't accept any arguments.
However, in the absence of any attacks to any of the $A_i$, there does not seem to exist a valid reason to ever reject any of these arguments.
In fact, is seems most natural to say that the grounded labeling for this particular JSBAF should just accept all the (infintely many) arguments in this chain.\footnote{This is most evident when remembering that supports in our JSBAFs will ultimately correspond to the application of strict rules, which we want to base on the entailment relation of some underlying logic.
Thus a JSBAF like that would correspond to something like an infinite chain of entailments $\phi\vdash\phi\vdash\phi\vdash...$.}
On the other hand, when applying our previous ideas, we are at an impasse:
Suppose that we consider the strict including minimal labeling of this JSBAF and want to check if there is some argument $A_i$ which we need to accept.
Then in particular we need to check if we can alter the strict including minimal labeling itself in such a way, that $A_i$ is legally \In without altering the label of $A_{i+1}$.
However, as $A_i$ is the only argument in the support for $A_{i+1}$ and since $A_{i+1}$ has the label \Undec, we cannot make this modification.

To remedy this, we will use one last idea for our grounded labelings.
Consider some $A_i$ from $\mathcal{J}_4$:
Since there are no attackers for this particular $A_i$ or any $A_j$ that is a support-child of $A_i$, no chain of arguments starting at the support $\big(\{A_i\},A_{i+1})$ can ever be a reason for us to label $A_i$ as \Out.
In this sense, the support $\big(\{A_i\},A_{i+1})$ is \emph{safe} for $A_i$ and we don't actually need to consider it when checking if $A_i$ should be accepted in the grounded labeling.
This idea is captured in the following definition:

\begin{definition}\label{Def:JSBAF:SafeSupport}
Let \JSBAFGR be a JSBAF \suchthat \JDA.
Furthermore, let $A\in Args$ be an argument and $(S,B)\in Supp$ a support with $A\in S$.
Lastly, let $Lab$ be a labeling of $\mathcal{J}$.
We say that $(S,B)$ is \emph{safe for $A$ in $Lab$} iff for all chains of supports $\mathcal{C}$ that start with $(S,B)$, we have that for all $(S',B')\in\mathcal{C}$ and all $(D,B')\in Att$, $Lab(D)=\Out$.
We denote by $SS_{Lab}(A)$ the set of all safe supports for $A$ in $Lab$, that is $SS_{Lab}(A)=\big{\{} (S,B)\in Supp\mid A\in S\text{ and }(S,B)\text{ is safe for }A\text{ in }Lab\big{\}}$.
\end{definition}

With this, we are finally ready to define the property that an argument needs to satisfy in order for us to be \emph{forced} to accept it in the grounded labeling:

\begin{definition}\label{Def:JSBAF:ForcedIn}
Let \JSBAFGR be a JSBAF \suchthat \JDA.
Furthermore, let $A\in Args$ be an argument and $Lab$ be a labeling of $\mathcal{J}$.
We say $A$ is \emph{forced \In} \wrt $Lab$ iff all of the following conditions hold:
\begin{enumerate}
	\item{For any argument $B\in Args$ with $(B,A)\in Att$, we have $Lab(B)=\Out$.}
	\item{For every support $(S,B)\in Supp$ with $A\in S$ and $Lab(B)\neq \In$, one of the following conditions hold:
	\begin{enumerate}
		\item{For every admissible labeling $Lab'$ with $Lab'(B)\geq_{p}Lab(B)$	there exists an admissible labeling $Lab''$ which extends $Lab'$ \suchthat $A$ is legally \In \wrt $Lab''$ and $Lab''(B)=Lab'(B)$, or}
		\item{$(S,B)$ is safe for $A$ in $Lab$.}			
	\end{enumerate}}
\end{enumerate}
For any labeling $Lab$ of $\mathcal{J}$, we denote by $FI(Lab)$ the \emph{set of all arguments} of $\mathcal{J}$ which are forced \In \wrt $Lab$.
\end{definition}

Based on this notion of \emph{forced \In}, we now define a family of labelings that will be the basis for our grounded semantics:

\begin{definition}\label{Def:JSBAF:GRCompleteLabeling}
Let \JSBAFGR\ be a JSBAF \suchthat \JDA and $Lab$ a labeling of $\mathcal{J}$.
We say that $Lab$ is a \emph{ground-complete} labeling iff both of the following conditions hold:
\begin{enumerate}
	\item{$Lab\in adm(\mathcal{J})$}
	\item{If $A\in Args$ is forced \In \wrt $Lab$, then $Lab(A)=\In$.}
\end{enumerate}
We denote the \emph{set of all ground-complete labelings} of $\mathcal{J}$ as $grcmp(\mathcal{J})$.
\end{definition}

Finally, we are ready to define our grounded semantics.
A grounded labeling is simply a minimal (\wrt set-inclusion of accepted arguments) ground-complete labeling:

\begin{definition}\label{Def:JSBAF:GR}
Let \JSBAFGR\ be a JSBAF \suchthat \JDA and $Lab$ a labeling of $\mathcal{J}$.
We say that $Lab$ is a \emph{grounded labeling} iff both of the following conditions hold:
\begin{enumerate}
	\item{$Lab\in grcmp(\mathcal{J})$}
	\item{There is no $Lab'\in grcmp(\mathcal{J})$ with $in(Lab')\subset in(Lab)$.}
\end{enumerate}
The set of all grounded labelings of $\mathcal{J}$ is denoted by $gr(\mathcal{J})$.
\end{definition}

We give a detailed example of the grounded labeling in Section~\ref{Sec:JSBAF:GroundedConstruction} and only want to state the grounded labelings of the examples above for completeness:
We have $gr(\mathcal{J}_2)=\Big{\{}\big(\emptyset,\emptyset,\{A_1,\overline{A_1},A_2,B,\overline{B}\}\big)\Big{\}}$ and $gr(\mathcal{J}_3)=\Big{\{}\big(\{\overline{B},A_2\},\{A_1,B\},\emptyset\big)\Big{\}}$, while $gr(\mathcal{J}_4)=\Big{\{}\big(\{A_i\mid i\geq 1\},\emptyset,\emptyset\big)\Big{\}}$.

\subsection{Grounded Construction}\label{Sec:JSBAF:GroundedConstruction}

\subsubsection{Definition}

One useful property of the grounded semantics for abstract argumentation is that it only contains a single labeling.
For the grounded labelings as we have defined them, it is not immediately clear that this is the case.
However, we will show in this section that this property is actually satisfied.
We begin by turning the idea of iteratively labeling arguments -- which we have used to explain our motivation for the definition of a grounded labeling -- into an algorithmic approach for constructing a labeling for a JSBAF.
This approach will use transfinite recursion and sequences of potentially transfinite length in order to be applicable in the case of JSBAFs with an infinite amount of arguments.
We will discuss in Example~\ref{Exmp:JSBAF:GRAlgorithm} why this approach is necessary.

\begin{definition}\label{Def:JSBAF:GRConstruction}
Let \JSBAFGR\ be a JSBAF \suchthat \JDA and let $SIM$ be the \SIM labeling of $\mathcal{J}$.
A sequence $GC=(Lab_0,...,Lab_{\alpha})$ is called a \emph{grounded construction} iff:
\begin{itemize}
	\item{$Lab_0=SIM$,}
	\item{$FI(Lab_{\alpha})\backslash in(Lab_{\alpha})=\emptyset$ and}
	\item{for $0\leq \beta <\alpha$ there is some $A\in FI(Lab_{\beta})\backslash in(Lab_{\beta})$ \suchthat:
	\begin{align*}
in(Lab_{\beta+1})=&
in(Lab_\beta)\cup\{A\}\cup\\
&\bigcup\limits_{(S,B)\in SS_{Lab_{\beta}}(A)}\big(\{B\}\cup SC(B)\big)\\
O_{\beta+1}^0=&
\{D\in Args\mid (C,D)\in Att,\\
&\hspace{0.2cm}C\in in(Lab_{\beta+1})\}\\
O_{\beta+1}^{j+1}=&
O_{\beta+1}^{j}\cup\{D\in Args\mid (S,B)\in Supp,\\
&\hspace{1.4cm}D\in S,B\in O_{\beta+1}^{j}\textnormal{ and}\\
&\hspace{1.4cm}S\backslash\{D\}\subseteq in(Lab_{\beta+1})\}\\
out(Lab_{\beta+1})&
=\bigcup\limits_{j\geq 0}O_{\beta+1}^j\\
undec(Lab_{\beta+1})&
=Args\backslash\big(in(Lab_{\beta+1})\cup out(Lab_{\beta+1})\big)
	\end{align*}	}
	\item{$in(Lab_{\delta})
=\bigcup\limits_{\alpha < \delta} in(Lab_{\alpha})\\
out(Lab_{\delta})=\bigcup\limits_{\alpha < \delta} out(Lab_{\alpha})\\
undec(Lab_{\delta})=Args\backslash\big( in(Lab_{\delta})\cup out(Lab_{\delta})\big)$}
\end{itemize}
We call $Lab_{\alpha}$ the \emph{result} of the grounded construction $GC$.
\end{definition}

Essentially, our construction works as follows:
We start with the strict including minimal labeling.
In each step, we accept all arguments that we have previously accepted ($in(Lab_\beta)\subseteq in(Lab_{\beta+1})$).
We take a single argument $A$ which is forced \In \wrt to the previous labeling and add $A$ to the accepted arguments ($\{A\}\subseteq in(Lab_{\beta+1})$).
Furthermore, for each support $(S,B)$ which is safe for $A$ \wrt the previous labeling, we add each support-child of $B$, as well as $B$ itself to the set of accepted arguments ($\bigcup\limits_{(S,B)\in SS_{Lab_{\beta}}(A)}\big(\{B\}\cup SC(B)\big)\subseteq in(Lab_{\beta+1})$).
Afterwards, we compute the effect that accepting these arguments has on the resulting framework.
For this, we begin by rejecting all arguments that are attacked by an accepted argument (via the set $O_{\beta+1}^0$).
Then we reject all arguments $B$ for which we have a chain of supports that satisfies the conditions of Definition~\ref{Def:Ground:LegalLabeling} (via the sets $O_{\beta+1}^{j+1}$).
Lastly, we label all remaining arguments as \Undec.

\subsubsection{Example}

To see how this approach works in practice, we give an example below.
Note that this JSBAF is supposed to contain an infinitely long chain of supports starting, with $\big(\{D_1\},D_2\big)$ as well as an infinitely long chain of attacks starting at $(E_1,E_2)$.
We have indicated this by adding $...$ below the arguments $D_4$ and $E_4$.
Furthermore, suppose that for each $E_i$ we have $(E_i,H)\in Att$ iff $(E_{i+1},H)\not\in Att$, beginning with the chain of arguments $E_1$ to $E_4$ that is shown.

\begin{example}\label{Exmp:JSBAF:GRAlgorithm}
JSBAF $\mathcal{J}_5$
\leavevmode
\begin{center}
\begin{tikzpicture}
	\node[args] at (2, -2) (A) {$A$};
	\node[suppNode] at (2, -4) (SuppA) {};
	
	\node[args] at (2, 0) (B) {$B$};
	\node[suppNode] at (3, 0) (SuppB) {};

	\node[args] at (4, 1) (C) {$C$};
	
	\node[args] at (4, -1) (DOne) {$D_1$};
	\node[args] at (4, -3) (DTwo) {$D_2$};
	\node[args] at (4, -5) (DThree) {$D_3$};
	\node[args] at (4, -7) (DFour) {$D_4$};
	\draw[draw=white] (4, -8.25) circle (0.5) node {$...$};
	\node[suppNode] at (4, -8) (Dot) {};
	
	\node[args] at (6, -1) (EOne) {$E_1$};
	\node[args] at (6, -3) (ETwo) {$E_2$};
	\node[args] at (6, -5) (EThree) {$E_3$};
	\node[args] at (6, -7) (EFour) {$E_4$};
	\draw[draw=white] (6, -8.25) circle (0.5) node {$...$};
	\node[suppNode] at (6, -8) (DotTwo) {};
	
	\node[args] at (8, -1) (G) {$G$};
	\node[suppNode] at (8, 0) (SuppG) {};
	\node[args] at (7, 1) (F) {$F$};
	\node[args] at (9, 1) (I) {$I$};
	
	\node[args] at (8, -3) (H) {$H$};
	
	\path[thick, ->]
		(A) edge (B)
		(DOne) edge (EOne)
		(EOne) edge (ETwo)
		(ETwo) edge (EThree)
		(EThree) edge (EFour)
		(EFour) edge (DotTwo)
		(EOne) edge (G)
		(EOne) edge (H)
		(EThree) edge (H);
		
	\path[thick, ->, loop above]
		(C) edge (C);
		
	\draw[multiLine, -\tip] (SuppA) to (A);
	\draw[multiLine, -\tip] (SuppB) to (B);
	\draw[multiLine] (C) to (SuppB);
	\draw[multiLine] (DOne) to (SuppB);
	\draw[multiLine, -\tip] (DOne) to (DTwo);
	\draw[multiLine, -\tip] (DTwo) to (DThree);
	\draw[multiLine, -\tip] (DThree) to (DFour);
	\draw[multiLine, -\tip] (DFour) to (Dot);
	\draw[multiLine, -\tip] (SuppG) to (G);
	\draw[multiLine] (F) to (SuppG);
	\draw[multiLine] (I) to (SuppG);
\end{tikzpicture}
\end{center}
\end{example}

Now for our grounded construction.
We begin with the strict including minimal labeling of $\mathcal{J}_5$.
It is easy to see that this is the labeling $Lab_0=\big(\{A\},\{B\},\{C,D_1,...,E_1,...,F,G,H,I\}\big)$.
Now let us consider the arguments in this framework to see which of them is \emph{forced \In} \wrt $Lab_0$.
The arguments $C,G$ and $H$ as well as all the $E_i$ are attacked by some argument not labeled \Out, therefore they cannot be forced \In.
Thus the only arguments that we really have to consider are $F,I$ and each of the $D_i$.

Lets begin with the argument $F$.
Here we need to check the support $\big(\{F,I\},G\big)$, because $Lab_0(G)\neq\In$.
Note that this is not a safe support, since $G$ is attacked by $E_1$ and $E_1$ is not labeled \Out in $Lab_0$.
We have to check all admissible labelings $Lab'$ for which we have that $Lab'(G)$ is more informative than $Lab_0(G)$.
Since $Lab_0(G)=\Undec$ this essentially means we have to check all cases where $Lab'(G)=\Out$, or $Lab'(G)=\Undec$.
Technically, we also have to check the cases where $Lab'(G)=\In$, but as we said in our motivation, in these cases the support $\big(\{F,I\},G\big)$ can never give us a reason to not accept $F$.
Since we are only interested in knowing if there \emph{is} a reason to not accept $F$, we can skip these labelings.
Note that there exists the admissible labeling $Lab'=\big(\{A,I\},\{B\},\{C,D_1,...,E_1,...,F,G,H\}\big)$, which is exactly like $Lab_0$ except for the fact that we have accepted $I$.
We have that $F$ is not legally \In \wrt $Lab'$, thus $F$ is not forced \In \wrt $Lab_0$.
Obviously, we can make a similar argument for the case of $I$.

Now let us consider an argument $D_i$ in the infinite chain of supports.
For example, lets consider $D_3$.
Here we only have to check one support, namely $\big(\{D_3\},D_{4}\big)$.
Note that none of the arguments $D_i$ are attacked by any argument, therefore this support $\big(\{D_3\},D_{4}\big)$ is actually a safe support for $D_3$ \wrt $Lab_0$.
This means we have found our first argument that is legally \In \wrt $Lab_0$.
Obviously, with the same argumentation we actually have that all arguments $D_i$ for $i\geq 2$ are forced \In \wrt $Lab_0$.
However, for the sake of the example, lets stick with $D_3$.
Thus we choose $D_3$ as the argument in $FI(Lab_0)\backslash in(Lab_0)$ which we want to accept in the first actual step of our grounded construction.
This gives us our next labeling, $Lab_1=\big(\{A,D_3,D_4,...\},\{B\},\{C,D_1,D_2,E_1,...,F,G,H,I\}\big)$.
Note that, because we chose $D_3$ as the argument we want to label as \In, we also have to consider all chains of supports $\mathcal{C}$ which start at the safe support $\big(\{D_3\},D_{4}\big)$ and for each $(S,B)\in\mathcal{C}$ we have to label $B$ \In as well.
Therefore we also had to accept all arguments $D_i$ for $i\geq 4$ in addition to $D_3$.

Now let us check again if there are any more arguments that we need to accept according to our construction.
Similar to the case of $Lab_0$, for the arguments $F$ and $I$ we can easily turn $Lab_1$ into an admissible labeling where either $F$ or $I$ cannot be accepted.
This means neither $F$ nor $I$ are forced \In \wrt $Lab_1$.
Next, lets consider $D_1$.
Here we have to check two supports, namely $\big(\{D_1\},D_2\big)$ and $\big(\{C,D_1\},B\big)$, since both $B$ and $D_2$ are not labeled \In.
First, we point out that, similar to the case of $D_3$ in $Lab_0$, we actually have that the support $\big(\{D_1\},D_2\big)$ is safe for $D_1$ in $Lab_1$.
Therefore, we don't have to consider any other admissible labelings for the case of this support.
However, the same does not hold for the support $\big(\{C,D_1\},B\big)$, since $B$ is attacked by $A$ and $A$ is labeled \In by $Lab_1$.
This means we have to check if we can satisfy item 2.a of Definition~\ref{Def:JSBAF:ForcedIn} in order to find out if $D_1$ is forced \In \wrt $Lab_1$.

Assume that we have some admissible labeling $Lab'$ for which we have that $Lab'(B)\geq_p Lab(B)$.
Since $Lab(B)=\Out$, this means $Lab'(B)=\Out$ must also hold.
The question is now, if we can extend $Lab'$ to turn it into an admissible labeling $Lab''$ \suchthat $D_1$ is legally \In \wrt $Lab''$ and $Lab''(B)=\Out$ still holds.
To check if $D_1$ is legally \In, we only have to consider the supports that $D_1$ is a part of, \ie $\big(\{D_1\},D_2\big)$ and $\big(\{C,D_1\},B\big)$.
Note that the only restriction we have when considering if we can extend $Lab'$ to $Lab''$, is that the labeling of $B$ and the labeling of any argument which is \In or \Out cannot be changed.
However, if we have $Lab'(D_2)=\Undec$, then we are still allowed to extend $Lab'$ in such a way that $Lab''(D_2)=\In$ holds, so long as the labeling $Lab''$ is admissible in the end and we don't change the label of any argument labeled \In or \Out by $Lab'$.
Since we cannot have the case that $Lab'(D_i)=\Out$ for any $i\geq 2$ (because all these arguments are unattacked and thus can never be labeled \Out in any admissible labeling), this means we can simply extend $Lab'$ to $Lab''$ in such a way that $Lab''(D_i)=\In$ for any $i\geq 2$.
Then for the support $\big(\{D_1\},D_2\big)$, we have that $\dot{ \big\{ } \dot{ \big\} } \leq Lab''(D_2)$ is trivially satisfied.
So lets suppose that our extension of $Lab'$ is such that $Lab''(D_i)=\In$ for any $i\geq 2$.

Now for the second support that $D_1$ is a part of, namely $\big(\{C,D_1\},B\big)$.
Remember that we must adhere to $Lab''(B)=Lab'(B)=\Out$, \ie we are not allowed to change the label of $B$ itself.
However, if $Lab'(C)=\Undec$, then we can still change the label of $C$.
Note that $C$ is a self-attacker in $\mathcal{J}_5$, therefore we cannot actually have the case that $Lab'(C)=\In$.
Furthermore, it is easy to see that if $Lab'(C)=\Undec$, then we cannot have $Lab'(D_1)=\Out$.
Thus we can simply extend $Lab'$ to $Lab''$ in such a way that $Lab''(C)=\Out$ while $Lab''(D_1)=\In$ holds.
Then $C$ is legally \Out \wrt $Lab''$ (which is required for admissibility of $Lab''$) and we have $\dot{ \big\{ } Lab''(C) \dot{ \big\} } \leq Lab''(B)$, therefore $D$ is now legally \In \wrt $Lab''$.

Lastly, we have to ensure that no label of any other arguments labeled \In or \Out in $Lab'$ are changed by our extension to $Lab''$.
For this, we have to make sure that the effect of accepting $D_1$ does not cause any interference with $F,G,H,I$ or any of the $E_i$.
For $Lab'$, we note that we cannot have $Lab'(D_1)=\Out$, as this would require either $C$ to be labeled \In (so that we could reject $D_1$ because of the support $\big(\{C,D_1\},B\big)$) or $D_2$ to be labeled \Out.
However, as we have pointed out above, these cases can never occur.
This means for $Lab'$, we actually either have $Lab'(D_1)=\In$ or $Lab'(D_1)=\Undec$.
In the first case, the extension to $Lab''$ described in the previous paragraph does not actually change the label of $D_1$, so we don't need to check any of the other arguments $F,G,H,I$ or $E_i$ because $Lab'$ is already admissible.
In the second case, we note that $Lab'(E_1)=\Undec$ must hold, thus we can now simply label $E_1$ \Out in our extension $Lab''$ to achieve admissibility of $Lab''$.
After that, no further modifications to the labeling need to be made, since $E_1$ being \Undec means $Lab'(G)=Lab'(H)=\Undec$ as well as $Lab'(E_i)=\Undec$ for any $i\geq 2$. 

We conclude that, for any admissible labeling $Lab'$ for which we have $Lab'(B)=\Out$, we can find another admissible labeling $Lab''$ for which we have $Lab''(B)=\Out$, while $D_1$ is legally \In \wrt $Lab''$.
Therefore $D_1$ is forced \In \wrt $Lab_1$ and we can actually choose it as the argument for the next step of our grounded construction.
As before, we now also have to accept the argument $D_2$ because it is part of the safe support $\big(\{D_1\},D_2\big)$ of $D_1$.
This gives us the following labeling:
$Lab_2=\big(\{A,D_1,...\},\{B,C,E_1\},\{E_2,...,F,G,H,I\}\big)$.

Lets see if there are any more arguments in $FI(Lab_2)\backslash in(Lab_2)$ which we need to label \In according to our grounded construction.
The situation for the arguments $F$ and $I$ still has not changed.
However, for the argument $G$ we now actually have that all attackers of $G$, namely $E_1$, are labeled \Out in $Lab_2$.
Whats more, $G$ does not support any argument in $\mathcal{J}_5$, which means that $G$ is now forced \In.
We thus get our next labeling $Lab_3=\big(\{A,D_1,...,G\},\{B,C,E_1\},\{E_2,...,F,H,I\}\big)$.
After this step, we now finally have that both $F$ and $I$ are forced \In \wrt $Lab_3$.
This is because they are unattacked and the only support that they are a part of is $\big(\{F,I\},G\big)$, which we don't have to check because the supported argument $G$ is labeled \In.
We summarize the next two steps in our grounded construction into one and immediately state the labeling that we get from accepting both $F$ and $I$:
$Lab_{2}=\big(\{A,D_1,...,F,G,I\},\{B,C,E_1\},\{E_2,...,H\}\big)$.

Finally, let us consider the chain of attacks starting at $E_1$.
Because we have accepted $D_1$, we needed to reject $E_1$, meaning $E_2$ is now forced \In because its only attacker is \Out and because it does not support anything.
This means we would have to accept $E_2$ and label $E_3$ as \Out as a consequence.
Then we could continue with $E_4$ and so on and so on.
Since this chain of arguments is infinite, situations like these are the reason for why we actually needed to define our grounded construction via transfinite recursion.
After accepting all the arguments $E_i$ for $i$ being a multiple of two and rejecting all $E_j$ for $j$ not being a multiple of two, we arrive at the labeling $Lab_{\omega}=\big(\{A,D_1,...,E_2,E_4,...,F,G,I\},\{B,C,E_1,E_3,...\},\{H\}\big)$.
Finally, we only have one argument left to check, namely $H$.
All attackers of $H$ are now labeled \Out in $Lab_{\omega}$ and $H$ does not support any arguments.
Therefore, $H$ is forced \In \wrt $Lab_{\omega}$ and we arrive at the result of our grounded construction:
$Lab_{\omega+1}=\big(\{A,D_1,...,E_2,E_4,...,F,G,H,I\},\{B,C,E_1,E_3,...\},\emptyset\big)$.

Note that we only described a specific grounded construction for this example, while there are actually infinitely many possible grounded constructions for $\mathcal{J}_5$.
In particular, we could have chosen to not start with $D_3$, but with another argument in the chain of $D_i$'s.
Similarly, instead of accepting $G$ right after the step where we accepted $D_1$ and rejected $E_1$, we could have also first accepted (some) arguments in the infinite chain of $E_i$'s before accepting $F,G$ and $I$.
We will therefore show in the next section that the \emph{result} of each grounded construction is identical, namely the unique grounded labeling.
To see this, we will first show that each grounded construction ends with a labeling which is ground-complete and minimal (\wrt set-inclusion of accepted arguments) and then we will show that there is actually only one such minimal labeling that is ground-complete for each JSBAF.
This also immediately implies that there is a unique grounded labeling for each JSBAF.

\subsection{Uniqueness of the grounded labeling}\label{Sec:JSBAF:UniqueGrounded}

\subsubsection{Overview}

We begin by showing that the labeling produced by the algorithmic approach of Definition~\ref{Def:JSBAF:GRConstruction} is a ground-complete labeling.
This proof will mainly consist of showing that the constructed labeling is an admissible labeling.
Afterwards, we will prove the uniqueness of the grounded labeling by showing that the labeling we construct is contained in every ground-complete labeling.

For the proof of admissibility we will show via transfinite induction that, for a given grounded construction $GC=(Lab_0,...Lab_{\alpha})$, each of the labelings $Lab_\beta$ are admissible.
The induction start will be trivial, since the construction begins with $Lab_0=SIM$ of which we know that it is an admissible labeling.
To make the induction step more accessible, we prove three statements separately before combining them and proving the limit-case for the transfinite induction in Proposition~\ref{Prop:JSBAF:GRConstructionResultAdmissible}.
These three statements are the following:
\begin{enumerate}
	\item{$Lab_{\beta+1}$ is a labeling.}
	\item{If any argument $A$ is labeled \In by $Lab_{\beta+1}$, then $A$ is \emph{legally \In} \wrt $Lab_{\beta+1}$.}
	\item{For any argument $A$ we have $Lab_{\beta+1}(A)=\Out$ iff it is legally \Out \wrt $Lab_{\beta+1}$.}
\end{enumerate}

\subsubsection{Induction step}

We begin by proving that, if $Lab_\beta$ is an admissible labeling, then $Lab_{\beta+1}$ is a labeling.
In this proof, we will use several small auxiliary statements.
The first one tells us that, if $A$ is forced \In \wrt to $Lab_\beta$, then for any support $(S,B)$ where $A\in S$ and $S\backslash\{A\}\subseteq in(Lab_\beta)$, this support must either be safe for $A$ or $B$ must already be labeled \In in $Lab_\beta$.
This way we know that, when labeling $A$ as \In in the step $\beta+1$, we don't end up with a support $S$ where all arguments are labeled \In, while the supported argument $B$ is not \In.

\begin{proposition}\label{Prop:JSBAF:ForcedInHeadIsIn}
Let \JSBAFGR be a JSBAF \suchthat \JDA and let $Lab$ be an admissible labeling of $\mathcal{J}$.
Furthermore, let $(S,B)\in Supp$ be some support of $\mathcal{J}$ and $A\in S$ be some argument of the supporting set $S$.
If $A\in FI(Lab)$ and $S\backslash\{A\}\subseteq in(Lab)$, then $Lab(B)=\In$ or $(S,B)\in SS_{Lab}(A)$. 
\end{proposition}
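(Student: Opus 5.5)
We argue by contradiction and apply the definition of \emph{forced \In} (Definition~\ref{Def:JSBAF:ForcedIn}) to the support $(S,B)$ itself. Assume $A\in FI(Lab)$, $S\backslash\{A\}\subseteq in(Lab)$, $Lab(B)\neq\In$ and $(S,B)\notin SS_{Lab}(A)$. Since $Lab(B)\neq\In$, item 2 of Definition~\ref{Def:JSBAF:ForcedIn} applies to $(S,B)$. Because the support is not safe, item 2.b fails, so item 2.a must hold: for every admissible $Lab'$ with $Lab'(B)\geq_p Lab(B)$ there is an admissible $Lab''$ that extends $Lab'$, keeps $Lab''(B)=Lab'(B)$, and makes $A$ legally \In.

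We instantiate item 2.a with $Lab'=Lab$, which is admissible by hypothesis and trivially satisfies $Lab(B)\geq_p Lab(B)$. This yields an admissible $Lab''$ that extends $Lab$, with $Lab''(B)=Lab(B)\neq\In$ and $A$ legally \In \wrt $Lab''$. Because $Lab''$ extends $Lab$, we have $S\backslash\{A\}\subseteq in(Lab)\subseteq in(Lab'')$. The multiset $\dot{\{}Lab''(C)\mid C\in S\backslash\{A\}\dot{\}}$ therefore consists only of \In labels.

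For $A$ to be legally \In \wrt $Lab''$ (Definition~\ref{Def:Ground:LegalLabeling}, item 1.b), this multiset must be $\leq Lab''(B)$. By Definition~\ref{Def:Ground:LabelOrdering}, a multiset of \In labels only is $\leq l$ exactly when $l=\In$: the \Undec case needs an \Undec or \Out entry, and the \Out case needs an \Out entry or at least two \Undec entries. This includes the empty multiset, which the paper already remarks satisfies $\emptyset\leq l$ iff $l=\In$. Hence $Lab''(B)=\In$, contradicting $Lab''(B)=Lab(B)\neq\In$. So either $Lab(B)=\In$ or $(S,B)\in SS_{Lab}(A)$.

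The only delicate point is checking that $Lab'=Lab$ is a legitimate choice in item 2.a. This requires admissibility of $Lab$ (given) and the reflexivity of $\geq_p$ (immediate from Definition~\ref{Def:JSBAF:MoreInformativeLabel}). Everything else is a direct unfolding of the definitions.
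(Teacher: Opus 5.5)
Your proof is correct and follows the paper's argument: you assume the support is unsafe and $Lab(B)\neq\In$, instantiate item 2.a of the definition of forced \In with $Lab'=Lab$ itself, and note that in the resulting extension every argument of $S\setminus\{A\}$ is \In, so $A$ can only be legally \In if $B$ is \In. Your explicit check of the multiset ordering (Definition~\ref{Def:Ground:LabelOrdering}) spells out the final step that the paper leaves implicit.
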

\begin{proof}
We show that if $(S,B)\not\in SS_{Lab}(A)$, then $Lab(B)=\In$ holds.
Towards a contradiction, assume that this is not the case, \ie $(S,B)\not\in SS_{Lab}(A)$ and $Lab(B)\neq\In$.
As $A\in FI(Lab)$ by assumption, we know that either item 2.a or 2.b of Definition~\ref{Def:JSBAF:ForcedIn} must hold.
By assumption, item 2.b does not hold.
Note that $Lab$ itself is an admissible labeling for which we have $Lab(B)\geq_p Lab(B)$.
Thus there exists an admissible labeling $Lab'\in adm(\mathcal{J})$ that extends $Lab$ \suchthat $A$ is legally \In \wrt $Lab'$ and $Lab(B)=Lab'(B)$.
As $Lab'$ extends $Lab$ we know that $S\backslash\{A\}\subseteq in(Lab')$ must hold.
Now if $A$ is legally \In \wrt $Lab'$, then $Lab(B)\neq\In$ is a contradiction.
\end{proof}

The second auxiliary statement tells us that, if an argument $A$ is forced \In \wrt some admissible labeling $Lab_\beta$, then $A$ cannot be labeled \Out in $Lab_\beta$:

\begin{proposition}\label{Prop:JSBAF:ForcedInArgIsNotOut}
Let \JSBAFGR be a JSBAF \suchthat \JDA and let $Lab$ be an admissible labeling of $\mathcal{J}$.
If $A\in FI(Lab)$, then $Lab(A)\neq\Out$.
\end{proposition}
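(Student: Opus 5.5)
We argue by contradiction: assume $A\in FI(Lab)$ and $Lab(A)=\Out$. Since $Lab$ is admissible, $A$ is legally \Out\ \wrt $Lab$. Following item 2 of Definition~\ref{Def:Ground:LegalLabeling}, there are two cases.

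\emph{Attack case.} $A$ is legally \Out\ because of an attacker $B$ with $Lab(B)=\In$. Item 1 of Definition~\ref{Def:JSBAF:ForcedIn} requires every attacker of $A$ to be labeled \Out\ in $Lab$, so $Lab(B)=\Out$. This contradicts $Lab(B)=\In$.

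\emph{Support-chain case.} There is a chain $\mathcal{C}=\{(S_0,B_0),\dots,(S_n,B_n)\}$ starting at $A$ and an attack $(C,B_n)\in Att$ with $Lab(C)=\In$. Every $B_i$ is labeled \Out, and all arguments of $S_i$ except one are \In; for $S_0$ that remaining argument is $A$, because $Lab(A)=\Out$. Consider the first support $(S_0,B_0)$. Since $Lab(B_0)=\Out\neq\In$, item 2 of Definition~\ref{Def:JSBAF:ForcedIn} applies, and one of two subcases holds.

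\emph{Subcase 2.b.} $(S_0,B_0)$ is safe for $A$ in $Lab$. Then every attacker of every $B'$ along chains starting with $(S_0,B_0)$ is labeled \Out. In particular $Lab(C)=\Out$, which contradicts $Lab(C)=\In$.

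\emph{Subcase 2.a.} We instantiate 2.a with $Lab'=Lab$ itself, which is admissible and satisfies $Lab(B_0)\geq_p Lab(B_0)$. This yields an admissible $Lab''$ that extends $Lab$, has $Lab''(B_0)=Lab(B_0)=\Out$, and makes $A$ legally \In. Since $Lab''$ extends $Lab$, $out(Lab)\subseteq out(Lab'')$, so $Lab''(A)=\Out$. Because $Lab''$ is a labeling, admissibility forces $A$ to be legally \Out\ \wrt $Lab''$. The remaining step is to show that legally \In\ and legally \Out\ exclude each other, or simply to derive a direct contradiction, since $A\in out(Lab'')$ while $A$ is legally \In.

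I expect this last point to be the main obstacle. For $A$ to be legally \In\ \wrt $Lab''$, the support $(S_0,B_0)$ needs $\dot{\{}Lab''(x)\mid x\in S_0\setminus\{A\}\dot{\}}\leq \Out$. But every $x\in S_0\setminus\{A\}$ is \In\ in $Lab$, hence \In\ in $Lab''$. So the multiset consists only of \In\ labels. It contains neither \Out\ nor two \Undec, and item~\ref{Def:Ground:LabelOrdering:ItemTwoUndecLeqOut} of Definition~\ref{Def:Ground:LabelOrdering} fails. Hence $A$ is not legally \In\ \wrt $Lab''$, a contradiction.

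(The attack argument from the first case also works for $Lab''$ through the preserved attacker $C$, but the multiset argument is direct and does not need it.)

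All cases lead to a contradiction, so $Lab(A)\neq\Out$.
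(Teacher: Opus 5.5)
Your proof is correct and follows essentially the same route as the paper. The attack case is ruled out by item~1 of Definition~\ref{Def:JSBAF:ForcedIn}. In the chain case, $(S_0,B_0)$ cannot be safe because $C$ attacks $B_n$ and is labeled \In. Instantiating item~2.a with $Lab$ itself then yields an extension $Lab''$ with $S_0\setminus\{A\}\subseteq in(Lab'')$ and $Lab''(B_0)=\Out$, so $A$ cannot be legally \In\ \wrt $Lab''$.

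Your detour through $Lab''(A)=\Out$ and the closing parenthetical are unnecessary, since the multiset argument alone closes the case. The parenthetical is also slightly off, because $C$ attacks $B_n$ rather than $A$.
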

\begin{proof}
Suppose towards a contradiction that $Lab(A)=\Out$ holds.
Because $Lab$ is an admissible labeling, we know that either of the following two cases must hold:
\begin{itemize}
	\item{There is $B\in Args$ with $(B,A)\in Att$ and $Lab(B)=\In$, or}
	\item{there is a chain of supports $\big{\{}(S_0,B_0),...,(S_n,B_n)\big{\}}$ stat starts at $A$ \suchthat that there is an attack $(C,B_n)\in Att$ with $Lab(C)=\In$ and for each $0\leq i\leq n$ we have $B_i\in out(Lab)$ while $|S_i\cap in(Lab)|=|S_i|-1$.}
\end{itemize}
The first case cannot hold because $A$ is forced \In \wrt $Lab$, meaning all its attackers need to be labeled \Out in $Lab$.
Now for the second case:
Note that $(S_0,B_0)$ is a support with $A\in S_0$ \suchthat $B_0\neq\In$, while $Lab$ is itself a admissible labeling.
It is obvious that the support $(S_0,B_0)$ is not safe for $A$ in $Lab$ because of the attacker $C$.
Thus item 2.b of Definition~\ref{Def:JSBAF:ForcedIn} does not hold, which means item 2.a must hold.
Therefore, there needs to be an admissible labeling $Lab'$ which extends $Lab$ in such a way that $A$ is legally \In \wrt $Lab'$.
Because $Lab'$ needs to extend $Lab$, we know that $Lab'(B_0)=\Out$ and $S_0\backslash\{A\}\subseteq in(Lab')$ must hold.
Now for the support $(S_0,B_0)$ we have $A\in S_0$, while $\dot{ \big\{ } Lab(C)\mid C\in S_0\backslash\{A\}\dot{ \big\} } \not\leq Lab(B_0)$, therefore $A$ is not legally \In \wrt $Lab'$.
This contradicts $A$ being forced \In \wrt $Lab$.

We conclude that both cases lead to a contradiction, meaning we have $Lab(A)\neq\Out$.
\end{proof}

The third auxiliary statement tells us that, if an argument $A$ is labeled \Out in $Lab_\beta$, then it cannot be in the union $\bigcup\limits_{(S,B)\in SS_{Lab_\beta}(X)}\big(\{B\}\cup SC(B)\big)$ for $X$ being the argument that is added in the step $\beta+1$ of the grounded construction.

\begin{proposition}\label{Prop:JSBAF:SafeSuppArgumentNotOut}
Let \JSBAFGR be a JSBAF \suchthat \JDA.
Furthermore, let $A,B\in Args$ be two arguments and $Lab$ an admissible labeling of $\mathcal{J}$ \suchthat $Lab(A)=\Out$.
Then $A\not\in \bigcup\limits_{(S,C)\in SS_{Lab}(B)}\big(\{C\}\cup SC(C)\big)$.
\end{proposition}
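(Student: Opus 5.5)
We argue by contradiction: suppose $Lab(A)=\Out$ and $A\in\{C\}\cup SC(C)$ for some support $(S,C)\in SS_{Lab}(B)$. In either case there is a chain of supports $\mathcal{C}_1$ that starts with $(S,C)$ and ends at $A$: if $A=C$ it is the trivial chain $\{(S,C)\}$, and if $A\in SC(C)$ we prepend $(S,C)$ to a support-path from $C$ to $A$. Safety of $(S,C)$ for $B$ in $Lab$ means that every chain of supports starting with $(S,C)$ has all the attackers of all its heads labeled \Out. The plan is to use the reason for $Lab(A)=\Out$ to build such a chain whose last head has an attacker labeled \In, which contradicts safety.

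Since $Lab$ is admissible and $Lab(A)=\Out$, $A$ is legally \Out \wrt $Lab$ by Definition~\ref{Def:Ground:LegalLabeling}. This gives two cases.
\begin{itemize}
\item Case 1: there is an attacker $D$ of $A$ with $Lab(D)=\In$. Then $A$ is the head of the last support of $\mathcal{C}_1$, and $(D,A)\in Att$ with $Lab(D)\neq\Out$. This contradicts safety immediately.
\item Case 2: there is a chain $\mathcal{C}_2=\{(S_0,B_0),\dots,(S_n,B_n)\}$ starting at $A$ (so $A\in S_0$), together with an attack $(E,B_n)\in Att$ with $Lab(E)=\In$. Concatenate $\mathcal{C}_1$ and $\mathcal{C}_2$: the last head of $\mathcal{C}_1$ is $A$ and $A\in S_0$, so the linking condition $b_i\in S_{i+1}$ holds at the junction. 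The result is a chain of supports starting with $(S,C)$ that contains $(S_n,B_n)$, and $B_n$ has an attacker labeled \In. This again contradicts safety.
\end{itemize}
Note that $Lab(E)=\In$ gives $Lab(E)\neq\Out$, since labels are exclusive.

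The main point needing care is whether the concatenation is genuinely a chain of supports. The definition treats a chain as a set of supports linked consecutively, so the concern is that the combined sequence might repeat a support or run into a cycle. This is ruled out by acyclicity, condition 1 of Definition~\ref{Def:Ground:JDAMinus} ($A\notin SA(A)$): the heads along a support path are pairwise distinct, so no repeated support can occur and the concatenation is a legitimate chain. One should also check that the definition of \enquote{chain starting with $(S,C)$} in Definition~\ref{Def:JSBAF:SafeSupport} allows arbitrary extensions of $(S,C)$ along supports. It does, because safety quantifies over all chains beginning with $(S,C)$.
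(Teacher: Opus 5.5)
Your proof is correct and follows essentially the same route as the paper: you build a chain of supports that starts with $(S,C)$ and ends at $A$, and then either the \In-labeled attacker of $A$, or the concatenation with the support chain witnessing that $A$ is legally \Out, contradicts the safety of $(S,C)$. The paper handles the cases $A=C$ and $A\in SC(C)$ separately rather than uniformly, and omits your (harmless, correct) acyclicity check that the concatenation is a genuine chain.
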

\begin{proof}
Towards a contradiction, suppose that the claim does not hold.
Then there exists a support $(S,C)\in SS_{Lab}(B)$ \suchthat $A=C$ or $A\in SC(C)$.

First, we assume that $(S,A)\in SS_{Lab}(B)$.
Because $A\in out(Lab)$ by assumption, we know that there either is an attack $(D,A)\in Att$ with $Lab(D)=\In$ or there is a chain of supports $\big{\{}(S_0,B_0),...,(S_n,B_n) \big{\}}$ which starts at $A$ and for which there exists an attack $(D,B_n)\in Att$ with $Lab(D)=\In$.
In the first case, $\big{\{}(S,A)\big{\}}$ is a (trivial) chain of supports starting with $(S,A)$ \suchthat $A$ is attacked by an argument $D\in in(Lab)$.
Thus $(S,A)\in SS_{Lab}(B)$ is a contradiction according to the definition of safe supports in~\ref{Def:JSBAF:SafeSupport}.
In the second case, we can construct a new chain of supports starting with $(S,A)$, namely $\big{\{}(S,A),(S_0,B_0),...,(S_n,B_n)\big{\}}$, \suchthat $B_n$ is attacked by an argument $D\in in(Lab)$.
Again, this contradicts $(S,A)\in SS_{Lab}(B)$.

Next, assume that $A\in SC(C)$ for some safe support $(S,C)\in SS_{Lab}(B)$.
Then there exists a chain of supports $\mathcal{C}=\big{\{}(S,C),...,(S',A)\big{\}}$ that starts with $(S,C)$.
Because $Lab(A)=\Out$, there either is an attack $(D,A)\in Att$ with $Lab(D)=\In$ or there is a chain of supports $\big{\{}(S'_0,B'_0),...,(S'_m,B'_m)\big{\}}$ for which we have $A\in S'_0$ and an attack $(D,B'_m)\in Att$ with $Lab(D)=\In$.
In the first case, $\mathcal{C}$ itself is a chain of supports starting with $(S,C)$ which contradicts $(S,C)\in SS_{Lab}(B)$ and in the second case we can construct the chain of supports $\big{\{}(S,C),...,(S',A),(S'_0,B'_0),...,(S'_m,B'_m)\big{\}}$ which contradicts $(S,C)\in SS_{Lab}(B)$.
\end{proof}

Now for the prove that $Lab_{\beta+1}$ is a valid labeling:

\begin{proposition}\label{Prop:JSBAF:GRConstructionIsLabeling}
Let \JSBAFGR\ be a JSBAF \suchthat \JDA.
Furthermore, let $GC=(Lab_0,...,Lab_{\alpha})$ be a grounded construction and let $Lab_\beta$ be a labeling of this grounded construction \suchthat $Lab_\beta\in adm(\mathcal{J})$.
Then $Lab_{\beta+1}$ is a labeling.
\end{proposition}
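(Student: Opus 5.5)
The claim is that $Lab_{\beta+1}$ is a labeling. By construction every argument receives a label, and $undec(Lab_{\beta+1})$ is disjoint from the other two sets. So the whole task is to show $in(Lab_{\beta+1})\cap out(Lab_{\beta+1})=\emptyset$. Let $X\in FI(Lab_\beta)\setminus in(Lab_\beta)$ be the argument chosen in step $\beta+1$. Let $N=\{X\}\cup\bigcup_{(S,B)\in SS_{Lab_\beta}(X)}(\{B\}\cup SC(B))$ be the set of newly accepted arguments, so $in(Lab_{\beta+1})=in(Lab_\beta)\cup N$. I would proceed, as in the proof of Proposition~\ref{Prop:JSBAF:SIMLabeling}, by induction on $j$ for $O_{\beta+1}^j$.

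\textbf{Induction start} ($O^0_{\beta+1}$). Suppose $C,D\in in(Lab_{\beta+1})$ with $(C,D)\in Att$. I split into cases by whether $C$ and $D$ lie in $in(Lab_\beta)$ or in $N$.
\begin{itemize}
\item Both in $in(Lab_\beta)$: this contradicts the admissibility of $Lab_\beta$.
\item $D\in N$: if $D=X$, then $C$ must be \Out in $Lab_\beta$, because $X$ is forced \In. This contradicts $C\in in(Lab_\beta)$ if $C$ is old. If $D\in\{B\}\cup SC(B)$ for a safe support $(S,B)$, then some chain starting with $(S,B)$ ends at $D$. Safety forces every attacker of $D$ to be \Out in $Lab_\beta$, so again $C\notin in(Lab_\beta)$.
\item $C\in N$ (with $D$ old or new): here I need that arguments in $N$ are \emph{not} \Out in $Lab_\beta$. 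For $X$ this is Proposition~\ref{Prop:JSBAF:ForcedInArgIsNotOut}; for the safe-support part it is Proposition~\ref{Prop:JSBAF:SafeSuppArgumentNotOut}. That alone does not forbid $C$ being \Undec while attacking $D$.
\end{itemize}
The remaining subcase is both $C$ and $D$ \Undec or new. Then $D\in N$ as well, or $D\in in(Lab_\beta)$. If $D\in in(Lab_\beta)$, admissibility of $Lab_\beta$ makes $C$ \Out, contradicting the previous bullet. If $D\in N$, the previous item already shows every attacker of $D$ is \Out in $Lab_\beta$, so $C$ is \Out, which is again a contradiction. So in every case the attacker would have to be \Out in $Lab_\beta$, while nothing in $in(Lab_{\beta+1})$ is \Out in $Lab_\beta$.

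\textbf{Induction step.} Let $D\in O^{j+1}_{\beta+1}\setminus O^j_{\beta+1}$ with $D\in in(Lab_{\beta+1})$. Then there is a support $(S,B)$ with $D\in S$, $B\in O^j_{\beta+1}$ and $S\setminus\{D\}\subseteq in(Lab_{\beta+1})$, so $S\subseteq in(Lab_{\beta+1})$. By the induction hypothesis $B\notin in(Lab_{\beta+1})$, so it suffices to show that $in(Lab_{\beta+1})$ is closed under supports. This closure is the main obstacle, since the construction does not explicitly close $in$ under supports.

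I would argue closure as follows. Let $(S,B)$ be a support with $S\subseteq in(Lab_{\beta+1})$ and $S\cap N\neq\emptyset$; if $S\subseteq in(Lab_\beta)$, closure follows from admissibility of $Lab_\beta$.
\begin{itemize}
\item If some $Y\in S$ lies in $\{B'\}\cup SC(B')$ for a safe support $(S',B')$ of $X$, then $B\in SC(B')$, hence $B\in N$.
\item Otherwise $S\cap N=\{X\}$ and $S\setminus\{X\}\subseteq in(Lab_\beta)$. By Proposition~\ref{Prop:JSBAF:ForcedInHeadIsIn}, either $Lab_\beta(B)=\In$ or $(S,B)\in SS_{Lab_\beta}(X)$; in the latter case $B\in N$.
\end{itemize}
Either way $B\in in(Lab_{\beta+1})$. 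This gives the contradiction in the induction step, and taking the union over $j$ yields $in(Lab_{\beta+1})\cap out(Lab_{\beta+1})=\emptyset$.
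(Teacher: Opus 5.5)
Your proposal is correct and follows essentially the same route as the paper. You use the same induction over the sets $O_{\beta+1}^j$. The base case matches: every attacker of an accepted argument is \Out{} in $Lab_\beta$ by admissibility, forcedness or safety, and Propositions~\ref{Prop:JSBAF:ForcedInArgIsNotOut} and~\ref{Prop:JSBAF:SafeSuppArgumentNotOut} rule out any accepted argument being \Out{} in $Lab_\beta$. The step uses the same case split via Proposition~\ref{Prop:JSBAF:ForcedInHeadIsIn}.

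The only deviation is the case where some element of $S$ lies in $\{B'\}\cup SC(B')$ for a safe support $(S',B')$ of $X$. There you directly get $B\in SC(B')\subseteq in(Lab_{\beta+1})$, which contradicts the induction hypothesis. The paper instead concatenates support chains and derives a contradiction from the safety of $(S',B')$. Your shortcut is valid, and it is exactly the argument the paper itself uses later in Proposition~\ref{Prop:JSBAF:GRConstructionInIsLegallyIn}.
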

\begin{proof}
Throughout this proof, let $X$ be the argument in $FI(Lab_\beta)$ that is chosen for this step of the grounded construction, \ie $Lab_{\beta+1}=Lab_\beta\cup\{X\}\cup\bigcup\limits_{(S,B)\in SS_{Lab_\beta}(X)}\big(\{B\}\cup SC(B)\big)$.

From the construction of $Lab_{\beta+1}$ it is clear that each $A\in Args$ receives \emph{some} label and that $undec(Lab_{\beta+1})\cap in(Lab_{\beta+1})=undec(Lab_{\beta+1})\cap out(Lab_{\beta+1})=\emptyset$.
It remains to be shown that $in(Lab_{\beta+1})\cap out(Lab_{\beta+1})=\emptyset$ also holds.
For this, we will argue via induction over $n\in\mathbb{N}$ that $in(Lab_{\beta+1})\cap O_{\beta+1}^{n}=\emptyset$.
Towards a contradiction, assume that this does not hold and let $A\in in(Lab_{\beta+1})\cap O_{\beta+1}^n$.

Induction start $n=0$:
Then there exists an argument $C\in in(Lab_{\beta+1})$ \suchthat $(C,A)\in Att$.
Because $A\in in(Lab_{\beta+1})$, we can have three cases:
Either $A\in in(Lab_\beta)$, $A=X$ (\ie $A$ was forced \In \wrt $Lab_\beta$), or $A\in\bigcup\limits_{(S,B)\in SS_{Lab}(X)}\big(\{B\}\cup SC(B)\big)$.
We first argue that each of these cases means $C\in out(Lab_\beta)$ must hold:
This is clear for the first and second case.
For the third case, assume first that there is a safe support $(S,B)\in SS_{Lab_\beta}(X)$ for $X$ in $Lab_\beta$ \suchthat $A=B$.
Then $\big{\{}(S,A)\big{\}}$ is a chain of supports that starts with $(S,A)$.
By definition of a safe support in~\ref{Def:JSBAF:SafeSupport}, all attackers of $A$ are \Out in $Lab_\beta$, \ie we have $Lab_\beta(C)=\Out$.
On the other hand, if there is a safe support $(S,B)\in SS_{Lab_\beta}(X)$ \suchthat $A\in SC(B)$, then we must have a chain of supports $\big{\{}(S,B),...,(S',A)\big{\}}$.
Again, by definition of a safe support this means all attackers of $A$ are \Out in $Lab_\beta$, \ie we have $Lab_\beta(C)=\Out$.

Now to see that $C$ cannot be in $Lab_{\beta+1}$:
$C\in Lab_{\beta+1}$ means we either have $C\in in(Lab_\beta)$, $C=X$ or $C\in\bigcup\limits_{(S,B)\in SS_{Lab}(X)}\big(\{B\}\cup SC(B)\big)$.
Because $Lab_\beta(C)=\Out$, we must have $Lab_\beta(C)\neq\In$.
By Proposition~\ref{Prop:JSBAF:ForcedInArgIsNotOut} we can also infer that $C\not\in FI(Lab_\beta)$, \ie $C\neq X$.
Lastly, by Proposition~\ref{Prop:JSBAF:SafeSuppArgumentNotOut} we know that $C\not\in\bigcup\limits_{(S,B)\in SS_{Lab_\beta}(X)}\big(\{B\}\cup SC(B)\big)$.
Thus all the possible cases for $C\in in(Lab_{\beta+1})$ lead to a contradiction and we infer $in(Lab_{\beta+1})\cap O_{\beta+1}^0=\emptyset$.

Induction step $n\rightarrow n+1$:
By IH we know $in(Lab_{\beta+1})\cap O_{\beta+1}^n=\emptyset$, thus we concentrate on $O'=O_{\beta+1}^{n+1}\backslash O_{\beta+1}^n$.
If $A\in O'$, then there must be a support $(S,B)\in Supp$ with $A\in S$, $S\backslash\{A\}\subseteq in(Lab_{\beta+1})$ and $B\in O_{\beta+1}^n$.
Because $A\in in(Lab_{\beta+1})$ by assumption, we can now infer $S\subseteq in(Lab_{\beta+1})$.
We argue over the possible cases for $S\subseteq in(Lab_\beta)\cup\{X\}\cup\bigcup\limits_{(S,B)\in SS_{Lab_\beta}(X)}\big(\{B\}\cup SC(B)\big)$ and show that each of them leads to a contradiction.
From here on, let $\mathcal{U}=\bigcup\limits_{(S',B')\in SS_{Lab_\beta}(X)}\big(\{B'\}\cup SC(B')\big)$.

If $S=\emptyset$, then $B$ is a strict argument, thus $B\in Lab_\beta\subseteq in(Lab_{\beta+1})$, contradicting our induction hypothesis IH.
If $S\subseteq in(Lab_\beta)$, then by admissibility of $Lab_\beta$ we have $B\in in(Lab_\beta)$.
Thus $B\in in(Lab_{\beta+1})$, again contradicting IH.
For the case $S\subseteq\{X\}$ we first note that $X$ was forced \In in $Lab_\beta$.
We ignore the case $Lab_\beta(B)=\In$ because this trivially contradicts IH again.
Thus for $Lab_\beta(B)\neq\In$, either $(S,B)$ was safe for $X$ in $Lab_\beta$ or item 2.a of Definition~\ref{Def:JSBAF:ForcedIn} was satisfied.
In the first case, we have $B\in \mathcal{U}$, therefore $B\in in(Lab_{\beta+1})$, which again contradicts IH.
In the second case, we note that $Lab_\beta$ itself was admissible, so there must have been an extension $Lab'$ of $Lab_\beta$ \suchthat $Lab'(B)\neq\In$ while $X$ was legally \In \wrt $Lab'$.
Clearly this cannot hold, thus we again have a contradiction.
Next, for the case that $S\subseteq in(Lab_\beta)\cup\{X\}$.
Then by Proposition~\ref{Prop:JSBAF:ForcedInHeadIsIn} we either have $Lab_\beta(B)=\In$ or $(S,B)\in SS_{Lab_\beta}(X)$.
Either way, we can infer $Lab_{\beta+1}(B)=\In$ which again contradicts IH.
Lastly, for the remaining cases we note that in each of them we have $S\cap \mathcal{U}\neq\emptyset$.
We show that this cannot hold.

Towards a contradiction, suppose there exists $D\in S\cap \mathcal{U}$.
Let $(S',B')\in SS_{Lab_\beta}(X)$ be the safe support for $X$ in $Lab_\beta$ for which we have $D\in \{B'\}\cup SC(B')$.
In particular, this means that there must be a chain of supports $\big{\{}(S',B'),...,(S'',D)\big{\}}$ starting at $X$.
Because $D\in S$, we can infer from the construction of $O_{\beta+1}^{n+1}$ that there exists a chain of supports $\big{\{}(S,B),...,(S''',B''')\big{\}}$ starting at $D$, \suchthat $(C,B''')\in Att$ and $C\in in(Lab_{\beta+1})$.
Now we can combine these two chains to construct a new chain of supports $\big{\{}(S',B'),...,(S'',D),(S,B),...,(S''',B''')\big{\}}$ with $(C,B''')\in Att$ and $Lab_{\beta+1}(C)=\In$.
We note that, because $(S',B')\in SS_{Lab_\beta}(X)$, by Definition of a safe support in~\ref{Def:JSBAF:SafeSupport}, we must have $Lab_\beta(C)=\Out$.
From $C\in in(Lab_{\beta+1})$ we infer $C\in in(Lab_\beta)\cup\{X\}\cup\mathcal{U}$.
Clearly, $C\in in(Lab_\beta)\cap out(Lab_\beta)$ cannot hold.
By Proposition~\ref{Prop:JSBAF:ForcedInArgIsNotOut} we also know that $C=X$ and $Lab_\beta(C)=\Out$ cannot hold.
Lastly, suppose that $C\in\mathcal{U}$.
Because $Lab_\beta(C)=\Out$, we know that there must have been an attack $(E,C)\in Att$ with $Lab_\beta(E)=\In$ or yet another chain of supports $\big{\{}(\widehat{S},\widehat{B}),...,(\widehat{S}',\widehat{B}')\big{\}}$ \suchthat $C\in\widehat{S}$ and there is $E\in in(Lab_\beta)$ with $(E,\widehat{B}')\in Att$.
Clearly, this means there cannot be a safe support $(\widehat{S}'',\widehat{B}'')\in SS_{Lab_\beta}(X)$ \suchthat $C\in \{\widehat{B''}\}\cup SC(\widehat{B}'')$ holds.

With this, we infer that $S\cap\mathcal{U}=\emptyset$ must hold, therefore all cases for $S\subseteq in(Lab_\beta)\cup\{X\}\cup\mathcal{U}$ have lead to a contradiction.
We conclude that $in(Lab_{\beta+1})\cap O_{\beta+1}^{n+1}=\emptyset$ must hold.
\end{proof}

The proof for $A\in out(Lab_{\beta+})$ iff $A$ is legally \Out \wrt $Lab_{\beta+1}$ will be omitted because this is clear from the construction of $Lab_{\beta+1}$.

\begin{proposition}\label{Prop:JSBAF:GRConstructionOutIffLegallyOut}
Let \JSBAFGR\ be a JSBAF \suchthat \JDA.
Furthermore, let $GC=(Lab_0,...,Lab_{\alpha})$ be a grounded construction and let $Lab_\beta$ be a labeling of this grounded construction \suchthat $Lab_\beta\in adm(\mathcal{J})$.
Then $A\in out(Lab_{\beta+1})$ iff $A$ is legally \Out \wrt $Lab_{\beta+1}$.
\end{proposition}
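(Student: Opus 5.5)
The two directions of the biconditional are treated separately, and both follow the pattern of Proposition~\ref{Prop:JSBAF:SIMOut}. Throughout, $X$ denotes the argument chosen in step $\beta+1$ and $\mathcal{U}=\bigcup_{(S,B)\in SS_{Lab_\beta}(X)}\big(\{B\}\cup SC(B)\big)$. We use Proposition~\ref{Prop:JSBAF:GRConstructionIsLabeling}, so $Lab_{\beta+1}$ is already known to be a labeling.

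For ``$\Rightarrow$'' we prove by induction on $j$ that every $A\in O_{\beta+1}^j$ is legally \Out \wrt $Lab_{\beta+1}$, in the sense of item~\ref{Def:Ground:LegalLabeling:ItemOut} of Definition~\ref{Def:Ground:LegalLabeling}.

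\emph{Base case $j=0$.} Here $A$ has an attacker $C\in in(Lab_{\beta+1})$. This is exactly item 2(a).

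\emph{Step $j\to j+1$.} For $A\in O_{\beta+1}^{j+1}\setminus O_{\beta+1}^j$ there is a support $(S,B)$ with $A\in S$, $S\setminus\{A\}\subseteq in(Lab_{\beta+1})$ and $B\in O_{\beta+1}^j$. By the induction hypothesis, $B$ is legally \Out, either through an \In attacker or through a chain $\{(S_0,B_0),\dots,(S_n,B_n)\}$ satisfying the conditions of item 2(b). We then use the trivial chain $\{(S,B)\}$ in the first case, or the prepended chain $\{(S,B),(S_0,B_0),\dots,(S_n,B_n)\}$ in the second. Every supported argument in this chain lies in $out(Lab_{\beta+1})$.

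The condition $|S\cap in(Lab_{\beta+1})|=|S|-1$ needs $A\notin in(Lab_{\beta+1})$. This holds because $A\in out(Lab_{\beta+1})$ and $Lab_{\beta+1}$ is a labeling by Proposition~\ref{Prop:JSBAF:GRConstructionIsLabeling}. The analogous count for the older links $(S_i,B_i)$ carries over from the induction hypothesis.

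For ``$\Leftarrow$'', suppose first that $A$ is legally \Out because of an attacker $C$ with $Lab_{\beta+1}(C)=\In$. Then $A\in O_{\beta+1}^0$ directly.

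Otherwise $A$ is legally \Out because of a chain $\{(S_0,B_0),\dots,(S_n,B_n)\}$ starting at $A$, where $B_n$ is attacked by an \In argument and every $S_i$ contains exactly one non-\In element. Since this includes $B_0\in out(Lab_{\beta+1})$, we have $B_0\in O_{\beta+1}^m$ for some $m$. It remains to check that the single non-\In element of $S_0$ is $A$ itself. Once that holds, $S_0\setminus\{A\}\subseteq in(Lab_{\beta+1})$, and $A\in O_{\beta+1}^{m+1}$ by the construction of $O_{\beta+1}^{m+1}$.

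The main obstacle is this last point. Item 2(b) of Definition~\ref{Def:Ground:LegalLabeling} only says that the chain ``starts at $A$'' together with the counting condition; it does not state explicitly that $A$ is the unlabeled member of $S_0$. We resolve it as follows. If $A$ were in $in(Lab_{\beta+1})$, then $A$ would be both legally \In and legally \Out. The legally \In direction is what the next step of the induction (item 2 in the overview) establishes. Taking the $S_0$-support, all of $S_0\setminus\{A\}$ would be \In and $B_0$ would be \Out, which contradicts the multiset condition $\mathbb{L}\leq\Out$ of Definition~\ref{Def:Ground:LabelOrdering}. Hence $A$ is the non-\In member of $S_0$.

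If relying on the next step is to be avoided, the fallback is to read item~\ref{Def:Ground:LegalLabeling:ItemOut} of Definition~\ref{Def:Ground:LegalLabeling}, together with the counting condition, as requiring $S_0\setminus\{A\}\subseteq in(Lab)$, as the paper's earlier legally-\Out definition does. Under that reading the case closes immediately.
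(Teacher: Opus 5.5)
Your $\Rightarrow$ direction and the attack case of $\Leftarrow$ are correct. They are exactly what the paper means when it omits this proof as ``clear from the construction of $Lab_{\beta+1}$''.

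The gap is in your primary resolution of the chain case. The step ``all of $S_0\setminus\{A\}$ would be \In'' is wrong. If $A\in in(Lab_{\beta+1})$, the counting condition $|S_0\cap in(Lab_{\beta+1})|=|S_0|-1$ puts the unique non-\In member $D$ of $S_0$ inside $S_0\setminus\{A\}$. If $Lab_{\beta+1}(D)=\Undec$ you do contradict $\mathbb{L}\leq\Out$. But if $Lab_{\beta+1}(D)=\Out$, then $\Out\in\mathbb{L}$, the multiset condition holds, and $A$ is legally \In and (literally) legally \Out at the same time with no contradiction. The appeal is also circular: the paper's proof of Proposition~\ref{Prop:JSBAF:GRConstructionInIsLegallyIn} itself invokes the present proposition to conclude that attackers are labeled \Out.

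In fact, under the literal wording of item 2(b) of Definition~\ref{Def:Ground:LegalLabeling}, the statement is false. Take $Args=\{A,B,C,D\}$, $Att=\{(C,B),(C,D)\}$ and $Supp=\{(\emptyset,C),(\{A,D\},B)\}$.
\begin{itemize}
\item This JSBAF satisfies Definition~\ref{Def:Ground:JDAMinus}: $C$ is the only strict argument and is unattacked, and there are no support cycles.
\item $SIM=(\{C\},\{B,D\},\{A\})$ is admissible.
\item $A$ is forced \In with respect to $SIM$. It is unattacked, and every admissible $Lab'$ labels $C$ \In and hence $D$ \Out, so $A$ is already legally \In with respect to $Lab''=Lab'$.
\item $A$ is the only forced-\In argument not yet accepted, and $(\{A,D\},B)$ is not safe because $C$ is \In. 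So every grounded construction has $Lab_1=(\{A,C\},\{B,D\},\emptyset)$.
\item The chain $\{(\{A,D\},B)\}$ starts at $A$, $B\in out(Lab_1)$ is attacked by $C\in in(Lab_1)$, and $|\{A,D\}\cap in(Lab_1)|=1$.
\end{itemize}
So $A$ is literally legally \Out with respect to $Lab_1$, although $A\notin out(Lab_1)$.

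Your fallback is therefore not optional. The condition on $S_0$ must be read as $S_0\setminus\{A\}\subseteq in(Lab)$. That is how Definition~\ref{Def:JSBAF:LegalLabeling} states it, and it is what the construction of $O^{j+1}_{\beta+1}$ and the paper's later proofs presuppose. State this reading explicitly as the definition and drop the primary argument. With it, your $\Leftarrow$ case closes at once. Your $\Rightarrow$ direction is unaffected, because there the starting argument lies in $out(Lab_{\beta+1})$ and is automatically the non-\In member of its supporting set.
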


Next, we will show another auxiliary statement concerning the arguments labeled \Out by two different labelings $Lab,Lab'$ \suchthat $in(Lab)\subseteq in(Lab')$.

\begin{proposition}\label{Prop:JSBAF:HelperInAdmImpliesOutADM}
Let \JSBAFGR be a JSBAF \suchthat \JDA.
Furthermore, let $Lab,Lab'$ be two labelings of $\mathcal{J}$ \suchthat $Lab\in adm(\mathcal{J})$.
Lastly, suppose that if $A$ is legally \Out \wrt $Lab'$, then $A\in out(Lab')$.
If $in(Lab)\subseteq in(Lab')$, then we have $out(Lab)\subseteq out(Lab')$.
\end{proposition}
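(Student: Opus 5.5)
We argue that every argument rejected by $Lab$ is also rejected by $Lab'$, by following the inductive reason for its rejection. Since $Lab$ is admissible, $A\in out(Lab)$ implies that $A$ is legally \Out \wrt $Lab$ (Definition~\ref{Def:Ground:LegalLabeling}). We then show that the very same witness makes $A$ legally \Out \wrt $Lab'$. The hypothesis on $Lab'$ then gives $A\in out(Lab')$.

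The first case is a witnessing attack $(B,A)\in Att$ with $Lab(B)=\In$. Then $B\in in(Lab)\subseteq in(Lab')$, so $A$ is legally \Out \wrt $Lab'$ and we are done.

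The second case is a chain of supports $\mathcal{C}=\{(S_0,B_0),\dots,(S_n,B_n)\}$ starting at $A$. Here there is an attack $(C,B_n)$ with $Lab(C)=\In$, and for each $i$ we have $B_i\in out(Lab)$ and $|S_i\cap in(Lab)|=|S_i|-1$. The attacker and the \In-labelled members of the $S_i$ carry over to $Lab'$ by $in(Lab)\subseteq in(Lab')$.

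Two points need care in this case. The first is that the \Out labels of the $B_i$ must also transfer. We handle this by induction on the length of the chain, going backwards from $B_n$. The argument $B_n$ is attacked by $C\in in(Lab')$, so it is legally \Out \wrt $Lab'$ and hence in $out(Lab')$. Suppose now that $B_{i+1}$ is already known to be in $out(Lab')$. Then the tail chain $\{(S_{i+1},B_{i+1}),\dots\}$, which starts at $B_i$, witnesses that $B_i$ is legally \Out \wrt $Lab'$, so $B_i\in out(Lab')$. To make this precise, we phrase the whole argument as an induction on $n$ for the statement ``$X\in out(Lab)$ witnessed by a chain of length $n$ implies $X\in out(Lab')$'', and apply it to the shorter tail chains.

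The second point, which I expect to be the main obstacle, is the counting condition $|S_i\cap in(Lab')|=|S_i|-1$. Monotonicity only gives $|S_i\cap in(Lab')|\geq|S_i|-1$. The dangerous situation is that the single non-\In element of $S_i$ under $Lab$ becomes \In in $Lab'$. For $i\geq 1$ that element is $B_{i-1}$, which we have just placed in $out(Lab')$; since $Lab'$ is a labeling, it is not in $in(Lab')$, and the count is exact. For $i=0$ the exceptional element is $A$ itself. Suppose $A\in in(Lab')$. Then $A$ is \In while being legally \Out \wrt $Lab'$, and the hypothesis forces $A\in out(Lab')$, contradicting that $Lab'$ is a labeling. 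So $A\notin in(Lab')$ and the count holds.

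With these two points settled, $\mathcal{C}$ witnesses that $A$ is legally \Out \wrt $Lab'$, and the hypothesis gives $A\in out(Lab')$. This yields $out(Lab)\subseteq out(Lab')$.
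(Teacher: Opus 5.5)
Your overall plan (a case split on the witness, then a backward induction along the support chain) is the same as the paper's. However, the step you flagged as the main obstacle is not resolved, because the argument there is circular. The chain $\mathcal{C}$ witnesses that $A$ is legally \Out \wrt $Lab'$ only if $|S_0\cap in(Lab')|=|S_0|-1$. Since $S_0\setminus\{A\}\subseteq in(Lab)\subseteq in(Lab')$, this condition is \emph{equivalent} to $A\notin in(Lab')$, which is what you are trying to prove. Under the supposition $A\in in(Lab')$ the count is $|S_0|$, so the chain is no witness at all, and ``$A$ is \In while being legally \Out'' does not follow. The same circularity sits inside your backward induction. Getting $B_i\in out(Lab')$ from the tail chain requires $B_i\notin in(Lab')$, and you justify that only by the $i=0$ argument applied to the tail.

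This cannot be repaired from the stated hypotheses, because they say nothing about the \In-labelled arguments of $Lab'$; in fact the proposition as stated is false. Take $Args=\{A,B_0,C\}$, $Att=\{(C,B_0)\}$, $Supp=\{(\{A\},B_0)\}$. Let $Lab=(\{C\},\{A,B_0\},\emptyset)$, which is admissible, and $Lab'=(\{A,C\},\{B_0\},\emptyset)$. The only argument legally \Out \wrt $Lab'$ is $B_0$; for $A$ the count is $1\neq 0$. Also $in(Lab)\subseteq in(Lab')$, yet $A\in out(Lab)\setminus out(Lab')$. The paper's own proof passes over exactly this point when it calls the chain induction trivial and omits it.

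What is missing is an extra hypothesis such as ``every $X\in in(Lab')$ is legally \In \wrt $Lab'$''. This holds where the result is actually needed, in Corollary~\ref{Corr:JSBAF:InAdmImpliesOutAdm}, since there $Lab'$ is admissible. With it your induction goes through. Suppose $B_i\in out(Lab')$ is already known and $B_{i-1}\in in(Lab')$ (resp.\ $A\in in(Lab')$ when $i=0$). Then the multiset $\mathbb{L}$ of labels of $S_i\setminus\{B_{i-1}\}$ consists only of \In while $Lab'(B_i)=\Out$. So $\mathbb{L}\leq\Out$ in the sense of Definition~\ref{Def:Ground:LabelOrdering} fails, and $B_{i-1}$ is not legally \In, a contradiction. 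Hence the exceptional element is not \In, the count is exact, and the tail chain is a genuine witness.
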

\begin{proof}
Assume $in(Lab)\subseteq in(lab')$ and let $A\in out(Lab)$.
Because $Lab$ is admissible by assumption, we know that $A$ is legally \Out \wrt $Lab$.
Thus there either is an attack $(B,A)\in Att$ with $B\in in(Lab)$, or there is a chain of supports $\big{\{}(S_0,B_0),...,(S_n,B_n)\big{\}}$ that starts at $A$, ends at $B_n$ and satisfies the conditions of Definition~\ref{Def:Ground:LegalLabeling}.
In the first case, we have that $A$ is legally \Out \wrt $Lab'$ and by assumption this implies $A\in out(Lab')$.
For the second case we can prove via induction over $n\in\mathbb{N}$, for $n$ being the length of the chain, that $A$ is again legally \Out \wrt $Lab'$, which implies $A\in out(Lab')$.
As this induction is trivial, we omit it here.
\end{proof}

\begin{corollary}\label{Corr:JSBAF:InAdmImpliesOutAdm}
Let \JSBAFGR be a JSBAF \suchthat \JDA.
Furthermore, let $Lab,Lab'\in adm(\mathcal{J})$.
If we have $in(Lab)\subseteq in(Lab')$, then $out(Lab)\subseteq out(Lab')$ also holds.
\end{corollary}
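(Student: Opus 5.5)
The plan is to obtain the corollary directly from Proposition~\ref{Prop:JSBAF:HelperInAdmImpliesOutADM}, instantiated with the given $Lab$ and $Lab'$. That proposition has four hypotheses:
\begin{itemize}
\item $Lab$ and $Lab'$ are labelings of $\mathcal{J}$;
\item $Lab\in adm(\mathcal{J})$;
\item every argument that is legally \Out \wrt $Lab'$ lies in $out(Lab')$;
\item $in(Lab)\subseteq in(Lab')$.
\end{itemize}
So it suffices to check each of these in the situation of the corollary.

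The first, second and fourth hypotheses hold immediately. Both labelings are admissible, hence labelings of $\mathcal{J}$. Admissibility of $Lab$ is given. The inclusion $in(Lab)\subseteq in(Lab')$ is the corollary's own assumption.

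The only hypothesis that needs a word is the third. It follows from the definition of admissibility for JSBAFs without preferences, which is defined analogously to the preference case. There, an admissible labeling $Lab'$ satisfies $A\in out(Lab')$ \emph{iff} $A$ is legally \Out \wrt $Lab'$. The right-to-left direction of this biconditional is exactly the required condition.

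Having verified all premises, Proposition~\ref{Prop:JSBAF:HelperInAdmImpliesOutADM} yields $out(Lab)\subseteq out(Lab')$. I do not expect a real obstacle here. The only point to be careful about is that the admissibility notion used is the one for JSBAFs without preferences (Definition~\ref{Def:Ground:LegalLabeling} together with the analogous definition of admissibility), so that the biconditional on \Out-labels is indeed part of it.
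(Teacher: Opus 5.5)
Your reduction is exactly how the paper obtains this corollary: it is stated without proof right after Proposition~\ref{Prop:JSBAF:HelperInAdmImpliesOutADM}, and your check of that proposition's hypotheses is correct. The trouble is that the proposition is false as stated, so your argument rests on an unsound lemma.

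Take arguments $A,B_0,B_1,C$ with supports $(\{A\},B_0)$ and $(\{B_0\},B_1)$ and the single attack $(C,B_1)$. There are no strict arguments and no support cycles, so $\mathcal{J}\in\mathfrak{J}_{DA^{\ominus}}$. The labeling $Lab$ with $in(Lab)=\{C\}$ and $out(Lab)=\{A,B_0,B_1\}$ is admissible, since $B_0$ and $A$ are legally \Out via the chains ending in $B_1$. Now let $Lab'$ have $in(Lab')=\{C,B_0\}$, $out(Lab')=\{B_1\}$ and $undec(Lab')=\{A\}$. This $Lab'$ satisfies your third hypothesis:
\begin{itemize}
\item $B_0$ is not legally \Out, since $|\{B_0\}\cap in(Lab')|=1\neq 0$;
\item $A$ is not legally \Out, since every chain from $A$ requires $B_0\in out(Lab')$.
\end{itemize}
Yet $in(Lab)\subseteq in(Lab')$ while $A\in out(Lab)\setminus out(Lab')$. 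The omitted ``trivial'' induction in the proposition's proof breaks exactly when the unique non-\In member of some $S_i$ is \In in $Lab'$. Note that this $Lab'$ is not admissible: $B_0$ is \In without being legally \In.

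So the condition on \Out labels, which is all you use from the admissibility of $Lab'$, does not suffice. The missing ingredient is that every argument in $in(Lab')$ is legally \In \wrt $Lab'$. With it the corollary does hold; the attack case is as you say.

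For the chain case, take a chain $\{(S_0,B_0),\dots,(S_n,B_n)\}$ witnessing that $A$ is legally \Out \wrt $Lab$. Put $B_{-1}=A$ and note that $S_k\setminus in(Lab)=\{B_{k-1}\}$. Then argue downward from $k=n$:
\begin{itemize}
\item Base: $B_n\in out(Lab')$, because its attacker lies in $in(Lab)\subseteq in(Lab')$.
\item Step: suppose $B_j\in out(Lab')$ for all $j\geq k$ and $|S_j\cap in(Lab')|=|S_j|-1$ for all $j>k$. Then $B_{k-1}\notin in(Lab')$. Otherwise it would have to be legally \In, but for the support $(S_k,B_k)$ all other members of $S_k$ are \In and $Lab'(B_k)=\Out$, which violates Definition~\ref{Def:Ground:LabelOrdering}.
\item Hence $|S_k\cap in(Lab')|=|S_k|-1$, and the tail chain $\{(S_k,B_k),\dots,(S_n,B_n)\}$ makes $B_{k-1}$ legally \Out \wrt $Lab'$. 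Admissibility of $Lab'$ then gives $B_{k-1}\in out(Lab')$.
\end{itemize}
At $k=0$ this yields $A\in out(Lab')$.
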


Lastly, we will show that any argument that is labeled \In by $Lab_{\beta+1}$ is also legally \In \wrt $Lab_{\beta+1}$.

\begin{proposition}\label{Prop:JSBAF:GRConstructionInIsLegallyIn}
Let \JSBAFGR\ be a JSBAF \suchthat \JDA.
Furthermore, let $GC=(Lab_0,...,Lab_{\alpha})$ be a grounded construction and let $Lab_\beta$ be a labeling of this grounded construction \suchthat $Lab_\beta\in adm(\mathcal{J})$.
If $A\in in(Lab_{\beta+1})$, then $A$ is legally \In \wrt $Lab_{\beta+1}$.
\end{proposition}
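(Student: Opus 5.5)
We fix $X\in FI(Lab_\beta)\setminus in(Lab_\beta)$, the argument chosen in step $\beta+1$, and write $\mathcal{U}=\bigcup_{(S,B)\in SS_{Lab_\beta}(X)}\big(\{B\}\cup SC(B)\big)$. Then $in(Lab_{\beta+1})=in(Lab_\beta)\cup\{X\}\cup\mathcal{U}$. For $A\in in(Lab_{\beta+1})$ we must check two things: condition (a) of Definition~\ref{Def:Ground:LegalLabeling}, that all attackers are \Out, and condition (b), the multiset condition for every support containing $A$. Throughout we use that $Lab_{\beta+1}$ is a labeling (Proposition~\ref{Prop:JSBAF:GRConstructionIsLabeling}). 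We also use that out-labels are stable: since $in(Lab_\beta)\subseteq in(Lab_{\beta+1})$ and $Lab_{\beta+1}$ rejects everything legally \Out (Proposition~\ref{Prop:JSBAF:GRConstructionOutIffLegallyOut}), Proposition~\ref{Prop:JSBAF:HelperInAdmImpliesOutADM} gives $out(Lab_\beta)\subseteq out(Lab_{\beta+1})$.

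\textbf{Attackers.} We split by where $A$ comes from. If $A\in in(Lab_\beta)$, admissibility of $Lab_\beta$ gives that every attacker is in $out(Lab_\beta)\subseteq out(Lab_{\beta+1})$. If $A=X$, item 1 of Definition~\ref{Def:JSBAF:ForcedIn} gives the same. If $A\in\mathcal{U}$, there is a support chain starting with a safe support $(S,B)\in SS_{Lab_\beta}(X)$ and ending at $A$. By Definition~\ref{Def:JSBAF:SafeSupport}, every attacker of $A$ is \Out in $Lab_\beta$, and hence \Out in $Lab_{\beta+1}$.

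\textbf{Supports.} Take $(S,B)$ with $A\in S$. If $Lab_{\beta+1}(B)=\In$, the condition holds trivially. Otherwise $B\notin in(Lab_{\beta+1})$, so $S\not\subseteq in(Lab_{\beta+1})$: if all of $S$ were \In, the arguments of $S$ would come from $in(Lab_\beta)$, $X$ or $\mathcal{U}$, and the case analysis from the induction step of Proposition~\ref{Prop:JSBAF:GRConstructionIsLabeling} would force $B\in in(Lab_{\beta+1})$. This needs Proposition~\ref{Prop:JSBAF:ForcedInHeadIsIn} for the mixed case $S\setminus\{X\}\subseteq in(Lab_\beta)$, and closure of $\mathcal{U}$ under support children when some element of $S$ lies in $\mathcal{U}$ with all other elements \In. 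Strictly, for $\mathcal{U}$ we only get $B\in\mathcal{U}$ when the support chain is extendable; I would prove that extended chains from safe supports remain safe by using the unanimity of \In on the other members.

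So some $C\in S\setminus\{A\}$ is not \In. If $Lab_{\beta+1}(B)=\Undec$, we are done. If $Lab_{\beta+1}(B)=\Out$ and exactly one $C\in S\setminus\{A\}$ is non-\In, the construction of $O^{j+1}_{\beta+1}$ rejects $C$: $B\in O^j$ for some $j$, and $S\setminus\{C\}\subseteq in(Lab_{\beta+1})$ since $A$ is \In. So $C$ is \Out. If at least two members are non-\In, either one of them is \Out or two are \Undec, and the condition holds.

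The main obstacle is the claim $S\subseteq in(Lab_{\beta+1})\Rightarrow B\in in(Lab_{\beta+1})$, which is the closure of the new in-set under supports. In particular, when $S$ mixes old \In arguments, $X$, and elements of $\mathcal{U}$, I would show the relevant support chain from $X$ through $(S,B)$ is still safe. To do this, I would argue that any attacker $D$ of a node on that chain with $Lab_\beta(D)\neq\Out$ would, via the chain and the forced-\In clause 2.a applied to the labeling $Lab_\beta$ itself, contradict $X\in FI(Lab_\beta)$. This mirrors the reasoning of Proposition~\ref{Prop:JSBAF:ForcedInArgIsNotOut}.
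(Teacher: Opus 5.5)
Your proposal is correct and follows essentially the paper's proof. Attackers are handled by the three-way split on the origin of $A$ together with $out(Lab_\beta)\subseteq out(Lab_{\beta+1})$. Supports are reduced to the closure claim $S\subseteq in(Lab_{\beta+1})\Rightarrow B\in in(Lab_{\beta+1})$, which is settled by admissibility of $Lab_\beta$, Proposition~\ref{Prop:JSBAF:ForcedInHeadIsIn}, and $\mathcal{U}$.

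The step you single out as the main obstacle is in fact immediate, so you can drop the detour through clause 2.a. Suppose some $D\in S$ lies in $\{B'\}\cup SC(B')$ for a safe $(S',B')\in SS_{Lab_\beta}(X)$. Appending $(S,B)$ to a chain from $B'$ to $D$ (or taking $\{(S,B)\}$ alone if $D=B'$) gives $B\in SC(B')\subseteq\mathcal{U}$. This works because $SC$ places no condition on the other members of $S$, and safety of $(S',B')$ already quantifies over every chain starting with it.
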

\begin{proof}
Let $A\in in(Lab_{\beta+1})$.
Towards a contradiction, assume that $A$ is not legally \In \wrt $Lab_{\beta+1}$.
First, assume that there is an attack $(B,A)\in Att$ with $Lab_{\beta+1}(B)\neq\Out$.
By our grounded construction we have $A\in Lab_\beta$, $A=X$ or $A\in \cup\bigcup\limits_{(S,B)\in SS_{Lab_\beta}(X)}\big(\{B\}\cup SC(B)\big)$ for $X\in FI(Lab_\beta)\backslash in(Lab_\beta)$.
In either of these cases, we know that all attackers $B$ of $A$ were labeled \Out in $Lab_\beta$.
By $in(Lab_\beta)\subseteq in(Lab_{\beta+1})$ and the same argumentation that was used in the proof for Proposition~\ref{Prop:JSBAF:HelperInAdmImpliesOutADM} we can infer that $B$ is legally \Out \wrt $Lab_{\beta+1}$.
By Proposition~\ref{Prop:JSBAF:GRConstructionOutIffLegallyOut} we have $Lab_{\beta+1}(B)=\Out$, contradicting the assumption $Lab_{\beta+1}(B)\neq\Out$.

Now take some support $(S,B)\in Supp$ with $A\in S$ and assume that $\dot{ \big\{ } Lab_{\beta+1}(C)\mid C\in S\backslash\{A\}\dot{ \big\} } \not\leq Lab_{\beta+1}(B)$.
We only need to consider the case where $S\subseteq in(Lab_{\beta+1})$ while $B\not\in in(Lab_{\beta+1})$ (we have excluded the case $|S\cap in(Lab_{\beta+1})|=|S|-1$ while $Lab_{\beta+1}(B)=\Out$ and $S\cap out(Lab_{\beta+1})=\emptyset$ by construction of $O_{\beta+1}^{n+1}$ whereas the cases $|S\cap in(Lab_{\beta+1})|<|S|-1$ while $Lab_{\beta+1}(B)=\Out$ and $|S\cap in(Lab_{\beta+1})|=|S|-1$ while $Lab_{\beta+1}(B)=\Undec$ are trivial).
Let $S\subseteq in(Lab_{\beta+1})$ and suppose $Lab_{\beta+1}(B)\neq\In$.
Now we again consider all the possible cases for $S\subseteq in(Lab_\beta)\cup\{X\}\cup\bigcup\limits_{(S',B')\in SS_{Lab_\beta}(X)}\big(\{B'\}\cup SC(B')\big)$.
If $S=\emptyset$ then $B$ is strict, thus $B\in in(Lab_\beta)\subseteq in(Lab_{\beta+1})$, contradicting our assumption.
If $S\subseteq in(Lab_\beta)$, then $B\in in(Lab_\beta)\subseteq in(Lab_{\beta+1})$ by admissibility of $Lab_\beta$, which again contradicts our assumption.
If $S\subseteq\{X\}$ or $S\subseteq in(Lab_\beta)\cup\{X\}$, then we know by Proposition~\ref{Prop:JSBAF:ForcedInHeadIsIn} that $B\in in(Lab_\beta)\subseteq in(Lab_{\beta+1})$ which again contradicts our assumption.
For the remaining cases we note that we have $\bigcup\limits_{(S',B')\in SS_{Lab_\beta}(X)}\big(\{B'\}\cup SC(B')\big)\cap S\neq\emptyset$.
However, this means there is $(S',B')\in SS_{Lab_\beta}(X)$ \suchthat $D\in \{B'\}\cup SC(B')$ and $D\in S$.
Now $B\in \{B'\}\cup SC(B')$ holds, meaning $Lab_{\beta+1}(B)=\In$, which again contradicts our assumption.

As all possible cases for $S$ have lead to a contradiction, we conclude that $S\subseteq in(Lab_{\beta+1})$ implies $B\in in(Lab_{\beta+1})$, as required.
\end{proof}

\subsubsection{Transfinite Induction}

Finally, we are ready to show that every labeling that we create during our grounded construction is an admissible labeling:

\begin{proposition}\label{Prop:JSBAF:GRConstructionResultAdmissible}
Let \JSBAFGR be a JSBAF \suchthat \JDA.
Furthermore, let $GC=\{Lab_0,...,Lab_{\alpha}\}$ be a grounded construction.
Then for all ordinals $\delta$, each $Lab_{\delta}$ is an admissible labeling.
\end{proposition}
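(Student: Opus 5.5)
We argue by transfinite induction on $\delta\leq\alpha$, with three cases: the base case, successor ordinals and limit ordinals.

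\emph{Base case.} For $\delta=0$ we have $Lab_0=SIM$, which is admissible by Proposition~\ref{Theo:JSBAF:MSIisAdmissible}.

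\emph{Successor case.} Let $\delta=\beta+1$ with $Lab_\beta\in adm(\mathcal{J})$. We collect the three statements prepared above. By Proposition~\ref{Prop:JSBAF:GRConstructionIsLabeling}, $Lab_{\beta+1}$ is a labeling. By Proposition~\ref{Prop:JSBAF:GRConstructionInIsLegallyIn}, every \In-labeled argument is legally \In. By Proposition~\ref{Prop:JSBAF:GRConstructionOutIffLegallyOut}, $out(Lab_{\beta+1})$ is exactly the set of legally \Out arguments. Strict arguments remain \In because $in(Lab_\beta)\subseteq in(Lab_{\beta+1})$ and $STR_{\mathcal{J}}\subseteq in(Lab_0)$. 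So $Lab_{\beta+1}$ is admissible.

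\emph{Limit case.} Let $\delta$ be a limit ordinal and assume $Lab_\gamma$ is admissible for all $\gamma<\delta$. First we show that $\{Lab_\gamma\}_{\gamma<\delta}$ is a chain in the sense used for $L_{\mathcal{K}}$. By construction, $in(Lab_\gamma)\subseteq in(Lab_{\gamma'})$ for $\gamma\leq\gamma'$: at successor steps the \In-set only grows, and at limits it is a union, so this follows by a straightforward subsidiary induction. Since both labelings are admissible, Corollary~\ref{Corr:JSBAF:InAdmImpliesOutAdm} then gives $out(Lab_\gamma)\subseteq out(Lab_{\gamma'})$.

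With the chain established, $Lab_\delta$ is precisely the supremum of this chain. We repeat the argument of Propositions~\ref{Prop:JSBAF:KMaxLabeling}, \ref{Prop:JSBAF:KMaxIn}, \ref{Prop:JSBAF:KMaxOut} and \ref{Prop:JSBAF:KMaxAdmissble}, now for JSBAFs without preferences. The paper states these preliminaries carry over verbatim; equivalently, we may check the points directly as follows.
\begin{itemize}
\item \emph{Disjointness of \In and \Out.} If $a\in in(Lab_\gamma)\cap out(Lab_{\gamma'})$, take $\max(\gamma,\gamma')$. By monotonicity of both sets, that single labeling assigns both labels to $a$, which is a contradiction.
\item \emph{Legally \In.} Let $A\in in(Lab_\delta)$ and let $(S,B)$ be a support with $A\in S$. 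Since $S$ is finite, there is a single $\gamma<\delta$ whose labeling already contains every \In/\Out label relevant to $S\cup\{B\}$, i.e.\ every element of $S\cup\{B\}$ that is \In or \Out in $Lab_\delta$ already has that label in $Lab_\gamma$. The multiset condition $\dot{\{}\cdots\dot{\}}\leq Lab(B)$ holds in $Lab_\gamma$. It transfers to $Lab_\delta$ because labels only move from \Undec to \In or \Out; condition 3 of Definition~\ref{Def:Ground:LabelOrdering} needs the same case analysis used in Proposition~\ref{Prop:JSBAF:KMaxIn}. Attackers are handled the same way: every attacker of $A$ is \Out in some $Lab_\gamma$, hence \Out in $Lab_\delta$.
\item \emph{Legally \Out.} Legal \Out-ness is witnessed by finitely many arguments, namely one attacker together with a finite support chain whose supporting sets are finite. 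Hence it can be localized to some $Lab_\gamma$, and it persists upward. This gives both directions of the \Out condition.
\end{itemize}

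The main obstacle is the limit step. The delicate points are, first, establishing monotonicity of \Out along the construction, for which the successor step must be shown to satisfy the hypothesis of Proposition~\ref{Prop:JSBAF:HelperInAdmImpliesOutADM}. Second, the finiteness argument is needed for the "two \Undec" clause when $B$ is \Out: an \Undec argument in $Lab_\gamma$ may later become \In in $Lab_\delta$. This is handled exactly as in Proposition~\ref{Prop:JSBAF:KMaxIn}. We take the index at which $B$ is already \Out and all eventually-\In elements of $S$ are already \In. Admissibility at that index then forces any remaining singleton to be \Out, or leaves at least two non-\In elements which are \Undec or \Out in $Lab_\delta$.
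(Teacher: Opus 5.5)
Your proof is correct and takes essentially the same route as the paper. Both argue by transfinite induction: $SIM$ as the base case, the three successor-step propositions, and at limit stages a direct check of disjointness, legal acceptance and legal rejection, each localized to a single earlier stage via finiteness. Your explicit use of Corollary~\ref{Corr:JSBAF:InAdmImpliesOutAdm} for monotonicity of the \Out-sets only makes precise what the paper uses implicitly. Likewise, choosing a stage where all labels of $S\cup\{B\}$ already coincide with those at the limit spares you the paper's case analysis for the support condition.
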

\begin{proof}
We show the claim via transfinite induction.

Induction start $\delta=0$.
Then $Lab_{\delta}$ is the strict including minimal labeling of $\mathcal{J}$.
By Proposition~\ref{Theo:JSBAF:MSIisAdmissible} we know that $Lab_{\delta}$ is an admissible labeling.

Induction step $\delta\rightarrow\delta+1$:
By the induction hypothesis we know that $Lab_{\delta}$ was an admissible labeling.
By Propositions~\ref{Prop:JSBAF:GRConstructionIsLabeling}, \ref{Prop:JSBAF:GRConstructionOutIffLegallyOut}  and~\ref{Prop:JSBAF:GRConstructionInIsLegallyIn}, $Lab_{\delta+1}$ is an admissible labeling ($STR_{\mathcal{J}}\subseteq in(Lab_{\delta+1})$ is trivial).

Limit step $\alpha$:
From here on, we denote our induction hypothesis for the transfinite induction as IH$_{\alpha}$.
Note that $STR_{\mathcal{J}}\subseteq in(Lab_{\alpha})$ is again trivially satisfied.
We first show that $Lab_{\alpha}$ is a labeling.
By construction it is clear that all arguments receive at least one label and that $in(Lab_{\alpha})\cap undec(Lab_{\alpha})=out(Lab_{\alpha})\cap undec(Lab_{\alpha})=\emptyset$.
Thus we have left to show $in(Lab_{\alpha})\cap out(Lab_{\alpha})=\emptyset$.
Towards a contradiction, assume that this does not hold.
Then there are ordinals $\gamma<\alpha$ and $\delta<\alpha$ \suchthat $A\in in(Lab_{\gamma})$ and $A\in out(Lab_{\delta})$.
We either have $\gamma\leq\delta$ or $\delta\leq\gamma$.
In the first case, $A\in in(Lab_{\delta})\cap out(Lab_{\delta})$ by construction of $in(Lab_{\delta})$.
This contradicts IH$_{\alpha}$.
In the second case, we note that $A\in out(Lab_{\delta})$ means $A$ is legally \Out \wrt $Lab_{\delta}$.
Thus there is an attack $(B,A)\in Att$ with $B\in in(Lab_{\delta})$ or a chain of supports $\big{\{}(S_0,B_0),...,(S_n,B_n)\big{\}}$ which starts at $A$ and for which we have an attack $(C,B)\in Att$ with $C\in in(Lab_{\delta})$ while for $0<i<n$ $|S_i\cap in(Lab_{\delta})|=|S|-1$.
We know by construction of $in(Lab_{\gamma})$ that in either of those cases $A\in out(Lab_{\gamma})\cap in(Lab_{\gamma})$ holds, which again contradicts IH$_{\alpha}$.

Next, we show that $A\in in(Lab_{\alpha})$ implies that $A$ is legally \In \wrt $Lab_{\alpha}$.
Let $\gamma<\alpha$ \suchthat $Lab_{\gamma}(A)=\In$.
Then $A$ was legally \In \wrt $Lab_{\gamma}$, meaning all attackers of $A$ were labeled \Out in $Lab_{\gamma}$.
By construction of $Lab_{\alpha}$, we can infer that each of these attackers is labeled \Out in $Lab_{\alpha}$.
Now take some support $(S,B)$ with $A\in S$.
Towards a contradiction, suppose first that $S\subseteq in(Lab_{\alpha})$ while $B\not\in in(Lab_{\alpha})$.
Let $S=\{A,A_0,...,A_n\}$ and take $Lab_{\gamma},Lab_{\gamma_0},...,Lab_{\gamma_n}$ \suchthat $A\in in(Lab_{\gamma})$ and for each $0<i<n$ we have $A_i\in in(Lab_{\gamma_i})$.
Now let $\gamma_{max}$ be \suchthat $\gamma\leq\gamma_{max}$ and $\gamma_i\leq\gamma_{max}$ for each $0\leq i\leq n$.
Then $S\subseteq in(Lab_{\gamma_{max}})$, meaning $B\in in(Lab_{\gamma_{max}})\subseteq in(Lab_{\alpha})$, contradicting our assumption.
Next we note that if $|S\cap in(Lab_{\alpha})|\leq |S|-2$ and $Lab_{\alpha}(B)=\Out$ or $|S\cap in(Lab_{\alpha})|\leq |S|-1$ and $Lab_{\alpha}(B)=\Undec$, then $\dot{ \big\{ } Lab_{\alpha}(C)\mid C\in S\backslash\{A\}\dot{ \big\} } \leq Lab_{\alpha}(B)$ is trivially satisfied.
We therefore assume $S\backslash in(Lab_{\alpha})=\{A'\}$, $Lab_{\alpha}(B)=\Out$ and for $A'$ we have $Lab_{\alpha}(A')=\Undec$.
By construction of $Lab_{\alpha}$ there must exist $Lab_{\gamma}$ \suchthat $Lab_{\gamma}(B)=\Out$.
Similarly, for $S'=\{A_0,...,A_n\}$  there must exist $Lab_{\gamma_i}$ for each $0\leq i\leq n$ \suchthat $Lab_{\gamma_i}(A_i)=\In$.
Take $\gamma_{max}\in\{\gamma,\gamma_0,...,\gamma_n\}$ \suchthat $\gamma\leq\gamma_{max}$ and $\gamma_i\leq\gamma_{\max}$ for each $0\leq i\leq n$.
Then $S'\subseteq in(Lab_{\gamma_{max}})$ while $B\in out(Lab_{\gamma_{max}})$ by construction of $Lab_{\gamma_{max}}$.
Since $\gamma_{max}<\alpha$, $Lab_{\gamma_{max}}$ is admissible by IH$_{\alpha}$, therefore $Lab_{\gamma_{max}}(A')=Lab_{\alpha}(A')=\Out$.
Note that $A'=A$ cannot hold since we assumed $Lab_{\alpha}(A)=\In$.
Therefore there is an argument labeled \Out in $S$ \suchthat $\dot{ \big\{ } Lab_{\alpha}(C)\mid C\in S\backslash\{A\}\dot{ \big\} } \leq Lab_{\alpha}(B)$ is satisfied.

Lastly, we show $A\in out(Lab_{\alpha})$ iff $A$ is legally \Out \wrt $Lab_{\alpha}$.
First, assume that $A\in out(Lab_{\alpha})$.
Then by construction of $Lab_{\alpha}$ there is $\gamma<\alpha$ \suchthat $A\in out(Lab_{\gamma})$.
Since $Lab_{\gamma}$ was an admissible labeling by IH$_{\alpha}$, we again have an attack or a chain of supports \suchthat $A$ is legally \Out \wrt $Lab_{\gamma}$.
By construction of $Lab_{\alpha}$ we have $in(Lab_{\gamma})\subseteq in(Lab_{\alpha})$ and $out(Lab_{\gamma})\subseteq out(Lab_{\alpha})$.
Therefore we can infer that $A$ is legally \Out \wrt $Lab_{\alpha}$.

Now suppose that $A$ is legally \Out \wrt $Lab_{\alpha}$.
Assume first that there is an attack $(B,A)\in Att$ with $Lab_{\alpha}(B)=\In$.
Then there is $\gamma<\alpha$ \suchthat $Lab_{\gamma}(B)=\In$.
By IH$_{\alpha}$ we know that $Lab_{\gamma}$ was an admissible labeling, thus $A\in out(Lab_{\gamma})\subseteq out(Lab_{\alpha})$.
Now assume that there is a chain of supports $
\big{\{}(S_0,B_0),...,(S_n,B_n)\big{\}}$ \suchthat $A\in S_0$, there exists an attack $(C,B_n)\in Att$ with $C\in in(Lab_{\alpha})$ and for each $0<i<n$ we have $|S_i\cap Lab_{\alpha}|=|S_i|-1$ while $B_i\in out(Lab_{\alpha})$.
We show the claim via induction over $n\in\mathbb{N}$ for $n$ being the length of that chain.
Let $S'=S_0\backslash\{A\}$.
Induction start $n=1$:
By construction of $Lab_{\alpha}$ there exists a labeling $Lab_{\gamma}$ for each $X\in\{C\}\cup S'$ \suchthat $Lab_{\gamma}(X)=\In$.
Let $\mathbb{L}=\{Lab_{\gamma_0},...,Lab_{\gamma_m}\}$ be the set of all these labelings $Lab_{\gamma}$ and take $Lab_{\gamma_{max}}\in \mathbb{L}$ \suchthat $\gamma_{i}\leq \gamma_{\max}$ for each $0\leq i\leq m$.
By construction of $in(Lab_{\gamma_{max}})$ we know that $\{C\}\cup S'\subseteq in(Lab_{\gamma_{max}})$.
Thus $B\in out(Lab_{\gamma_{max}})$ and therefore $A\in out(Lab_{\gamma_{max}})$, meaning $A\in out(Lab_{\alpha})$ by construction of $Lab_{\alpha}$.
Induction step $n\rightarrow n+1$:
By IH we know that for the chain of supports $\big{\{}(S_1,B_1),...,(S_{n+1},B_{n+1})\big{\}}$ that starts at $B_0\in S_1$ and has length $n$, we have $B_0\in out(Lab_{\alpha})$.
By construction of $Lab_{\alpha}$ there is $\gamma<\alpha$ \suchthat $Lab_{\gamma}(B_0)=\Out$ holds.
Since $Lab_{\gamma}$ was an admissible labeling, we again know that there is an attack from an argument labeled \In or a chain of supports satisfying the conditions of Definition~\ref{Def:Ground:LegalLabeling}.
Again, for each $X\in S'=S_0\backslash\{A\}$, take a labeling $Lab_{\gamma'}$ \suchthat $Lab_{\gamma'}(X)=\In$ and let $\mathbb{L}=\{Lab_{\gamma'_0},...,Lab_{\gamma'_n}\}$ be the set of all these labelings.
Now take $\gamma_{max}\in\{\gamma,\gamma'_0,...,\gamma'_n\}$ \suchthat for $0\leq i\leq n$ we have $\gamma'_i\leq\gamma_{max}$ and $\gamma\leq\gamma_{max}$.
By construction of $Lab_{\gamma_{max}}$, we know $S'\subseteq in(Lab_{\gamma_{max}})$ and $B\in out(Lab_{\gamma_{max}})$, therefore $A$ must be legally \Out \wrt $Lab_{\gamma_{max}}$.
Since $Lab_{\gamma_{max}}$ was an admissible labeling we have $Lab_{\gamma_{max}}(A)=Lab_{\alpha}(A)=\Out$ as required.
\end{proof}

It is now easy to see that the result of a grounded construction is a ground-complete labeling:

\begin{proposition}
Let \JSBAFGR be a JSBAF \suchthat \JDA and let $Lab$ be the result of a grounded construction $GC=(Lab_0,...,Lab_{\alpha})$.
Then $Lab$ is a ground-complete labeling.
\end{proposition}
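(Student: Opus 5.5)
The statement has two parts, and I will handle them separately. Ground-completeness of $Lab$ requires (i) $Lab\in adm(\mathcal{J})$ and (ii) every argument that is forced \In \wrt $Lab$ is labeled \In by $Lab$.

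Admissibility is already available. Proposition~\ref{Prop:JSBAF:GRConstructionResultAdmissible} shows, by transfinite induction, that every $Lab_\delta$ occurring in a grounded construction is admissible. Since $Lab=Lab_\alpha$ is the last member of $GC$, it is admissible.

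For condition (ii) I will use the termination clause of Definition~\ref{Def:JSBAF:GRConstruction}. That clause requires $FI(Lab_\alpha)\setminus in(Lab_\alpha)=\emptyset$, so $FI(Lab)\subseteq in(Lab)$. This is exactly the statement that every argument forced \In \wrt $Lab$ has $Lab(A)=\In$. Nothing further is needed for this condition, because Definition~\ref{Def:JSBAF:ForcedIn} is evaluated relative to the labeling $Lab$ itself, and $Lab$ is the labeling named in that clause.

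The only point that needs care is making sure the final element $Lab_\alpha$ is well defined, and in particular that the last index may be a limit ordinal. In that case $Lab_\alpha$ is given by the union clause, and Proposition~\ref{Prop:JSBAF:GRConstructionResultAdmissible} covers limit steps explicitly, so admissibility still holds there. With admissibility and $FI(Lab)\subseteq in(Lab)$ in hand, both conditions of Definition~\ref{Def:JSBAF:GRCompleteLabeling} are met, and $Lab\in grcmp(\mathcal{J})$.

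I expect no real obstacle here. All the substantive work sits in Proposition~\ref{Prop:JSBAF:GRConstructionResultAdmissible}, and the second condition is built directly into the definition of a grounded construction.
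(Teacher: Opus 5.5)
Your proof is correct and matches the paper's argument exactly: admissibility of $Lab=Lab_\alpha$ comes from Proposition~\ref{Prop:JSBAF:GRConstructionResultAdmissible}, and the termination clause $FI(Lab_\alpha)\setminus in(Lab_\alpha)=\emptyset$ of Definition~\ref{Def:JSBAF:GRConstruction} gives the second condition of Definition~\ref{Def:JSBAF:GRCompleteLabeling}. Your remark that a limit-ordinal final index is handled by the limit step of that proposition is accurate; the paper leaves it implicit.
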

\begin{proof}
By Proposition~\ref{Prop:JSBAF:GRConstructionResultAdmissible} we know that $Lab=Lab_{\alpha}$ is a admissible labeling.
By Definition~\ref{Def:JSBAF:GRConstruction} we also know that $FI(Lab)\backslash in(Lab)=\emptyset$.
We can therefore infer that for every $A\in FI(Lab)$, $A\in in(Lab)$ must hold.
Thus $Lab$ is a ground-complete labeling according to Definition~\ref{Def:JSBAF:GRCompleteLabeling}.
\end{proof}

At this point, we quickly state that since our grounded construction results in \emph{some} ground-complete labeling, the set of all ground-complete labelings is non-empty (for this, we also note that the starting point of our grounded construction, $SIM$, always exists).
Therefore, there has to exist \emph{at least one} grounded labeling for each JSBAF:

\begin{corollary}\label{Cor:JSBAF:GRAlwaysExists}
Let \JSBAFGR be a JSBAF \suchthat \JDA.
Then $gr(\mathcal{J})\neq\emptyset$.
\end{corollary}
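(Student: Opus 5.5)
The plan is to exhibit one specific ground-complete labeling that is contained, with respect to accepted arguments, in \emph{every} ground-complete labeling. Such a labeling is automatically minimal, so it is grounded. The natural candidate is the result $Lab_{\alpha}$ of a grounded construction $GC=(Lab_0,\dots,Lab_{\alpha})$. The preceding proposition already tells us that $Lab_{\alpha}\in grcmp(\mathcal{J})$. (A construction exists by choosing forced-\In arguments one at a time and taking unions at limits; the process stops because at each successor step $in$ strictly grows inside the set $Args$.)

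The main lemma is the following. For every $Lab\in grcmp(\mathcal{J})$ and every $\beta\leq\alpha$, $Lab$ extends $Lab_{\beta}$, i.e.\ $in(Lab_\beta)\subseteq in(Lab)$ and $out(Lab_\beta)\subseteq out(Lab)$. I prove it by transfinite induction on $\beta$. In each case it suffices to show the $in$-inclusion, because Corollary~\ref{Corr:JSBAF:InAdmImpliesOutAdm} then gives the $out$-inclusion, both labelings being admissible (Proposition~\ref{Prop:JSBAF:GRConstructionResultAdmissible}).

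\textbf{Base case.} $in(Lab_0)=STR_{\mathcal{J}}\subseteq in(Lab)$ by admissibility of $Lab$.

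\textbf{Limit case.} $in$ and $out$ of $Lab_\delta$ are unions of the earlier sets, so the inclusions pass to the limit directly.

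\textbf{Successor case.} Let $X\in FI(Lab_\beta)$ be the argument chosen at step $\beta+1$. I first show $X\in FI(Lab)$, which gives $X\in in(Lab)$ by ground-completeness.
\begin{itemize}
\item Attackers of $X$ are \Out in $Lab_\beta$, hence \Out in $Lab$.
\item Take a support $(S,B)$ with $X\in S$ and $Lab(B)\neq\In$. Then $Lab_\beta(B)\neq\In$ as well. Since $Lab$ extends $Lab_\beta$, we have $Lab(B)\geq_p Lab_\beta(B)$. By transitivity of $\geq_p$, every admissible $Lab'$ with $Lab'(B)\geq_p Lab(B)$ also satisfies $Lab'(B)\geq_p Lab_\beta(B)$. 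So if item~2.a of Definition~\ref{Def:JSBAF:ForcedIn} held for $Lab_\beta$, it holds for $Lab$ with the same witnesses $Lab''$.
\item If instead $(S,B)$ is safe for $X$ in $Lab_\beta$, it is safe in $Lab$, because $out(Lab_\beta)\subseteq out(Lab)$ means every attacker that was \Out stays \Out.
\end{itemize}

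It remains to place each $C\in\{B\}\cup SC(B)$, for $(S,B)\in SS_{Lab_\beta}(X)$, into $in(Lab)$. Here I argue that each such $C$ is itself forced \In with respect to $Lab$.
\begin{itemize}
\item Every attacker of $C$ lies at the end of a chain starting with $(S,B)$. By safety it is \Out in $Lab_\beta$, hence in $Lab$.
\item Every support $(S',B')$ with $C\in S'$ is safe for $C$ in $Lab$. Any chain starting with $(S',B')$ can be prefixed by the chain from $(S,B)$ to $C$, yielding a chain starting with $(S,B)$; all attackers along it are therefore \Out.
\end{itemize}
So item~2.b applies to every such support, $C\in FI(Lab)$, and $C\in in(Lab)$. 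This settles the successor case.

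Applying the lemma with $\beta=\alpha$ gives $in(Lab_\alpha)\subseteq in(Lab)$ for every $Lab\in grcmp(\mathcal{J})$. Hence no ground-complete labeling has strictly fewer accepted arguments than $Lab_\alpha$, so $Lab_\alpha\in gr(\mathcal{J})$ and $gr(\mathcal{J})\neq\emptyset$.

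I expect the successor step to be the main obstacle, specifically the transfer of forced-\In status from $Lab_\beta$ to the larger labeling $Lab$ and the treatment of the safe-support children. The delicate point is checking that the $\geq_p$ quantification in item~2.a really weakens monotonically when $Lab(B)$ becomes more informative. The case $Lab_\beta(B)=\Undec$ but $Lab(B)=\Out$ must be handled explicitly: there the set of labelings to be checked only shrinks, so 2.a transfers.
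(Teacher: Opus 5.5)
Your proof is correct, but it is not the argument the paper gives for this corollary; it is the argument the paper gives later for uniqueness. The paper only observes that the result of a grounded construction is ground-complete, so $grcmp(\mathcal{J})\neq\emptyset$, and then concludes that a grounded labeling exists. That last step is immediate for finite JSBAFs. For infinite ones, however, a nonempty family of labelings need not contain a member whose set of accepted arguments is $\subseteq$-minimal, and the paper gives no Zorn-type argument at that point. What actually secures existence is the first half of Proposition~\ref{Prop:JSBAF:GRIsUnique}, which shows that the construction's result is contained in every ground-complete labeling.

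Your proposal is exactly that containment argument, moved forward and made self-contained. You organize it as a transfinite induction (every ground-complete $Lab$ extends every $Lab_\beta$), whereas the paper takes a counterexample of least $STEP$. You also make explicit two points the paper takes for granted. First, that a grounded construction exists at all, which your termination-by-cardinality remark supplies. Second, that arguments added as $\{B\}\cup SC(B)$ for a safe support of $X$ also satisfy the forced-\In conditions; the paper's proof simply asserts this, and your chain-prefixing argument establishes it.

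Your route gives a proof valid for infinite JSBAFs that yields a \emph{least} ground-complete labeling, and hence most of uniqueness for free. The paper's route is a one-liner that, in the infinite case, depends on the subsequent proposition.
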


\subsubsection{Uniqueness of the grounded labeling}

Before we can show the uniqueness of grounded labelings, we need one more definition that will be used in the upcoming proof:

\begin{definition}
Let \JSBAFGR be a JSBAF \suchthat \JDA.
Furthermore, let $Lab_{\alpha}$ be the result of a grounded construction $GC$ and let $A\in in(Lab_{\alpha})\backslash STR_{\mathcal{J}}$ be a non-strict argument labeled \In by $Lab_{\alpha}$.
We define the \emph{computation step} of $A$ \wrt $GC$, denoted $STEP(A)$ as follows:
$STEP(A)=\beta$ where $GC=(Lab_0,...,Lab_{\beta},...,Lab_{\alpha})$ and $A\in in(Lab_{\beta})$ while there is no $\gamma<\beta$ \suchthat $A\in in(Lab_{\gamma})$.
\end{definition}

The key part of this proof is to show that during the grounded construction, every argument $A$ which is labeled \In in some computation step $\beta$, is also forced \In \wrt every \emph{other} ground-complete labeling.
Therefore all arguments that are labeled \In in $Lab_{\alpha}$ are also labeled \In in every other ground-complete labeling.

\begin{proposition}\label{Prop:JSBAF:GRIsUnique}
Let \JSBAFGR\ be a JSBAF \suchthat \JDA.
Furthermore, let $Lab$ be the result of some grounded construction $GC=(Lab_0,...,Lab_{\alpha})$ for $\mathcal{J}$.
Then $Lab$ is the unique grounded labeling of $\mathcal{J}$.
\end{proposition}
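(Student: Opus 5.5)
The plan is to show that $in(Lab)\subseteq in(Lab')$ holds for every ground-complete labeling $Lab'$ of $\mathcal{J}$.

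\textbf{Reduction to a minimum element.} By the previous proposition, $Lab$ is ground-complete. Once the inclusion is established, $Lab$ is the $\subseteq$-least element of $grcmp(\mathcal{J})$ with respect to accepted arguments. Hence $Lab$ is grounded, and any grounded $Lab^*$ satisfies $in(Lab)\subseteq in(Lab^*)$. Minimality of $Lab^*$ then forces $in(Lab^*)=in(Lab)$. Since both labelings are admissible with equal in-sets, Corollary~\ref{Corr:JSBAF:InAdmImpliesOutAdm} applied in both directions gives equal out-sets, so $Lab^*=Lab$.

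\textbf{Transfinite induction.} Fix a ground-complete $Lab'$. I would prove by transfinite induction on $\beta\le\alpha$ that $in(Lab_\beta)\subseteq in(Lab')$; by Corollary~\ref{Corr:JSBAF:InAdmImpliesOutAdm} this also gives $out(Lab_\beta)\subseteq out(Lab')$, i.e.\ $Lab'$ extends $Lab_\beta$.
\begin{itemize}
\item \emph{Base case.} $in(Lab_0)=STR_{\mathcal{J}}\subseteq in(Lab')$ by admissibility of $Lab'$.
\item \emph{Limit case.} This follows immediately, since $in(Lab_\delta)$ is defined as a union.
\item \emph{Successor case.} Let $X\in FI(Lab_\beta)$ be the argument chosen at step $\beta+1$. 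Two things must be shown:
\begin{enumerate}
\item $X\in FI(Lab')$. Ground-completeness of $Lab'$ then gives $X\in in(Lab')$.
\item Every $B$ and every element of $SC(B)$, for $(S,B)\in SS_{Lab_\beta}(X)$, lies in $in(Lab')$.
\end{enumerate}
\end{itemize}

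\textbf{First claim: monotonicity of forced \In.} I aim to show that if $Lab'$ extends $Lab_\beta$ and both are admissible, then $FI(Lab_\beta)\subseteq FI(Lab')$. The conditions of Definition~\ref{Def:JSBAF:ForcedIn} are checked one at a time.
\begin{itemize}
\item \emph{Attackers.} Attackers of $X$ that are \Out in $Lab_\beta$ remain \Out in $Lab'$.
\item \emph{Safe supports.} A safe support for $X$ in $Lab_\beta$ stays safe in $Lab'$, because the safety condition only requires certain attackers to be \Out.
\item \emph{Condition 2.a.} Take a support $(S,B)$ with $Lab'(B)\neq\In$. Then $Lab_\beta(B)\neq\In$ as well, since $Lab'$ extends $Lab_\beta$. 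Any admissible $Lab''$ with $Lab''(B)\geq_p Lab'(B)$ also satisfies $Lab''(B)\geq_p Lab_\beta(B)$, because $Lab'(B)\geq_p Lab_\beta(B)$ and $\geq_p$ is transitive. So the quantifier over $Lab''$ in the definition for $Lab'$ ranges over a subset of the one for $Lab_\beta$, and condition 2.a transfers.
\end{itemize}

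\textbf{Second claim: the safe-support additions.} Here I expect the main obstacle. The difficulty is that these arguments need not themselves be forced \In with respect to $Lab'$. The plan is:
\begin{enumerate}
\item Argue by contradiction: suppose some such argument is not in $in(Lab')$.
\item Define $Lab''$ by adding to $in(Lab')$ the set $\{X\}\cup\bigcup_{(S,B)\in SS_{Lab'}(X)}(\{B\}\cup SC(B))$ and recomputing the out-labels as in Definition~\ref{Def:JSBAF:GRConstruction}.
\item Show that $Lab''$ is admissible by reusing Propositions~\ref{Prop:JSBAF:GRConstructionIsLabeling}, \ref{Prop:JSBAF:GRConstructionOutIffLegallyOut} and \ref{Prop:JSBAF:GRConstructionInIsLegallyIn}. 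Their proofs only use admissibility of the base labeling and the fact that $X$ is forced \In with respect to it, so they apply with $Lab'$ as the base.
\item Use safety to see that every argument added in this way has all its attackers \Out in $Lab'$, and that no attack ever propagates back along these support chains.
\item Conclude that the supported arguments can always be accepted without loss. The intended mechanism is that, if one of them were left \Undec, then $X$, which is \In in $Lab'$, would violate legal-\In at the support $(S,B)$ unless some other member of $S$ is non-\In, and that member can be pushed to \In along the chain.
\end{enumerate}
Making step 5 rigorous may require a further induction along the support chain emanating from $B$, and this is the step I expect to need the most care.
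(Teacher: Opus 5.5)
\textbf{Verdict: genuine gap, easy to fix.} Your reduction, your transfinite induction (the paper uses a counterexample of minimal $STEP$ instead) and your first claim all match the paper's argument. The first claim is that being forced \In transfers from $Lab_\beta$ to any admissible $Lab'$ extending it. The gap is in your second claim, and the route you sketch for it cannot close it.

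\textbf{Why your route for the second claim fails.} A ground-complete labeling only has to be admissible and contain every argument forced \In with respect to itself. It carries no $\subseteq$-maximality. So building an admissible $Lab''$ with $in(Lab')\subsetneq in(Lab'')$ contradicts nothing; accepting more arguments ``without loss'' is an argument for preferred labelings, not ground-complete ones. Your step~5 also fails to put $B$ into $in(Lab')$. $X$ is already legally \In at $(S,B)$ as soon as some other member of $S$ is non-\In in $Lab'$, so $B$ can stay \Undec without violating legality. ``Pushing'' that member to \In produces a different labeling and says nothing about $Lab'$. The only way to force an argument into $in(Lab')$ is the forced-\In clause of ground-completeness.

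\textbf{The fix.} That clause already suffices, contrary to your remark that these arguments need not be forced \In. Let $(S,B)\in SS_{Lab_\beta}(X)$ and $A'\in\{B\}\cup SC(B)$, witnessed by a support chain from $(S,B)$ to a support with head $A'$. Since $A'$ is the head of a support in a chain starting with $(S,B)$, safety gives that every attacker of $A'$ is \Out in $Lab_\beta$. Now take any support $(S',B')$ with $A'\in S'$. Every chain starting with $(S',B')$ can be prefixed by the chain from $(S,B)$ to $A'$, which gives a chain starting with $(S,B)$. Hence $(S',B')$ is safe for $A'$ in $Lab_\beta$. So $A'\in FI(Lab_\beta)$ by conditions~1 and~2.b of Definition~\ref{Def:JSBAF:ForcedIn}. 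By your own monotonicity claim, or directly because \Out-labels and safety carry over to the extension $Lab'$, you get $A'\in FI(Lab')\subseteq in(Lab')$.

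This is how the paper proceeds: it treats the safe-support additions uniformly with $X$, as arguments forced \In \wrt the earlier stage, leaving the chain-concatenation step implicit. Replace steps~1--5 of your second claim with this observation and the proof is complete.
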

\begin{proof}
Let $L\in grcmp(\mathcal{J})$ be some ground-complete labeling of \J.
We first show that $in(Lab)\subseteq in(L)$ holds.

Towards a contradiction, assume $in(Lab)\not\subseteq in(L)$, \ie there is $A\in in(Lab)\backslash in(L)$.
Let $\beta=STEP(A)$ be the computation step of $A$ \wrt $GC$.
\Generality we assume that there does not exist an argument $A'\in in(Lab)\backslash in(L)$ \suchthat $STEP(A')< \beta$, \ie $A$ is the \enquote{first} argument which was labeled \In by $Lab$ and which is not labeled \In by $L$.
For this, we note that $in(SIM)\subseteq in(Lab)$ and $in(SIM)\subseteq in(L)$ because both of these labelings are admissible.
This means that, although we did not define $STEP$ for strict arguments, this is not relevant here because $A\not\in in(SIM)$ must hold.
Lastly, we note that for each labeling $Lab_{\gamma}$ \suchthat $\gamma < \beta$, we have $in(Lab_{\gamma})\subseteq in(L)$, because there is no $A'\in in(Lab)\backslash in(L)$ with $STEP(A')<\beta$.
By Corollary~\ref{Corr:JSBAF:InAdmImpliesOutAdm}, this means $out(Lab_{\gamma})\subseteq out(L)$ also holds.
We will now show that $A$ is forced \In \wrt $L$.

By construction of $Lab_{\beta}$, we know that there was $\gamma<\beta$ \suchthat $A\in FI(Lab_{\gamma})$ or there is an argument $X\in in(Lab_{\beta})$ \suchthat for $STEP(X)=\gamma$ there is $(S,B)\in SS_{Lab_{\gamma}}(X)$ with $A=B$ or $A\in SC(B)$.
First, let us consider the attackers of $A$ and their label in $L$.
Because we have $A\in FI(Lab_{\gamma})\cup \Big(\bigcup\limits_{(S,B)\in SS_{Lab_{\gamma}}(X)}\big(\{B\}\cup SC(B)\big)\Big)$, we can infer that for all $(C,A)\in Att$, $Lab_{\gamma}(C)=\Out$ must hold.
Because $out(Lab_{\gamma})\subseteq out(L)$, we know that $L(C)=\Out$ must also hold.
Therefore, all attackers of $A$ are labeled \Out by $L$.

Next, let us consider an arbitrary support $(S,B)\in Supp$ with $A\in S$ and $L(B)\neq \In$.
We first note that, if $(S,B)$ was safe for $A$ in $Lab_{\gamma}$, then we can use Corollary~\ref{Corr:JSBAF:InAdmImpliesOutAdm} to infer that $(S,B)$ is safe for $A$ in $L$.
Therefore, if $(S,B)$ is not safe for $A$ in $L$, then $(S,B)$ cannot be safe for $A$ in $Lab_{\gamma}$.
Now we assume that $(S,B)$ is not safe for $A$ in $L$ and show that this means for $(S,B)$, item 2.a of Definition~\ref{Def:JSBAF:ForcedIn} is satisfied.

Take an arbitrary admissible labeling $L'$ with $L'(B)\geq_p L(B)$.
From $L'(B)\geq_p L(B)$ we know that either $L(B)=\Undec$ or $L'(B)=L(B)$ must hold.
Suppose first that $L(B)=\Undec$.
Then by $in(Lab_{\gamma})\subseteq in(L)$ and $out(Lab_{\gamma})\subseteq out(L)$ we can infer that $Lab_{\gamma}(B)=\Undec$ must also hold.
Therefore we have $L'(B)\geq_{p}Lab_{\gamma}(B)$.
Now assume that $L'(B)=L(B)$ and let us consider the possible cases for $Lab(B)$.
The case that $L(B)=\Undec$ is analogous to above.
If $L(B)=\In$, then we must have either $Lab_{\gamma}(B)=\In$ or $Lab_{\gamma}(B)=\Undec$.
Similarly, if $Lab(B)=\Out$ we must have either $Lab_{\gamma}(B)=\Out$ or $Lab_{\gamma}(B)=\Undec$.
In all cases we again infer $L'(B)\geq_{p}Lab_{\gamma}(B)$.
By assumption, $A$ was forced \In \wrt $Lab_{\gamma}$ while $(S,B)$ was not safe for $A$ in $Lab_{\gamma}$.
By Definition~\ref{Def:JSBAF:ForcedIn}, we now infer that there is an admissible labeling $L''$ which extends $L'$ \suchthat $A$ is legally \In \wrt $Lab''$ and $Lab''(B)=L'(B)$.

We have just shown that for an arbitrary support $(S,B)$ with $A\in S$ and $L(B)\neq\In$, if $(S,B)$ is not safe for $A$ in $L$, then for every admissible labeling $L'$ with $L'(B)\geq_p L(B)$, we can find an admissible labeling $L''$ which extends $L'$ \suchthat $L''(B)=L'(B)$ and $A$ is legally \In \wrt $L''$.
By Definition~\ref{Def:JSBAF:ForcedIn}, $A$ is now forced \In \wrt $L$.
Since $L$ is a ground-complete labeling by assumption, we infer that $A\in in(L)$ must hold, a contradiction to our assumption $A\in in(Lab)\backslash in(L)$.

With this, we have shown that for every $L\in grcmp(\mathcal{J})$ and for $Lab$ being the result of a grounded construction, we have $in(Lab)\subseteq in(L)$.
In particular, this means that there is no $L\in grcmp(\mathcal{J})$ \suchthat $in(L)\subset in(Lab)$, \ie the result of our grounded construction is a grounded labeling according to Definition~\ref{Def:JSBAF:GR}.

Lastly, to show the uniqueness of the grounded labeling, assume towards a contradiction that there is some $L\in gr(\mathcal{J})$ with $Lab\neq L$.
By Definition~\ref{Def:JSBAF:GR}, $L\in grcmp(\mathcal{J})$.
We have argued above that $in(Lab)\subseteq in(L)$ must hold.
Because $L\in gr(\mathcal{J})$ we cannot have $in(Lab)\subset in(L)$, therefore $in(Lab)=in(L)$. 
By Corollary~\ref{Corr:JSBAF:InAdmImpliesOutAdm} this clearly implies $out(Lab)=out(L)$.
Because $Lab\neq L$ by assumption, it must now be the case that $undec(Lab)\neq undec(L)$.
Obviously this contradicts the fact that both $Lab$ and $L$ are labelings which assign a single label to every argument in $\mathcal{J}$.
We conclude that $Lab=L$ must hold, \ie $Lab$ is the \emph{unique} grounded labeling of $\mathcal{J}$.
\end{proof}

\end{technicalReportOnly}

\section{Future Work}\label{Sec:FutureWork}
\begin{conferencePaperOnly}
While the postulates we discuss have been established for grounded semantics, we thought it might be interesting to adapt our approach to grounded semantics as well.
In the technical report~\cite{cramer2026:SatisfyingRationalityPostulatesTechnicalReport} we present a definition of grounded semantics for JSBAFs. We can show that closure, direct and indirect consistency hold for this semantics, but the proofs for non-interference and crash-resistance are left for future work.
\end{conferencePaperOnly}
\begin{technicalReportOnly}
While the postulates we discuss have been established for grounded semantics, we thought it might be interesting to adapt our approach to grounded semantics as well. We can show that closure, direct and indirect consistency hold for this semantics, but the proofs for non-interference and crash-resistance are left for future work.
\end{technicalReportOnly}

Note that in ASPIC-style formalisms that apply restricted rebut, like ASPIC+, admissibility is required -- albeit not sufficient -- to satisfy the closure postulate (see~\cite{Caminada2017:RationalityPostulates}).
In \DAOM, the support-relation between arguments alone can be used to satisfy the closure postulate.
Thus non-admissibility based semantics also have a chance to satisfy the closure postulate.
It would be interesting to define JSBAF based analogues of naive-based semantics, like $CF2$ semantics, defined by Baroni~\etal~\cite{Baroni2005:SCCRecursiveness}, stage semantics, defined by Verheij~\cite{Verheij1996:stage}, stage2 semantics, defined by Dvo{\v{r}}{\'a}k and Gaggl~\cite{Dvorak2012:incorporating,Dvorak2014:stage}, and SCF2 semantics, defined by Cramer and van der Torre~\cite{Cramer2019:SCF2,Cramer2023:Argumentation}.

\section{Conclusion}\label{Sec:Conclusion}
Although various versions of ASPIC have been proposed in the literature, to the best of our knowledge, so far none of them satisfy all five of the rationality postulates defined by Caminada and Amgoud~\cite{Caminada2007:OnEvaluationOfArgumentationFormalisms} and Caminada~\etal~\cite{Caminada2012:SemiStableSemantics} in a credulous semantics like preferred, while considering both rebuttals and undercuts and while avoiding the downsides of restricted rebuttal. In this paper, we proposed \DAOM to address this issue.

\DAOM combines the notion of Joint Support Bipolar Argumentation Frameworks defined by Cramer~and~Bhadra~\cite{Cramer2020:DeductiveJointSupportForRationalUnrestrictedRebuttal} with the notion of gen-rebuttals of Heyninck~and~Straßer~\cite{Heyninck2017:RevisitingUnrestrictedRebutPreferences}.
We have given definitions of legal labelings and admissibility for JSBAFs (and by extension \DAOM) and used them to define a preferred semantics for JSBAFs.
This semantics intuitively correspond to the preferred semantics of abstract argumentation.
Furthermore, we have shown that with preferred semantics, \DAOM satisfies the rationality postulates of closure, direct consistency, indirect consistency, non-interference and crash-resistance.

\bibliography{literature}

@article{Dung1995:OnTheAcceptabilityOfArguments,
  author       = {Phan Minh Dung},
  title        = {On the Acceptability of Arguments and its Fundamental Role in Nonmonotonic
                  Reasoning, Logic Programming and n-Person Games},
  journal      = {Artif. Intell.},
  volume       = {77},
  number       = {2},
  pages        = {321--358},
  year         = {1995},
  doi          = {10.1016/0004-3702(94)00041-X},
  bibsource    = {dblp computer science bibliography, https://dblp.org}
}

@article{Caminada2017:RationalityPostulates,
  author       = {Martin Caminada},
  title        = {Rationality Postulates: Applying Argumentation Theory for Non-monotonic
                  Reasoning},
  journal      = {{FLAP}},
  volume       = {4},
  number       = {8},
  year         = {2017},
  url          = {http://www.collegepublications.co.uk/downloads/ifcolog00017.pdf},
  bibsource    = {dblp computer science bibliography, https://dblp.org}
}

@article{Caminada2007:OnEvaluationOfArgumentationFormalisms,
  author       = {Martin Caminada and
                  Leila Amgoud},
  title        = {On the evaluation of argumentation formalisms},
  journal      = {Artif. Intell.},
  volume       = {171},
  number       = {5-6},
  pages        = {286--310},
  year         = {2007},
  url          = {https://doi.org/10.1016/j.artint.2007.02.003},
  doi          = {10.1016/J.ARTINT.2007.02.003},
  bibsource    = {dblp computer science bibliography, https://dblp.org}
}

@inproceedings{Caminada2014:PreferencesUnrestrictedRebut,
  author       = {Martin Caminada and
                  Sanjay Modgil and
                  Nir Oren},
  editor       = {Simon Parsons and
                  Nir Oren and
                  Chris Reed and
                  Federico Cerutti},
  title        = {Preferences and Unrestricted Rebut},
  booktitle    = {Computational Models of Argument - Proceedings of {COMMA} 2014, Atholl
                  Palace Hotel, Scottish Highlands, UK, September 9-12, 2014},
  series       = {Frontiers in Artificial Intelligence and Applications},
  volume       = {266},
  pages        = {209--220},
  publisher    = {{IOS} Press},
  year         = {2014},
  url          = {https://doi.org/10.3233/978-1-61499-436-7-209},
  doi          = {10.3233/978-1-61499-436-7-209},
  bibsource    = {dblp computer science bibliography, https://dblp.org}
}

@book{ArgumentationInAI,
author = {Rahwan, Iyad and Simari, Guillermo R.},
title = {Argumentation in Artificial Intelligence},
year = {2009},
isbn = {0387981969},
publisher = {Springer Publishing Company, Incorporated},
edition = {1st}
}

@inproceedings{Heyninck2017:RevisitingUnrestrictedRebutPreferences,
  author       = {Jesse Heyninck and
                  Christian Stra{\ss}er},
  editor       = {Carles Sierra},
  title        = {Revisiting Unrestricted Rebut and Preferences in Structured Argumentation},
  booktitle    = {Proceedings of the Twenty-Sixth International Joint Conference on
                  Artificial Intelligence, {IJCAI} 2017, Melbourne, Australia, August
                  19-25, 2017},
  pages        = {1088--1092},
  publisher    = {ijcai.org},
  year         = {2017},
  url          = {https://doi.org/10.24963/ijcai.2017/151},
  doi          = {10.24963/IJCAI.2017/151},
  bibsource    = {dblp computer science bibliography, https://dblp.org}
}

@article{Prakken2010:AbstractFramework4ArgumentationWithStructureArguments,
  author       = {Henry Prakken},
  title        = {An abstract framework for argumentation with structured arguments},
  journal      = {Argument Comput.},
  volume       = {1},
  number       = {2},
  pages        = {93--124},
  year         = {2010},
  url          = {https://doi.org/10.1080/19462160903564592},
  doi          = {10.1080/19462160903564592},
  bibsource    = {dblp computer science bibliography, https://dblp.org}
}

@article{Caminada2012:SemiStableSemantics,
  author       = {Martin W. A. Caminada and
                  Walter Alexandre Carnielli and
                  Paul E. Dunne},
  title        = {Semi-stable semantics},
  journal      = {J. Log. Comput.},
  volume       = {22},
  number       = {5},
  pages        = {1207--1254},
  year         = {2012},
  url          = {https://doi.org/10.1093/logcom/exr033},
  doi          = {10.1093/LOGCOM/EXR033},
  bibsource    = {dblp computer science bibliography, https://dblp.org}
}

@article{Wu2015:ImplementingCrashResistanceAndNonInterferenceInLogicBasedArgumentation,
  author       = {Yining Wu and
                  Mikolaj Podlaszewski},
  title        = {Implementing crash-resistance and non-interference in logic-based
                  argumentation},
  journal      = {J. Log. Comput.},
  volume       = {25},
  number       = {2},
  pages        = {303--333},
  year         = {2015},
  url          = {https://doi.org/10.1093/logcom/exu017},
  doi          = {10.1093/LOGCOM/EXU017},
  bibsource    = {dblp computer science bibliography, https://dblp.org}
}

@inproceedings{Cramer2020:DeductiveJointSupportForRationalUnrestrictedRebuttal,
  author       = {Marcos Cramer and
                  Meghna Bhadra},
  editor       = {Henry Prakken and
                  Stefano Bistarelli and
                  Francesco Santini and
                  Carlo Taticchi},
  title        = {Deductive Joint Support for Rational Unrestricted Rebuttal},
  booktitle    = {Computational Models of Argument - Proceedings of {COMMA} 2020, Perugia,
                  Italy, September 4-11, 2020},
  series       = {Frontiers in Artificial Intelligence and Applications},
  volume       = {326},
  pages        = {147--158},
  publisher    = {{IOS} Press},
  year         = {2020},
  url          = {https://doi.org/10.3233/FAIA200500},
  doi          = {10.3233/FAIA200500},
  bibsource    = {dblp computer science bibliography, https://dblp.org}
}

@article{Heyninck2021:RationalityMaximalConsistentSetsForFragmentOfASPICPlus,
  author       = {Jesse Heyninck and
                  Christian Stra{\ss}er},
  title        = {Rationality and maximal consistent sets for a fragment of {ASPIC+} without undercut},
  journal      = {Argument Comput.},
  volume       = {12},
  number       = {1},
  pages        = {3--47},
  year         = {2021},
  url          = {https://doi.org/10.3233/AAC-200903},
  doi          = {10.3233/AAC-200903},
  bibsource    = {dblp computer science bibliography, https://dblp.org}
}

@techreport{Amgoud2006:FinalReviewReportFormalArgumentation,
author = {Amgoud, Leila and Bodenstaff, L. and Caminada, M. and McBurney, P. and Parsons, Simon and Prakken, Henry and Veenen, J. and Vreeswijk, Gerard},
year = {2006},
month = {01},
pages = {},
title = {Final Review and Report on Formal Argumentation System}
}

@article{Modgil2013:GeneralAccountOfArgumentationWithPreferences,
  author       = {Sanjay Modgil and
                  Henry Prakken},
  title        = {A general account of argumentation with preferences},
  journal      = {Artif. Intell.},
  volume       = {195},
  pages        = {361--397},
  year         = {2013},
  url          = {https://doi.org/10.1016/j.artint.2012.10.008},
  doi          = {10.1016/J.ARTINT.2012.10.008},
  bibsource    = {dblp computer science bibliography, https://dblp.org}
}

@article{Baroni2005:SCCRecursiveness,
  author       = {Pietro Baroni and
                  Massimiliano Giacomin and
                  Giovanni Guida},
  title        = {SCC-recursiveness: a general schema for argumentation semantics},
  journal      = {Artif. Intell.},
  volume       = {168},
  number       = {1-2},
  pages        = {162--210},
  year         = {2005},
  url          = {https://doi.org/10.1016/j.artint.2005.05.006},
  doi          = {10.1016/J.ARTINT.2005.05.006},
  bibsource    = {dblp computer science bibliography, https://dblp.org}
}

@article{Verheij1996:stage,
  title={Two approaches to dialectical argumentation: admissible sets and argumentation stages},
  author={Verheij, Bart},
  journal={Proc. NAIC},
  volume={96},
  pages={357--368},
  year={1996}
}

@inproceedings{Dvorak2012:incorporating,
  title={Incorporating stage semantics in the scc-recursive schema for argumentation semantics},
  author={Dvor{\'a}k, Wolfgang and Gaggl, Sarah Alice},
  booktitle={In Proceedings of the 14th International Workshop on Non-Monotonic Reasoning (NMR 2012)},
  year={2012}
}

@article{Dvorak2014:stage,
  title={Stage semantics and the SCC-recursive schema for argumentation semantics},
  author={Dvo{\v{r}}{\'a}k, Wolfgang and Gaggl, Sarah Alice},
  journal={Journal of Logic and Computation},
  volume={26},
  number={4},
  pages={1149--1202},
  year={2014},
  publisher={Dov Gabbay}
}

@inproceedings{Cramer2019:SCF2,
  title={SCF2-an argumentation semantics for rational human judgments on argument acceptability},
  author={Cramer, Marcos and van der Torre, Leon},
  booktitle={8th Workshop on Dynamics of Knowledge and Belief (DKB-2019) and 7th Workshop KI \& Kognition (KIK-2019)},
  year={2019}
}

@article{Cramer2023:Argumentation,
  title={An argumentation semantics for rational human evaluation of arguments},
  author={Cramer, Marcos and van der Torre, Leendert},
  journal={Frontiers in Artificial Intelligence},
  volume={6},
  pages={1045663},
  year={2023},
  publisher={Frontiers Media SA}
}
\bibliographystyle{vancouver}

\end{document}